\documentclass{article}
\usepackage{iclr2026_conference,times}

\usepackage{amsmath,amsfonts,bm}

\def\eqref#1{(\ref{#1})}

\def\1{\bm{1}}

\DeclareMathAlphabet{\mathsfit}{\encodingdefault}{\sfdefault}{m}{sl}
\SetMathAlphabet{\mathsfit}{bold}{\encodingdefault}{\sfdefault}{bx}{n}

\usepackage{hyperref}
\usepackage{url}

\usepackage{amsmath, amssymb, amsthm}
\usepackage{graphicx}
\usepackage{bm}

\newtheorem{lemma}{Lemma}[section]

\newcommand{\revised}[1]{\textcolor{black}{#1}}

\title{A Theoretical Analysis of Mamba’s Training Dynamics: Filtering Relevant Features for Generalization in State Space Models}

\author{Mugunthan Shandirasegaran \\
New Jersey Institute of Technology \\
\texttt{ms3537@njit.edu}
\And
Hongkang Li \\
University of Pennsylvania \\
\texttt{lihk@seas.upenn.edu} ~~~
\And
Songyang Zhang \\
University of Louisiana at Lafayette \\
\texttt{songyang.zhang@louisiana.edu}
\And
Meng Wang \\
Rensselaer Polytechnic Insitute \\
\texttt{wangm7@rpi.edu}
\And
Shuai Zhang \\
New Jersey Institute of Technology \\
\texttt{sz457@njit.edu}
}

\iclrfinalcopy 
\begin{document}

\maketitle

\begin{abstract}
The recent empirical success of Mamba and other selective state space models (SSMs) has renewed interest in non-attention architectures for sequence modeling, yet their theoretical foundations remain underexplored. We present a first-step analysis of generalization and learning dynamics for a simplified but representative Mamba block: a single-layer, single-head selective SSM with input-dependent gating, followed by a two-layer MLP trained via gradient descent (GD). Our study adopts a structured data model with tokens that include both class-relevant and class-irrelevant patterns under token-level noise and examines two canonical regimes: majority-voting and locality-structured data sequences. We prove that the model achieves guaranteed generalization by establishing non-asymptotic sample complexity and convergence rate bounds, which improve as the effective signal increases and the noise decreases.
Furthermore, we show that the gating vector aligns with class-relevant features while ignoring irrelevant ones, thereby formalizing a feature-selection role similar to attention but realized through selective recurrence. Numerical experiments on synthetic data justify our theoretical results. Overall, our results provide principled insight into when and why Mamba-style selective SSMs learn efficiently, offering a theoretical counterpoint to Transformer-centric explanations.
\end{abstract}

\section{Introduction}

Transformers \citep{vaswani2017attention} have become the mainstream framework in large language models \citep{AAAA23, GYZS25, brown2020language, touvron2023llama}. However, due to the quadratic time and memory complexity introduced by the attention mechanism  with respect to input length \citep{gu2023mamba, dao2024transformers}, Transformers are inefficient when handling long input sequences. Recently, State Space Models (SSMs) \citep {gu2023mamba, dao2024transformers, zhu2024visionmamba, wang2024graphmamba, behrouz2024graph, liu2024vmamba, wang2024mambabyte} 
have shown competitive or superior performance to Transformers across domains such as language \citep {gu2023mamba}, vision \citep {zhu2024visionmamba, liu2024vmamba}, graphs \citep {wang2024graphmamba, behrouz2024graph}, audio \citep{yadav2024audio}, 
and reinforcement learning \citep{lu2023structured}.
SSMs has brought many advantages absent in Transformer-based models, such as linear computational complexity and hardware-friendly properties that enable efficient parallelization. Among these models, Mamba \citep{gu2023mamba} proposes a selection mechanism, which parameterizes the SSM with the input, which allows the model to dynamically retain or discard relevant and irrelevant information. This enables the Mamba model to achieve performance comparable to Transformer-based models in long-text modeling as well as tasks such as visual classification and dense prediction \citep{zhu2024visionmamba, liu2024vmamba}, but in a more efficient manner.

Although recent work has primarily focused on the empirical performance of Mamba and its architectural comparisons with other models, the theoretical understanding of Mamba remains less investigated. \revised{In addition, recent empirical evidence shows that Mamba’s success is highly sensitive to hyperparameter tuning \citep{okpekpe2025revisiting}. Such dependence on fragile optimization choices raises fundamental questions about why and when Mamba succeeds.} These include fundamental inquiries such as:

\begin{center}
\begin{itemize}
    \item \textit{Under what conditions can a Mamba be trained to achieve satisfactory generalization?}
    \item \textit{How is the selection mechanism implemented through Mamba’s components?}
\end{itemize}
\end{center}

Existing theoretical studies on Mamba or related SSMs mainly focus on the expressive power and the mechanisms of optimal parameters. \citet{orvieto2024universality} and \citet{nishikawastate} prove SSMs augmented with MLPs are universal approximators of regular functionals and can mimic token selection dynamically. \citet{muca2024theoretical} and \citet{huang2025understanding} show that Mamba has stronger expressive power than its diagonal SSM predecessor, especially in approximating discontinuous functions. \citet{LRO24} and \citet{LTRF25} respectively prove that two simplified SSMs, H3 and GLA, implicitly perform weighted preconditioned GD at the global minima of in-context learning problems when input with context examples.  However, these works do not explain whether the selection mechanisms and advantages of Mamba can actually be obtained through practical training. Moreover, these studies do not analyze the generalization ability of Mamba models.

\textbf{Contributions of this paper.} {In this work, we study a nonlinear neural model composed of a one-layer Mamba block and a two-layer perceptron, which is simplified but sufficiently representative to reflect the gating structure in Mamba. By assuming the presence of the class-relevant feature that influence the label and class-irrelevant features that do not, we respectively formulate majority-voting and locality-structured data, whose labels depend on the proportion and the spatial/temporal locality of a certain class-relevant feature in the data. \revised{To the best of our knowledge, this work provides the first theoretical analysis of Mamba’s training dynamics with input-dependent gating, together with generalization guarantees under the two structured data regimes.}
 The highlights of our technical contributions include:}

\textbf{First,} we develop a general theoretical framework for analyzing gated architectures trained with gradient descent on structured data.  
Our analysis explains how the selection mechanism within Mamba interacts with data structure to enable efficient learning and guaranteed generalization, complementing prior results that focus mainly on attention-based models. 

\textbf{Second,} we provide a theoretical characterization of the gating mechanism in Mamba.  
We show that the gating parameter vector is trained to amplify class-relevant features while ignoring class-irrelevant ones, thereby formalizing the intuition that the gating network dynamically allocates capacity to informative patterns.  

\textbf{Third,} we establish 
the sample complexity and the required number of iterations for two canonical data types: majority-voting and locality-structured data sequences.  
For majority-voting data, these bounds scale with the gap between the class-relevant and confusion features; for locality-structured data, they depend on the concentration of class-relevant tokens. 
In both regimes, stronger signal 
and lower token-level noise yield faster convergence and better generalization.

\vspace{-2mm}
\subsection{Related Work}
\vspace{-2mm}
\textbf{State Space Models (SSMs).}
Building upon the early S4 models \citep{gu2021efficiently, gupta2022diagonal, smithsimplified}, Mamba \citep{gu2023mamba,dao2024transformers} introduced input-dependent gating to dynamically select relevant features, achieving remarkable performance in NLP and CV. Recent works extending SSMs beyond 1D sequences have highlighted the importance of input ordering and scanning. For example, VMamba \citep{liu2024vmamba} introduces SS2D, employing multiple scanning routes to bridge sequential structure with the non-sequential nature of vision inputs, while Graph Mamba \citep{wang2024graphmamba, behrouz2024graph} adapts SSMs to non-Euclidean domains by leveraging graph connectivity. Collectively, these works show that the effectiveness of SSMs is tightly linked to input ordering and scanning strategies, a challenge that also motivates our theoretical analysis.

\textbf{Theoretical Analysis of SSMs.}  
Theoretical understanding of Mamba is still in its early stages and has so far centered primarily on approximation theory, such as connections to attention-like mechanisms \citep{dao2024transformers,nishikawastate}, expressive capacity \citep{cohen2025expressivity,huang2025understanding,muca2024theoretical,bao2025effect}, long-range dependency modeling \citep{ma2025rethinking,yu2025block},  
\revised{ and the comparison with Transformers \citep{jelassi2024repeat}. 
Beyond approximation theory, several recent works have begun examining optimization and generalization aspects of SSMs. \citet{honarpisheh2025generalization} provide a generalization-error bound based on Rademacher complexity; \citet{slutzky2024implicit} study implicit bias under a teacher–student setting and show that gradient flow can converge to a low-rank solution, though their model does not incorporate Mamba’s input-dependent gating.}
These analyses provide valuable intuition about the representational strengths and weaknesses of Mamba blocks. However, such results remain largely structural: they establish only the existence of desirable representations, without explaining whether or how these capabilities arise during training, particularly under Mamba’s unique mechanism. Motivated by this gap, we focus on studying how Mamba interacts with structured data, with particular emphasis on the role of its gating mechanism in shaping training dynamics and generalization.

\textbf{Feature Learning Framework.}
Recent theoretical studies of deep learning have shifted focus from the NTK framework \citep{jacot2018neural,ZLS19,ADHLW19} to the feature-learning framework, where data is modeled as a combination of features and the central question is how neural networks align with these features. Much of the recent work has concentrated on transformers \citep{li2023theoretical,li2024nonlinear,li2023transformers,li2025task,li2025training,liaotraining}, feedforward neural networks \citep{BJW19,ADHLW19,wen2021toward,sun2025contrastive}, graph neural networks \citep{ZWCLL23,li2024improves}, and Mixture-of-Experts \cite{chowdhury2023patch}. Due to the inherent complexity of non-convex optimization and modern architectures, prior works on feature learning have, to the best of our knowledge, focused primarily on shallow networks. In this work, we extend the structural data model to analyze the training dynamics of a shallow yet representative Mamba block, with particular emphasis on how its data-dependent gating mechanism shapes learning and generalization.

\section{Preliminaries}

\textbf{Structured state space models (S4).}
 For the $t$-th token, e.g., at time step $t$, let $\bm{x}_t \in \mathbb{R}^d$ be the input, $\bm{H}_t \in \mathbb{R}^{N\times d}$ denote the corresponding hidden state, and $\bm{y}_t \in \mathbb{R}^d$ denote the output. Let $\bm{A} \in \mathbb{R}^{N \times N}$ and $\bm{b}, \bm{c} \in \mathbb{R}^N$ be model parameters. The discrete-time SSM is given by  
\begin{equation}
\bm{H}_t=\overline{\bm{A}}\bm{H}_{t-1}+\overline{\bm{b}}\,\bm{x}_t^\top,\qquad y_t= \bm{H}_t^\top \bm{c},
\end{equation}
where \(\overline{\bm{A}}=\exp(\Delta \bm{A})\) and \(\overline{\bm{b}}=\bm{A}^{-1}\!\left(\exp(\Delta \bm{A})-\bm{I}\right)\bm{b}\) with $\Delta>0$ as the sampling step.

\textbf{Mamba.}
To overcome the data-independence of S4, recent work introduced \emph{selective state space models} \citep{gu2023mamba}, where key parameters are made input-dependent. 
Concretely, given input tokens \(\bm{x}_t \in \mathbb{R}^d\), the recurrence parameters are defined as
\begin{equation}
\bm{b}_t = \bm{W}_B^\top \bm{x}_t, 
\qquad
\Delta_t = \log \big(1 + e^{\bm{w}_\Delta^\top \bm{x}_t}\big), 
\qquad
\bm{c}_t = \bm{W}_C^\top \bm{x}_t,
\end{equation}
with learnable projections \(\bm{W}_B,\bm{W}_C \in \mathbb{R}^{d \times N}\) and a gating vector \(\bm{w}_\Delta \in \mathbb{R}^d\).
The discretization then yields two input-dependent gates,
\begin{equation}
\bar{\bm{b}}_t = \sigma(\bm{w}_\Delta^\top \bm{x}_t)\,\bm{b}_t,
\qquad
\bar{a}_t = 1 - \sigma(\bm{w}_\Delta^\top \bm{x}_t),
\end{equation}
which respectively control the input update and the carry-over of past states. With hidden state \(\bm{H}_t \in \mathbb{R}^{N\times d}\), the recurrence becomes
\begin{equation}
\bm{H}_t = \bar{a}_t \bm{H}_{t-1} + \bar{\bm{b}}_t \bm{x}_t^\top.
\end{equation}

Mamba output at token \(t\) is given by: 
\begin{align}
\vspace{-3mm}
\bm{y}_t (\bm{X}):= \bm{H}_t^\top \bm{c}_t \notag 
&= \sigma(\bm{w}_\Delta^\top \bm{x}_t)\, (\bm{W}_B^\top \bm{x}_t)^\top (\bm{W}_C^\top \bm{x}_t)\, \bm{x}_t 
+ \big(1 - \sigma(\bm{w}_\Delta^\top \bm{x}_t)\big) \bm{H}_{t-1}^\top \bm{c}_t \notag \\
&= \sum_{s=1}^t 
\Big( \prod_{j = s+1}^{t} \big(1 - \sigma(\bm{w}_\Delta^\top \bm{x}_j)\big) \Big) 
\cdot \sigma(\bm{w}_\Delta^\top \bm{x}_s)\, (\bm{W}_B^\top \bm{x}_s)^\top (\bm{W}_C^\top \bm{x}_t)\,\bm{x}_s.
\label{eq:mamba_output}
\end{align}

\textbf{Connection and Difference with Transformer.}
The Mamba formulation reveals a natural analogy to attention mechanisms \revised{\citep{dao2024transformers, sieber2024understanding}}. In particular, the input-dependent matrices \(\bm{W}_C\) and \(\bm{W}_B\) can be interpreted as counterparts to queries and keys in the self-attention, while the gating term \(\sigma(\bm{w}_\Delta^\top \bm{x}_t)\) acts as a dynamic weight controlling how past information contributes to the current output \revised{\citep{dao2024transformers}}. This structure yields a formulation closely related to gated linear attention \citep{yanggated,LTRF25,lu2025regla}, thereby highlighting a connection between SSM and Transformer models. Meanwhile, Mamba departs from these architectures: its gating mechanism is defined through \textit{multiplicative interactions}, effectively involving products of successive terms. This nonlinearity makes the analysis of Mamba substantially different and more challenging than that of gated linear attention.
\revised{Unlike additive attention-style weighting, Mamba’s gating introduces input-dependent multiplicative modulation in the selection mechanism. This alters how information is propagated through the model and results in training dynamics that differ from attention-based architectures.}

\section{Problem Formulation}

Following existing works \citep{BG21,ZWCLL23,li2023theoretical}, we consider a binary classification problem with training data \(\{(\bm{X}^{(n)}, z^{(n)})\}_{n=1}^{N}\) sampled i.i.d.\ from an unknown distribution \(\mathcal{D}\), where \(z^{(n)} \in \{+1,-1\}\) is the label. The goal is to learn a model that maps \(\bm{X}\) to \(z\) for any \((\bm{X}, z) \sim \mathcal{D}\). Each input takes the form \(\bm{X}^{(n)} = [\bm{x}_1^{(n)}, \ldots, \bm{x}_L^{(n)}] \in \mathbb{R}^{d \times L}\) with \(L\) tokens, where each token is $d$-dimensional. Tokens can be image patches \citep{dosovitskiy2020vit, TCD21} or subwords \citep{sennrich2016neural, kudo2018sentencepiece}.

Learning is performed using a simplified Mamba block formulated by (\ref{eq:mamba_output}), followed by a two-layer MLP. Formally, the model output can be expressed as
\begin{equation}
F(\bm{X}) = \frac{1}{L} \sum_{l=1}^L \sum_{i=1}^m v_i \phi\left( \bm{W}_{O(i, \cdot)} \bm{y}_l(\bm{X}) \right), 
\label{eq:model_output}
\end{equation}
where \(\phi(\cdot)\) denotes the ReLU function, and 
$\bm{W}_O \in \mathbb{R}^{m \times d}$, with $\bm{W}_{O(i, \cdot)}$ being
the $i$-th row of $\bm{W}_O$. Here, $\bm{y}_l(\bm{X})$ corresponds to the $l$-th token output of Mamba, as defined in~\eqref{eq:mamba_output}. \revised{In addition, \(v_i\) represents the output-layer weight for the \(i\)-th hidden unit.}

\textbf{Model Training}. Let \(\bm{\Psi}=(\bm{v}, \bm{W}_O, \bm{w}_\Delta, \bm{W}_B, \bm{W}_C)\) denote the set of model parameters. The
training process is to minimize the empirical risk \(f_N(\bm{\Psi})\),
\begin{equation}
\label{eq:emp-risk}
\min_{\bm{\Psi}}\; f_N(\bm{\Psi})
= \frac{1}{N}\sum_{n=1}^{N} \ell(\bm{X}^{(n)}, z^{(n)}; \bm{\Psi}),
\end{equation}
where \(\ell(\bm{X}^{(n)}, z^{(n)}; \bm{\Psi})\) is the hinge loss function, i.e.,
\begin{equation}
\label{eq:hinge-loss}
 \ell(\bm{X}^{(n)}, z^{(n)}; \bm{\Psi})=\max\{0, 1 - z^{(n)} \cdot F(\bm{X}^{(n)})\}.
\end{equation}

The empirical risk minimization problem in (\ref{eq:emp-risk}) is solved via gradient descent (GD). For the theoretical analysis, we consider the full batch gradient update with a learning rate of \(\eta\) at each iteration \(t = 1, 2, \ldots, T\).
Each entry of \(\bm{W}_O \in \mathbb{R}^{m \times d}\) is independently initialized from \(\mathcal{N}(0, c_0^2)\), and \(\bm{w}_\Delta\) is initialized to \(0\). 
Similarly, each entry of \(\bm{v} \in \mathbb{R}^{m}\) is independently sampled from \(\{ +\tfrac{1}{\sqrt{m}}, -\tfrac{1}{\sqrt{m}} \}\) with equal probability. $\bm{v}$ is fixed during training, as in other theoretical works \citep{allen2022feature, ADHLW19, KWLS21, allen2019learning, li2023theoretical, li2024nonlinear}.

\textbf{Generalization}. The generalization error of the learned model \(\bm{\Psi}\) is evaluated using the population risk \(f(\bm{\Psi})\), defined as
\begin{equation}
    f(\bm{\Psi}) = f(\bm{v}, \bm{W}_O, \bm{w}_\Delta, \bm{W}_B, \bm{W}_C) 
    = \mathbb{E}_{(\bm{X}, z) \sim \mathcal{D}} \, \ell(\bm{X}, z).
\end{equation}

\section{Theoretical Results}
\begin{table}[ht]
\caption{Some important notations} 
\label{table1:notations}
\vspace{-5mm}
\begin{center}
\renewcommand{\arraystretch}{1.15} 
\setlength{\tabcolsep}{8pt}        
\resizebox{\textwidth}{!}{
\begin{tabular}{c| l|| c| l}
\hline\hline
$\bm{y}_l$ & Mamba block output at token position $l$
         &  $N$ & Number of samples in a batch \\ \hline
$d$      & Embedding dimension
         & $m$      & The number of neurons in $\bm{W}_O$ \\ \hline
$\eta$ & Learning rate for gradient descent
              & $L$ & Length of the squence \\ \hline

            $\Delta L_{\bm{o}_+}^+$ & Concentration of class-relevant tokens & $\alpha_r$ & Average fraction of class-relevant tokens   \\ \hline

             $\Delta L_{\bm{o}_+}^-$ & Dispersion of the confusion tokens &$\alpha_c$ & Average fraction of confusion tokens 
                \\ \hline\hline
\end{tabular}
}
\end{center}
\end{table}

\subsection{Key Takeaways and Insights of the Findings}
Before formally presenting our data assumptions and theoretical results, we first summarize key insights derived from our theoretical findings. 
We consider a data model where tokens are noisy versions of \emph{class-relevant} patterns that determine the data label and \emph{class-irrelevant} patterns that do not affect the label.  
Some important parameters are summarized in Table \ref{table1:notations}.

\textbf{(T1). Convergence and sample complexity analysis of GD to achieve guaranteed generalization.}  
We introduce a theoretical framework for analyzing gated architectures with structured data. Compared with existing results on attention-based models, our framework captures the role of the gating mechanism inside the Mamba block and structured weight interactions, explaining how gradient descent (GD) exploits data structure to improve learning efficiency. Based on this analysis, we show that a model trained with GD achieves  guaranteed generalization with high probability over the randomness of the data and the GD updates.

\textbf{(T2). Theoretical characterization of the gating mechanism in Mamba.}  
We prove that during training, the gating network learns to prioritize class-relevant features while ignoring irrelevant ones. In the majority-voting regime, the gating vector $\bm{w}_\Delta$ becomes increasingly aligned with class-relevant directions: gradients along those directions grow, while those along irrelevant features remain negligible. In the locality-structured data regime, learning emphasizes the elimination of irrelevant features. Their directions are consistently pushed downward by negative updates, while the directions of relevant features remain nearly unchanged. This occurs because class-relevant and confusion tokens appear in equal proportion, so the model cannot amplify the former and instead reduces the influence of the latter. These dynamics strengthen informative tokens and weaken uninformative ones, inducing effective sparsity in the activations and formalizing the intuition that Mamba allocates capacity to the most important patterns in the data.

\textbf{(T3). Larger fraction or higher local concentration of class-relevant features accelerates learning.}
We show that both the number of iterations and the sample complexity required for generalization depend on the discriminative structure of the data and the token-level noise~$\tau$. 
For majority-voting data, these quantities scale as $(\alpha_r - \alpha_c)^{-2}$, so learning is faster when the fraction of class-relevant tokens is larger.
For locality-structured data, the number of iterations scales as $\bigl[(\tfrac12)^{\Delta L_{\bm{o}_+}^+} - (\tfrac12)^{\Delta L_{\bm{o}_+}^-}\bigr]^{-1}$, while the sample complexity scales as $\bigl[(\tfrac12)^{\Delta L_{\bm{o}_+}^+} - (\tfrac12)^{\Delta L_{\bm{o}_+}^-}\bigr]^{-2}$. 
Here, $\Delta L_{\bm{o}_+}^+$ denotes the separation between class-relevant features $\bm{o}_+$ in positive samples (capturing their locality), and $\Delta L_{\bm{o}_+}^-$ denotes the separation between confusion features $\bm{o}_+$ in negative samples (capturing the locality of confusing patterns). 
Thus, when $\Delta L_{\bm{o}_+}^+ \gg \Delta L_{\bm{o}_+}^-$, the locality of class-relevant features dominates, which reduces both the number of iterations and the sample complexity needed for convergence, implying faster learning when class-relevant tokens are more concentrated locally. 
Finally, in both regimes, smaller token-level noise~$\tau$ further accelerates learning.

\subsection{Data Model}
\label{subsec:Data Model}
 Consider an arbitrary set of orthonormal vectors 
\(\mathcal{O}= \{\bm{o}_+, \bm{o}_-, \bm{o}_3, \dots, \bm{o}_d\} \) in \( \mathbb{R}^{d}\), 
where \(\bm{o}_+\) and \(\bm{o}_-\) are discriminative features and the remaining vectors \(\bm{o}_j, j \ge 3,\) are class-irrelevant (filler) features. 
Depending on the class label, either \(\bm{o}_+\) or \(\bm{o}_-\) serves as the class-relevant pattern, while the other acts as a confusion pattern. 
Each token \(\bm{x}_l^{(n)}\) in \(\bm{X}^{(n)}\) is a noisy version of one of the input patterns (features), i.e., $\bm{x}^{(n)}_l = \bm{o} + \bm{\xi}$, where $\bm{o}\in\mathcal{O}$ and $\bm{\xi}$ is the Gaussian noise.  
We consider two different data types: majority-voting and locality-structured data.

\textbf{Majority Voting Data.} 
For the majority voting data type, the label is determined by a majority vote over the class-relevant patterns. 
Let $\alpha_r$ and $\alpha_c$ denote the average fractions of class-relevant tokens 
and confusion tokens over the distribution~$\mathcal{D}$, respectively.
In positive samples, noisy variants of $\bm{o}_+$ are class-relevant, while noisy variants of $\bm{o}_-$ act as confusion tokens. 
In negative samples, the roles are reversed. 
All other tokens correspond to class-irrelevant features.

\textbf{Locality-structured Data.} 
For the locality-structured data type, each sequence contains two $\bm{o}_+$ tokens and two $\bm{o}_-$ tokens, while all other tokens correspond to class-irrelevant features. 
In positive samples, the two $\bm{o}_+$ tokens are close to each other, while the two $\bm{o}_-$ tokens are far apart; 
formally, $\Delta L_{\bm{o}_+}^+ \ll \Delta L_{\bm{o}_-}^+$, where $\Delta L_{\bm{o}_+}^+$ and $\Delta L_{\bm{o}_-}^+$ denote the distances between the two $\bm{o}_+$ and $\bm{o}_-$ tokens, respectively. 
In negative samples, the pattern is reversed: $\Delta L_{\bm{o}_-}^- \ll \Delta L_{\bm{o}_+}^-$.

In addition, we consider a \textbf{balanced dataset} sampled from the unknown distribution \(\mathcal{D}\). Let 
\(\mathcal{N}_+ = \{(\bm{X}^{(n)}, z^{(n)}) : z^{(n)} = +1, \, n \in [N]\}\) 
and 
\(\mathcal{N}_- = \{(\bm{X}^{(n)}, z^{(n)}) : z^{(n)} = -1, \, n \in [N]\}\) 
denote the sets of positively and negatively labeled samples, respectively. Then the class balance satisfies \(\bigl|\;|\mathcal{N}_+| - |\mathcal{N}_-|\;\bigr| = O(\sqrt{N})\).

\revised{\textbf{Interpreting the Data Model in Practice.} 
Our theoretical data models are motivated by common patterns observed in practical machine learning tasks.}

On the one hand, the \textbf{majority-voting} data model captures a widely adopted assumption \citep{li2023theoretical, li2024nonlinear} in theoretical analysis, whereby the label is determined by the aggregate contribution through majority vote. 
For example, in image classification tasks \citep{krizhevsky2012imagenet, simonyan2014very, he2016deep}, the class label is often driven by multiple discriminative patches corresponding to foreground objects (class-relevant tokens). In contrast, background patches may contain other objects or patterns that are not associated with the target class (confusing tokens), along with random patches that are entirely unrelated (class-irrelevant tokens) \citep{dosovitskiy2020vit,TCD21}.

On the other hand, the \textbf{locality-structured} data corresponds to tasks where semantic meaning is concentrated in spatially or temporally localized clusters, while background features are more dispersed. 
This structure is most familiar in vision tasks such as object detection and localization \citep{ren2015faster, carion2020end, zhou2016learning} and image captioning \citep{vinyals2016show, xu2015show, radford2021learning}, where the decisive content is often confined to a small region of the image. For example, in an image labeled “dog in a park,” the prediction relies primarily on the contiguous region containing the dog rather than on scattered background textures.  A similar principle holds in audio and speech recognition \citep{yadav2024audio, gulati2020conformer}, where short phonetic segments capture the information needed to recognize words, and in genomics \citep{alipanahi2015predicting, zhou2015predicting}, where functional elements such as sequence motifs and regulatory regions are localized to short windows of DNA. In these settings, the local structure of nearby tokens strongly correlates with the label.

Together, the majority-voting and locality-structured models offer complementary perspectives on when selective recurrence can most effectively support learning from structured real-world data.

\subsection{Formal Theoretical Results}

\subsubsection{Theoretical Results for Majority-Voting Data}

\revised{We next present a lemma characterizing how the gating vector aligns with different features under the majority-voting data.}

\begin{lemma} [Gating Vector Alignment for Majority Voting Data] \label{lemma:gating_mechanism_mv}
With initialization 
where each entry 
of $\bm{W}_O$ is drawn independently from $\mathcal{N}(0, \xi^2)$ and 
$\bm{w}_\Delta^{(0)} = 0$. With a sufficient number of training samples and iterations, we have

\begin{equation}
\left\langle \bm{w}_\Delta^{(T)}, \bm{o}_+ \right\rangle
\geq \frac{\eta T}{8L^2} \Theta((\alpha_r L - \alpha_c L)^2)
\label{eq:align-pos}
\end{equation}
\begin{equation}
\left\langle \bm{w}_\Delta^{(T)}, \bm{o}_- \right\rangle
\geq \frac{\eta T}{8L^2} \Theta((\alpha_r L - \alpha_c L)^2)
\label{eq:align-neg}
\end{equation}
\begin{equation}
\langle \bm{w}_\Delta^{(T)}, \bm{o}_j \rangle
\leq \widetilde{\mathcal{O}}\left({1}/{\mathrm{poly}(d)}\right), \quad \forall j\ge 3.
\label{eq:align-irr}
\end{equation}
\end{lemma}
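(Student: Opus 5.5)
The plan is to track the projections $\langle \bm{w}_\Delta^{(t)}, \bm{o}\rangle$ for each $\bm{o}\in\mathcal{O}$ by an induction over the GD iterations. At each step I lower-bound the gradient component along $\bm{o}_\pm$ and upper-bound it along $\bm{o}_j$, $j\ge 3$. Summing the per-step increments over $t=0,\dots,T-1$ then gives \eqref{eq:align-pos}–\eqref{eq:align-irr}.

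\textbf{Step 1: the gradient with respect to $\bm{w}_\Delta$.} Differentiating \eqref{eq:mamba_output}, each summand of $\bm{y}_l$ has weight $g_{s,l}=\sigma(\bm{w}_\Delta^\top\bm{x}_s)\prod_{j=s+1}^{l}(1-\sigma(\bm{w}_\Delta^\top\bm{x}_j))$. Its derivative is
\begin{equation*}
\frac{\partial g_{s,l}}{\partial \bm{w}_\Delta}
= g_{s,l}\Bigl((1-\sigma(\bm{w}_\Delta^\top\bm{x}_s))\,\bm{x}_s-\sum_{j=s+1}^{l}\sigma(\bm{w}_\Delta^\top\bm{x}_j)\,\bm{x}_j\Bigr).
\end{equation*}
So $-\partial \ell/\partial\bm{w}_\Delta$ is a weighted combination of the tokens $\bm{x}_s=\bm{o}+\bm{\xi}$. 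The weights are $z\,v_i\,\mathbb{1}[\bm{W}_{O(i,\cdot)}\bm{y}_l>0]\,(\bm{W}_B^\top\bm{x}_s)^\top(\bm{W}_C^\top\bm{x}_l)\,\bm{W}_{O(i,\cdot)}\bm{x}_s$, restricted to samples with nonzero hinge loss.

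At $\bm{w}_\Delta^{(0)}=0$ every gate equals $1/2$, so $g_{s,l}=2^{-(l-s+1)}$. I will carry the induction hypothesis that $|\bm{w}_\Delta^{(t)\top}\bm{x}|$ stays $O(1)$ on the relevant range, so that every gate stays within constant factors of $1/2$. Under that hypothesis I can linearize.

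\textbf{Step 2: per-step gradient along $\bm{o}_+$ and $\bm{o}_-$.} Project the gradient onto $\bm{o}_+$ and split the data into positive and negative samples. I will use the companion dynamics of $\bm{W}_O$, which are proved alongside the lemma in the same induction. Under these dynamics the lucky neurons with $v_i>0$ grow along $\bm{o}_+$, those with $v_i<0$ grow along $\bm{o}_-$, and the remaining projections stay at initialization scale $\xi$.

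In a positive sample, the term $z v_i \bm{W}_{O(i,\cdot)}\bm{x}_s\langle\bm{x}_s,\bm{o}_+\rangle$ is positive for $\bm{o}_+$ tokens and is offset by confusion $\bm{o}_-$ tokens. Counting tokens, the net coefficient is proportional to $(\alpha_rL-\alpha_cL)$. A second factor $(\alpha_rL-\alpha_cL)$ comes from the aggregated product $(\bm{W}_B^\top\bm{x}_s)^\top(\bm{W}_C^\top\bm{x}_l)$ and the activation pattern summed over $l$. With the $1/L$ in $F$ and the averaging over $l$, this gives a per-step increment of at least $\frac{\eta}{8L^2}\Theta((\alpha_rL-\alpha_cL)^2)$. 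The constant $1/8$ arises from the gate factors $\sigma(1-\sigma)=1/4$ times $1/2$.

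Negative samples contribute with the roles of $\bm{o}_+$ and $\bm{o}_-$ swapped and $z=-1$. Combined with the sign of $v_i$, they contribute the same sign, which is why \eqref{eq:align-neg} is symmetric to \eqref{eq:align-pos}. I will replace the empirical average by its expectation using concentration, with the $O(\sqrt N)$ class balance and sufficiently large $N$, at an error $\tilde O(1/\sqrt N)$ that is absorbed into $\Theta(\cdot)$.

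\textbf{Step 3: irrelevant directions.} The irrelevant features $\bm{o}_j$, $j\ge3$, appear with the same distribution in both classes. Their contributions therefore cancel in expectation over $z$, because $v_i$ is symmetric and the $\bm{W}_O$ projections on $\bm{o}_j$ stay at scale $\xi$. The residual is bounded by noise terms $\langle\bm{\xi},\bm{o}_j\rangle$, the initialization scale, and finite-sample fluctuation. Choosing $\xi$ and $\tau$ polynomially small in $d$, and summing over $T$ steps with $\eta T$ polynomially bounded, gives $\tilde O(1/\mathrm{poly}(d))$.

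\textbf{Main obstacle.} The hardest part is Step 2's control of the multiplicative product $\prod_j(1-\sigma(\cdot))$ once $\bm{w}_\Delta$ has moved. As $\langle\bm{w}_\Delta,\bm{o}_\pm\rangle$ grows, gates on relevant tokens approach 1 and the carry-over shrinks, and this could reduce the gradient. I will first try to show that the products only enhance the weight $g_{l,l}$ on the current relevant token, which keeps the lower bound monotone. Failing that, I will stop the analysis at a $T$ where the projections are still $O(1)$, which suffices for the downstream generalization theorem. Coupling the $\bm{w}_\Delta$ dynamics with the activation-pattern stability of $\bm{W}_O$, which requires $m$ large enough for lucky neurons to exist, is the other delicate step.
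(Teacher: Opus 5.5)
Your skeleton matches the paper's: induct on the projections of $\bm{w}_\Delta^{(t)}$, lower-bound the gating gradient along $\bm{o}_\pm$, upper-bound it along $\bm{o}_j$, and sum over $t$. In the paper this is Lemmas~\ref{lemma:5} and~\ref{lemma:6} run inside the induction of the proof of Theorem~\ref{theorem:generalization-majorityvoting}.

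The gap is in Step~2. There you assert the increment, and the sources you name for its two factors are wrong. The gating gradient is linear in $\bm{W}_{O(i,\cdot)}\bm{x}_s$, so its scale is set by the lucky neurons' projection $\langle\bm{W}_{O(i,\cdot)}^\top,\bm{o}_+\rangle$. At initialization that projection is only of order $\xi$ (the paper sets $\delta_1=1/\mathrm{poly}(d)$), so at $(\bm{W}_O^{(0)},\bm{w}_\Delta^{(0)})$ the increment is negligible.

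You invoke the companion $\bm{W}_O$ dynamics only qualitatively. The missing step is to quantify the $\bm{W}_O$ update first and then evaluate the gating gradient at the \emph{updated} weights.
\begin{itemize}
\item Lemma~\ref{lem:positive_cls_lucky_newMV} at $\bm{w}_\Delta=0$ (so $\sigma(0)=1/2$) gives $\langle\bm{W}_{O(i,\cdot)}^{(1)\top},\bm{o}_+\rangle\ge a\approx\frac{\eta}{2\sqrt m L}\Theta(\alpha_rL-\alpha_cL)$.
\item Lemma~\ref{lemma:5} is stated with $r_1^*\le\langle\bm{W}_{O(i,\cdot)}^{(t+1)\top},\bm{o}_+\rangle\le s_1^*$, and it is applied with $r_1^*=a$, $s_1^*=b\approx a$.
\end{itemize}
The second factor $(\alpha_rL-\alpha_cL)$ and the $\eta$ both come from $a$. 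They do not come from $(\bm{W}_B^\top\bm{x}_s)^\top(\bm{W}_C^\top\bm{x}_l)$: with $\bm{W}_B,\bm{W}_C$ frozen at the identity, that factor is just $\bm{x}_s^\top\bm{x}_l\in\{0,1\}$ up to noise.

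The offset along $\bm{o}_+$ is also misattributed. It does not come from the $\bm{o}_-$ tokens of positive samples, which are orthogonal to $\bm{o}_+$ and drop out of the projection. It comes from the $\alpha_cL$ confusion copies of $\bm{o}_+$ in \emph{negative} samples. These fire the same $\mathcal{K}_+$ neurons with $z=-1$ and give the term $-\frac{\sqrt m s_1^*}{4L}\Theta(\alpha_cL)$.

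With $\rho_0^+$ taken as $m/2$, the positive term $\frac{a}{2\sqrt mL}\rho_0^+\Theta(\alpha_rL)$ and this negative term combine to $\frac{\sqrt m a}{4L}\Theta(\alpha_rL-\alpha_cL)=\frac{\eta}{8L^2}\Theta((\alpha_rL-\alpha_cL)^2)$. The $1/8$ is $\tfrac12$ (class balance) times $\tfrac12$ ($\rho_0^+/m$) times $\tfrac12$ ($\sigma(0)$ in the $\bm{W}_O$ step); the factor $\sigma(1-\sigma)$ is absorbed into $\Theta(\alpha_rL)$. The paper then takes this gradient value $\alpha$ itself as the per-step increment, giving $\langle\bm{w}_\Delta^{(T)},\bm{o}_+\rangle\ge\alpha T$. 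So the single $\eta$ in the statement is the one inside $a$; a literal step of $\eta$ times the gradient would give $\eta^2$.

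Your Step~1 hypothesis is a second problem. Keeping $|\bm{w}_\Delta^{(t)\top}\bm{x}|=O(1)$ so that all gates stay near $1/2$ is incompatible with \eqref{eq:align-pos} once $T\gtrsim1/(\eta(\alpha_r-\alpha_c)^2)$, because its right-hand side grows linearly in $T$. Theorem~\ref{theorem:generalization-majorityvoting} also explicitly needs $\sigma(\alpha T)\ge1-\epsilon$. Your fallback of stopping while the projections are $O(1)$ therefore proves only a short-horizon version.

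The paper does not linearize. It argues as follows:
\begin{itemize}
\item $\langle\bm{w}_\Delta^{(t)},\bm{o}_+\rangle>0$ implies $\sigma(\cdot)>1/2$, so the lucky-neuron $\bm{W}_O$-gradient of Lemma~\ref{lem:positive_cls_lucky_newMV} is nondecreasing in $t$.
\item Hence $\langle\bm{W}_O^{(t)\top},\bm{o}_+\rangle$ keeps growing, and its lower and upper bounds $u,v$ stay within $\widetilde{\mathcal{O}}(1/\mathrm{poly}(d))$ of each other.
\item The gating step therefore stays nonnegative ($\chi\ge0$).
\end{itemize}

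Your Step~3 matches Lemma~\ref{lemma:6}: concentration plus $\mathcal{O}(\tau)$, accumulated over $T$. Note, however, which terms along $\bm{o}_j$ are dangerous. They are the off-diagonal ones $-\sigma(\bm{w}_\Delta^\top\bm{x}_j)\bm{x}_j$ with $\bm{x}_j=\bm{o}_j$ lying between two relevant tokens. These are weighted by the large $\bm{W}_{O(i,\cdot)}\bm{o}_\pm$, not by $\bm{W}_{O(i,\cdot)}\bm{o}_j\sim\xi$. The paper disposes of them by placing the relevant tokens contiguously, not by class symmetry of $\bm{o}_j$.
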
 

Lemma~\ref{lemma:gating_mechanism_mv} establishes that after sufficient training, the gating vector $\bm{w}_\Delta$ aligns positively with the class-relevant features $\bm{o}_+$(\ref{eq:align-pos}) and $\bm{o}_-$ as shown in (\ref{eq:align-neg}), while its alignment with irrelevant features remains strictly negative as shown in (\ref{eq:align-irr}). 
In other words, the selection mechanism implicitly acts as a feature selector, amplifying relevant tokens and ignoring irrelevant ones. Lemma~\ref{lemma:gating_mechanism_mv} serves as an informal version of Lemmas \ref{lemma:5} and \ref{lemma:6}.

\textbf{Remark 1:} With majority voting data, the gating vector aligns with discriminative features, i.e., $\bm{o}_+$ and $\bm{o}_-$. As a result, the model’s output focuses primarily on these features, giving more weight to tokens that carry discriminative features while reducing the influence of less important tokens. Since the number of class-relevant tokens is greater than the number of confusing ones, e.g., in a positive sample, the tokens containing $\bm{o}+$ outnumber those containing $\bm{o}-$, the model can correctly assign the label through this majority effect. Furthermore, as the difference between the counts of class-relevant features and confusing features (i.e., $\alpha_r - \alpha_c$) increases, the gating vector converges much faster. Overall, this gating mechanism allows the model to use its training samples more efficiently because it learns to emphasize the most relevant feature early on and ignore irrelevant features.

We now present the theorem establishing the generalization guarantee for Mamba under the majority-voting data.

\newtheorem{theorem}{Theorem}

\begin{theorem}[Generalization for Majority Voting Data] \label{theorem:generalization-majorityvoting}  
Suppose the model width satisfies \(m \geq d^2 \log q\) for some constant \(q > 0\), and the token noise level is bounded as \(\tau < \mathcal{O}\!\left(\frac{1}{d}\right)\).  
Then, with probability at least \(1-N^{-d}\), if the number of training samples \(N\) satisfies
\begin{equation}
    N \geq \Omega\!\left( \frac{L^2 d}{\eta^2 (\alpha_r - \alpha_c)^2} \right),
    \label{Thm1:sample_complexity}
\end{equation}
and the number of iterations \(T\) satisfies
\begin{equation}
    T = \Theta\!\left( \frac{L^2}{\eta (\alpha_r - \alpha_c)^2} \right),    \label{Thm1:iterations}
\end{equation}
the resulting model achieves  guaranteed generalization, i.e.,
\begin{equation}
    f\!\big(\bm{v}^{(0)}, \bm{W}_O^{(T)}, \bm{w}_\Delta^{(T)}, \bm{W}_B^{(0)}, \bm{W}_C^{(0)}\big) = 0.
\end{equation}
\end{theorem}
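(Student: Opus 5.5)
We reduce the generalization claim to two facts: the gating vector alignment of Lemma~\ref{lemma:gating_mechanism_mv}, and a companion characterization of how the rows of $\bm{W}_O$ evolve. Note that the final model keeps $\bm{v}^{(0)},\bm{W}_B^{(0)},\bm{W}_C^{(0)}$ frozen. So the argument only tracks $\bm{W}_O^{(t)}$ and $\bm{w}_\Delta^{(t)}$ under full-batch GD on the hinge loss.

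\textbf{Step 1: lucky neurons.} We partition the $m$ neurons by the sign of $v_i$ and by the initial correlation of $\bm{W}_{O(i,\cdot)}^{(0)}$ with the projected features $\bm{W}_B^{(0)\top}\bm{o}_\pm$, $\bm{W}_C^{(0)\top}\bm{o}_\pm$. Since $m\ge d^2\log q$, a Gaussian anti-concentration and union bound argument gives $\Theta(m)$ "lucky" neurons. For these, $v_i>0$ and $\bm{W}_{O(i,\cdot)}^{(0)}$ has a positive inner product with $\bm{o}_+$ exceeding its inner products with the other features; the symmetric statement holds for $\bm{o}_-$ with $v_i<0$.

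\textbf{Step 2: gradient updates.} We then compute the gradient of $\ell$ with respect to $\bm{W}_{O(i,\cdot)}$ and $\bm{w}_\Delta$ using the unrolled form \eqref{eq:mamba_output}. The noise assumption $\tau<\mathcal{O}(1/d)$ keeps every token within $\mathcal{O}(1/d)$ of its pattern, so each $\bm{y}_l$ is a gated combination of $\bm{o}_+,\bm{o}_-,\bm{o}_j$ plus a small perturbation. We prove inductively over $t$ that the expected gradient for lucky $\bm{o}_+$-neurons has a component along $\bm{o}_+$ of order $\frac{1}{L}(\alpha_r L-\alpha_c L)/\sqrt{m}$. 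The intuition is that positive samples contribute the majority $\bm{o}_+$ tokens, and negative samples subtract only the minority confusion tokens. Components along $\bm{o}_j$, $j\ge 3$, cancel in expectation. Activation patterns of unlucky neurons stay bounded and do not flip sign adversarially.

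\textbf{Step 3: empirical versus population gradients.} We replace the empirical gradient by its population counterpart using the balanced-dataset condition and Hoeffding/Bernstein bounds for sub-Gaussian summands. The deviation is $\mathcal{O}(\sqrt{d\log N/N})$ per coordinate, with failure probability $N^{-d}$. This deviation must be below the signal per step, namely $\eta(\alpha_r-\alpha_c)/L$. That requirement yields $N\ge\Omega\!\left(L^2 d/(\eta^2(\alpha_r-\alpha_c)^2)\right)$, i.e.\ \eqref{Thm1:sample_complexity}.

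\textbf{Step 4: the margin.} Summing the per-step growth over $T$ iterations, and using Lemma~\ref{lemma:gating_mechanism_mv} so that $\sigma(\bm{w}_\Delta^\top\bm{x}_l)$ is boosted on $\bm{o}_\pm$ tokens and suppressed on filler tokens, we lower bound $zF(\bm{X})$. The model output on a positive test sample is then at least
\[
\frac{1}{L}\sum_l \sum_{i\ \mathrm{lucky}} v_i\,\phi(\cdot)\gtrsim \frac{\eta T(\alpha_r-\alpha_c)^2}{L^2}-o(1).
\]
The quadratic dependence comes from the $\bm{W}_O$ growth and the gating amplification each contributing a factor $(\alpha_r-\alpha_c)$. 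Choosing $T=\Theta(L^2/(\eta(\alpha_r-\alpha_c)^2))$ makes this at least $1$ for every $(\bm{X},z)$ in the support, up to the bounded noise. The hinge loss is therefore identically $0$ and $f=0$.

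The main obstacle is Step 2. The multiplicative products $\prod_j(1-\sigma(\bm{w}_\Delta^\top\bm{x}_j))$ couple the $\bm{w}_\Delta$ and $\bm{W}_O$ dynamics. To handle this, we would first show that $\bm{w}_\Delta^{(t)}$ stays in a bounded regime (say $|\bm{w}_\Delta^\top\bm{o}|\le\mathcal{O}(1)$ for $t\le T$). In that regime the products are $\Theta(1)$ and monotone in the relevant alignment, so they can be sandwiched between constants and treated as a reweighting that only helps relevant tokens.
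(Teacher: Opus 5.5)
Your skeleton (lucky neurons, a $\bm{W}_O$-gradient of order $(\alpha_r-\alpha_c)/\sqrt{m}$ along $\bm{o}_\pm$, empirical-versus-population concentration, linear growth over $T$ steps, then a margin bound) matches the paper's (S1)--(S4). The gap is in Step 4, which rests on a mechanism that cannot work. You attribute the second factor of $(\alpha_r-\alpha_c)$ to gating amplification, but this fails for two reasons.

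\begin{itemize}
\item The gate enters $\langle \bm{y}_l,\bm{o}_+\rangle$ only through weights $\sigma(\cdot)\prod(1-\sigma(\cdot))$, and these sum to at most $1$.
\item Lemma~\ref{lemma:gating_mechanism_mv} gives $\bm{w}_\Delta$ the same growth along $\bm{o}_+$ and $\bm{o}_-$. A single gating vector shared by both classes cannot favour the class-relevant feature over the confusion feature. It boosts the $\alpha_c L$ confusion tokens exactly as much as the $\alpha_r L$ relevant ones.
\end{itemize}

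In the paper the second factor is a token count inside $F$. On a positive sample:
\begin{itemize}
\item each $\mathcal{W}(0)$ neuron contributes at least $aT\,\Theta(\sigma(\alpha T))\,\alpha_r L$ from the relevant tokens;
\item each $\mathcal{U}(0)$ neuron subtracts at most $bT\,\Theta(\sigma(WT))\,\alpha_c L$ from the confusion tokens;
\item the rest of $\mathcal{K}_-$ subtracts only $\widetilde{\mathcal{O}}(1/\mathrm{poly}(d))$.
\end{itemize}
Since $a-b$ and $W-\alpha$ are both $\widetilde{\mathcal{O}}(1/\mathrm{poly}(d))$, this collapses to $F\ge\frac{\sqrt{m}}{2}aT\,\Theta(\sigma(\alpha T))(\alpha_r-\alpha_c)\approx\frac{\eta T}{4}\Theta((\alpha_r-\alpha_c)^2)$. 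The gate supplies only a factor between $1/2$ and $1$. Your plan never sets up this cancellation between the two groups of lucky neurons, which is where the majority vote actually enters the margin. Also, the $1/L^2$ in your displayed bound is read off the theorem rather than derived. The paper's count has no such factor, so the stated $T$ suffices with room to spare.

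Your route around the ``main obstacle'' also conflicts with the lemma you invoke. At the theorem's $T$, Lemma~\ref{lemma:gating_mechanism_mv} gives $\langle\bm{w}_\Delta^{(T)},\bm{o}_\pm\rangle\ge\frac{\eta T}{8}\Theta((\alpha_r-\alpha_c)^2)=\Theta(L^2)$. So an invariant $|\bm{w}_\Delta^\top\bm{o}|\le\mathcal{O}(1)$ for all $t\le T$ is not available, and the products $\prod(1-\sigma)$ over relevant tokens are not $\Theta(1)$. The paper needs no such invariant. It only uses that the alignments along $\bm{o}_\pm$ are nonnegative and increasing. Hence $\sigma(\langle\bm{w}_\Delta^{(t)},\bm{o}_\pm\rangle)\ge 1/2$, and the lucky-neuron gradient $\frac{1}{\sqrt{m}L}\sigma(\alpha^*)\Theta(\alpha_rL-\alpha_cL)$ is nondecreasing in $t$. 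Meanwhile $\langle\bm{y}_l,\bm{o}_+\rangle=\Theta(\sigma(\alpha T))$ however far the gate saturates; (S4.1) even asks for $\sigma(\alpha T)\ge 1-\epsilon$.

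Two smaller points in Step 1:
\begin{itemize}
\item $\bm{W}_B^{(0)\top}\bm{o}_\pm$ lives in the hidden-state dimension and cannot be paired with a row of $\bm{W}_O$. That row acts on $\bm{y}_l$, which is a gated combination of raw tokens. The paper freezes $\bm{W}_B=\bm{W}_C=\bm{I}$ and calls $i$ lucky when $v_i>0$ and $\bm{W}_{O(i,\cdot)}^{(0)}\bm{o}_+>0$.
\item Requiring the $\bm{o}_+$ coordinate to exceed all other coordinates leaves only a $\Theta(1/d)$ fraction of neurons, not $\Theta(m)$. The paper's weaker condition is what makes $|\mathcal{W}(0)|\approx|\mathcal{U}(0)|$ of order $m$, and the cancellation above relies on that.
\end{itemize}
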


Theorem~\ref{theorem:generalization-majorityvoting} establishes the sample complexity, as shown in~\eqref{Thm1:sample_complexity}, and the convergence rate, as given in~\eqref{Thm1:iterations}, that are required to guarantee desirable generalization when training the model in~\eqref{eq:model_output} using GD for the majority-voting data type. In other words, the model achieves good generalization once a sufficient number of samples is available, as specified in~\eqref{Thm1:sample_complexity}, and training has proceeded for a sufficient number of iterations, as specified in~\eqref{Thm1:iterations}.

\textbf{Remark 2:}  With majority-voting data, the Mamba architecture can effectively capture the underlying data distribution by first identifying discriminative features through its gating mechanism and then aggregating them via a data-dependent recurrent mechanism. In this sense, Mamba behaves similarly to the Transformer \citep{li2023theoretical}, suggesting a close connection between the two models despite their architectural differences. According to the results of Lemma \ref{lemma:gating_mechanism_mv}, the model further benefits from a faster convergence rate and reduced sample complexity when the gap between class-relevant and confusing features is larger.

\subsubsection{Theoretical Results for Locality-Structured Data}

\revised{We next present a lemma characterizing how the gating vector aligns with different features under the locality-structured data.}

\begin{lemma} [Gating Vector Alignment for Locality-structured Data] \label{lemma:gating_mechanism_conc}
With initialization 
where each entry 
of $\bm{W}_O$ is drawn independently from $\mathcal{N}(0, \xi^2)$ and 
$\bm{w}_\Delta^{(0)} = 0$. With a sufficient number of training samples and iterations, we have
\begin{equation}
\langle \bm{w}_\Delta^{(T)}, \bm{o}_+ \rangle
\geq - \widetilde{\mathcal{O}}\left( {1}/{\mathrm{poly}(d)} \right)
\label{eq:align-pos_conc},
\end{equation}
\begin{equation} 
\langle \bm{w}_\Delta^{(T)}, \bm{o}_- \rangle
\geq - \widetilde{\mathcal{O}}\left( {1}/{\mathrm{poly}(d)} \right),
\label{eq:align-neg_conc}
\end{equation}

\begin{equation}
\small
\left\langle \bm{w}_\Delta^{(T)}, \bm{o}_j \right\rangle
\leq \frac{-\eta T c'^3}{16L} \left[ \left( \frac{1}{2} \right)^{\Delta L_{\bm{o}_+}^+-2} - \left( \frac{1}{2} \right)^{\Delta L_{\bm{o}_+}^--2}  \right] \left[ \left( \frac{1}{2} \right)^{\Delta L_{\bm{o}_+}^+}   + \left( \frac{1}{2} \right)^{\Delta L_{\bm{o}_-}^-}   \right].
\label{eq:align-irr_conc}
\end{equation}
\end{lemma}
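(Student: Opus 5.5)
Our plan tracks the gradient of the hinge loss with respect to $\bm{w}_\Delta$ along each orthonormal direction $\bm{o}\in\mathcal{O}$ across the $T$ GD iterations, and then sums the per-step increments. Since $\bm{w}_\Delta^{(0)}=0$, we have $\langle \bm{w}_\Delta^{(T)},\bm{o}\rangle = -\eta\sum_{t<T}\langle \nabla_{\bm{w}_\Delta} f_N(\bm{\Psi}^{(t)}),\bm{o}\rangle$. It therefore suffices to give per-iteration bounds on $\langle \nabla_{\bm{w}_\Delta} f_N,\bm{o}\rangle$ that hold uniformly in $t$.

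\textbf{Step 1: derivative of the Mamba output.} Differentiate \eqref{eq:mamba_output} with respect to $\bm{w}_\Delta$. Write $g_j=\sigma(\bm{w}_\Delta^\top\bm{x}_j)$ and $\kappa_{s,t}=\prod_{j=s+1}^t(1-g_j)\,g_s$. The derivative of $\kappa_{s,t}$ is
\[
\kappa_{s,t}\Big[(1-g_s)\bm{x}_s-\sum_{j=s+1}^t g_j\bm{x}_j\Big].
\]
Projecting onto $\bm{o}$ picks out only the tokens carrying $\bm{o}$, up to noise terms of order $\tau$, which are absorbed into $\widetilde{\mathcal{O}}(1/\mathrm{poly}(d))$ using $\tau<\mathcal{O}(1/d)$. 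Near initialization every $g_j\approx 1/2$, so the weights reduce to $\kappa_{s,t}\approx(1/2)^{t-s+1}$. This geometric decay in the distance between tokens is the source of the factors $(1/2)^{\Delta L}$.

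\textbf{Step 2: the outer layer.} Since $\bm{W}_B,\bm{W}_C$ stay at initialization, the scalar $(\bm{W}_B^\top\bm{x}_s)^\top(\bm{W}_C^\top\bm{x}_t)$ is a fixed kernel; we lower-bound its size on same-feature pairs by a constant $c'$. Next, we use the evolution of $\bm{W}_O$ (shown concurrently, as in the majority-voting analysis, Lemmas \ref{lemma:5}, \ref{lemma:6}) to argue that the neurons with $v_i>0$ become active on $\bm{o}_+$-driven outputs, and symmetrically for $v_i<0$. Combined with the $1/\sqrt m$ scaling, this yields the $c'^3$ factor. Only samples with nonzero hinge loss contribute, and by class balance their counts are within $O(\sqrt N)$ of each other.

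\textbf{Step 3: sign analysis per direction.} For $\bm{o}_\pm$, note that each sequence contains exactly two $\bm{o}_+$ and two $\bm{o}_-$ tokens. The positive contribution from the $s$-term and the negative contribution from the $-\sum_j g_j\bm{x}_j$ term then cancel across classes up to concentration error. We will show the resulting projection is at least $-\widetilde{\mathcal{O}}(1/\mathrm{poly}(d))$, using $N$ large for the empirical-to-population deviation. For a filler direction $\bm{o}_j$, filler tokens appear only in the subtracted product terms. Their contribution is therefore a sum of negative-sign terms weighted by the label-correlated difference
\[
(1/2)^{\Delta L^+_{\bm{o}_+}-2}-(1/2)^{\Delta L^-_{\bm{o}_+}-2},
\]
which comes from the two $\bm{o}_+$ tokens being close in positive samples versus far in negative samples. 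The product with the gate on the relevant-token pair yields the second bracket $(1/2)^{\Delta L^+_{\bm{o}_+}}+(1/2)^{\Delta L^-_{\bm{o}_-}}$. Averaging over the $L$ positions in $F$ gives the $1/L$ factor, and constants give $1/16$. Summing over $T$ steps yields \eqref{eq:align-irr_conc}.

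\textbf{Main obstacle.} The hardest part is keeping the geometric weights $\approx(1/2)^{\Delta L}$ valid throughout training, since $\bm{w}_\Delta$ drifts, and in particular it drifts negatively on filler directions. We plan to run an induction on $t$. The induction shows that $\langle\bm{w}_\Delta^{(t)},\bm{o}_\pm\rangle$ stays near $0$, so the gates of relevant tokens remain $\approx1/2$. Decreasing filler gates only increase the carry-over factors $1-g_j\ge1/2$, so the lower bound $(1/2)^{\Delta L}$ persists. We need to check carefully that this monotonicity does not flip the sign of the difference term. We also need to handle ReLU activation patterns changing across iterations, which we would control by bounding the neuron-wise drift of $\bm{W}_O$.
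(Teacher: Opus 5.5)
Your route is the paper's. It bounds the gating gradient at each iteration and runs an induction over $t$ (step (S2) in the proof of Theorem~\ref{theorem:generalization-locality-structured data}). Along $\bm{o}_\pm$ it uses a cross-class cancellation, since both classes contain two $\bm{o}_+$ and two $\bm{o}_-$ tokens (Lemma~\ref{lemma:5_new}). Along a filler $\bm{o}_k$ it uses a negative drift coming from the subtracted terms between the two nearby relevant tokens (Lemma~\ref{lemma:6_new}). There the first bracket enters through the one-step lower bound $r_1^*=a$ on $\langle\bm{W}_{O(i,\cdot)}^\top,\bm{o}_+\rangle$ for lucky neurons, and the second enters through the carry-over products.

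The gap is at the step you call the main obstacle: the claim that the per-iteration decrement along $\bm{o}_k$ stays at least its $t=0$ value, uniformly in $t$. Monotonicity of the carry-over factors does not give this. With your (correct) derivative, the $\bm{o}_k$-component of $\partial\kappa_{s,t}/\partial\bm{w}_\Delta$ is $-\kappa_{s,t}\sum_{j}g_j\langle\bm{x}_j,\bm{o}_k\rangle$. So besides the factor $(1-u)^{n}$, with $n$ the number of $\bm{o}_k$-tokens in the gap, it carries the filler's own gate $u=\sigma(\langle\bm{w}_\Delta,\bm{o}_k\rangle)$. The product $u(1-u)^{n}$ is not monotone in $u$; for $n=1$ it is already maximal at $\bm{w}_\Delta=\bm{0}$. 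It tends to $0$ as $\langle\bm{w}_\Delta,\bm{o}_k\rangle\to-\infty$.

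Closing the filler gates also inflates the opposite-class far-pair terms. Relative to the near-pair terms they are of order $(1-u)^{\Delta L_{\bm{o}_+}^{-}-\Delta L_{\bm{o}_+}^{+}}$, which is small only while $u\gg 1/(\Delta L_{\bm{o}_+}^{-}-\Delta L_{\bm{o}_+}^{+})$. As $u\to0$ the distance between relevant tokens stops mattering. So the danger is not just a sign change of the difference term but its collapse, both in the filler drift and in the locality signal that grows $\bm{W}_O$ along $\bm{o}_+$.

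Hence the uniform bound your first paragraph relies on is available only while $\langle\bm{w}_\Delta^{(t)},\bm{o}_k\rangle\ge -C$ for some fixed threshold $C$. If you show the trajectory stays in that window up to $T$, that by itself caps the certified alignment at $-C$. This method therefore gives the linear-in-$T$ bound only for $T$ up to order $C$ divided by the per-step rate. Otherwise you must carry the $u$-dependent factors through the sum and accept a saturating bound.

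The paper's (S2) step gets the monotonicity you invoke for two reasons. First, its gating gradient \eqref{eq:gradient_gating_network} has $1-\sigma(\bm{w}_\Delta^\top\bm{x}_j)$ where your derivative has $\sigma(\bm{w}_\Delta^\top\bm{x}_j)$. Second, it discards the $\mathcal{O}\bigl((1-\sigma(p_2))^{\Delta L}\bigr)$ terms of Lemma~\ref{lemma:6_new} as $\approx 0$. With your formula that shortcut is unavailable, so this step needs a genuine argument.

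A smaller point: equal numbers of positive and negative samples with active hinge loss do not follow from class balance. That has to come from the symmetry between the two classes in the data model; the paper simply treats every sample as active, with $N/2$ per class.
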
 

Lemma~\ref{lemma:gating_mechanism_conc} establishes that after sufficient training, the gating vector $\bm{w}_\Delta$ remains close to zero for class-relevant features $\bm{o}_+$ as shown in (\ref{eq:align-pos_conc}) and $\bm{o}_-$ as shown in (\ref{eq:align-neg_conc}), however its alignment with irrelevant features remains strongly negative as shown in (\ref{eq:align-irr_conc}). 
Through this mechanism, the gating favors class-relevant features to select the most informative feature for learning. Lemma~\ref{lemma:gating_mechanism_conc} serves as an informal version of Lemmas \ref{lemma:5_new} and \ref{lemma:6_new}.

\textbf{Remark 3:} The gating vector behaves differently from majority voting, though the overall insights remain similar. We can no longer guarantee that $\bm{w}_\Delta$ will always grow in the direction of discriminative features, because we assume that the number of class-relevant features can be comparable to the number of confusing features. This assumption is introduced to highlight the role of data locality in shaping the gating vector, which is more challenging to analyze in isolation since majority voting can readily reinforce it; however, their combined effect better reflects real-world data.
Although this direct growth no longer holds, the gating vector consistently decreases in the direction of irrelevant features. At a higher level, this can be seen as a synergistic interaction: the recurrent mechanism captures locality and suppresses irrelevant features, which pushes the gating vector to decrease along those directions, while the gating itself further amplifies this suppression. From another perspective, by making the model pay less attention to irrelevant features, the gating vector effectively shifts more attention toward discriminative features.

We now present the theorem establishing the generalization guarantee for Mamba under the locality-structured data.

\begin{theorem}[Generalization for Locality-structured Data] \label{theorem:generalization-locality-structured data} 
Suppose the model width satisfies \(m \geq d^2 \log q\) for some constant \(q > 0\), and the token noise level is bounded as \(\tau < \mathcal{O}\!\left(\frac{1}{d}\right)\).  Then, with probability at least \(1-N^{-d}\), if the number of training samples \(N\) satisfies
\begin{equation}
    N \geq \Omega\!\bigg( \frac{L^2 d}{\eta^2 \big[ \left( 1/2 \right)^{\Delta L_{\bm{o}_+}^+} - \left( 1/2 \right)^{\Delta L_{\bm{o}_+}^-}  \big]^2} \bigg),    \label{Thm2:sample_complexity}
\end{equation}
and the number of iterations \(T\) satisfies
\begin{equation}
    T = \Theta\!\bigg( \frac{L^2}{\eta \big[ \left( 1/2 \right)^{\Delta L_{\bm{o}_+}^+} - \left( 1/2 \right)^{\Delta L_{\bm{o}_+}^-} \big]} \bigg),   \label{Thm2:iterations}
\end{equation}
the resulting model achieves  guaranteed generalization, i.e.,
\begin{equation}
    f\!\big(\bm{v}^{(0)}, \bm{W}_O^{(T)}, \bm{w}_\Delta^{(T)}, \bm{W}_B^{(0)}, \bm{W}_C^{(0)}\big) = 0.
\end{equation}
\end{theorem}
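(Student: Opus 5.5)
We follow the template of Theorem~\ref{theorem:generalization-majorityvoting}, but replace the majority-voting gating dynamics with the locality-driven dynamics of Lemma~\ref{lemma:gating_mechanism_conc}. Only $\bm{W}_O$ and $\bm{w}_\Delta$ are updated; $\bm{W}_B,\bm{W}_C$ stay at initialization and $\bm{v}$ is fixed.

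\textbf{Step 1: two-token-type representation of the Mamba output.} With $\bm{w}_\Delta^{(0)}=0$ every gate equals $1/2$. The weights in \eqref{eq:mamba_output} are then $\prod_{j=s+1}^{t}(1-\sigma)\cdot\sigma = (1/2)^{t-s+1}$. For a query token $\bm{x}_t\approx\bm{o}_+$, the contribution of the partner $\bm{o}_+$ token at distance $\Delta L$ therefore carries a factor $(1/2)^{\Delta L}$, up to the initialization inner product $(\bm{W}_B^\top\bm{x}_s)^\top(\bm{W}_C^\top\bm{x}_t)$. This is exactly how the quantity $(1/2)^{\Delta L^+_{\bm{o}_+}}-(1/2)^{\Delta L^-_{\bm{o}_+}}$ enters.

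Step 1 will do two things.
\begin{itemize}
\item Show that $\bm{y}_l(\bm{X})$ decomposes as a dominant self-term plus geometrically weighted cross terms, with token noise contributing $O(\tau)=O(1/d)$ perturbations.
\item Show that during training the gate values on $\bm{o}_\pm$ stay close to $1/2$. This follows from \eqref{eq:align-pos_conc}–\eqref{eq:align-neg_conc}, since alignment along $\bm{o}_\pm$ remains $\geq -\widetilde{\mathcal{O}}(1/\mathrm{poly}(d))$ and does not grow positively. Meanwhile, \eqref{eq:align-irr_conc} drives $\sigma(\bm{w}_\Delta^\top\bm{o}_j)\to 0$ for filler tokens. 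Filler tokens then pass state through almost unchanged and add almost nothing themselves, so the separation factors are governed by the relevant-token gates only.
\end{itemize}

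\textbf{Step 2: neuron dynamics of $\bm{W}_O$.} Split neurons into lucky sets $\mathcal{W}$ (those with $v_i>0$ and initial $\langle \bm{W}_{O(i,\cdot)},\bm{o}_+\rangle>0$) and $\mathcal{U}$ (the symmetric set for $\bm{o}_-$). Since $m\ge d^2\log q$, both sets have size $\Theta(m)$ with high probability.

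Next, compute the population gradient of the hinge loss for $i\in\mathcal{W}$. On positive samples the $\bm{o}_+$ direction receives mass proportional to the locally reinforced term $(1/2)^{\Delta L^+_{\bm{o}_+}}$. On negative samples it receives the opposing mass proportional to $(1/2)^{\Delta L^-_{\bm{o}_+}}$. The net per-step increase of $\langle \bm{W}_{O(i,\cdot)},\bm{o}_+\rangle$ is therefore $\gtrsim \frac{\eta}{L}\bigl[(1/2)^{\Delta L^+_{\bm{o}_+}}-(1/2)^{\Delta L^-_{\bm{o}_+}}\bigr]$, while components along filler directions move only by $O(\tau)$ and the initialization scale.

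Replacing population by empirical gradients costs $O(\sqrt{d\log N/N})$ per coordinate, by matrix/vector Hoeffding with a union bound over $d$ directions. This deviation is dominated by the signal once $N$ satisfies \eqref{Thm2:sample_complexity}, which is where the squared gap and the factor $L^2 d/\eta^2$ arise.

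\textbf{Step 3: margin and zero population loss.} After $T$ steps as in \eqref{Thm2:iterations}, lucky neurons have accumulated alignment $\Theta(1)\cdot m^{-1/2}$-weighted margin. Summing over $l$ and $i$ with the $1/L$ averaging, $zF(\bm{X})\ge 1$ should hold for every test sample: the positive locality term beats the confusion term by the gap, and unlucky neurons and noise contribute $o(1)$. By the hinge loss this gives $f=0$. The probability $1-N^{-d}$ comes from concentration over samples and initialization.

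\textbf{Main obstacle.} The hardest step will be Step 1–2 coupling. The gating changes the geometric weights multiplicatively along the whole sequence, so the effective separation factor depends on every intermediate gate. The plan is to use the lemma's signs, namely that filler gates decrease and relevant gates stay within $\widetilde{\mathcal{O}}(1/\mathrm{poly}(d))$ of $1/2$, and then bound the products $\prod(1-\sigma)$ between $(1/2)^{\Delta L}(1-o(1))$ and $(1/2)^{\Delta L_{\text{relevant}}}$. It then remains to argue that this can only enlarge the gap between the positive and confusion terms, which keeps the Step 2 lower bound monotone throughout training.
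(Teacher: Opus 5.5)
Your skeleton (all gates equal to $1/2$ at $\bm{w}_\Delta^{(0)}=0$, lucky-neuron gradient bounds for $\bm{W}_O$ along $\bm{o}_+$, an $\varepsilon$-net/Hoeffding deviation bound, induction, a margin bound split over $\mathcal{W}(0)$, $\mathcal{U}(0)$ and the unlucky neurons, then solving for $T$ and $N$) is the paper's. The gap is in how you resolve the coupling you single out as the main obstacle.

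Every sequence, positive or negative, contains exactly two $\bm{o}_+$ and two $\bm{o}_-$ tokens. So the self-terms of $\bm{y}_l$ cancel: between classes in a lucky neuron's gradient, and between $\mathcal{W}(0)$ and $\mathcal{U}(0)$ in the test margin. The only class-dependent quantity is the cross term between the two same-feature tokens. Its dependence on distance comes entirely from the decay through the intermediate filler tokens.

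With $x=1-\sigma(\langle \bm{w}_\Delta,\bm{o}_j\rangle)$, the lucky-neuron signal is proportional to $x^{\Delta L^+_{\bm{o}_+}-2}-x^{\Delta L^-_{\bm{o}_+}-2}$. The test margin is proportional to $x^{\Delta L^+_{\bm{o}_+}-2}-x^{\Delta L^+_{\bm{o}_-}-2}$. Pushing $\langle \bm{w}_\Delta,\bm{o}_j\rangle$ toward $-\infty$, which \eqref{eq:align-irr_conc} guarantees at least linearly in $T$, sends $x\to 1$. But $x\mapsto x^a-x^b$ with $a<b$ is decreasing near $x=1$ and vanishes there.

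So your claim that the filler drift ``can only enlarge the gap'' is false. Your own remark that fillers end up passing the state through, with separation ``governed by the relevant-token gates only'', describes exactly the regime in which the $\bm{o}_+$ cross term no longer depends on $\Delta L$, so the locality signal disappears. Your sandwich $(1/2)^{\Delta L}(1-o(1))\le\prod(1-\sigma)\le(1/2)^{\Delta L_{\mathrm{rel}}}$ cannot repair this. It is consistent with all filler factors equal to $1$, where the two cross terms coincide, so it cannot certify a positive difference.

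What you actually need is a lower bound on $\langle \bm{w}_\Delta^{(t)},\bm{o}_j\rangle$ over the whole horizon, i.e., that the filler gates do not close. If $x\in[1/2,\,1-c]$ for a constant $c>0$, then $x^a-x^b=x^a(1-x^{b-a})\ge c\,[(1/2)^a-(1/2)^b]$, and your Step 2 rate and Step 3 margin survive up to constants. Lemma~\ref{lemma:6_new} and \eqref{eq:align-irr_conc} only bound this alignment from above. You would therefore have to bound the per-step decrease from above; note it scales with $\langle \bm{W}_{O(i,\cdot)}^{(t)\top},\bm{o}_+\rangle$, which itself grows with $t$. You would then have to check that the accumulated decrease stays $O(1)$ up to the $T$ in \eqref{Thm2:iterations}, which is not automatic.

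The paper handles this point differently. Its induction step carries $\gamma^*=\langle \bm{w}_\Delta^{(t)},\bm{o}_j\rangle$ inside the lucky-neuron gradient bound. In (S4) it keeps the bracket $(1-\sigma(\langle \bm{w}_\Delta,\bm{o}_j\rangle))^{\Delta L^+_{\bm{o}_+}-2}-(1-\sigma(\langle \bm{w}_\Delta,\bm{o}_j\rangle))^{\Delta L^+_{\bm{o}_-}-2}$ explicitly in the lower bound on $F$, and argues separately that it is bounded below by a positive constant. A rigorous version of that step needs exactly this two-sided control of the filler alignment.

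Finally, Lemma~\ref{lemma:gating_mechanism_conc} is an end-of-training statement. Its underlying Lemmas~\ref{lemma:5_new} and~\ref{lemma:6_new} take bounds on $\bm{W}_O$ as input, so Step 1 cannot be invoked before Step 2. You need the paper's joint induction over $(\bm{W}_O,\bm{w}_\Delta)$.
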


Theorem~\ref{theorem:generalization-locality-structured data} shows that good generalization on locality-structured data is guaranteed if the sample complexity meets \eqref{Thm2:sample_complexity} and training proceeds for at least \eqref{Thm2:iterations} iterations.

\textbf{Remark 4:} We establish that Mamba can also effectively learn this type of data through its ability to exploit locality, in contrast to Transformers, where no such guarantee is provided in \citep{li2023theoretical}.
 In our analysis, $\Delta L_{o+}^+$ captures the distance between class-relevant tokens, reflecting the locality of class-relevant features, while $\Delta L_{o+}^-$ captures the locality of confusing features. The effectiveness of learning is governed by the separation between these two quantities. In particular, when $\Delta L_{o+}^+ \gg \Delta L_{o+}^-$, the locality of class-relevant features dominates that of confusing features. In particular, when $\Delta L_{o+}^+ \gg \Delta L_{o+}^-$, the locality of class-relevant features dominates that of confusing ones, which reduces both the sample complexity and the number of iterations required for convergence, allowing Mamba to learn more effectively and efficiently.

\subsection{Technical Novelty and Challenges}

\textbf{Differences with Existing Works.} Our work is mainly inspired by prior feature-learning analyses of \citep{BJW19,ADHLW19,BG21, li2023theoretical,li2025task}. Building on these foundations, we develop a framework specifically tailored to \emph{gated} architectures with structured data. Unlike these existing models, Mamba introduces an input-dependent gating mechanism, absent from other network architectures, which acts as a dynamic selection operator and requires new analytical techniques to capture its learning dynamics. 
Moreover, while the majority-voting data model has been previously studied in the context of Transformers \citep{li2023theoretical}, we show that Mamba can also learn this type of data with comparable performance. Furthermore, we find that Mamba is particularly effective at capturing the inherent locality of the data, which motivates us to introduce a new \emph{locality-structured} data model. For both regimes, we establish generalization guarantees within the framework of selective state space models, thereby advancing our understanding of this class of architectures and clarifying their distinctions from Transformers. A proof sketch can be found in Appendix \ref{Sec: proof_sketch}.

\textbf{Technical Challenges.} Our analysis faces several unique technical challenges stemming from the structure of selective SSMs. 
Unlike attention-based models, where interactions are primarily additive, Mamba’s gating mechanism introduces multiplicative recurrences across tokens, with dynamics that are explicitly sensitive to token order. These multiplicative effects accumulate over time, substantially complicating the training analysis. To capture this behavior, we systematically track the gradient updates of the gating vector $\bm{w}_\Delta$, decomposing the contributions from different token positions and analyzing how token placement influences training dynamics.

Specifically, in the \textbf{majority-voting data}, the gradient decomposition of the gating includes off-diagonal terms \(\beta^{(l)}_{s,s+1}\) that exhibit additional multiplicative decay due to the recursive gating structure, whereas the diagonal term \(\beta^{(l)}_{s,s}\) is independent of token position. Hence, it is important to carefully consider competing token contributions to prove that the gating vector is indeed aligned with class-relevant feature directions.

Instead, in the \textbf{locality-structured data}, the variation introduced by the number of class-relevant and confusion tokens in positive and negative samples is negligible, as our data model assumes an equal number of class-relevant and confusion tokens. Consequently, we need to  rely on $\Delta L_{\bm{o}_+}^+$ and $\Delta L_{\bm{o}_-}^+$ to ensure that the lucky neuron $\bm{W}_{O(i,\cdot)}$ learns the class-relevant feature. Moreover, since the number of class-relevant and confusion tokens is balanced, updates along the class-relevant feature direction for the gating vector remain close to zero. To demonstrate how the gate filters information, we show that gradient updates along class-irrelevant features are driven strongly negative. To prove this, in addition to the terms considered in the majority-voting setting, we must also bound the positively contributing terms that hinder the gate’s ability to suppress irrelevant features. Specifically, we bound these opposing terms as \(\mathcal{O}\left( \left(1 - \sigma(p_2)\right)^{\Delta L_{\bm{o}_+}^-} \right) + \mathcal{O}\left( \left(1 - \sigma(p_2)\right)^{\Delta L_{\bm{o}_-}^+} \right) \)
ensuring that their effect remains minimal. This reveals that the gate effectively suppresses irrelevant features while preserving class-relevant features for this data model.

\section{Numerical Experiments}
We verify our theoretical results through synthetic experiments based on the data models described in Section~\ref{subsec:Data Model}. Due to the space limit, we defer the experiment details and additional numerical results to Appendix \ref{sec: additional_experiment}

\textbf{Faster convergence with larger majority-voting gap.} Fig. \ref{fig1} illustrates that increasing the majority-voting gap \(\alpha_r-\alpha_c\) consistently reduces the number of epochs across various sizes of training samples. These findings are consistent with our theoretical results in \eqref{Thm1:sample_complexity} and \eqref{Thm1:iterations}.

\textbf{Gating mechanism amplifies relevant features in majority-voting data.} Fig.~\ref{fig2} shows the cosine similarity between the gating vector $\bm{w}_{\Delta}$ and both class-relevant and class-irrelevant features. The similarity with class-relevant features steadily increases, while that with class-irrelevant features remains essentially unchanged. This empirically confirms Lemma~\ref{lemma:gating_mechanism_mv}, demonstrating that the gate prioritizes informative features while ignoring irrelevant ones.

\begin{figure}[h]
    \centering
    \begin{minipage}{0.48\linewidth}
        \centering
        \includegraphics[width=\linewidth]{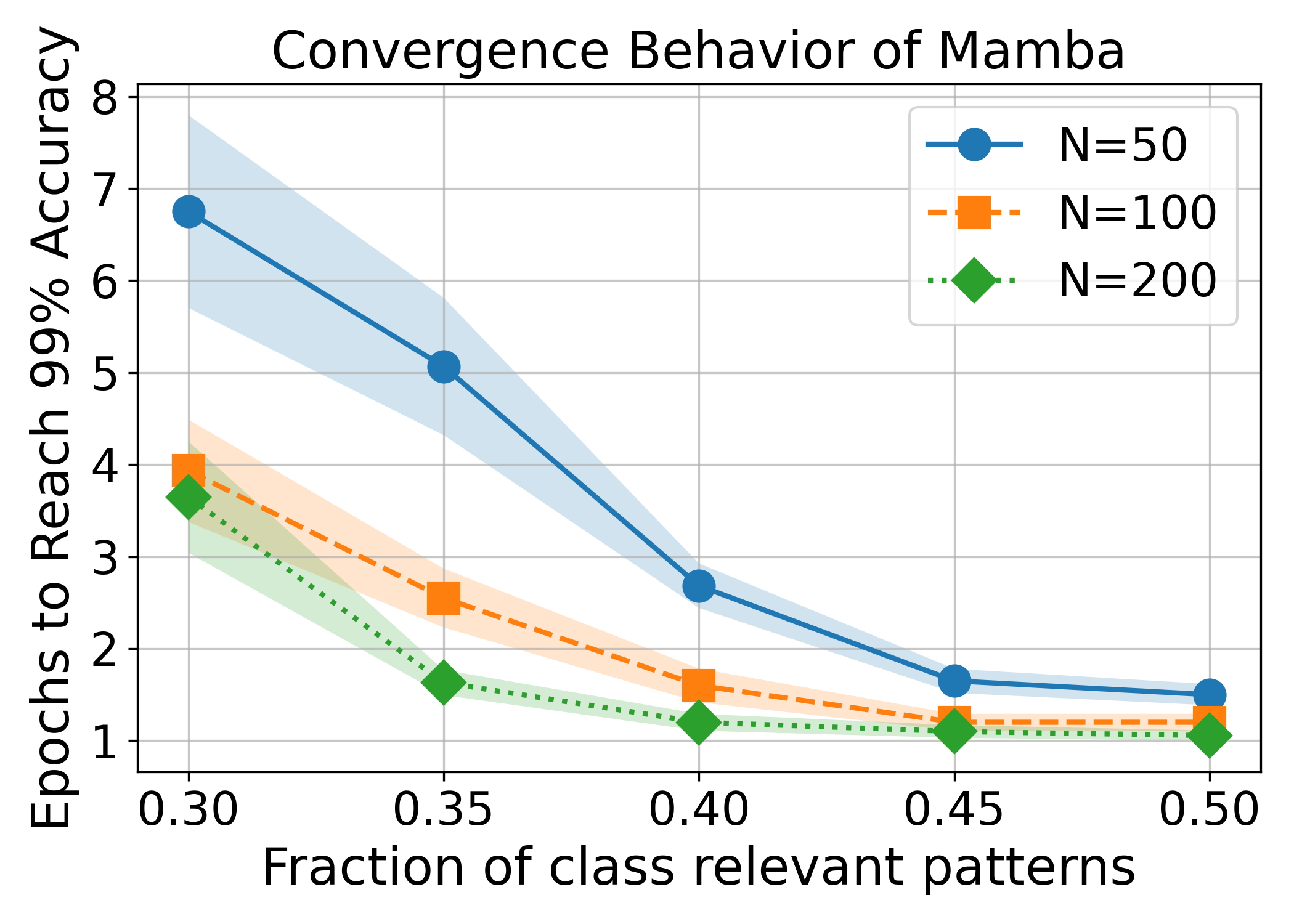}
        \vspace{-4mm}
        \caption{Convergence vs. majority-voting gap.}
        \label{fig1}
    \end{minipage}\hfill
    \begin{minipage}{0.48\linewidth}
            \centering  
        \includegraphics[width=\linewidth]{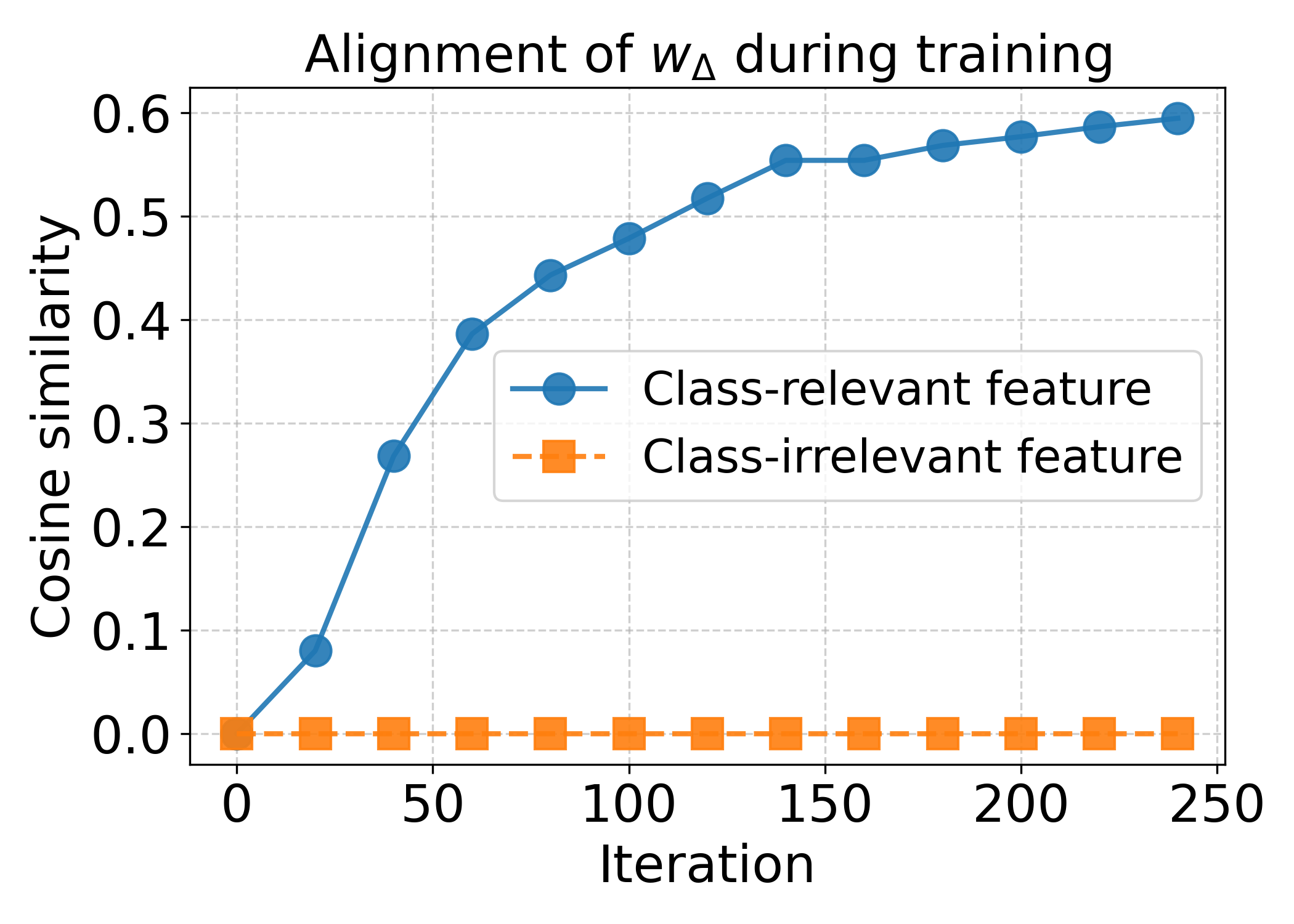}
        \vspace{-4mm}
        \caption{Alignment of \(\bm{w}_\Delta\) for majority-voting data.}
        \label{fig2}
    \end{minipage}
\end{figure}

\textbf{Locality affects the learning.} Fig.~\ref{fig3} illustrate the effect of class-relevant token separation $\Delta L$ on the convergence in the locality-structured data. Larger $\Delta$ slows convergence across different training sample sizes, which is consistent with our results in \eqref{Thm2:sample_complexity} and \eqref{Thm2:iterations}.

\textbf{Gating mechanism suppresses irrelevant features.} Fig.~\ref{fig4} illustrates that while the cosine similarity is negative for both types of features, it stays close to zero for class-relevant features but becomes largely negative for class-irrelevant ones. This contrast drives the gating mechanism to prioritize class-relevant features, consistent with Lemma~\ref{lemma:gating_mechanism_conc}.

\begin{figure}[h]
    \centering 
    \begin{minipage}{0.48\linewidth}
        \centering
        \includegraphics[width=\linewidth]{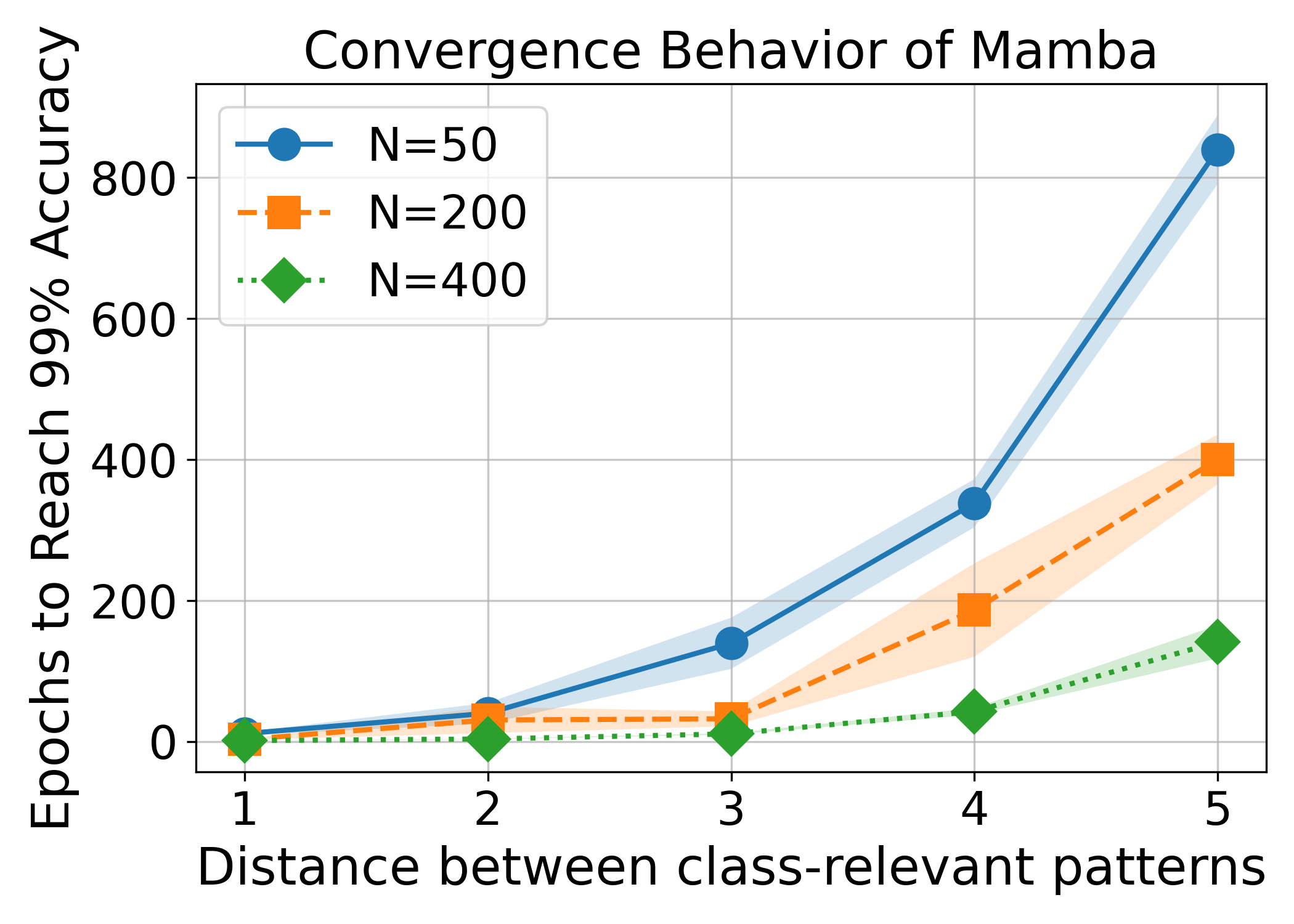}
        \vspace{-4mm}
        \caption{Convergence under locality-structured data.}
        \label{fig3}
    \end{minipage} \hfill
    \begin{minipage}{0.48\linewidth}
        \centering  
        \includegraphics[width=\linewidth]{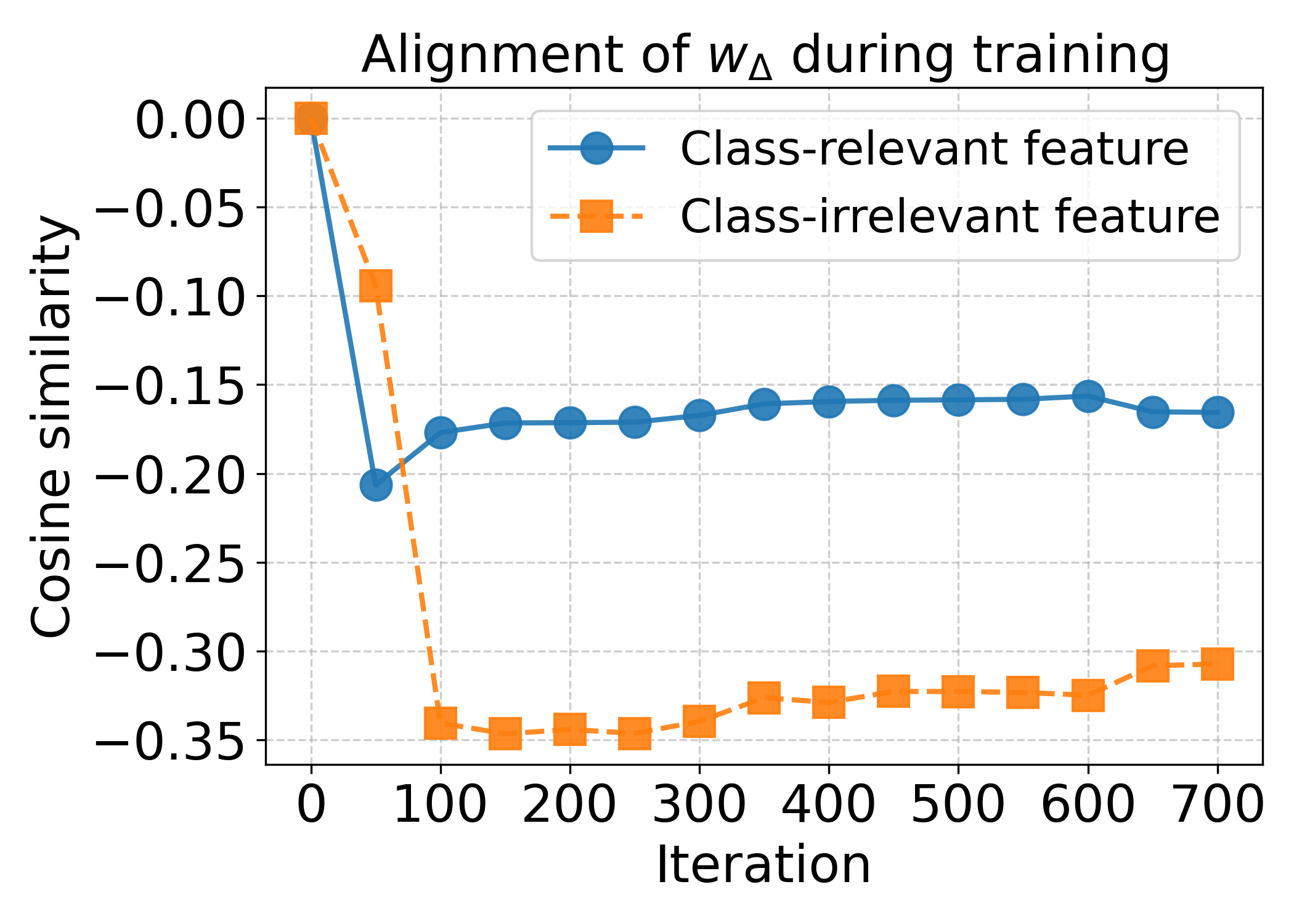}
    \vspace{-4mm}
    \caption{Alignment of \(\bm{w}_\Delta\) for locality-structured data.}
        \label{fig4}
    \end{minipage} 
\end{figure}

\section{Conclusion}
Encouraged by the emergence and successful applications of the Transformer alternative architecture Mamba, 
this paper provides a novel theoretical generalization analysis of Mamba by considering its unique gated selection mechanism. 
Focusing on a data model with class-relevant and class-irrelevant tokens, we establish the non-asymptotic sample complexity 
and the convergence rate required to achieve desirable test accuracy. 
Our analysis further shows that the gating parameter vector filters out the class-relevant features while ignoring irrelevant ones.
\revised{To the best of our knowledge, this is the first theoretical analysis of Mamba’s training dynamics, with its input-dependent gating mechanism, together with generalization guarantees.
}

\revised{Finally, we note some limitations of our work. 
First, our theoretical analysis focuses on a simplified Mamba setting that abstracts away practical 
components such as depth, multiple heads, residual connections, and layer normalization. 
Second, our data model, while standard in theoretical studies, also simplifies real-world sequence structures. 
Extending the analysis to more realistic multi-layer and multi-head Mamba architectures, richer data models, 
and alternative designs such as gated Transformers or hybrid Mamba–Transformer frameworks 
remains an important direction for future work.
}

\subsubsection*{Acknowledgments}

This work was supported in part by the National Science Foundation (NSF) under Grants \#2349879, \#2349878, \#2425811, and \#2430223. Part of Hongkang’s work was completed while he was a Ph.D. student at Rensselaer Polytechnic Institute (RPI) and was supported in part by the Army Research Office (ARO) under Grant W911NF-25-1-0020, as well as by the Rensselaer–IBM Future of Computing Research Collaboration (\url{http://airc.rpi.edu}). We also thank the anonymous reviewers for their constructive and insightful comments.

\subsection*{LLM usage disclosure}
We used large-language models (ChatGPT) to aid in polishing the writing of this paper. For numerical experiments, we employed AI-assisted coding tools (GitHub Copilot and ChatGPT) to support code development.

\bibliography{iclr2026_conference}
\bibliographystyle{iclr2026_conference}

\newpage
\appendix
\section{Notations, Proof Sketch and Additional Experiments}\label{Appendix:A}

\subsection{Notations}\label{Sec: notations}
\subsubsection{Lucky Neuron Definition}
Let
\begin{equation}
\mathcal{K}_+ = \left\{ i \in [m] : v_i > 0 \right\}, \quad
\mathcal{K}_- = \left\{ i \in [m] : v_i < 0 \right\}
\label{eq:neuron_sign_partition}
\end{equation}
denote the sets of neurons with positive and negative output layer weights, respectively.

We define the sets of lucky neurons at initialization as:
\begin{equation}
\mathcal{W}(0) = \left\{ i \in \mathcal{K}_+ : \bm{W}_{O(i, \cdot)}{(0)} \bm{o}_+ > 0 \right\}, 
\label{eq:lucky_pos}
\end{equation}
\begin{equation}
\mathcal{U}(0) = \left\{ i \in \mathcal{K}_- : \bm{W}_{O(i, \cdot)}{(0)} \bm{o}_- > 0 \right\},
\label{eq:lucky_neg}
\end{equation}

where \(\bm{o}_+\) and \(\bm{o}_-\) denote the class-relevant features for the positive and negative classes, respectively. 

\subsubsection{Loss Function}
The loss function for the \(n^{\text{th}}\) sample is defined as
\begin{equation}
    \begin{split}
        \mathcal{\ell}(\bm{X}^{(n)}, z^{(n)}) &= \max\{0, 1 - z^{(n)} \cdot F(\bm{X}^{(n)})\} \\
&= \max\left\{0, 1 - z^{(n)} \cdot \frac{1}{L} \sum_{l=1}^L \sum_{i=1}^m v_i \phi\left( \bm{W}_{O(i, \cdot)} \bm{y}_l^{(n)} \right) \right\}.
    \end{split}
\end{equation}

The empirical loss is denoted by \(\hat{\mathcal{L}}\) and is given by
\begin{equation}
\hat{\mathcal{L}} = \frac{1}{N} \sum_{n=1}^{N} \mathcal{\ell}(\bm{X}^{(n)}, z^{(n)}).
\end{equation}

The population loss is denoted by \(\mathcal{L}\) and is defined as
\begin{equation}
\mathcal{L} = \mathbb{E}_{(\bm{X}, z) \sim \mathcal{D}} \mathcal{\ell}(\bm{X}, z).
\end{equation}

With additional important notations can be found in Table \ref{tab:notation-summary}.
\begin{table}[ht]
\centering
\caption{Summary of notations}
\label{tab:notation-summary}
\renewcommand{\arraystretch}{1.15}
\setlength{\tabcolsep}{10pt}
\begin{tabular}{c|p{0.68\linewidth}}
\hline\hline
$F(\bm X^{(n)})$ & The final model output for $\bm X^{(n)}$  \\ \hline
$\alpha_r$ & The average fractions of class-relevant tokens  \\ \hline
$\alpha_c$ & The average fractions of confusion tokens  \\ \hline
$\Delta L_{\bm{o}_+}^+$  & Separation between class-relevant features $\bm{o}_+$ in positive samples \\ \hline
$\Delta L_{\bm{o}_+}^-$ & Separation between confusion features $\bm{o}_+$ in negative samples \\ \hline
$\Delta L_{\bm{o}_-}^-$  & Separation between class-relevant features $\bm{o}_-$ in negative samples \\ \hline
$\Delta L_{\bm{o}_-}^+$ & Separation between confusion features $\bm{o}_-$ in positive samples \\ \hline
$\mathcal{O}$ & The set of class-relevant and class-irrelevant patterns \\ \hline
$\mathcal{K}_+$ & The set of lucky neurons with respect to $\bm{W}^{(0)}$ \\ \hline
$\mathcal{K}_-$ & The set of lucky neurons with respect to $\bm{U}^{(0)}$ \\ \hline
$\mathcal{N}$ & The set of training data \\ \hline
$\mathcal{N}_+$ & The set of training data with positive labels \\ \hline
$\mathcal{N}_-$ & The set of training data with negative labels \\ \hline
$\mathcal{W}(t)$ & Set of lucky neurons for the positive class at iteration \(t\)  \\ \hline
$\mathcal{U}(t)$ & Set of lucky neurons for the negative class at iteration \(t\)  \\ \hline
\(\mathcal{O}(\cdot),\, \Omega(\cdot),\, \Theta(\cdot)\) 
& We use the standard convention: \(f(x)=\mathcal{O}(g(x))\) (resp.\ \(\Omega(g(x))\), \(\Theta(g(x))\)) means \(f(x)\) grows at most (resp.\ at least, on the order of) \(g(x)\). \\ \hline
\(\widetilde{\mathcal{O}}(\cdot)\) 
& Soft-\(\mathcal{O}\) notation: hides polylog factors \\ \hline
\(\mathrm{poly}(d)\) 
& An unspecified polynomial in \(d\)  \\ \hline
\(\gtrsim,\, \lesssim\) 
& \(f(x) \gtrsim g(x)\) (resp.\ \(f(x) \lesssim g(x)\)) abbreviates \(f(x)\geq\Omega\!\big(g(x)\big)\) (resp.\ \(f(x)\leq\mathcal{O}\!\big(g(x)\big)\)). \\ \hline
\hline
\end{tabular}
\end{table}

\subsection{Proof Sketch}\label{Sec: proof_sketch}
The major idea of our proof is to analyze how GD gradually aligns both the hidden-layer weights and gating vector with class-relevant features while ignoring the irrelevant ones. A key tool in our analysis is the notion of a \textit{lucky neuron}, i.e., a hidden layer neuron whose initialization is well aligned with a class-relevant feature. For the majority-voting data model, the signal driving this alignment is proportional to the gap between the fractions of class-relevant and confusion tokens, $\Theta(\alpha_r - \alpha_c)$, as established by Lemmas~\ref{lem:positive_cls_lucky_newMV}--\ref{lem:unlucky_negative_newMV}.  Lucky neurons move consistently toward their class-relevant feature, while the magnitude of unlucky ones remains small (upper-bounded by the inverse square root of the number of samples). For the locality-structured data model, we prove that the update in the class-relevant feature direction for the gating vector remains close to zero because an equal number of class-relevant and confusion tokens are present in the data. We then show that the gating vector consistently decreases along irrelevant feature directions, thereby enabling the gate to effectively select the class-relevant feature.

 Due to these properties, the training dynamics can be simplified to show that the network output in~\eqref{eq:model_output} changes linearly with the iteration number $t$. In particular, we prove that, for a new positive sample (w.l.o.g.) during inference, the learned model’s output is strictly positive. From this analysis, we derive the sample complexity and the required number of iterations for achieving zero generalization error for both data types, as shown in~\eqref{Thm1:sample_complexity} and~\eqref{Thm1:iterations} for the majority-voting setting in Theorem~\ref{theorem:generalization-majorityvoting}, and similarly in~\eqref{Thm2:sample_complexity} and~\eqref{Thm2:iterations} for the locality-structured setting in Theorem~\ref{theorem:generalization-locality-structured data}.

\subsection{Additional Numerical Experiments}\label{sec: additional_experiment}

\textbf{Experiment settings.} 

The data dimension and token embedding size are both set to \(d = 32\), which also corresponds to the number of feature directions. \revised{Unless otherwise stated, experiments in the main text use exactly the model defined in Eq.~\eqref{eq:model_output} to match our theoretical setting. We also use the model without convolution, and keep $\mathbf{W}_B = \mathbf{W}_C = I$ frozen as in Eq.~(15).} The total number of neurons in the hidden layer \(\bm{W}_O\) is set to \(m = 50\). 
For simplicity, we fix the ratio of different features to be the same across all data. 
The sequence length is set to \(L = 30\). 

 We run 100 independent trials and consider only the successful trials to compute the mean epochs for convergence for a given fraction of class-relevant patterns. An experiment is successful if the testing loss is smaller than $10^{-3}$. For this experiment, we fixed the fraction of the confusion tokens at $0.10$ and varied the fraction of class-relevant features.

\textbf{MLP weights selectively align with only class-relevant features.} Fig.~\ref{fig5} tracks the average cosine similarity between each neuron $\bm{W}_{O(i, \cdot)}$ and both class-relevant \& class-irrelevant features.  
The alignment increases for class-relevant features and stays essentially unchanged for irrelevant features, which is consistent with our findings in Lemmas \ref{lem:positive_cls_lucky_newMV} and \ref{lem:negative_cls_lucky_newMV} in the Appendix.

\textbf{Mamba outperforms Transformer and local attention on locality-structured data.} Intuitively, locality-structured data favors models that exploit local biases. Global attention performs only marginally better than random guessing, whereas both local attention and Mamba learn meaningful patterns, with Mamba achieving the best performance.

\begin{figure}[h]
    \centering
    \begin{minipage}{0.48\linewidth}
        \centering
        \includegraphics[width=\linewidth]{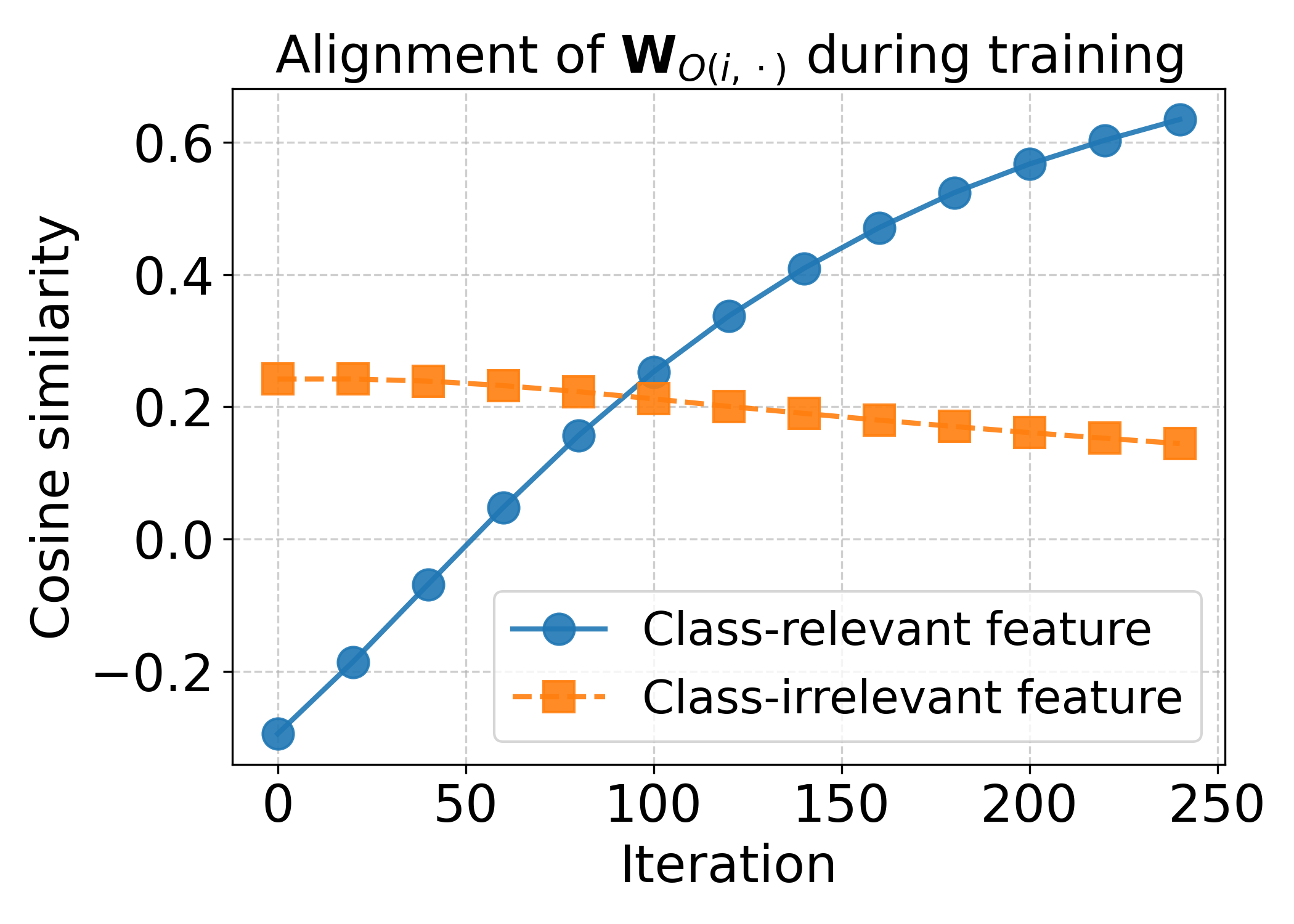}
        \vspace{-4mm}
        \caption{Average alignment of \(\bm{W}_{O(i,\cdot)}\) during training.}
        \label{fig5}
    \end{minipage}\hfill
    \begin{minipage}{0.48\linewidth}
            \centering  
        \includegraphics[width=\linewidth]{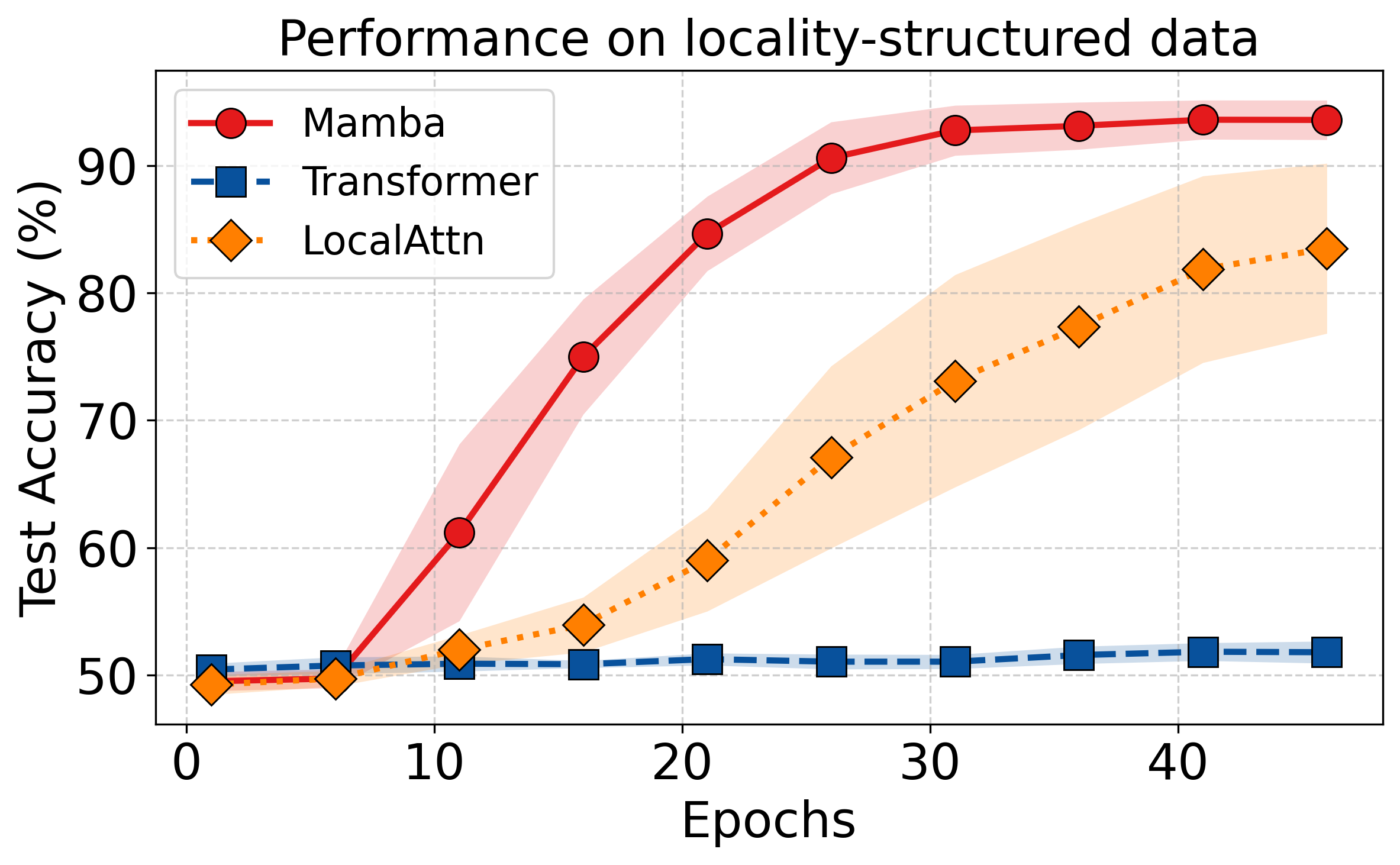}
        \vspace{-4mm}
        \caption{Mamba outperforms on locality data.}
        \label{fig6}
    \end{minipage}
\end{figure}

\textbf{Additional Results on MLP Weight Alignment.}

Figure \ref{fig7} illustrates the alignment of sampled neurons with the class-relevant feature. We observe that, with a good initialization, a subset of neurons, denoted as lucky neurons, consistently increases in the direction of the class-relevant feature, while another subset, denoted as unlucky neurons, remains almost unchanged, which supports our findings in Lemmas \ref{lem:negative_cls_lucky_newMV} and \ref{lem:negative_cls_lucky_new}.

In contrast, Figure \ref{fig8} shows the alignment of sampled neurons with the class-irrelevant feature. In this case, we observe that all neurons, both lucky and unlucky, remain nearly unchanged in the direction of the class-irrelevant feature, which further supports our findings in Lemmas \ref{lem:unlucky_negative_newMV} and \ref{lem:unlucky_negative_new}.

\begin{figure}[ht]
\begin{minipage}{0.48\linewidth}
        \centering
        \includegraphics[width=\linewidth]{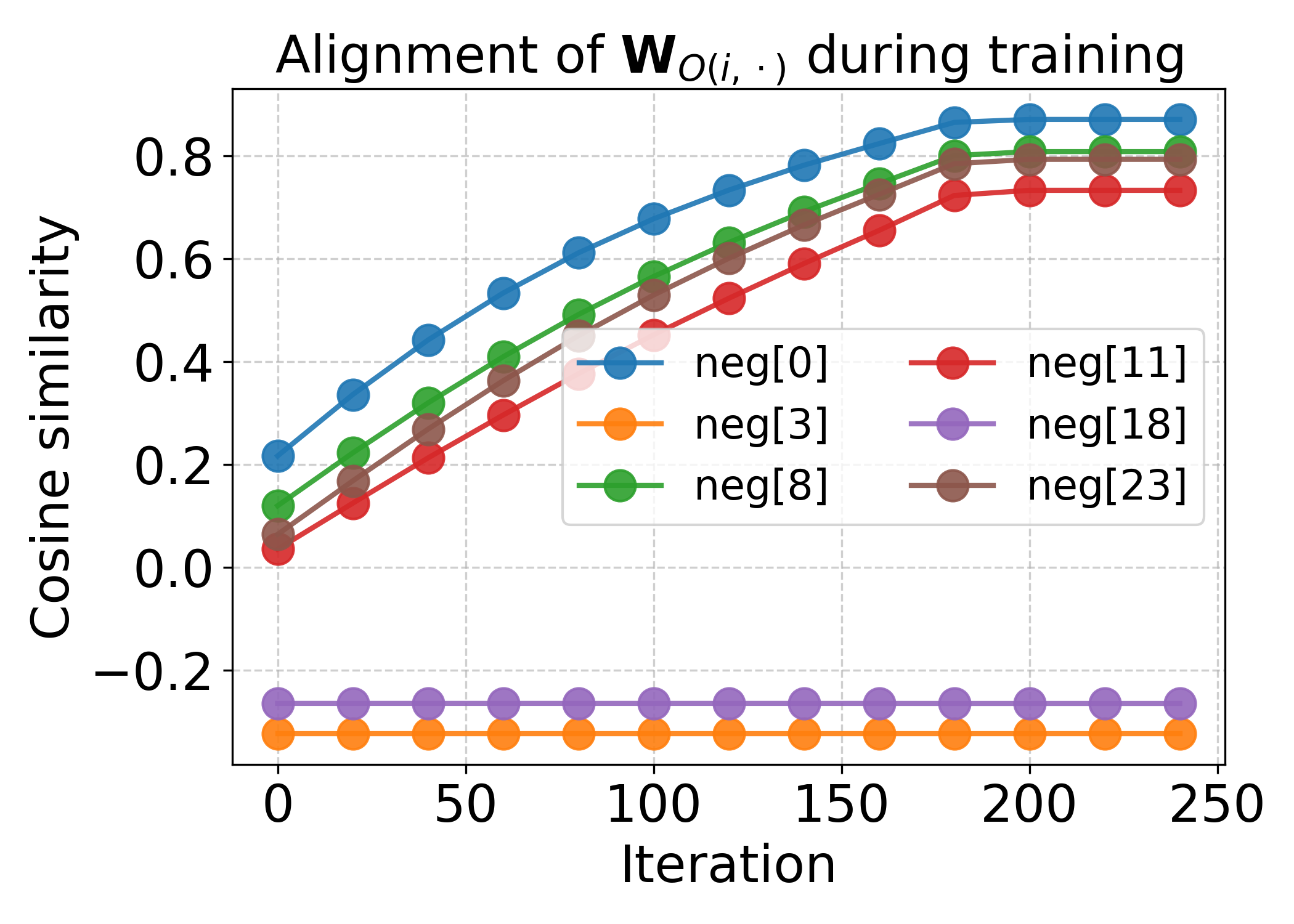}
        \caption{Alignment of \(\bm{W}_{O(i, \cdot)}\) with class-relevant feature directions during training on the majority-voting data.}
        \label{fig7}
    \end{minipage}\hfill
    \begin{minipage}{0.48\linewidth}
        \centering  
        \includegraphics[width=\linewidth]{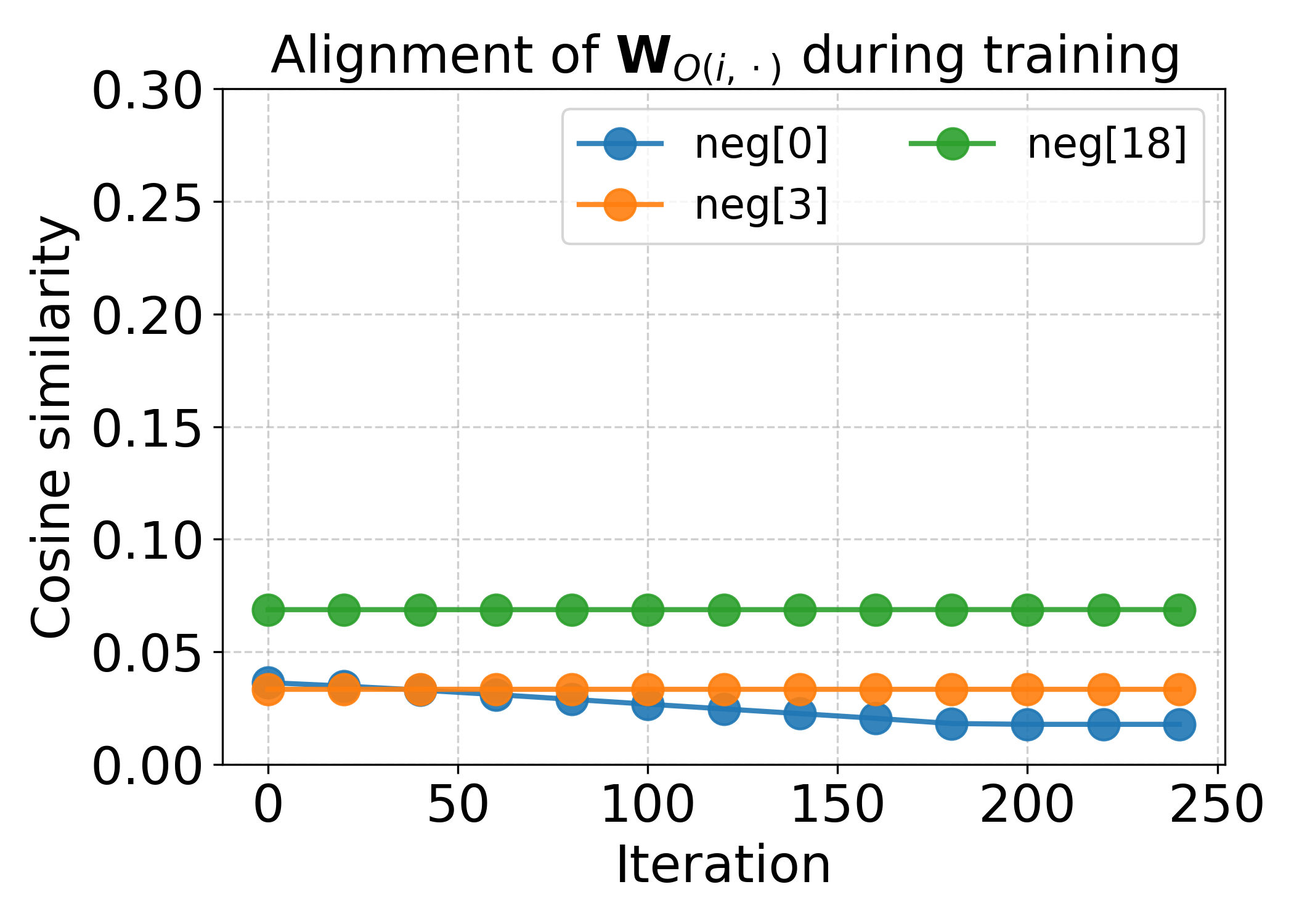}
        \caption{\revised{Alignment of \(\bm{W}_{O(i, \cdot)}\) with class-irrelevant feature directions during training on the majority-voting data.}}
        \label{fig8}
    \end{minipage}
\end{figure}

\subsubsection{Additional Experiments}

\revised{To further strengthen the empirical connection between our theoretical analysis and practical Mamba architectures, we conducted additional experiments using the multi-layer, multi-head Mamba model from \cite{dao2024transformers}, trained on synthetic datasets that follow the same structured data models as in our theory.}

\revised{We first evaluated the Mamba2 block, which includes residual connections and RMSNorm. We focused on a 2-block Mamba model with 4 heads and report the cosine similarity of the learned gating vectors and MLP weights with class-relevant and class-irrelevant features in Figures~\ref{fig9} and~\ref{fig10}. For a deeper 5-block Mamba model with the same configuration, we summarize the final alignment values in Table~\ref{tab:mamba_5block_alignment}, which exhibit the same qualitative trends predicted by our analysis.}

\begin{figure}[ht]
    \centering
    \begin{minipage}{0.48\linewidth}
        \centering
        \includegraphics[width=\linewidth]{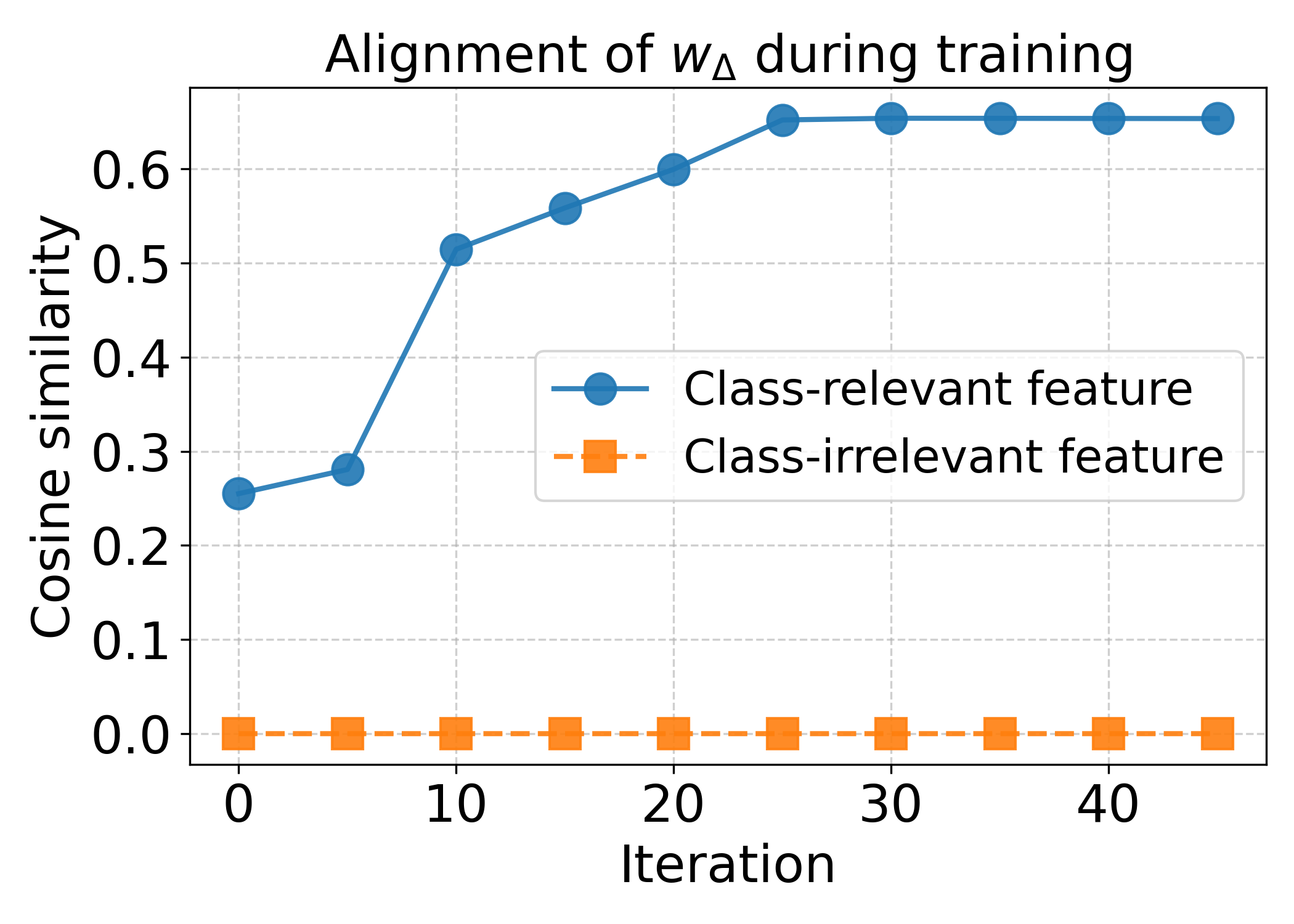}
        \caption{Alignment of the gating vector in the 2-block Mamba model.}
        \label{fig9}
    \end{minipage}\hfill
    \begin{minipage}{0.48\linewidth}
        \centering
        \includegraphics[width=\linewidth]{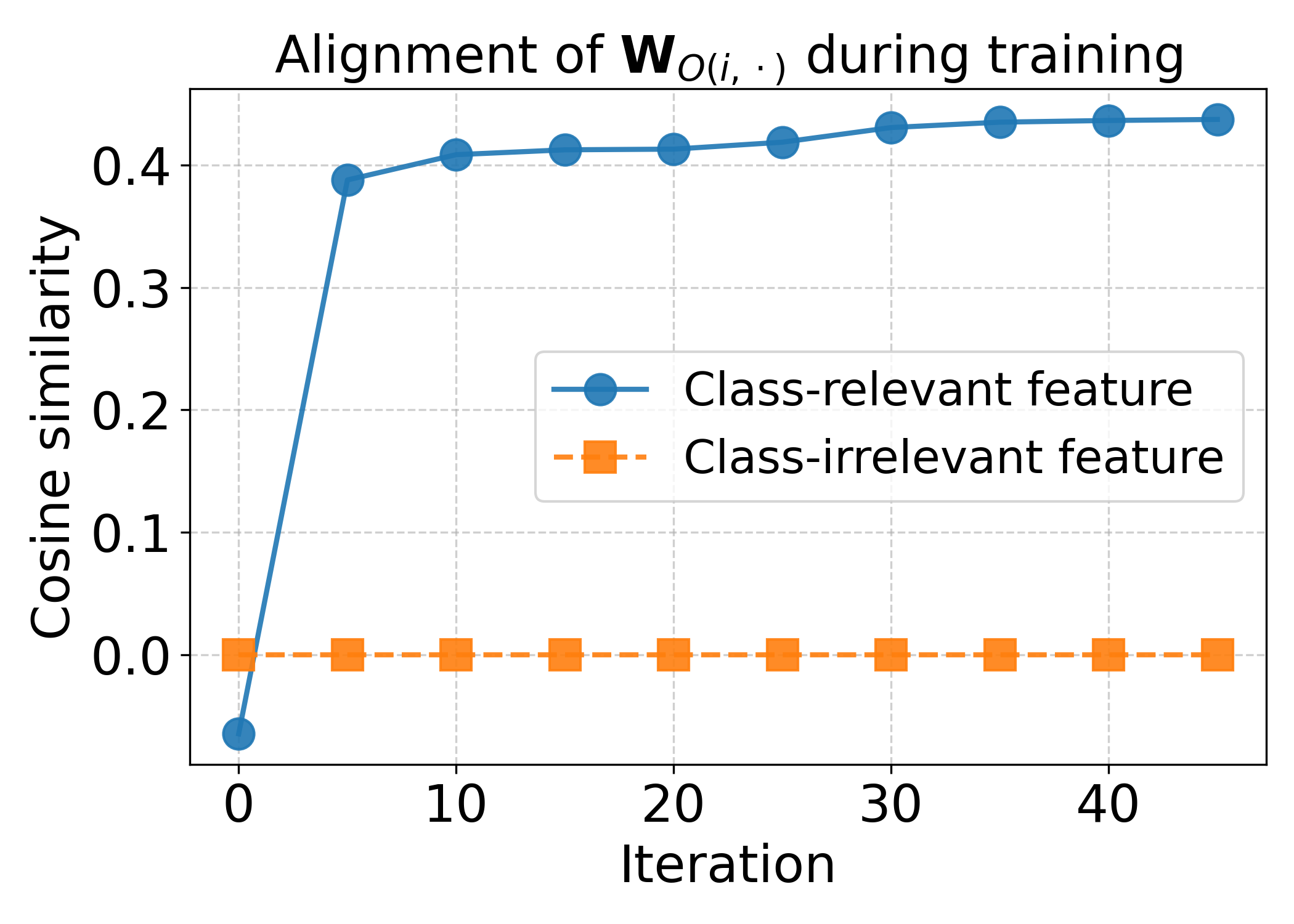}
        \caption{Alignment of the MLP weights in the 2-block Mamba model.}
        \label{fig10}
    \end{minipage}
\end{figure}

\begin{table}[ht]
\caption{Cosine similarity alignment in the 5-block Mamba model}
\label{tab:mamba_5block_alignment}
\begin{center}
\renewcommand{\arraystretch}{1.15}   
\setlength{\tabcolsep}{10pt}         
\begin{tabular}{c|c|c}
\hline\hline
\textbf{Component} & \textbf{Class-relevant} & \textbf{Class-irrelevant} \\ \hline
Gating vector      & 0.53                    & 0.00 \\ \hline
MLP weights        & 0.73                    & 0.00 \\ \hline\hline
\end{tabular}
\end{center}
\end{table}

\revised{Next, we examined the effect of the gating mechanism by comparing models trained with and without gating across both structured data regimes. On the majority-voting data, the gated model consistently outperforms the ungated variant (Figure~\ref{fig11}). On the locality-structured data, gating becomes essential: the ungated model fails to learn the task, whereas the gated model converges reliably (Figure~\ref{fig12}).}

\begin{figure}[h]
    \centering
    \begin{minipage}{0.48\linewidth}
        \centering
        \includegraphics[width=\linewidth]{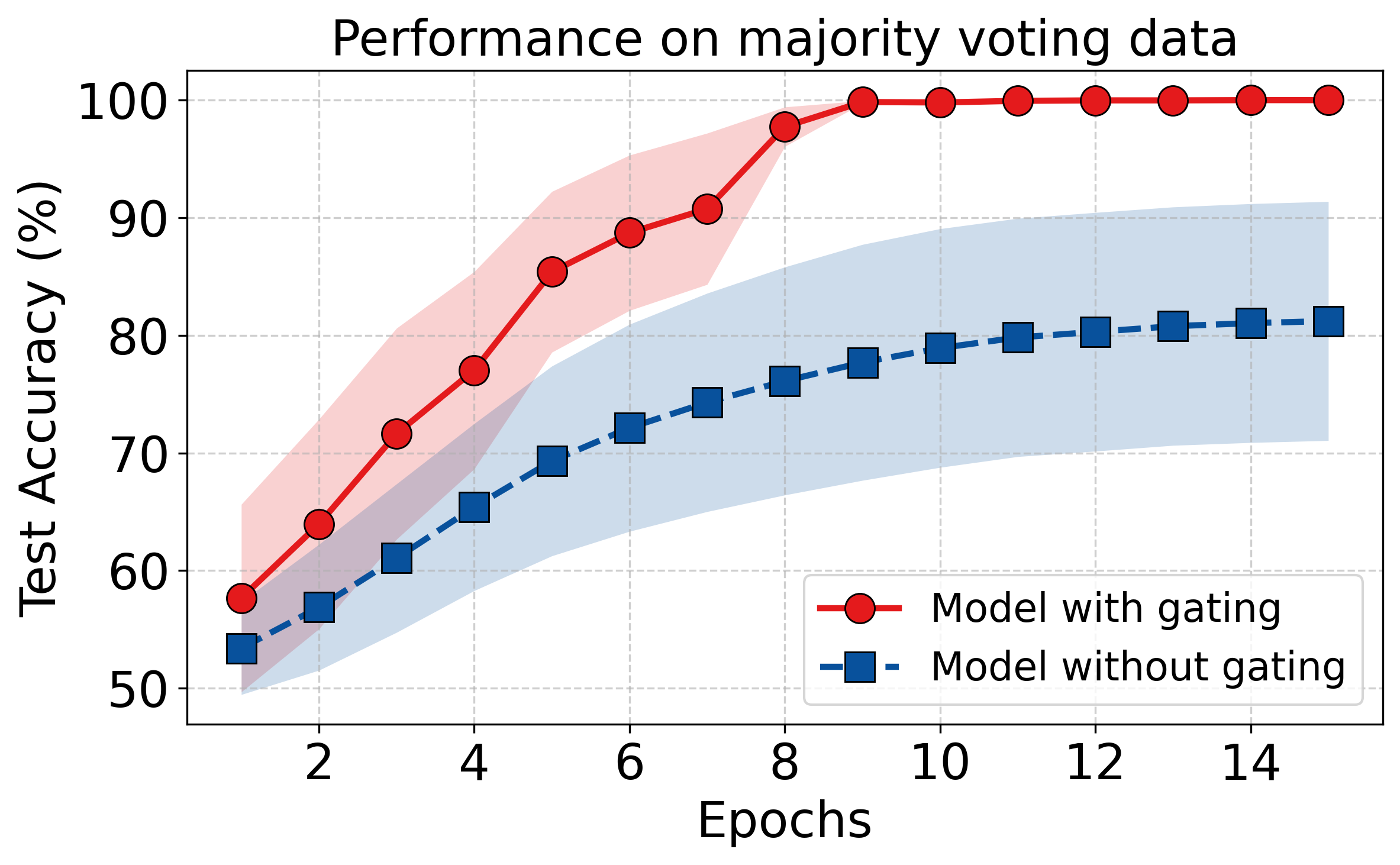}
        \caption{Test accuracy with and without gating on the majority-voting data.}
        \label{fig11}
    \end{minipage}\hfill
    \begin{minipage}{0.48\linewidth}
        \centering
        \includegraphics[width=\linewidth]{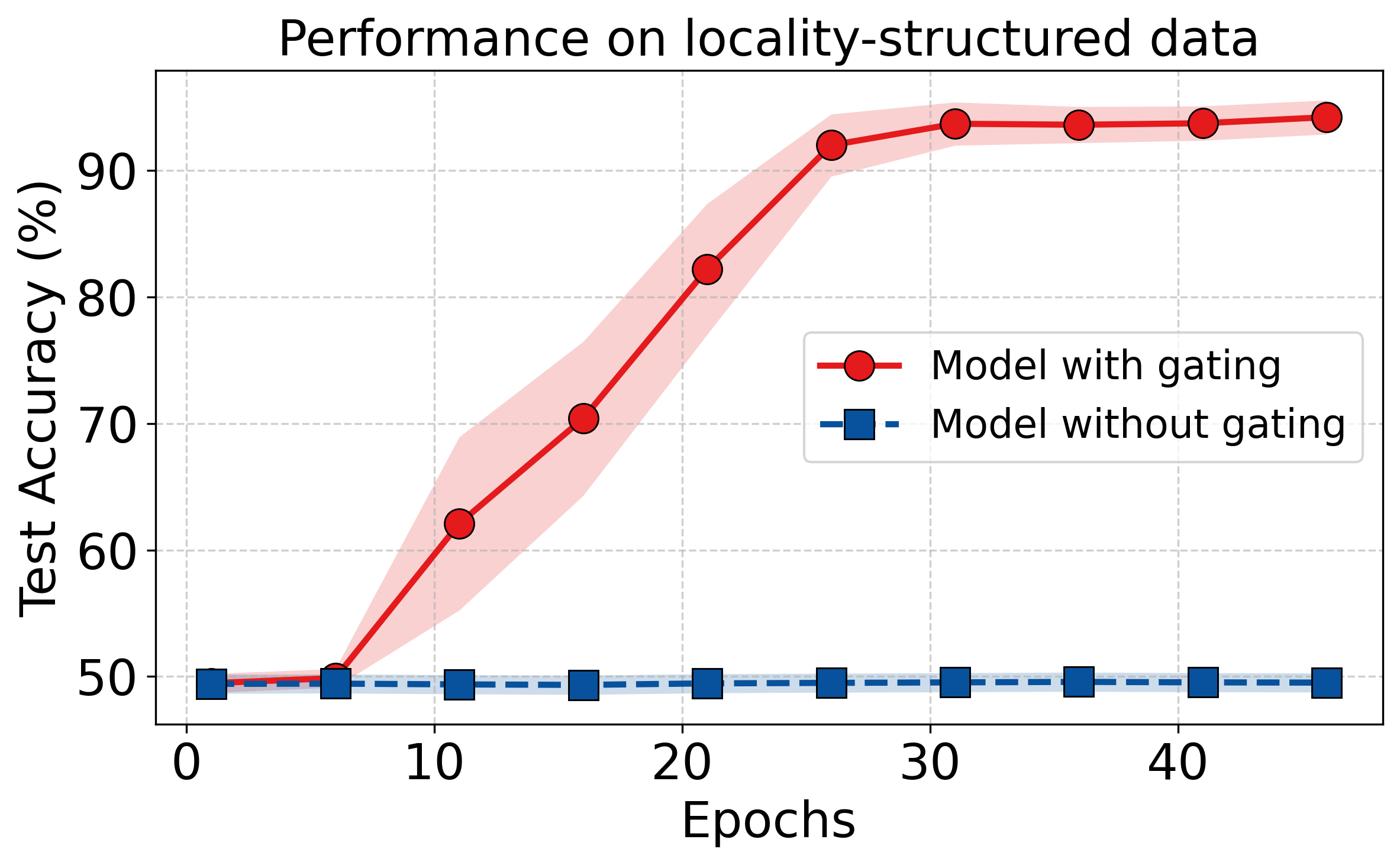}
        \caption{Test accuracy with and without gating on the locality-structured data.}
        \label{fig12}
    \end{minipage}
\end{figure}

\revised{We also conducted two controlled ablations. First, we varied the feature dimension $d \in \{32, 64, 128\}$ and observed that the qualitative behavior of the model remained consistent across all three settings (Figures~\ref{fig13}–\ref{fig15}). Second, we varied the data distribution parameter $\alpha_c$, the fraction of confusion tokens in the majority-voting data. Across all three choices of $\alpha_c$, the empirical results remained closely aligned with the theoretical predictions (Figures~\ref{fig16}–\ref{fig18}).}

\begin{figure}[ht]
    \centering
    \begin{minipage}{0.30\linewidth}
        \centering
        \includegraphics[width=\linewidth]{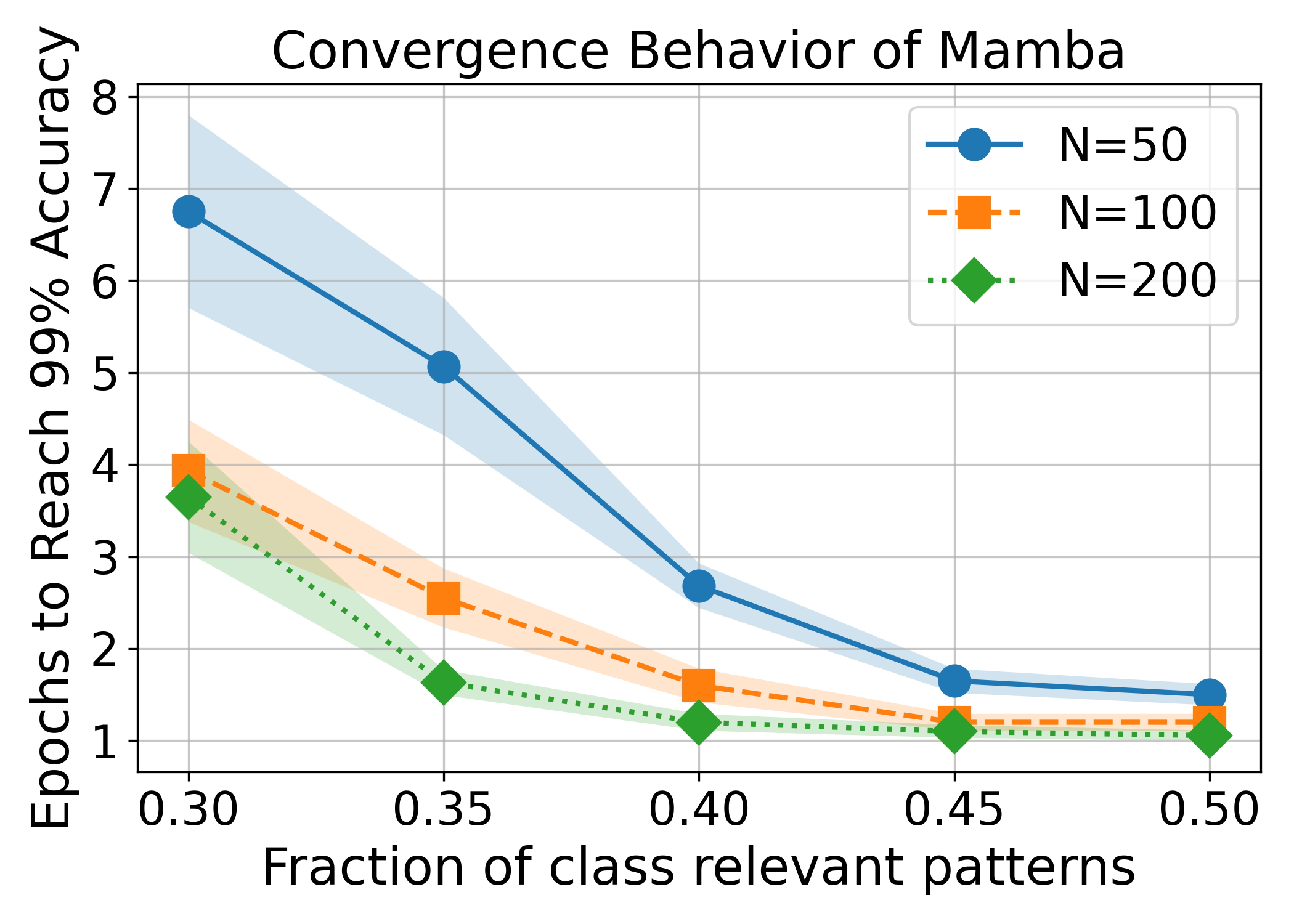}
        \caption{Ablation with feature dimension $d = 32$.}
        \label{fig13}
    \end{minipage}\hfill
    \begin{minipage}{0.30\linewidth}
        \centering  
        \includegraphics[width=\linewidth]{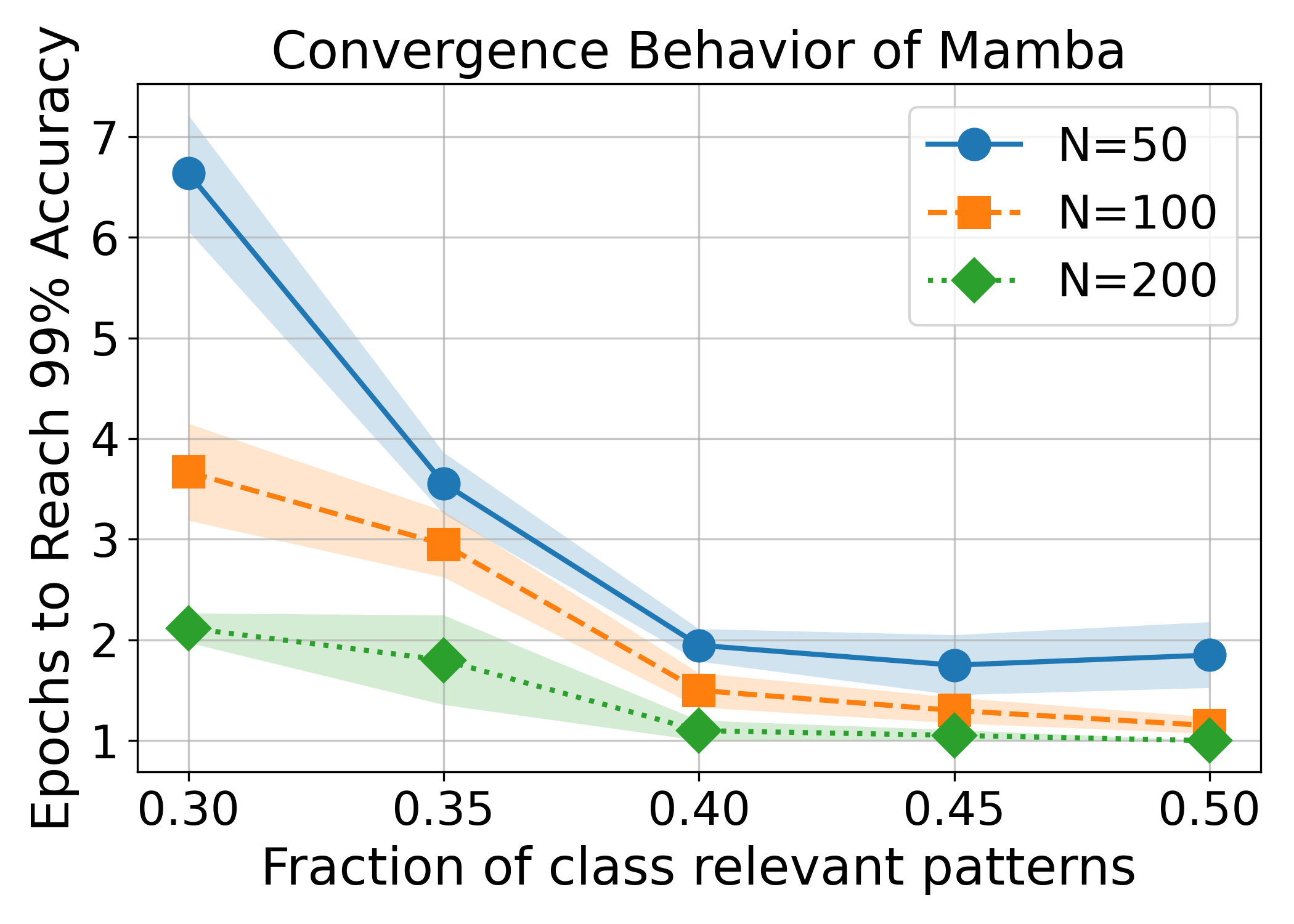}
        \caption{Ablation with feature dimension $d = 64$.}
        \label{fig14}
    \end{minipage}\hfill
    \begin{minipage}{0.30\linewidth}
        \centering
        \includegraphics[width=\linewidth]{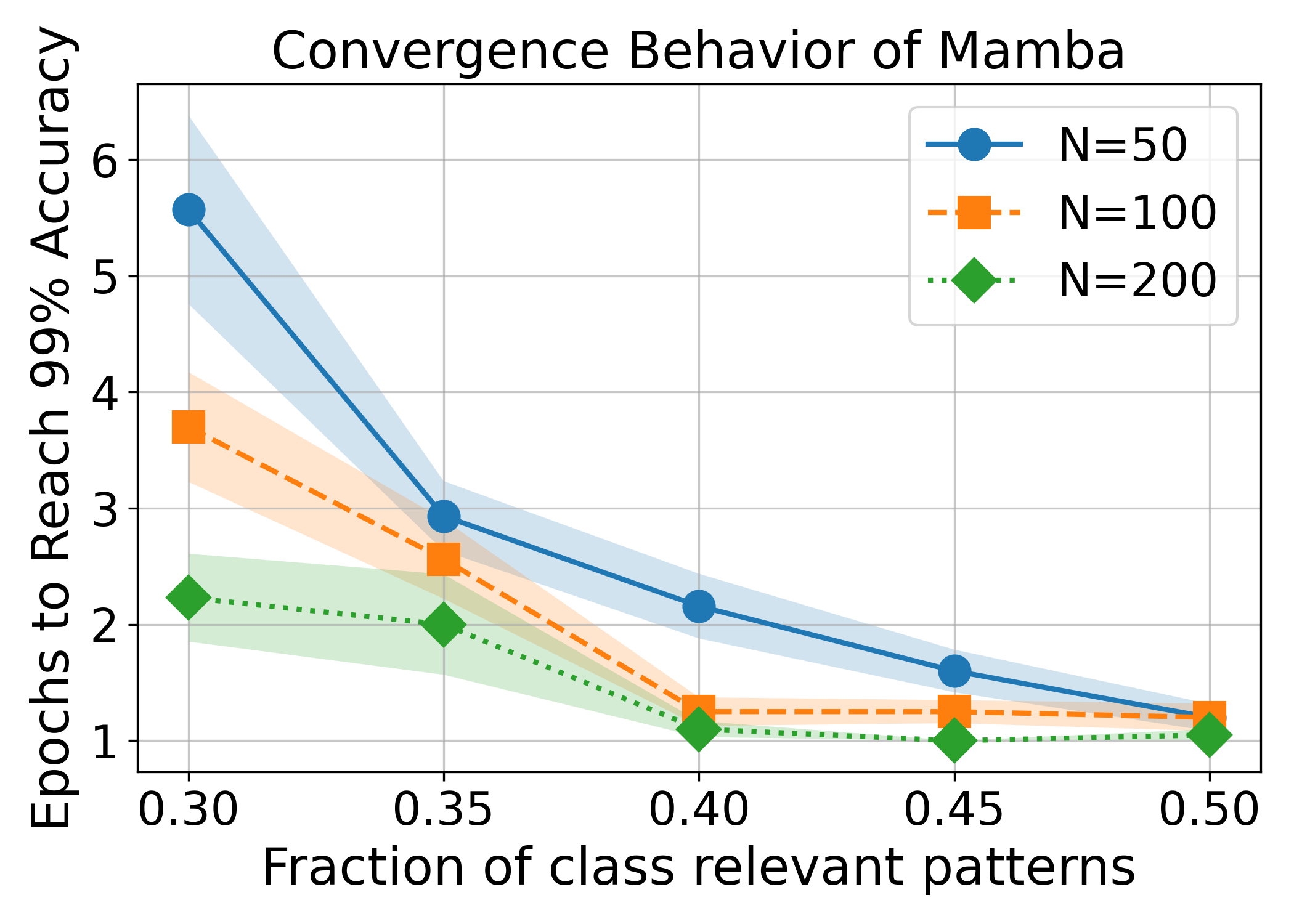}
        \caption{Ablation with feature dimension $d = 128$.}
        \label{fig15}
    \end{minipage}
\end{figure}

\begin{figure}[ht]
    \centering
    \begin{minipage}{0.30\linewidth}
        \centering
        \includegraphics[width=\linewidth]{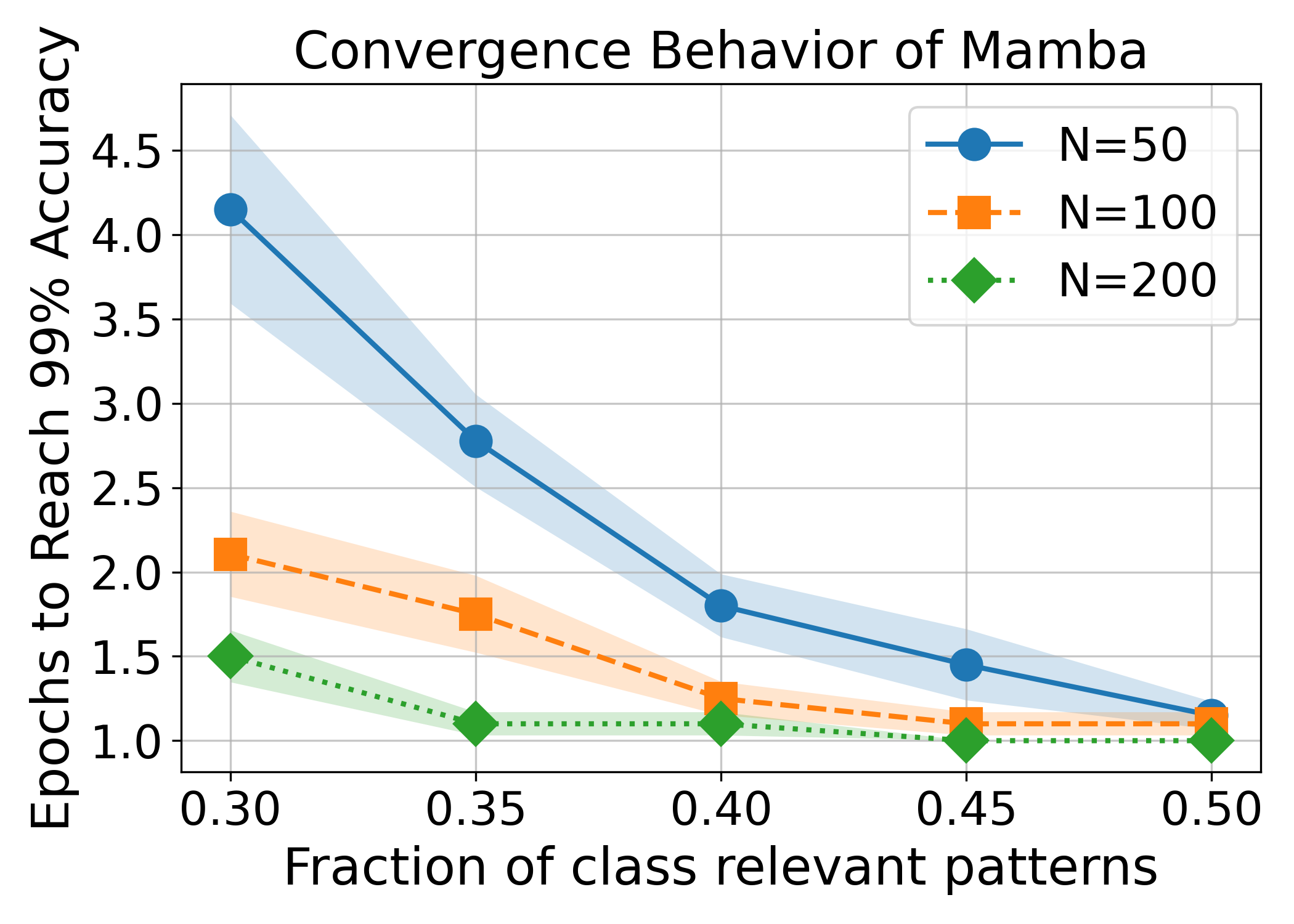}
        \caption{Ablation with confusion fraction $\alpha_c = 0.17$.}
        \label{fig16}
    \end{minipage}\hfill
    \begin{minipage}{0.30\linewidth}
        \centering  
        \includegraphics[width=\linewidth]{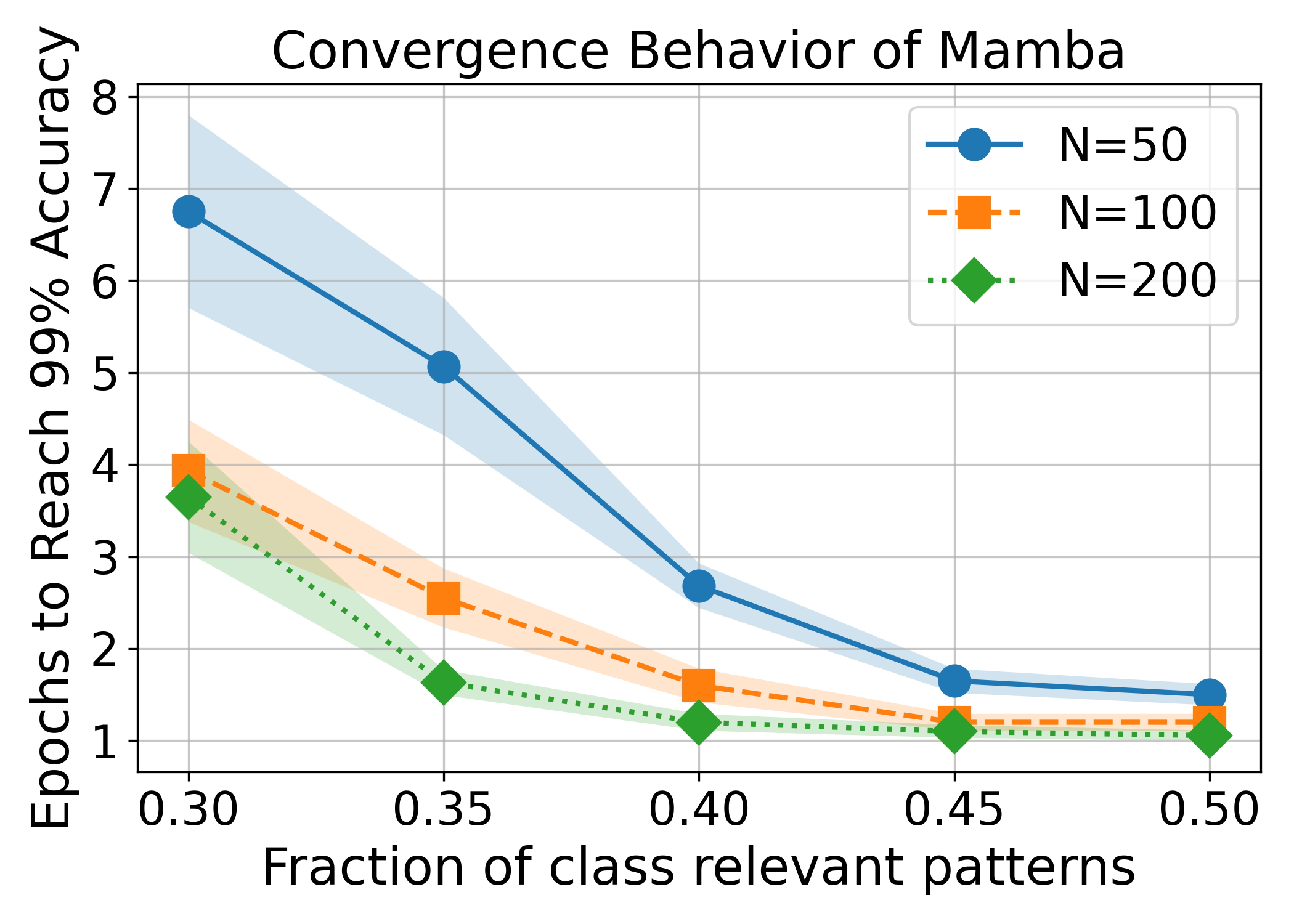}
        \caption{Ablation with confusion fraction $\alpha_c = 0.20$ .}
        \label{fig17}
    \end{minipage}\hfill
    \begin{minipage}{0.30\linewidth}
        \centering
        \includegraphics[width=\linewidth]{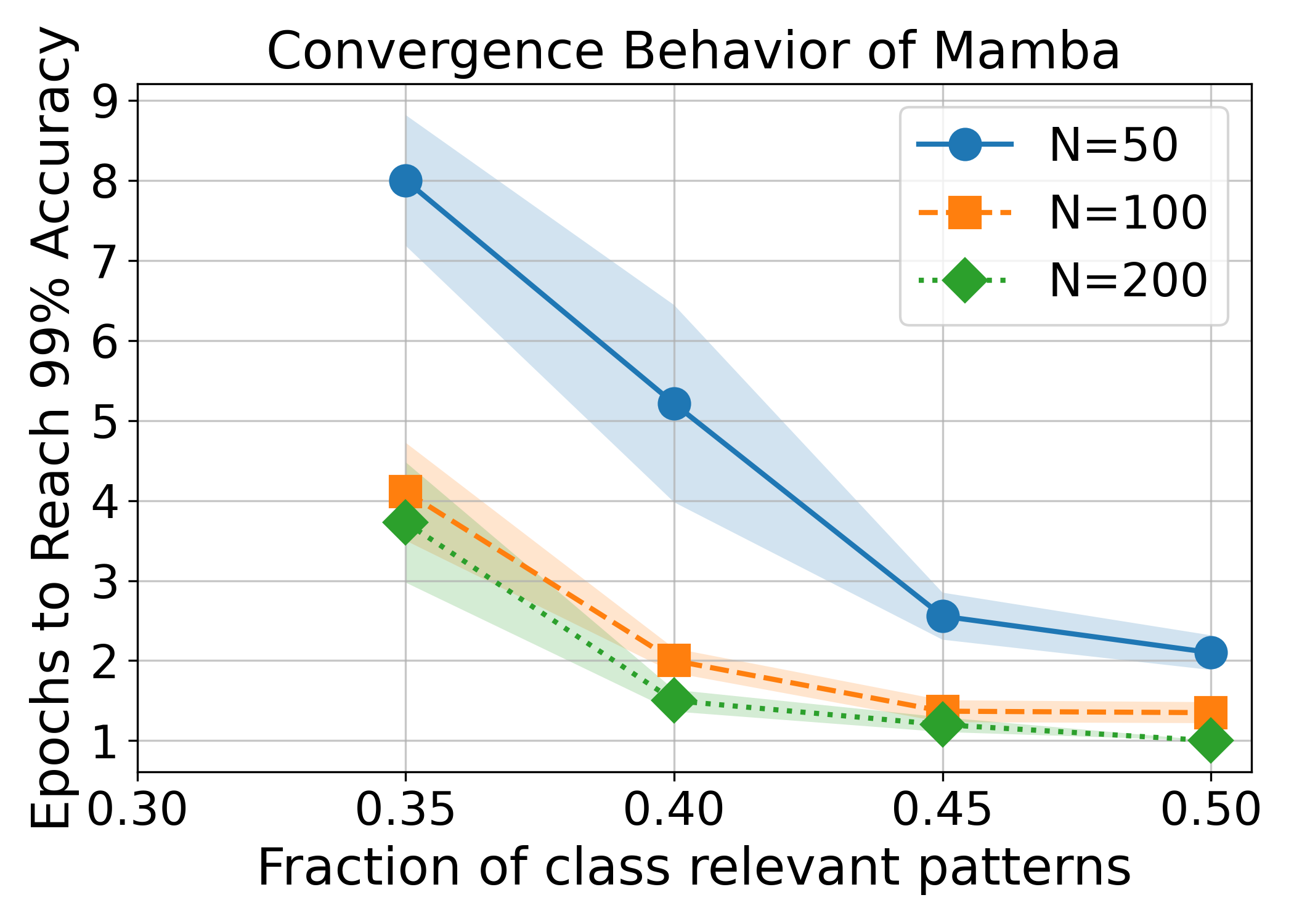}
        \caption{Ablation with confusion fraction $\alpha_c = 0.23$.}
        \label{fig18}
    \end{minipage}
\end{figure}

\section{Majority-voting data}\label{Appendix: B}
\subsection{Useful Lemmas}
Lemma \ref{lem:positive_cls_lucky_newMV} provides bounds on the gradient updates of lucky neurons $i \in \mathcal{W}(t)$ in the directions of both class-relevant features ($\bm{o}_+$, $\bm{o}_-$) and irrelevant features.
\label{sec: lemmas}

\begin{lemma} 
\label{lem:positive_cls_lucky_newMV}
Suppose \( p_1 \leq \langle \bm{w}_\Delta^{(t)}, \bm{o}_+ \rangle \leq q_1\) and 
\( p_1 \leq \langle \bm{w}_\Delta^{(t)}, \bm{o}_- \rangle \leq q_1 \).
Then, for any lucky neuron \( i \in \mathcal{W}(t) \) at iteration \( t \), the following bounds hold:

\textbf{(L1.1)} A lower bound on the gradient of \(\hat{\mathcal{L}}\) with respect to \(\bm{W}_{O(i, \cdot)}\) at iteration \(t\), in the direction of \(\bm{o}_+\), is given by
\begin{equation}
\left\langle - \frac{\partial \hat{\mathcal{L}}}{\partial \bm{W}_{O(i, \cdot)}^{(t)}}, \bm{o}_+ \right\rangle \geq  \frac{1}{\sqrt{m} L} \cdot \sigma(p_1) 
\Theta(\alpha_r L - \alpha_c L) 
- \mathcal{O} \left(  \sqrt{ \frac{d \log N}{m N} } \right) - \mathcal{O}(\tau).   
\label{lemma1_newMV-1.1}
\end{equation}

\textbf{(L1.2)} An upper bound on the gradient of \(\hat{\mathcal{L}}\) with respect to \(\bm{W}_{O(i, \cdot)}\) at iteration \(t\), in the direction of \(\bm{o}_+\), is given by
\begin{equation}
\left\langle - \frac{\partial \hat{\mathcal{L}}}{\partial \bm{W}_{O(i, \cdot)}^{(t)}}, \bm{o}_+ \right\rangle \leq  \frac{1}{\sqrt{m} L} \cdot \sigma(q_1) 
\Theta(\alpha_r L - \alpha_c L) 
+ \mathcal{O} \left(  \sqrt{ \frac{d \log N}{m N} } \right) + \mathcal{O}(\tau).  
\label{lemma1_newMV-1.2}
\end{equation}

\textbf{(L1.3)} A lower bound on the gradient of \(\hat{\mathcal{L}}\) with respect to \(\bm{W}_{O(i, \cdot)}\) at iteration \(t\), in the direction of \(\bm{o}_-\), is given by
\begin{equation}
\begin{split}
\left\langle - \frac{\partial \hat{\mathcal{L}}}{\partial \bm{W}_{O(i, \cdot)}^{(t)}}, \bm{o}_- \right\rangle \geq -
\frac{1}{\sqrt{m} L} \cdot \sigma(q_1) 
\Theta(\alpha_r L - \alpha_c L) 
- \mathcal{O} \left(  \sqrt{ \frac{d \log N}{m N} } \right)-\mathcal{O}(\tau).   
\end{split}
\label{lemma1_newMV-1.3}
\end{equation}

\textbf{(L1.4)} An upper bound on the gradient of \(\hat{\mathcal{L}}\) with respect to \(\bm{W}_{O(i, \cdot)}\) at iteration \(t\), in the direction of \(\bm{o}_-\), is given by
\begin{equation}
\left\langle - \frac{\partial \hat{\mathcal{L}}}{\partial \bm{W}_{O(i, \cdot)}^{(t)}}, \bm{o}_- \right\rangle \leq \mathcal{O} \left( \sqrt{ \frac{d \log N}{m N} } \right)+\mathcal{O}(\tau).
\label{lemma1_newMV-1.4}
\end{equation}

\textbf{(L1.5)} An upper bound on the gradient of \(\hat{\mathcal{L}}\) with respect to \(\bm{W}_{O(i, \cdot)}\) at iteration \(t\), in the direction of \(\bm{o}_j\), is given by
\begin{equation}
\left\langle - \frac{\partial \hat{\mathcal{L}}}{\partial \bm{W}_{O(i, \cdot)}^{(t)}}, \bm{o}_j \right\rangle \leq \mathcal{O} \left( \sqrt{ \frac{d \log N}{m N} } \right)+\mathcal{O}(\tau), \quad \text{for } j \neq 1,2.
\label{lemma1_newMV-1.5}
\end{equation}

\end{lemma}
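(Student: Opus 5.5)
We start from the explicit gradient of the hinge loss with respect to a single hidden row. Since $\bm{v}$ is fixed with $v_i = 1/\sqrt{m}$ for $i\in\mathcal{K}_+$, and $\bm{W}_B,\bm{W}_C$ stay at initialization (the setting of Theorem~\ref{theorem:generalization-majorityvoting}), we have
\begin{equation*}
-\frac{\partial \hat{\mathcal{L}}}{\partial \bm{W}_{O(i,\cdot)}} = \frac{1}{N}\sum_{n=1}^N z^{(n)}\,\mathbb{1}\bigl[z^{(n)}F(\bm{X}^{(n)})<1\bigr]\,\frac{v_i}{L}\sum_{l=1}^L \phi'\bigl(\bm{W}_{O(i,\cdot)}\bm{y}_l^{(n)}\bigr)\,\bm{y}_l^{(n)} .
\end{equation*}
We then project onto $\bm{o}_+$, $\bm{o}_-$ and $\bm{o}_j$. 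Using \eqref{eq:mamba_output}, each $\bm{y}_l^{(n)}$ is a combination of the tokens $\bm{x}_s = \bm{o}^{(s)}+\bm{\xi}_s$, $s\le l$. Its coefficients are $\sigma(\bm{w}_\Delta^\top\bm{x}_s)\prod_{j>s}(1-\sigma(\bm{w}_\Delta^\top\bm{x}_j))\,(\bm{W}_B^\top\bm{x}_s)^\top(\bm{W}_C^\top\bm{x}_l)$. Projecting onto a pattern $\bm{o}$ isolates the tokens equal to $\bm{o}$, up to a noise term. Because $\|\bm{\xi}\|=O(\tau)$ and orthonormality holds, the cross terms are collected into the $\mathcal{O}(\tau)$ error. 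The hypothesis $p_1\le\langle\bm{w}_\Delta^{(t)},\bm{o}_\pm\rangle\le q_1$ then sandwiches the gate of an $\bm{o}_\pm$ token between $\sigma(p_1)$ and $\sigma(q_1)$ (again up to $O(\tau)$). Here I would adopt the diagonal/off-diagonal split $\beta^{(l)}_{s,s}$ versus $\beta^{(l)}_{s,s'}$ mentioned in the technical-challenges paragraph. The diagonal terms, where the output token carries the same pattern as the source, give the leading $\sigma(\cdot)$ factor. The off-diagonal terms carry extra factors $(1-\sigma)$ and are handled by the same sign/monotonicity argument.

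Next we replace the empirical average by its expectation. For a lucky neuron $i\in\mathcal{W}(t)$ we argue inductively that $\bm{W}_{O(i,\cdot)}^{(t)}\bm{o}_+>0$ is preserved, and that the ReLU is active on outputs dominated by $\bm{o}_+$. In a positive sample the $\bm{o}_+$ mass contributes $+\sigma(\cdot)\cdot\#\{\bm{o}_+\text{ tokens}\}$. In a negative sample, $\bm{o}_+$ is a confusion token and contributes with sign $z=-1$, scaled by the confusion count. By the balanced-data assumption ($||\mathcal{N}_+|-|\mathcal{N}_-||=O(\sqrt N)$), the expectation of the projection is therefore $\frac{1}{\sqrt m L}\sigma(\cdot)\,\Theta(\alpha_rL-\alpha_cL)$. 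Bounding the gate from below by $\sigma(p_1)$ gives (L1.1), and from above by $\sigma(q_1)$ gives (L1.2). The deviation of the empirical sum from its mean is a sum of $N$ i.i.d.\ bounded terms of size $O(1/\sqrt m)$, with a union bound over $d$ directions. By Hoeffding this gives the $\mathcal{O}(\sqrt{d\log N/(mN)})$ term with probability $1-N^{-d}$.

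For $\bm{o}_-$, the neuron $i\in\mathcal{W}(t)$ has a positive output weight. Its active $\bm{o}_-$ contributions come mainly from negative samples, where $\bm{o}_-$ is majority and carries sign $-1$, partially offset by positive samples where $\bm{o}_-$ is confusion. This makes the net expected projection nonpositive, which gives (L1.4) after concentration. In magnitude it is at most $\frac{1}{\sqrt m L}\sigma(q_1)\Theta(\alpha_rL-\alpha_cL)$, which gives (L1.3). For $j\ge3$, the filler tokens appear identically distributed in both classes, so the label-weighted expectation cancels to zero up to the $O(\sqrt N)$ class imbalance. This leaves only the concentration and noise terms, giving (L1.5).

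I expect the main obstacle to be controlling the indicator factors $\mathbb{1}[zF<1]$ and $\phi'(\cdot)$. These depend on the current iterate, so the summands are not independent of the data. I would handle this by showing that the activation pattern of a lucky neuron on $\bm{o}_+$-dominated outputs is fixed, since those outputs are positive throughout. The loss indicator only shrinks the sum monotonically, so it can be dropped in the upper bounds and is kept as $\Theta(1)$ of the samples in the lower bounds. For concentration, I would use a covering or union bound over the finitely many activation patterns, which is absorbed into the $\log N$ factor.
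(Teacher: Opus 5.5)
Your population computation, the $p_1,q_1$ sandwich of the gates and the concentration step follow the paper's proof. The paper uses Hoeffding for a fixed unit vector plus a $1/2$-net of size $6^d$ with $q=d$; your union bound over the $d$ basis directions is enough for these projected statements. Your arguments for (L1.3)--(L1.5) also mirror the paper's class-cancellation arguments.

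\textbf{The gap is your treatment of the hinge indicator.} For a lucky neuron, the $\bm o_+$-projection is a mixed-sign sum $\frac{1}{N}\bigl[\sum_{n:z^{(n)}=+1}\bm{1}_n G_n-\sum_{n:z^{(n)}=-1}\bm{1}_n G_n\bigr]$ with $G_n\ge 0$. The negative-sample terms come from the confusion $\bm o_+$ tokens, so the indicator does not shrink the sum monotonically:
\begin{itemize}
\item Dropping it in (L1.2) removes the cancellation between classes and only gives $\Theta(\alpha_r L)$.
\item Keeping a $\Theta(1)$ fraction $c$ of the positive samples in (L1.1) gives roughly $c\,\alpha_r L-\alpha_c L$, which can be negative.
\item If the positive samples have already cleared the margin while the negative ones have not, the projection is $-\frac{1}{\sqrt m L}\Theta(\alpha_c L)<0$, so (L1.1) is false.
\end{itemize}
(L1.4) and (L1.5) fail the same way, since both rely on the two classes cancelling.

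The paper never introduces the indicator. It writes $\partial \ell/\partial F(\bm X^{(n)})=-z^{(n)}$ for every sample, i.e.\ it implicitly assumes all training samples are inside the margin at iteration $t$. This is the pre-convergence regime tracked in the theorem. You need that hypothesis, or an argument that the active sets of the two classes stay balanced, in place of the monotonicity claim.

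Two smaller points. First, the off-diagonal recurrence terms are not lower order. Inside a contiguous block, the $k$-th $\bm o_+$ output has $\langle \bm y_l,\bm o_+\rangle = 1-(1-\sigma)^k$ (noise aside), whereas its diagonal part is only $\sigma$. So you must take the class difference with the common gate value before sandwiching. The paper does this with $\sigma\sum_k(1-\sigma)^{k-1}(\alpha_r L-k+1)$ minus the same expression with $\alpha_c L$. If you instead bound positive samples by their diagonal part $\sigma\alpha_r L$ and negative samples by their full sum, you only get $\sigma\alpha_r L-\alpha_c L$, not $\Theta(\alpha_r L-\alpha_c L)$.

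Second, a union bound over activation patterns does not decouple the iterate from the data. The summands also depend continuously on $\bm w_\Delta^{(t)}$ through the gates. The paper does not attempt this and simply applies Hoeffding as if the iterate were fixed, so you do not need it to match the paper's argument. But as stated, your fix would not suffice.
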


Lemma \ref{lem:unlucky_positive_newMV} shows that, for unlucky neurons associated with the positive class, the gradients in the directions of both class-relevant and irrelevant features are small.

\begin{lemma}
\label{lem:unlucky_positive_newMV}
For any unlucky neuron \( i \in \mathcal{K}_{+} \setminus \mathcal{W}(t) \) at iteration \( t \), the following bounds hold:

\textbf{(L2.1)} An upper bound on the gradient of \(\hat{\mathcal{L}}\) with respect to \(\bm{W}_{O(i, \cdot)}\) at iteration \(t\), in the direction of \(\bm{o}_+\), is given by
\begin{equation}
\left\langle - \frac{\partial \hat{\mathcal{L}} }{\partial \bm{W}_{O(i, \cdot)}^{(t)}}, \bm{o}_+ \right\rangle \leq \mathcal{O} \left( \sqrt{ \frac{d \log N}{m N} } \right)+\mathcal{O}(\tau).
\label{lemma2_newMV-2.1}
\end{equation}

\textbf{(L2.2)} An upper bound on the gradient of \(\hat{\mathcal{L}}\) with respect to \(\bm{W}_{O(i, \cdot)}\) at iteration \(t\), in the direction of \(\bm{o}_-\), is given by
\begin{equation}
\left\langle - \frac{\partial \hat{\mathcal{L}} }{\partial \bm{W}_{O(i, \cdot)}^{(t)}}, \bm{o}_- \right\rangle \leq \mathcal{O} \left( \sqrt{ \frac{d \log N}{m N} } \right)+\mathcal{O}(\tau).
\label{lemma2_newMV-2.2}
\end{equation}

\textbf{(L2.3)} An upper bound on the gradient of \(\hat{\mathcal{L}}\) with respect to \(\bm{W}_{O(i, \cdot)}\) at iteration \(t\), in the direction of \(\bm{o}_j\), is given by
\begin{equation}
\left\langle - \frac{\partial \hat{\mathcal{L}} }{\partial \bm{W}_{O(i, \cdot)}^{(t)}}, \bm{o}_j \right\rangle \leq \mathcal{O} \left( \sqrt{ \frac{d \log N}{m N} } \right)+\mathcal{O}(\tau), \quad \text{for } j \neq 1,2.
\label{lemma2_newMV-2.3}
\end{equation}

\end{lemma}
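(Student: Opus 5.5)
We follow the proof pattern we expect for Lemma~\ref{lem:positive_cls_lucky_newMV}, now specialized to an unlucky neuron $i\in\mathcal{K}_+\setminus\mathcal{W}(t)$. Start from the chain rule for the hinge loss:
\begin{equation*}
-\frac{\partial \hat{\mathcal{L}}}{\partial \bm{W}_{O(i,\cdot)}} = \frac{1}{N}\sum_{n=1}^N z^{(n)}\mathbb{1}[z^{(n)}F(\bm{X}^{(n)})<1]\,\frac{v_i}{L}\sum_{l=1}^L \phi'\big(\bm{W}_{O(i,\cdot)}\bm{y}_l^{(n)}\big)\,\bm{y}_l^{(n)}.
\end{equation*}
With $\bm{W}_B,\bm{W}_C$ fixed, write each $\bm{y}_l^{(n)}$ via \eqref{eq:mamba_output} as a nonnegatively weighted sum of tokens $\bm{x}_s^{(n)}=\bm{o}+\bm{\xi}_s$. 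The inner product $\langle \bm{y}_l^{(n)},\bm{o}\rangle$ then splits into two parts: a pattern part, which collects the gate-weighted counts of tokens equal to $\bm{o}$, and a noise part of size $\mathcal{O}(\tau)$ after normalization. The noise part gives the $\mathcal{O}(\tau)$ term in (L2.1)--(L2.3) directly.

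For the pattern part, the key fact is that an unlucky neuron in $\mathcal{K}_+$ is not activated by positive-class signal. Since $\bm{W}_{O(i,\cdot)}^{(t)}\bm{o}_+\le 0$ and $\bm{W}_O$ entries are small Gaussians, $\phi'(\bm{W}_{O(i,\cdot)}\bm{y}_l^{(n)})$ is essentially uncorrelated with the label structure. To see this, decompose the sum over $n$ into $\mathcal{N}_+$ and $\mathcal{N}_-$.

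\textbf{Direction $\bm{o}_+$ (L2.1).} In positive samples, the contribution along $\bm{o}_+$ has sign $v_i z^{(n)}>0$, but it is multiplied by the activation indicator. That indicator vanishes whenever $\bm{o}_+$ dominates $\bm{y}_l$, because the neuron's projection on $\bm{o}_+$ is nonpositive. In negative samples, $z^{(n)}=-1$, so the term is nonpositive and contributes only downward.

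\textbf{Directions $\bm{o}_-$ and $\bm{o}_j$ (L2.2), (L2.3).} Label symmetry of the data distribution means the expected activation-weighted contributions from $\mathcal{N}_+$ and $\mathcal{N}_-$ cancel in expectation. This uses the fact that for filler directions the label is independent of the token, and that for $\bm{o}_-$ the positive-sample terms appear as confusion tokens with a fraction no larger than in negative samples. The balanced-dataset assumption $||\mathcal{N}_+|-|\mathcal{N}_-||=O(\sqrt N)$ controls the leftover imbalance.

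Next, control the gap between the empirical average and its expectation. Each summand is bounded by $\mathcal{O}(1/\sqrt m)$, since $|v_i|=1/\sqrt m$ and $\|\bm{y}_l\|=\mathcal{O}(1)$ because the gate weights in \eqref{eq:mamba_output} sum to at most one. Hoeffding's inequality with a union bound over the $d$ directions and all neurons then gives deviation $\mathcal{O}(\sqrt{d\log N/(mN)})$ with probability at least $1-N^{-d}$.

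The main obstacle is (L2.1). There the positive-sample term is not zero in expectation, so we must show that the activation indicator essentially kills it. Our first attempt is to show inductively, using (L2.1) at earlier iterations together with the lucky-neuron bounds of Lemma~\ref{lem:positive_cls_lucky_newMV}, that $\bm{W}_{O(i,\cdot)}^{(t)}\bm{o}_+$ stays $\le \mathcal{O}(\tau)$ plus the small deviation term. Then, whenever the neuron fires on a positive sample, that firing must be driven by $\bm{o}_-$, filler, or noise components. This bounds the $\bm{o}_+$ projection of the activated part by the same residual. If that induction fails, we fall back on the coarser argument that the neuron's activation pattern is label-symmetric up to concentration error.
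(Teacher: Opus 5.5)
There is a genuine gap, and it is exactly where you put the ``main obstacle'' for (L2.1). You treat $\bm{y}_l^{(n)}$ as a gate-weighted mixture of all earlier tokens. That leads you to expect positions where the neuron fires because of $\bm{o}_-$, filler, or noise components while $\bm{y}_l$ still carries substantial $\bm{o}_+$ mass.

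The missing idea is that this cannot happen in the paper's model. The Mamba output contains the factor $\bm{x}_s^\top\bm{x}_l$ (with $\bm{W}_B=\bm{W}_C=I$ held fixed), and because the patterns are orthonormal, this factor vanishes unless $\bm{x}_s$ and $\bm{x}_l$ carry the same pattern. Hence, up to $\mathcal{O}(\tau)$, $\bm{y}_l=c_l\,\bm{o}$ with $c_l>0$, where $\bm{o}$ is the pattern of the current token $\bm{x}_l$.

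\begin{itemize}
\item \textbf{Direction $\bm{o}_+$ (L2.1).} $\langle\bm{y}_l,\bm{o}_+\rangle$ is non-negligible only at $\bm{o}_+$-positions. There $\bm{W}_{O(i,\cdot)}\bm{y}_l=c_l\,\bm{W}_{O(i,\cdot)}\bm{o}_+\le 0$ by the very definition of $i\notin\mathcal{W}(t)$, so $\phi'=0$. The population gradient along $\bm{o}_+$ is therefore $0\pm\mathcal{O}(\tau)$, and no induction is needed.
\item \textbf{Directions $\bm{o}_-$ and $\bm{o}_j$ (L2.2), (L2.3).} The same fact is what actually justifies your claims here. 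At each position the activation depends only on the sign of $\bm{W}_{O(i,\cdot)}$ along that position's pattern, which does not depend on the label. Along $\bm{o}_-$, the positive-class contribution (about $\alpha_c L$ confusion tokens) is therefore dominated by the negative-class one (about $\alpha_r L$ tokens), so the net is $\le\mathcal{O}(\tau)$. Along $\bm{o}_j$ the two classes cancel. ``Small Gaussian entries'' do not by themselves make $\phi'$ uncorrelated with the label.
\end{itemize}

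Neither of your proposed resolutions for (L2.1) works without this fact.
\begin{itemize}
\item \textbf{The induction.} It controls $\bm{W}_{O(i,\cdot)}^{(t)}\bm{o}_+$, but the gradient along $\bm{o}_+$ is $\sum_l\phi'(\cdot)\langle\bm{y}_l,\bm{o}_+\rangle$. A small $\bm{W}_{O(i,\cdot)}\bm{o}_+$ says nothing about $\langle\bm{y}_l,\bm{o}_+\rangle$ at positions where the neuron fires for other reasons. Also, summing per-step bounds over $t$ iterations would not keep that projection at the $\mathcal{O}(\tau)$ level.
\item \textbf{The fallback.} It is actually false. If the activation pattern were merely label-symmetric, positive samples (about $\alpha_r L$ copies of $\bm{o}_+$) would outweigh negative ones (about $\alpha_c L$). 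That gives a net positive contribution of order $(\alpha_r-\alpha_c)/\sqrt{m}$, which is precisely the lucky-neuron signal of Lemma~\ref{lem:positive_cls_lucky_newMV}, not a concentration-level term.
\end{itemize}

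A smaller point: you keep the hinge indicator $z^{(n)}F(\bm{X}^{(n)})<1$. This makes the per-class weights label-dependent and can break both the cancellation along $\bm{o}_j$ and the sign argument along $\bm{o}_-$ unless you control the active sets. The paper sidesteps this by using $-z^{(n)}$ as the loss derivative for every sample.

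With the collinearity fact in place, your concentration step finishes the proof essentially as the paper does. The only difference is that the paper uses an $\varepsilon$-net of size $6^d$ with $q=d$ rather than a union bound over coordinate directions.
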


Lemmas \ref{lem:negative_cls_lucky_newMV} and \ref{lem:unlucky_negative_newMV}, by symmetry, state the analogous results for lucky and unlucky neurons associated with the negative class.  

\begin{lemma} 
\label{lem:negative_cls_lucky_newMV}
Suppose \( p_1 \leq \langle \bm{w}_\Delta^{(t)}, \bm{o}_+ \rangle \leq q_1\) and 
\( p_1 \leq \langle \bm{w}_\Delta^{(t)}, \bm{o}_- \rangle \leq q_1 \).
Then, for any lucky neuron \( i \in \mathcal{U}(t) \) at iteration \( t \), the following bounds hold:

\textbf{(L3.1)} A lower bound on the gradient of \(\hat{\mathcal{L}}\) with respect to \(\bm{W}_{O(i, \cdot)}\) at iteration \(t\), in the direction of \(\bm{o}_-\), is given by
\begin{equation}
\left\langle - \frac{\partial \hat{\mathcal{L}}}{\partial \bm{W}_{O(i, \cdot)}^{(t)}}, \bm{o}_- \right\rangle \geq  \frac{1}{\sqrt{m} L} \cdot \sigma(p_1) 
\Theta(\alpha_r L - \alpha_c L) 
- \mathcal{O} \left(  \sqrt{ \frac{d \log N}{m N} } \right) - \mathcal{O}(\tau).  
\label{lemma3_newMV-3.1}
\end{equation}

\textbf{(L3.2)} An upper bound on the gradient of \(\hat{\mathcal{L}}\) with respect to \(\bm{W}_{O(i, \cdot)}\) at iteration \(t\), in the direction of \(\bm{o}_-\), is given by
\begin{equation}
\left\langle - \frac{\partial \hat{\mathcal{L}}}{\partial \bm{W}_{O(i, \cdot)}^{(t)}}, \bm{o}_- \right\rangle
\leq \frac{1}{\sqrt{m} L} \cdot \sigma(q_1) 
\Theta(\alpha_r L - \alpha_c L)
+ \mathcal{O} \left(  \sqrt{ \frac{d \log N}{m N} } \right)+\mathcal{O}(\tau).   
\label{lemma3_newMV-3.2}
\end{equation}

\textbf{(L3.3)} A lower bound on the gradient of \(\hat{\mathcal{L}}\) with respect to \(\bm{W}_{O(i, \cdot)}\) at iteration \(t\), in the direction of \(\bm{o}_+\), is given by
\begin{equation}
\left\langle - \frac{\partial \hat{\mathcal{L}}}{\partial \bm{W}_{O(i, \cdot)}^{(t)}}, \bm{o}_+ \right\rangle \geq -
\frac{1}{\sqrt{m} L} \cdot \sigma(q_1) 
\Theta(\alpha_r L - \alpha_c L)
- \mathcal{O} \left(  \sqrt{ \frac{d \log N}{m N} } \right)-\mathcal{O}(\tau).   
\label{lemma3_newMV-3.3}
\end{equation}

\textbf{(L3.4)} An upper bound on the gradient of \(\hat{\mathcal{L}}\) with respect to \(\bm{W}_{O(i, \cdot)}\) at iteration \(t\), in the direction of \(\bm{o}_+\), is given by
\begin{equation}
\left\langle - \frac{\partial \hat{\mathcal{L}} }{\partial \bm{W}_{O(i, \cdot)}^{(t)}}, \bm{o}_+ \right\rangle \leq \mathcal{O} \left( \sqrt{ \frac{d \log N}{m N} } \right)+\mathcal{O}(\tau).
\label{lemma3_newMV-3.4}
\end{equation}

\textbf{(L3.5)} An upper bound on the gradient of \(\hat{\mathcal{L}}\) with respect to \(\bm{W}_{O(i, \cdot)}\) at iteration \(t\), in the direction of \(\bm{o}_j\), is given by
\begin{equation}
\left\langle - \frac{\partial \hat{\mathcal{L}} }{\partial \bm{W}_{O(i, \cdot)}^{(t)}}, \bm{o}_j \right\rangle \leq \mathcal{O} \left( \sqrt{ \frac{d \log N}{m N} } \right)+\mathcal{O}(\tau), \quad \text{for } j \neq 1,2.
\label{lemma3_newMV-3.5}
\end{equation}

\end{lemma}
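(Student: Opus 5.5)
The statement to prove is Lemma~\ref{lem:negative_cls_lucky_newMV}, the negative-class counterpart of Lemma~\ref{lem:positive_cls_lucky_newMV}. I would prove it by the symmetry argument the paper alludes to, checking that every step of the lucky-neuron computation for $i\in\mathcal{W}(t)$ carries over to $i\in\mathcal{U}(t)$ once the roles are swapped: $\bm{o}_+\leftrightarrow\bm{o}_-$, $z=+1\leftrightarrow z=-1$, and $v_i=+1/\sqrt{m}\leftrightarrow v_i=-1/\sqrt{m}$.

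\textbf{Gradient formula.} Since $\bm{W}_B,\bm{W}_C$ are frozen at initialization, I would first write the hinge-loss gradient explicitly:
\[
-\frac{\partial \hat{\mathcal{L}}}{\partial \bm{W}_{O(i,\cdot)}}=\frac{1}{NL}\sum_{n:\,z^{(n)}F(\bm{X}^{(n)})<1} z^{(n)} v_i\sum_{l=1}^L \mathbb{1}\bigl[\bm{W}_{O(i,\cdot)}\bm{y}_l^{(n)}>0\bigr]\,\bm{y}_l^{(n)}.
\]
For $i\in\mathcal{U}(t)$ we have $v_i<0$, so $z^{(n)}v_i=+1/\sqrt{m}$ on negative samples and $-1/\sqrt{m}$ on positive samples.

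\textbf{Projection of the Mamba output.} Next I would expand $\bm{y}_l^{(n)}$ via \eqref{eq:mamba_output}. Each token is $\bm{o}+\bm{\xi}$ with $\|\bm{\xi}\|\lesssim\tau$, so the component $\langle\bm{y}_l,\bm{o}_-\rangle$ is a sum over earlier tokens $s$ carrying $\bm{o}_-$. The weight of token $s$ is $\sigma(\bm{w}_\Delta^\top\bm{x}_s)$ times a product of forget factors, up to an $\mathcal{O}(\tau)$ perturbation. By hypothesis $\langle \bm{w}_\Delta^{(t)},\bm{o}_-\rangle\in[p_1,q_1]$, so the diagonal gate $\sigma(\bm{w}_\Delta^\top\bm{x}_s)$ for an $\bm{o}_-$ token lies in $[\sigma(p_1),\sigma(q_1)]$.

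\textbf{Sign of the contributions.} Because $i$ is lucky, $\bm{W}_{O(i,\cdot)}\bm{o}_->0$, and this persists since (L3.1) keeps it increasing. Hence the ReLU is active on outputs dominated by $\bm{o}_-$ tokens. Negative samples contribute positively with $\approx\alpha_r L$ such tokens, and positive samples contribute negatively with $\approx\alpha_c L$ confusion tokens. Class balance $|\mathcal{N}_+|-|\mathcal{N}_-|=O(\sqrt N)$ together with Hoeffding concentration of the empirical average over samples gives the net drift $\frac{1}{\sqrt m L}\sigma(p_1)\Theta(\alpha_rL-\alpha_cL)$ as a lower bound. The deviation term is $\mathcal{O}(\sqrt{d\log N/(mN)})$, obtained after a union bound over $d$ directions, and the noise term is $\mathcal{O}(\tau)$. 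This yields (L3.1); replacing $\sigma(p_1)$ by $\sigma(q_1)$ gives the upper bound (L3.2).

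\textbf{Other directions.} For the $\bm{o}_+$ direction, the sign of $z^{(n)}v_i$ flips relative to $\bm{o}_+$ occurrences. Positive samples now contribute negatively with magnitude at most $\sigma(q_1)\alpha_rL$. Negative samples contribute nonnegatively but at most at the fluctuation level, since such neurons are only weakly active on $\bm{o}_+$-dominated outputs. This gives (L3.3) and (L3.4). For irrelevant directions $\bm{o}_j$, $j\ge3$, these features appear identically distributed in both classes, so the $\pm$ contributions cancel in expectation, leaving only the concentration and noise terms of (L3.5).

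\textbf{Main obstacle.} The hard step is controlling the off-diagonal recurrent terms $\beta^{(l)}_{s,s'}$, $s<l$, where $\bm{o}_+$ and $\bm{o}_-$ tokens mix inside a single $\bm{y}_l$ through the multiplicative gate products. These products could in principle change which tokens activate the ReLU. I would handle them first by bounding the products of $(1-\sigma(\cdot))$ factors by geometric decay, so that their aggregate is absorbed into the $\Theta(\cdot)$ constant. I would then reuse verbatim the corresponding estimates from the proof of Lemma~\ref{lem:positive_cls_lucky_newMV}, which are symmetric under the label swap.
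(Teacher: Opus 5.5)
Your overall route matches the paper's: swap $\bm{o}_+\leftrightarrow\bm{o}_-$, $z\leftrightarrow -z$, $v_i\leftrightarrow -v_i$ in the proof of Lemma~\ref{lem:positive_cls_lucky_newMV}, split the population gradient by label, bound the gates by $\sigma(p_1),\sigma(q_1)$, add the Hoeffding deviation, and use cancellation for $\bm{o}_j$. Your arguments for (L3.1), (L3.2) and (L3.5) go through as written. The step that fails is your treatment of the $\bm{o}_+$ direction.

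You claim the negative-sample contribution is ``at most at the fluctuation level, since such neurons are only weakly active on $\bm{o}_+$-dominated outputs.'' But the gradient only sees $\phi'(\bm{W}_{O(i,\cdot)}\bm{y}_l)\in\{0,1\}$, never the size of the pre-activation. With $\bm{W}_B=\bm{W}_C=I$, the factor $\bm{x}_s^\top\bm{x}_l$ makes $\bm{y}_l$, up to $\mathcal{O}(\tau)$, a positive multiple of the feature carried by $\bm{x}_l$. So on every $\bm{o}_+$ token, whether class-relevant in a positive sample or confusion in a negative sample, the ReLU is on exactly when $\bm{W}^{(t)}_{O(i,\cdot)}\bm{o}_+>0$. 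Neurons in $\mathcal{U}(t)$ can satisfy this; at $t=0$ roughly half of $\mathcal{U}(0)$ do, since $\delta_1,\delta_2$ are independent. For those neurons the negative samples contribute a full $+\frac{1}{\sqrt m L}\Theta(\alpha_c L)$ term.

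The fix is the same case split you already use correctly for $\bm{o}_-$:
\begin{itemize}
\item If $\bm{W}^{(t)}_{O(i,\cdot)}\bm{o}_+\le 0$, both class terms vanish.
\item If it is positive, both terms are active. Up to the class-proportion factor, the $\mathcal{O}(\tau)$ term and the deviation term, the net projection is $-\frac{1}{\sqrt mL}\bigl[S(\alpha_rL)-S(\alpha_cL)\bigr]$, where $S(n)=\sum_{l=1}^{n}\bigl(1-(1-\sigma_+)^{l}\bigr)$ and $\sigma_+=\sigma(\langle\bm{w}_\Delta^{(t)},\bm{o}_+\rangle)$. This lies between $-\frac{1}{\sqrt mL}(\alpha_rL-\alpha_cL)$ and $0$.
\end{itemize}
So (L3.4) comes from majority-vote domination ($\alpha_r>\alpha_c$), not from smallness. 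The same split is what produces the factor $\alpha_rL-\alpha_cL$ in (L3.3). Dropping the $+\alpha_c$ term, as you do, only yields $-\frac{1}{\sqrt mL}\sigma(q_1)\Theta(\alpha_rL)$, which is weaker than (L3.3) when $\alpha_r-\alpha_c\ll\alpha_r$.

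The ``main obstacle'' you identify does not arise in this lemma. The coefficients $\beta^{(l)}_{s,s'}$ belong to the $\bm{w}_\Delta$-gradient (Lemmas~\ref{lemma:5} and~\ref{lemma:6}), not to $\partial\hat{\mathcal{L}}/\partial\bm{W}_{O(i,\cdot)}$. Because $\bm{x}_s^\top\bm{x}_l=\mathcal{O}(\tau)$ for tokens of different features, $\bm{o}_+$ and $\bm{o}_-$ never mix inside one $\bm{y}_l$. Other tokens enter only through positive scalar forget factors, which cannot flip an activation. This observation is what makes the activation pattern class-independent and the case split above valid, so it should replace the geometric-decay absorption you propose.

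Two smaller points:
\begin{itemize}
\item Membership in $\mathcal{U}(t)$ already gives $\bm{W}^{(t)}_{O(i,\cdot)}\bm{o}_->0$, so your remark that this ``persists since (L3.1) keeps it increasing'' is unnecessary and circular.
\item You write the hinge gradient restricted to samples with $z^{(n)}F(\bm{X}^{(n)})<1$, but then average over all samples using class balance. You need the standing assumption, implicit in the paper's $\partial\ell/\partial F=-z^{(n)}$, that every training sample is still in the linear part of the hinge. Otherwise the active set need not be balanced, and the drift in (L3.1) is not guaranteed.
\end{itemize}
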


\begin{lemma}
\label{lem:unlucky_negative_newMV}
For any unlucky neuron \( i \in \mathcal{K}_{-} \setminus \mathcal{U}(t) \) at iteration \( t \), the following bounds hold:

\textbf{(L4.1)} An upper bound on the gradient of \(\hat{\mathcal{L}}\) with respect to \(\bm{W}_{O(i, \cdot)}\) at iteration \(t\), in the direction of \(\bm{o}_-\), is given by
\begin{equation}
\left\langle - \frac{\partial \hat{\mathcal{L}} }{\partial \bm{W}_{O(i, \cdot)}^{(t)}}, \bm{o}_- \right\rangle \leq \mathcal{O} \left( \sqrt{ \frac{d \log N}{m N} } \right)+\mathcal{O}(\tau).
\label{lemma4_newMV-4.1}
\end{equation}

\textbf{(L4.2)} An upper bound on the gradient of \(\hat{\mathcal{L}}\) with respect to \(\bm{W}_{O(i, \cdot)}\) at iteration \(t\), in the direction of \(\bm{o}_+\), is given by
\begin{equation}
\left\langle - \frac{\partial \hat{\mathcal{L}} }{\partial \bm{W}_{O(i, \cdot)}^{(t)}}, \bm{o}_+ \right\rangle \leq \mathcal{O} \left( \sqrt{ \frac{d \log N}{m N} } \right)+\mathcal{O}(\tau).
\label{lemma4_newMV-4.2}
\end{equation}

\textbf{(L4.3)} An upper bound on the gradient of \(\hat{\mathcal{L}}\) with respect to \(\bm{W}_{O(i, \cdot)}\) at iteration \(t\), in the direction of \(\bm{o}_j\), is given by
\begin{equation}
\left\langle - \frac{\partial \hat{\mathcal{L}} }{\partial \bm{W}_{O(i, \cdot)}^{(t)}}, \bm{o}_j \right\rangle \leq \mathcal{O} \left( \sqrt{ \frac{d \log N}{m N} } \right)+\mathcal{O}(\tau), \quad \text{for } j \neq 1,2.
\label{lemma4_newMV-4.3}
\end{equation}

\end{lemma}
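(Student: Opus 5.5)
The plan is to compute the gradient of the hinge loss with respect to a row $\bm{W}_{O(i,\cdot)}$ for an unlucky neuron $i\in\mathcal{K}_-\setminus\mathcal{U}(t)$ and bound its projection on each of $\bm{o}_-,\bm{o}_+,\bm{o}_j$. Write
\[
-\frac{\partial \hat{\mathcal{L}}}{\partial \bm{W}_{O(i,\cdot)}}=\frac{1}{N}\sum_{n:\,z^{(n)}F(\bm{X}^{(n)})<1} z^{(n)}\frac{v_i}{L}\sum_{l=1}^L \phi'\big(\bm{W}_{O(i,\cdot)}\bm{y}_l^{(n)}\big)\,\bm{y}_l^{(n)}.
\]
Since $v_i=-1/\sqrt m$, each summand has sign $-z^{(n)}$ times a nonnegative activation indicator times $\bm{y}_l^{(n)}$. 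By \eqref{eq:mamba_output}, $\bm{y}_l^{(n)}$ is a nonnegative combination (gates in $(0,1)$, and with $\bm{W}_B,\bm{W}_C$ fixed the bilinear weights $\bm{x}_s^\top\bm{W}_B\bm{W}_C^\top\bm{x}_l$ are, up to $\mathcal{O}(\tau)$, those of the noiseless patterns) of tokens $\bm{x}_s=\bm{o}+\bm{\xi}_s$. Therefore $\langle \bm{y}_l^{(n)},\bm{o}\rangle$ equals a nonnegative weight on the pattern $\bm{o}$ plus $\mathcal{O}(\tau)$, using $\tau<\mathcal{O}(1/d)$ and Gaussian concentration for the noise contributions.

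\textbf{Key step.} Show that an unlucky negative neuron essentially never fires on negative samples through the $\bm{o}_-$ component. At initialization, $\bm{W}_{O(i,\cdot)}^{(0)}\bm{o}_-\le 0$ by definition of $\mathcal{U}(0)$. By induction over $t$, using this very lemma at earlier iterations, the $\bm{o}_-$ coordinate only moves by $\eta\cdot\mathcal{O}(\sqrt{d\log N/(mN)})+\eta\,\mathcal{O}(\tau)$ per step. So the neuron stays outside the lucky set, which is consistent with how $\mathcal{U}(t)$ is tracked.

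\textbf{Directional bounds.} Given the key step:
\begin{itemize}
\item \emph{Direction $\bm{o}_-$.} Contributions from negative samples ($z=-1$, giving a positive sign) require activation; activation can only come via other components, and then the $\bm{o}_-$ projection is bounded by the cross terms. Contributions from positive samples carry a nonpositive sign times a nonnegative projection, so they only help the upper bound.
\item \emph{Direction $\bm{o}_+$.} The terms with positive sign come from $z=-1$ samples, where $\bm{o}_+$ is a confusion pattern. I would pair these against $z=+1$ samples, where $\bm{o}_+$ is relevant and appears in fraction $\alpha_r>\alpha_c$, so the expected drift is nonpositive.
\item \emph{Direction $\bm{o}_j$, $j\ge3$.} Irrelevant tokens have the same distribution in both classes, so the positive and negative contributions cancel in expectation.
\end{itemize}
In each case the remaining deviation is the gap between the empirical and population averages. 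Using the balanced-dataset condition $||\mathcal{N}_+|-|\mathcal{N}_-||=O(\sqrt N)$, boundedness of each summand by $\mathcal{O}(1/\sqrt m)$, Hoeffding's inequality, and a union bound over $d$ directions, this gap is $\mathcal{O}(\sqrt{d\log N/(mN)})$ with probability $1-N^{-d}$. The Gaussian noise adds the $\mathcal{O}(\tau)$ term.

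\textbf{Main obstacle.} The hardest part is the activation-pattern control: the indicator $\phi'$ depends on all coordinates of $\bm{W}_{O(i,\cdot)}$ and on the gate-weighted mixture in $\bm{y}_l$, so one must rule out that growth in other directions switches the neuron on in a class-correlated way. I would first handle this by mirroring the argument of Lemma~\ref{lem:unlucky_positive_newMV}, of which the present lemma is the sign-flipped symmetric analogue. Concretely, swap $\bm{o}_+\leftrightarrow\bm{o}_-$, $z\leftrightarrow -z$, and $v_i\leftrightarrow -v_i$, and check that the data distribution is invariant under this swap. That reduces the whole statement to Lemma~\ref{lem:unlucky_positive_newMV}.
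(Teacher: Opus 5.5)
Your proposal is correct and matches the paper. The paper proves this lemma in one line by symmetry with Lemma~\ref{lem:unlucky_positive_newMV}, and that lemma's proof is your outline: the ReLU is inactive on tokens carrying the neuron's own feature, so the $\bm{o}_-$ projection is $0\pm\mathcal{O}(\tau)$; the other discriminative direction receives only a nonpositive population drift; irrelevant directions cancel between classes; and a Hoeffding concentration bound gives the $\mathcal{O}(\sqrt{d\log N/(mN)})$ gap. Your pairing of the $\alpha_r$ and $\alpha_c$ contributions along $\bm{o}_+$ is, if anything, more careful than the paper, which keeps only one class's term. One minor point: your inductive ``key step'' is unnecessary, since $i\notin\mathcal{U}(t)$ already gives $\bm{W}_{O(i,\cdot)}^{(t)}\bm{o}_-\le 0$, and as written it would not by itself keep the neuron unlucky, because $T$ small positive drifts of size $\eta\,\mathcal{O}(\sqrt{d\log N/(mN)}+\tau)$ can accumulate.
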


Lemma \ref{lemma:5} establishes bounds for the gradient updates of $\bm{w}_\Delta$ in the class-relevant feature directions.
\begin{lemma}\label{lemma:5}
Suppose \( r_1^{*} \leq  \langle {\bm{W}_{O(i,\cdot)}^{(t+1)}}^{\top}, \bm{o}_+ \rangle \leq s_1^{*} \).
Let \(\lvert \mathcal{W}(t) \rvert = \rho_t^{+}\) and \(\lvert \mathcal{U}(t) \rvert = \rho_t^{-}\) . Then, at iteration \( t \), the following bounds hold:

\textbf{(L5.1)} A lower bound on the gradient of \(\hat{\mathcal{L}}\) with respect to \(\bm{w}_\Delta\) at iteration \(t\), in the direction of \(\bm{o}_+\), is given by

\begin{equation}
\left\langle - \frac{\partial \hat{\mathcal{L}}}{\partial \bm{w}_\Delta^{(t)}}, \bm{o}_+ \right\rangle 
\geq \frac{r_1^{*}}{2\sqrt{m}L} \cdot \rho_t^+ \cdot 
\Theta(\alpha_r L)
- 
\frac{\sqrt{m} s_1^{*}}{4L} \cdot 
\Theta(\alpha_c L)  
- \mathcal{O}\left( \sqrt{ \frac{d \log N}{m N} } \right) - \mathcal{O}(\tau).
\label{lemma5_newMV-5.1}
\end{equation}

\textbf{(L5.2)} A lower bound on the gradient of \(\hat{\mathcal{L}}\) with respect to \(\bm{w}_\Delta\) at iteration \(t\), in the direction of \(\bm{o}_-\), is given by
\begin{equation}
\left\langle - \frac{\partial \hat{\mathcal{L}}}{\partial \bm{w}_\Delta^{(t)}}, \bm{o}_- \right\rangle 
\geq \frac{r_1^{*}}{2\sqrt{m}L} \cdot \rho_t^- \cdot 
\Theta(\alpha_r L)
- 
\frac{\sqrt{m} s_1^{*}}{4L} \cdot 
\Theta(\alpha_c L)  
- \mathcal{O}\left( \sqrt{ \frac{d \log N}{m N} } \right) - \mathcal{O}(\tau)
\label{lemma5_newMV-5.2}
\end{equation}
   
\end{lemma}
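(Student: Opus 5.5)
We compute the gradient of the empirical hinge loss with respect to $\bm{w}_\Delta$ explicitly from \eqref{eq:mamba_output} and \eqref{eq:model_output}, and then project it onto $\bm{o}_+$. As in the experiments, we take $\bm{W}_B=\bm{W}_C$ frozen at initialization. This makes $(\bm{W}_B^\top\bm{x}_s)^\top(\bm{W}_C^\top\bm{x}_l)$ equal to $\langle \bm{x}_s,\bm{x}_l\rangle$ up to $\mathcal{O}(\tau)$ noise terms. Write $g_s=\sigma(\bm{w}_\Delta^\top\bm{x}_s)$. Each $\bm{y}_l$ is then the weighted sum $\sum_{s\le l}\beta^{(l)}_s\,\bm{x}_s$, with
\[
\beta^{(l)}_s=\Big(\prod_{j=s+1}^{l}(1-g_j)\Big)\,g_s\,\langle\bm{x}_s,\bm{x}_l\rangle .
\]
Since tokens are noisy orthonormal patterns, $\langle\bm{x}_s,\bm{x}_l\rangle\approx\mathbb{1}[\bm{x}_s,\bm{x}_l\text{ share a pattern}]$. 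Hence only same-pattern pairs $(s,l)$ contribute at leading order, with deviations absorbed into $\mathcal{O}(\tau)$. Differentiating through the chain rule gives
\[
-\frac{\partial\hat{\mathcal L}}{\partial\bm{w}_\Delta}=\frac1N\sum_{n:\,z^{(n)}F<1}\frac{z^{(n)}}{L}\sum_{l}\sum_i v_i\,\mathbb 1[\bm{W}_{O(i,\cdot)}\bm{y}_l>0]\sum_{s}\bm{W}_{O(i,\cdot)}\bm{x}_s\,\frac{\partial\beta_s^{(l)}}{\partial\bm{w}_\Delta}.
\]
Here $\partial\beta_s^{(l)}/\partial\bm{w}_\Delta$ has a positive diagonal piece $g_s(1-g_s)\bm{x}_s\prod(1-g_j)$ and negative pieces $-g_j\bm{x}_j\cdot\beta^{(l)}_s$ from the product. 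These are the $\beta_{s,s}$ and $\beta_{s,s+1}$ terms mentioned in the technical challenges.

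Next, we split the neuron sum into four groups: lucky neurons $\mathcal{W}(t)$, the remaining neurons of $\mathcal{K}_+$, lucky neurons $\mathcal{U}(t)$, and the remaining neurons of $\mathcal{K}_-$. We then project onto $\bm{o}_+$. For positive samples and $i\in\mathcal{W}(t)$, the diagonal term with $\bm{x}_s=\bm{x}_l\approx\bm{o}_+$ has a positive sign. Its factor $\bm{W}_{O(i,\cdot)}\bm{o}_+\ge r_1^*$, $v_i=1/\sqrt m$, and $g_s(1-g_s)\ge$ const, since $\bm{w}_\Delta$ is bounded in the regime of interest. Summing over the $\Theta(\alpha_rL)$ class-relevant positions and over $\rho_t^+$ neurons gives the first term $\frac{r_1^*}{2\sqrt m L}\rho_t^+\Theta(\alpha_rL)$. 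The factor $1/2$ comes from the positive samples being about half the batch, via the balance assumption. The adverse contributions come from negative samples, where $\bm{o}_+$ is a confusion token and so has $\Theta(\alpha_cL)$ positions, and from the off-diagonal product terms. We bound these crudely by taking all $m$ neurons, $|v_i|=1/\sqrt m$, $\bm{W}_{O(i,\cdot)}\bm{o}_+\le s_1^*$, and $g(1-g)\le 1/4$. This gives $\frac{\sqrt m s_1^*}{4L}\Theta(\alpha_cL)$. The off-diagonal terms are dominated using that each extra factor $g_j\le1$ multiplies a product of $(1-g_j)$, so we can fold them into the same $\Theta(\cdot)$ constants. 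Neurons whose ReLU is inactive contribute zero. Unlucky neurons contribute only through the small weights $\bm{W}_{O(i,\cdot)}\bm{o}_+$ given by Lemmas \ref{lem:unlucky_positive_newMV} and \ref{lem:unlucky_negative_newMV}, which are absorbed into the error terms.

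Finally, we replace the empirical average by its population expectation using Hoeffding's inequality over $N$ samples, union-bounded over $d$ directions. This produces $\mathcal{O}(\sqrt{d\log N/(mN)})$. The token noise projections contribute $\mathcal{O}(\tau)$. (L5.2) then follows by the symmetric argument with $\mathcal{U}(t)$, $\rho_t^-$, and negative samples.

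The main obstacle is controlling the multiplicative off-diagonal derivative terms from $\prod_j(1-g_j)$. Their sign is negative and they involve cross-position interactions. I would first try to show that each is bounded by the corresponding diagonal term times a geometric factor $(1-\sigma(p_1))^{k}$. Summing the geometric series then shows they only change constants inside $\Theta(\alpha_rL)$ and $\Theta(\alpha_cL)$.
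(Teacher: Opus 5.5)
Your decomposition is the paper's: label $\times$ neuron-sign groups, the favorable term from $\mathcal{W}(t)$ on positive samples via $r_1^*\rho_t^+$, the confusion term from $\mathcal{K}_+$ on negative samples via $s_1^*$, the $\mathcal{K}_-$ terms made small or dropped as nonnegative, and the $1/2$ from balance. The gap is the step you flag as the main obstacle: it does not go through the way you propose. Write $a=\langle \bm{W}_{O(i,\cdot)}^\top,\bm{o}_+\rangle$, $u=\langle\bm{w}_\Delta,\bm{o}_+\rangle$ and $g=\sigma(u)$. Take the arrangement the paper's proof actually uses, a run of $\alpha_rL$ consecutive $\bm{o}_+$ tokens. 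There $\langle\bm{y}_l,\bm{o}_+\rangle=\sum_{s\le l}g(1-g)^{l-s}=1-(1-g)^l$. With your (correct) derivative, the pair $(s,l)$ with $k=l-s$ contributes $a\,g(1-g)^k[(1-g)-kg]$ along $\bm{o}_+$. The negative pieces carry the multiplicity $k$, so they are not a geometric fraction of the positive ones. Summed over $s$ they cancel them to leading order, and exactly
\[
\sum_{s\le l} a\,g(1-g)^{l-s}\bigl[(1-g)-(l-s)g\bigr]=a\,\frac{d}{du}\bigl[1-(1-\sigma(u))^{l}\bigr]=a\,l\,g(1-g)^{l}.
\]
Hence the whole run contributes at most $a(1-g)/g=O(a)$, not $a\,\Theta(\alpha_rL)$. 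The $\alpha_cL$ confusion tokens on negative samples saturate the same way, so the two runs differ only by an amount exponentially small in $\alpha_cL$.

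Matching the paper does not rescue this step. Its proof simply declares its bracket to be $\Theta(\alpha_rL)$. It gets there by writing the product derivative with $1-\sigma(\bm{w}_\Delta^\top\bm{x}_j)$ in place of $\sigma(\bm{w}_\Delta^\top\bm{x}_j)$ and dropping the multiplicity $l-s-1$ of the subtracted terms. A genuinely linear term needs extra structure. For example, separate the $\bm{o}_+$ tokens by irrelevant tokens whose gates give a decay factor $\gamma<1$ per gap; then the net per $\bm{o}_+$ token is $a\,g(1-g)(1-\gamma)/(1-(1-g)\gamma)^2>0$. That constant vanishes as $g\to1$. So your assumption that $\bm{w}_\Delta$ stays bounded also conflicts with Theorem~\ref{theorem:generalization-majorityvoting}, which pushes $\langle\bm{w}_\Delta,\bm{o}_+\rangle$ up linearly in $t$.

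Two smaller points. First, your concentration step does not give the stated error. A single sample's $\bm{w}_\Delta$-gradient aggregates all $m$ neurons with $|v_i|=1/\sqrt m$ and weights up to $s_1^*$, so its range is $O(\sqrt m\,s_1^*)$. Hoeffding then yields $O(\sqrt m\,s_1^*\sqrt{d\log N/N})$, not $O(\sqrt{d\log N/(mN)})$. In the paper that term enters differently: through $c=\delta_1+O(\sqrt{d\log N/(mN)})$, which bounds $\langle\bm{W}_{O(i,\cdot)}^\top,\bm{o}_+\rangle$ for $i\in\mathcal{K}_-$ and is inherited from the $\bm{W}_O$-gradient lemmas. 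The $\bm{w}_\Delta$-gradient itself is treated directly as an empirical sum over a balanced batch. Second, on positive samples the $\mathcal{K}_-$ neurons see $\bm{o}_+$ at $\alpha_rL$ positions with adverse sign. Your crude bound with weight $s_1^*$ therefore cannot be used there, since it would cancel the main term. You need $\langle\bm{W}_{O(i,\cdot)}^\top,\bm{o}_+\rangle$ small for all of $\mathcal{K}_-$, including the lucky set $\mathcal{U}(t)$. That is (L3.4) of Lemma~\ref{lem:negative_cls_lucky_newMV}, not Lemmas~\ref{lem:unlucky_positive_newMV} and~\ref{lem:unlucky_negative_newMV}.
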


Lemma \ref{lemma:6} establishes bounds for the gradient updates of $\bm{w}_\Delta$ in the directions of irrelevant features. 
\begin{lemma}\label{lemma:6}
An upper bound on the gradient of \(\hat{\mathcal{L}}\) with respect to \(\bm{w}_\Delta\) at iteration \(t\), in the direction of \(\bm{o}_j\), is given by

\begin{equation}
\left\langle - \frac{\partial \hat{\mathcal{L}}}{\partial \bm{w}_\Delta^{(t)}}, \bm{o}_j \right\rangle 
\leq 
\mathcal{O}\left( \sqrt{ \frac{d \log N}{m N} } \right) + \mathcal{O}(\tau), \quad \text{for } j \neq 1,2.
\label{lemma6_newMV}    
\end{equation}

\end{lemma}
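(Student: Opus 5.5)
We need to prove Lemma 6: the gradient of the empirical loss with respect to $\bm{w}_\Delta$, projected onto an irrelevant direction $\bm{o}_j$ ($j\ge 3$), is at most $\mathcal{O}(\sqrt{d\log N/(mN)})+\mathcal{O}(\tau)$.

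\textbf{Step 1: write out the gradient.} We differentiate the Mamba output \eqref{eq:mamba_output} with respect to $\bm{w}_\Delta$. The factor $\sigma(\bm{w}_\Delta^\top\bm{x}_s)\prod_{j'=s+1}^{l}(1-\sigma(\bm{w}_\Delta^\top\bm{x}_{j'}))$ has derivative
\[
\sigma'(\bm{w}_\Delta^\top\bm{x}_s)\,\bm{x}_s \;-\; \sigma(\bm{w}_\Delta^\top\bm{x}_s)\sum_{k=s+1}^{l}\sigma(\bm{w}_\Delta^\top\bm{x}_k)\,\bm{x}_k ,
\]
multiplied by the remaining product. This gives
\[
-\frac{\partial \hat{\mathcal{L}}}{\partial \bm{w}_\Delta}=\frac{1}{NL}\sum_{n}\sum_{l}\sum_i z^{(n)}v_i\,\1[\bm{W}_{O(i,\cdot)}\bm{y}_l^{(n)}>0]\sum_{s\le l}\beta^{(l)}_{s,\cdot}\,(\bm{W}_B^\top\bm{x}_s)^\top(\bm{W}_C^\top\bm{x}_l)\,(\bm{W}_{O(i,\cdot)}\bm{x}_s)\,\bm{x}_k ,
\]
restricted to samples with nonzero hinge loss. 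The coefficients $\beta^{(l)}_{s,k}$ are diagonal terms ($k=s$) and off-diagonal terms ($k>s$).

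\textbf{Step 2: project onto $\bm{o}_j$.} Taking the inner product with $\bm{o}_j$, each token contributes $\langle\bm{x}_k,\bm{o}_j\rangle$, which is $1+\mathcal{O}(\tau)$ if $\bm{x}_k$ is an $\bm{o}_j$-token and $\mathcal{O}(\tau)$ otherwise. Tokens built on other patterns therefore contribute only $\mathcal{O}(\tau)$ in total, since all other factors ($\sigma$, $\bm{W}_B$, $\bm{W}_C$ at initialization, $\|\bm{W}_{O(i,\cdot)}\|$, $|v_i|=1/\sqrt m$) are bounded. What remains are the terms where $\bm{x}_k$ is an irrelevant $\bm{o}_j$-token.

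\textbf{Step 3: show the remaining terms cancel in expectation.} Under the data model, $\bm{o}_j$ is class-irrelevant: the frequency and position distribution of $\bm{o}_j$-tokens is the same in positive and negative samples. The class-relevant tokens are symmetric under $\bm{o}_+\leftrightarrow\bm{o}_-$. The neuron sets $\mathcal{W}(t)$ and $\mathcal{U}(t)$ are symmetric as well, and Lemmas \ref{lem:positive_cls_lucky_newMV}--\ref{lem:unlucky_negative_newMV} show that no neuron acquires a $\bm{o}_j$ component beyond $\mathcal{O}(\sqrt{d\log N/(mN)})+\mathcal{O}(\tau)$. Consequently, the conditional expectation of the $\bm{o}_j$-component given $z=+1$ equals the corresponding quantity for $z=-1$, and the factor $z^{(n)}$ makes the population version vanish up to $\mathcal{O}(\tau)$. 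The balanced-dataset assumption $||\mathcal{N}_+|-|\mathcal{N}_-||=O(\sqrt N)$ then controls the mismatch between the two class averages.

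\textbf{Step 4: concentration.} Each summand is bounded by $\mathcal{O}(1/\sqrt m)$ times bounded factors, and the samples are i.i.d. Applying Hoeffding's inequality and a union bound over the $d$ directions and the iterates gives a deviation of $\mathcal{O}(\sqrt{d\log N/(mN)})$ with probability $1-N^{-d}$. Combining this with Steps 2 and 3 yields \eqref{lemma6_newMV}.

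\textbf{Main obstacle.} The hard step is the symmetry claim in Step 3, because the ReLU activation indicators and the hinge-loss indicators depend on the current parameters. I would handle it by an induction over $t$. The hypothesis is that $\langle\bm{w}_\Delta^{(t)},\bm{o}_+\rangle\approx\langle\bm{w}_\Delta^{(t)},\bm{o}_-\rangle$ and that the neuron statistics of $\mathcal{W}(t)$ and $\mathcal{U}(t)$ mirror each other. Under that hypothesis, the joint law of the summand is invariant under swapping $\bm{o}_+\leftrightarrow\bm{o}_-$ together with $z\mapsto -z$ and $\mathcal{K}_+\leftrightarrow\mathcal{K}_-$, which gives the cancellation. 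The paired bounds in Lemma \ref{lemma:5} and Lemmas \ref{lem:positive_cls_lucky_newMV}--\ref{lem:unlucky_negative_newMV} would then be used to propagate the hypothesis to step $t+1$.
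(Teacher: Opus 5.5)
Your Step 3 fails, and the failure is one of sign. Apply exactly the symmetry you invoke: swap $\bm{o}_+\leftrightarrow\bm{o}_-$ in the data, send $z\mapsto -z$, and swap $\mathcal{K}_+\leftrightarrow\mathcal{K}_-$ with mirrored weights and $\langle\bm{w}_\Delta,\bm{o}_+\rangle=\langle\bm{w}_\Delta,\bm{o}_-\rangle$. Every factor of a summand is preserved except $z^{(n)}\mapsto -z^{(n)}$ and $v_i\mapsto -v_i$, so the summand $z^{(n)}v_i(\cdots)$ is \emph{invariant}. Write
\[
G(\bm{X}^{(n)}) := \sum_{i=1}^m v_i \sum_{l=1}^L \phi'\bigl(\bm{W}_{O(i,\cdot)}\bm{y}_l^{(n)}\bigr)\sum_{s=1}^l \bigl\langle \bm{I}^{(n)}_{l,s},\bm{o}_j\bigr\rangle .
\]
The symmetry gives $\mathbb{E}[G\mid z=-1]=-\mathbb{E}[G\mid z=+1]$, not equality. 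The $z$-weighted class average is therefore $\mathbb{E}[G\mid z=+1]$: the two classes reinforce rather than cancel. Invariance of a summand's law under a map that fixes the summand says nothing about its mean.

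The argument also proves too much. Locality-structured data has the same $\bm{o}_+\leftrightarrow\bm{o}_-$, $z\mapsto -z$ symmetry, and $\bm{o}_j$ is equally class-irrelevant there. Yet Lemma~\ref{lemma:6_new} shows the $\bm{o}_j$-projection in that setting is strongly negative, which is what produces \eqref{eq:align-irr_conc}.

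What must be controlled are the terms you keep after Step 2 that have \emph{large} coefficients. Take a neuron with large $\bm{W}_{O(i,\cdot)}\bm{o}_+$ and pairs $s<l$ with $\bm{x}_s=\bm{x}_l=\bm{o}_+$; the factor $\bm{x}_s^\top\bm{x}_l$, with $\bm{W}_B,\bm{W}_C$ the identity, kills mixed pairs. If an $\bm{o}_j$-token lies strictly between them, the pair contributes $-\beta_{s,k}$ with $\beta_{s,k}\propto \bm{W}_{O(i,\cdot)}\bm{o}_+>0$. This is not small, and for a fixed neuron it does not match across classes, since $\bm{o}_+$ occurs $\alpha_r L$ times in positive samples but $\alpha_c L$ times in negative ones.

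The paper removes these terms structurally, not by cancellation. In its majority-voting samples the $\bm{o}_+$ and $\bm{o}_-$ tokens occupy contiguous blocks (positions $1,\dots,\alpha_r L$ and $\alpha_r L+1,\dots,(\alpha_r+\alpha_c)L$). So no $\bm{o}_j$-token lies between two same-feature tokens, and $\langle\bm{I}^{(n)}_{l,s},\bm{o}_j\rangle=0\pm\mathcal{O}(\tau)$ for every such pair. Hence the two aligned groups ($z=+1$ with $\mathcal{K}_+$, and $z=-1$ with $\mathcal{K}_-$), which are exactly the pair your symmetry identifies, are each $\le\mathcal{O}(\tau)$ on their own. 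The mismatched groups are treated as the minority $\mathcal{O}(\sqrt{d\log N/(mN)})$ contribution.

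Without that positional structure nothing forces these terms to vanish, and in the locality model they do not. That is why the lemma is stated one-sidedly; your plan aims at a two-sided vanishing that is not available in general.

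A smaller point concerns Step 4. The per-sample $\bm{w}_\Delta$-gradient sums over all $m$ neurons, weighted by $\bm{W}_{O(i,\cdot)}\bm{x}_s$, which grows with $t$ for lucky neurons. Its size is therefore of order $\sqrt{m}\max_i|\bm{W}_{O(i,\cdot)}\bm{x}_s|$, not $\mathcal{O}(1/\sqrt m)$, and Hoeffding alone does not yield the rate you quote.
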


\subsection{Proof of Convergence}

\begin{proof}[Proof of Theorem~\ref{theorem:generalization-majorityvoting}]
The proof starts with the base case at $t=0$ and proceeds to analyze the training dynamics in a deductive manner, providing additional details in deriving the corresponding convergence and sample complexity bounds.

\underline{(\textbf{S1}) Warm-up (Base case): Training dynamics at the first iteration $t=0$.}

Recall that we set \(\bm{w}_\Delta^{(0)} = \bm{0}\). Then, we have
\[
\langle \bm{w}_\Delta^{(0)}, \bm{o}_+ \rangle = 0 \quad \text{ and} \quad \langle \bm{w}_\Delta^{(0)}, \bm{o}_- \rangle = 0.
\]

\underline{(\textbf{S1.1}) Training dynamics of $\bm{W}_{O(i, \cdot)}$ at the first iteration $t=0$.}

From Lemma~\ref{lem:positive_cls_lucky_newMV}, identify \(p_1 = 0\) and \(q_1 = 0\). Let \( \alpha_r \) and \( \alpha_c \) denote the average fraction of label-relevant tokens and confusion tokens, respectively. 
Then, for any lucky neuron \(i \in \mathcal{W}(0)\), we obtain

\begin{equation}
\begin{split}
   \frac{1}{2\sqrt{m}L} 
   \Theta(\alpha_r L - \alpha_c L)
   - \widetilde{\mathcal{O}}\left(\frac{1}{\mathrm{poly}(d)}\right) 
   &\leq \left\langle  
      - \frac{\partial \hat{\mathcal{L}}}{\partial \bm{W}_{O(i, \cdot)}^{(0)}} , \bm{o}_+ 
   \right\rangle \\
   &\leq \frac{1}{2\sqrt{m}L} 
   \Theta(\alpha_r L - \alpha_c L)
   + \widetilde{\mathcal{O}}\left(\frac{1}{\mathrm{poly}(d)}\right).
\end{split}
\end{equation}

\begin{align}
\text{and } \quad
\left\langle  - \frac{\partial \hat{\mathcal{L}}}{\partial \bm{W}_{O(i, \cdot)}^{(0)}} , \bm{o}_j \right\rangle \leq  
 \widetilde{\mathcal{O}}\left(\frac{1}{\mathrm{poly}(d)}\right) \quad \text{for } j \neq 1.
\end{align}

Recall that we set the number of samples in a batch \(N = \mathrm{poly}(d)\).

Recall that the initialization is
\begin{equation}
\bm{W}_{O(i, \cdot)}(0) = \delta_1 \bm{o}_+ + \delta_2 \bm{o}_- + \cdots + \delta_d \bm{o}_d, 
\quad \delta_j \overset{\text{i.i.d.}}{\sim} \mathcal{N}(0, \xi^2) \quad j= 1,2,\cdots,d .
\end{equation}

Then, after one gradient descent step, we have

\begin{align}
&\delta_1 + \frac{\eta }{2\sqrt{m}L} 
\Theta(\alpha_r L - \alpha_c L)
- \widetilde{\mathcal{O}}\left(\frac{1}{\mathrm{poly}(d)}\right) \notag \\
&\quad \leq
\left\langle {\bm{W}_{O(i, \cdot)}^\top}^{(1)}, \bm{o}_+ \right\rangle 
\notag \\
&\quad \leq
\delta_1 + \frac{\eta }{2\sqrt{m}L} 
\Theta(\alpha_r L - \alpha_c L)
+ \widetilde{\mathcal{O}}\left(\frac{1}{\mathrm{poly}(d)}\right) 
\end{align}

\begin{equation}
\text{and} \quad
\left\langle {\bm{W}_{O(i, \cdot)}^\top}^{(1)}, \bm{o}_j \right\rangle   \leq \widetilde{\mathcal{O}}\left(\frac{1}{\mathrm{poly}(d)}\right) \quad \text{for } j \neq 1 .
\end{equation}

By applying Lemma~\ref{lem:negative_cls_lucky_newMV}, for any lucky neuron \(i \in \mathcal{U}(0)\), we obtain

\begin{align}
&\delta_2 + \frac{\eta }{2\sqrt{m}L} 
\Theta(\alpha_r L - \alpha_c L)
- \widetilde{\mathcal{O}}\left(\frac{1}{\mathrm{poly}(d)}\right) \notag \\
&\quad \leq
\left\langle {\bm{W}_{O(i, \cdot)}^\top}^{(1)}, \bm{o}_- \right\rangle \notag \\
&\quad \leq
\delta_2 + \frac{\eta }{2\sqrt{m}L} 
\Theta(\alpha_r L - \alpha_c L)
+ \widetilde{\mathcal{O}}\left(\frac{1}{\mathrm{poly}(d)}\right) 
\end{align}

\begin{equation}
\text{and} \quad
\left\langle {\bm{W}_{O(i, \cdot)}^\top}^{(1)}, \bm{o}_j \right\rangle   \leq \widetilde{\mathcal{O}}\left(\frac{1}{\mathrm{poly}(d)}\right) \quad \text{for } j \neq 2 .
\end{equation}

For any unlucky neuron \(i \in \mathcal{K}_{-} \setminus \mathcal{U}(0)\), Lemma~B.4 gives
\begin{equation}
\left\langle {\bm{W}_{O(i, \cdot)}^\top}^{(1)}, \bm{o}_j \right\rangle   \leq \widetilde{\mathcal{O}}\left(\frac{1}{\mathrm{poly}(d)}\right) \quad \text{for } \forall j .
\end{equation}
\vspace{1mm}

\underline{(\textbf{S1.2}) Training dynamics of $\bm{w}_{
\Delta}$ at the first iteration $t=0$.} 

Now consider the gradient update for \(\bm{w}_\Delta\). Define:
\[a = \delta_1 + \frac{\eta }{2\sqrt{m}L} 
\Theta(\alpha_r L - \alpha_c L)
- \widetilde{\mathcal{O}}\left(\frac{1}{\mathrm{poly}(d)}\right) \]
\[b = \delta_1 + \frac{\eta }{2\sqrt{m}L} 
\Theta(\alpha_r L - \alpha_c L)
+ \widetilde{\mathcal{O}}\left(\frac{1}{\mathrm{poly}(d)}\right) \]

Applying Lemma \ref{lemma:5} with \(r_1^* = a\), \(s_1^* = b\), and \(\rho^+_0 = |\mathcal{W}(0)|\), we get

\begin{equation}
\left\langle - \frac{\partial \hat{\mathcal{L}}}{\partial \bm{w}_\Delta^{(0)}}, \bm{o}_+ \right\rangle 
\geq \frac{a}{2\sqrt{m}L} \cdot \rho^+_0 \cdot 
\Theta(\alpha_r L)
- 
\frac{\sqrt{m} b}{4L} \cdot 
\Theta(\alpha_c L)   -
\widetilde{\mathcal{O}}\left(\frac{1}{\mathrm{poly}(d)}\right)
=: \alpha 
\end{equation}

Let \(\delta_1 = \frac{1}{\mathrm{poly}(d)}\). 
Since \(a - b =\widetilde{\mathcal{O}}\left(\frac{1}{\mathrm{poly}(d)}\right)\) that is sufficiently small, 
\begin{align}
\alpha &= \frac{1}{2L} \left[ \frac{a}{\sqrt{m}} \cdot \frac{m}{2}\Theta(\alpha_r L) - \frac{\sqrt{m}b}{2}\Theta(\alpha_c L) \right] 
- \widetilde{\mathcal{O}}\left( \frac{1}{\mathrm{poly}(d)} \right) \notag \\
&= \frac{1}{2L} \left[ \frac{\sqrt{m}a}{2} (\Theta(\alpha_r L) - \Theta(\alpha_c L)) \right] 
- \widetilde{\mathcal{O}}\left( \frac{1}{\mathrm{poly}(d)} \right) \notag 
 \\
&= \frac{\sqrt{m}}{4L} \cdot \frac{\eta }{2\sqrt{m}L} \Theta((\alpha_r L - \alpha_c L)^2)
- \widetilde{\mathcal{O}}\left( \frac{1}{\mathrm{poly}(d)} \right) \notag \\
&= \frac{\eta}{8L^2} \Theta((\alpha_r L - \alpha_c L)^2)
- \widetilde{\mathcal{O}}\left( \frac{1}{\mathrm{poly}(d)} \right) > 0 
\label{alpha_simplification_newMV}
\end{align}

From Lemma \ref{lemma:6}, we also obtain
\begin{equation}
\left\langle - \frac{\partial \hat{\mathcal{L}}}{\partial \bm{w}_\Delta^{(0)}}, \bm{o}_j \right\rangle 
\leq \widetilde{\mathcal{O}}\left(\frac{1}{\mathrm{poly}(d)}\right) =: \gamma \quad \text{for } j \neq 1,2. 
\end{equation}
\vspace{1mm}

\underline{(\textbf{S2}) Induction Step: Training dynamics at a general iteration $t$.} 

Suppose \(\langle \bm{w}_\Delta^{(t)}, \bm{o}_+ \rangle = \alpha^* \geq \alpha \cdot t\), 
\(\langle \bm{w}_\Delta^{(t)}, \bm{o}_- \rangle = \beta^* \geq \beta \cdot t\), 
and \(\langle \bm{w}_\Delta^{(t)}, \bm{o}_j \rangle = \gamma^* \leq \gamma \cdot t\), where
\begin{equation}
\beta = \frac{a'}{2\sqrt{m}L} \cdot \rho^-_0 \cdot 
\Theta(\alpha_r L)
- 
\frac{\sqrt{m} b'}{4L} \cdot 
\Theta(\alpha_c L) -
\widetilde{\mathcal{O}}\left(\frac{1}{\mathrm{poly}(d)}\right) > 0,    
\end{equation}
\begin{equation}
a' = \delta_2 + \frac{\eta }{2\sqrt{m}L} 
\Theta(\alpha_r L - \alpha_c L)
- \widetilde{\mathcal{O}}\left(\frac{1}{\mathrm{poly}(d)}\right),
\end{equation}
\begin{equation}
b' = \delta_2 + \frac{\eta }{2\sqrt{m}L} 
\Theta(\alpha_r L - \alpha_c L) + \widetilde{\mathcal{O}}\left(\frac{1}{\mathrm{poly}(d)}\right).
\end{equation}

Following the same approach as in \eqref{alpha_simplification_newMV}, we can simplify and obtain
\begin{equation}
\beta = \frac{\eta}{8L^2} \Theta((\alpha_r L - \alpha_c L)^2)
- \widetilde{\mathcal{O}}\left( \frac{1}{\mathrm{poly}(d)} \right) > 0.     
\end{equation}

For any lucky neuron \( i \in \mathcal{W}(t) \) at the $(t+1)$-th iteration, we have
\begin{equation}
\begin{split}
   &\frac{1}{\sqrt{m}L} \cdot \sigma(\alpha^*)
   \Theta(\alpha_r L - \alpha_c L)
   - \widetilde{\mathcal{O}}\left(\frac{1}{\mathrm{poly}(d)}\right) \notag \\
   &\quad \leq
   \left\langle  
      - \frac{\partial \hat{\mathcal{L}}}{\partial \bm{W}_{O(i, \cdot)}^{(t)}} , \bm{o}_+ 
   \right\rangle \notag \\
   &\quad \leq \frac{1}{\sqrt{m}L} \cdot \sigma(\alpha^*)
   \Theta(\alpha_r L - \alpha_c L)
   + \widetilde{\mathcal{O}}\left(\frac{1}{\mathrm{poly}(d)}\right),
\end{split}
\end{equation}

and 
\begin{align}
    \left\langle  - \frac{\partial \hat{\mathcal{L}}}{\partial \bm{W}_{O(i, \cdot)}^{(t)}} , \bm{o}_j \right\rangle \leq  
 \widetilde{\mathcal{O}}\left(\frac{1}{\mathrm{poly}(d)}\right) \quad \text{for } j \neq 1
\end{align}

Next, we have \(\sigma(\alpha^*) > \tfrac{1}{2}\) since \(\alpha^* > 0\) when $t=1$. By a simple induction, this further ensures 
\begin{equation}
    \left\langle  - \frac{\partial \hat{\mathcal{L}}}{\partial \bm{W}_{O(i, \cdot)}^{(0)}} , \bm{o}_+ \right\rangle \leq \left\langle  - \frac{\partial \hat{\mathcal{L}}}{\partial \bm{W}_{O(i, \cdot)}^{(1)}} , \bm{o}_+ \right\rangle \le \cdots \left\langle  - \frac{\partial \hat{\mathcal{L}}}{\partial \bm{W}_{O(i, \cdot)}^{(t)}} , \bm{o}_+ \right\rangle \leq \left\langle  - \frac{\partial \hat{\mathcal{L}}}{\partial \bm{W}_{O(i, \cdot)}^{(t+1)}} , \bm{o}_+ \right\rangle.
\end{equation} 

Thus, we obtain the following bound after the second gradient descent step:
\begin{align}
&\delta_1 + \frac{\eta }{2\sqrt{m}L} 
\Theta(\alpha_r L - \alpha_c L) \left[1+2 \sigma(\alpha^*) \right]
-\widetilde{\mathcal{O}}\left(\frac{1}{\mathrm{poly}(d)}\right) =: u \notag \\
&\quad \leq
  \left\langle (\bm{W}_{O(i,\cdot)}^{(2)})^{\top},\, \bm{o}_+ \right\rangle \notag \\
&\quad \leq
 \delta_1 + \frac{\eta }{2\sqrt{m}L} 
\Theta(\alpha_r L - \alpha_c L) \left[1+2 \sigma(\alpha^*) \right]
+ \widetilde{\mathcal{O}}\left(\frac{1}{\mathrm{poly}(d)}\right) =: v.
\end{align}

Similarly, applying Lemma~\ref{lem:negative_cls_lucky_newMV} to any lucky neuron \( i \in \mathcal{U}(1) \) at iteration \(2\), we get
\begin{equation}
\begin{split}
   &\frac{1}{\sqrt{m}L} \cdot \sigma(\beta^*)
   \Theta(\alpha_r L - \alpha_c L)
   - \widetilde{\mathcal{O}}\left(\frac{1}{\mathrm{poly}(d)}\right)  \\
   &\quad \leq
   \left\langle  
      - \frac{\partial \hat{\mathcal{L}}}{\partial \bm{W}_{O(i, \cdot)}^{(1)}} , \bm{o}_- 
   \right\rangle  \\
   &\quad \leq
   \frac{1}{\sqrt{m}L} \cdot \sigma(\beta^*)
   \Theta(\alpha_r L - \alpha_c L)
   + \widetilde{\mathcal{O}}\left(\frac{1}{\mathrm{poly}(d)}\right),
\end{split}
\end{equation}

\begin{align}
\text{and} \quad
    \left\langle  - \frac{\partial \hat{\mathcal{L}}}{\partial \bm{W}_{O(i, \cdot)}^{(1)}} , \bm{o}_j \right\rangle \leq  
 \widetilde{\mathcal{O}}\left(\frac{1}{\mathrm{poly}(d)}\right) \quad \text{for } j \neq 2.
\end{align}

Applying Lemma~\ref{lemma:5} with  \(r_1^* = u\), and \(s_1^* = v\), we obtain
\begin{equation}
\left\langle - \frac{\partial \hat{\mathcal{L}}}{\partial \bm{w}_\Delta^{(1)}}, \bm{o}_+ \right\rangle 
\geq \frac{u}{2\sqrt{m}L} \cdot \rho^+_1 \cdot 
\Theta(\alpha_r L)
- 
\frac{\sqrt{m} v}{4L} \cdot 
\Theta(\alpha_c L)   -
\widetilde{\mathcal{O}}\left(\frac{1}{\mathrm{poly}(d)}\right)
=: \chi.
\end{equation}

Since \(u - v =\widetilde{\mathcal{O}}\left(\frac{1}{\mathrm{poly}(d)}\right)\) that is sufficiently small, we have \(\chi \geq 0\).

By applying Lemma~\ref{lemma:6}, we get
\begin{equation}
\left\langle - \frac{\partial \hat{\mathcal{L}}}{\partial \bm{w}_\Delta^{(0)}}, \bm{o}_j \right\rangle 
\leq \widetilde{\mathcal{O}}\left(\frac{1}{\mathrm{poly}(d)}\right)  \quad \text{for } j \neq 1,2. 
\end{equation}

\underline{(\textbf{S3}) Induction conclusion: Training dynamics when the algorithm ends.} 

We proceed by induction on $t$: the base case $t=0$ is established in (S1), and the induction step for general $t$ is shown in (S2).
For any lucky neuron \(i \in \mathcal{W}(T)\), we obtain
\begin{equation}
\left\langle {\bm{W}_{O(i, \cdot)}^\top}^{(T)}, \bm{o}_+ \right\rangle
\geq a T,
\end{equation}

\begin{equation}
\text{and} \quad
\left\langle {\bm{W}_{O(i, \cdot)}^\top}^{(T)}, \bm{o}_j \right\rangle   \leq \widetilde{\mathcal{O}}\left(\frac{1}{\mathrm{poly}(d)}\right) \quad \text{for } j \neq 1
\end{equation}

For any lucky neuron \(i \in \mathcal{U}(T)\), we obtain
\begin{equation}
\left\langle {\bm{W}_{O(i, \cdot)}^\top}^{(T)}, \bm{o}_- \right\rangle
\geq a T,
\end{equation}

\begin{equation}
\text{and} \quad
\left\langle {\bm{W}_{O(i, \cdot)}^\top}^{(T)}, \bm{o}_j \right\rangle   \leq \widetilde{\mathcal{O}}\left(\frac{1}{\mathrm{poly}(d)}\right) \quad \text{for } j \neq 2
\end{equation}

Also, we obtain

\begin{equation}
\left\langle \bm{w}_\Delta^{(T)}, \bm{o}_+ \right\rangle
\geq \alpha T,
\end{equation}

\begin{equation}
\left\langle \bm{w}_\Delta^{(T)}, \bm{o}_- \right\rangle
\geq \beta T,
\end{equation}

\begin{equation}
\text{and} \quad
\left\langle \bm{w}_\Delta^{(T)}, \bm{o}_j \right\rangle
\leq \gamma T.
\end{equation}

\underline{(\textbf{S4}) Derivation for the generalization bound.} 

We will demonstrate that once the weights have converged at iteration $T$, the model accurately captures the underlying data distribution, which leads to zero generalization error, as shown in \eqref{zero_generalization_error}. 

Consider \( z^{(n)} = +1 \) as an example.  
The sequence \(\bm{X}^{(n)} = \begin{bmatrix}
\bm{x}_1^{(n)} & \bm{x}_2^{(n)} & \cdots & \bm{x}_L^{(n)}
\end{bmatrix}\) has first \(\alpha_r L\) tokens correspond to the feature \(\bm{o}_+\), while the following \(\alpha_c L\) tokens correspond to the feature \(\bm{o}_-\).

\begin{align}
F(\bm{X}^{(n)}) 
&= \frac{1}{L} \sum_{l=1}^L \sum_{i=1}^m v_i \, \phi\left( \bm{W}_{O(i, \cdot)} \bm{y}_l^{(n)} \right)  \notag \\
&= \frac{1}{\sqrt{m}L} \sum_{i \in \mathcal{K}^+} \sum_{l=1}^L \phi\left( \bm{W}_{O(i, \cdot)} \bm{y}_l^{(n)} \right)
- \frac{1}{\sqrt{m}L} \sum_{i \in \mathcal{K}^-} \sum_{l=1}^L \phi\left( \bm{W}_{O(i, \cdot)} \bm{y}_l^{(n)} \right) \notag \\
&\geq \frac{1}{\sqrt{m}L} \sum_{i \in \mathcal{W}(0)} \sum_{l=1}^L \phi\left( \bm{W}_{O(i, \cdot)} \bm{y}_l^{(n)} \right)
- \frac{1}{\sqrt{m}L} \sum_{i \in \mathcal{U}(0)} \sum_{l=1}^L \phi\left( \bm{W}_{O(i, \cdot)} \bm{y}_l^{(n)} \right) \notag \\
&\quad - \frac{1}{\sqrt{m}L} \sum_{i \in \mathcal{K}^- \setminus \mathcal{U}(0)} \sum_{l=1}^L \phi\left( \bm{W}_{O(i, \cdot)} \bm{y}_l^{(n)} \right)
\label{eq:pred_decomposition_newMV}
\end{align}

The Mamba output \(\bm{y}_l^{(n)}\) is defined as
\begin{equation}
\bm{y}_l^{(n)} = \sum_{s=1}^l 
\left( \prod_{j = s+1}^{l} \left(1 - \sigma(\bm{w}_\Delta^\top \bm{x}_j^{(n)})\right) \right) 
\cdot \sigma(\bm{w}_\Delta^\top \bm{x}_s^{(n)}) \cdot (\bm{x}_s^{(n)\top} \bm{x}_l^{(n)}) \bm{x}_s^{(n)}  .
\end{equation}

We now derive a lower bound for 
\[
\sum_{i \in \mathcal{W}(0)} \sum_{l = 1}^{L} \phi(\bm{W}_{O(i, \cdot)} \bm{y}_l).
\]

To that end, consider the aggregated projection
\begin{align}
\sum_{i \in \mathcal{W}(0)} \sum_{l = 1}^{L} \bm{W}_{O(i, \cdot)} \bm{y}_l
&= \sum_{i \in \mathcal{W}(0)} \sum_{l = 1}^{L} \sum_{j=1}^{d} \langle \bm{W}_{O(i, \cdot)}^\top, \bm{o}_j \rangle \cdot \langle \bm{y}_l, \bm{o}_j \rangle.
\end{align}

For any \(i \in \mathcal{W}(0)\), we know that
\begin{equation}
\langle \bm{W}_{O(i, \cdot)}^\top, \bm{o}_+ \rangle \geq aT.
\end{equation}

Hence, let's obtain a lower bound for \(\langle \bm{y}_l, \bm{o}_+ \rangle\).

We only need to consider the cases where \(\bm{x}_s = \bm{o}_+\) for some \(s\) in the range \(1 \leq s \leq l\).

After \(T\) iterations, we know
\begin{equation}
\langle \bm{w}_\Delta, \bm{o}_+ \rangle \geq \alpha T, \quad
\langle \bm{w}_\Delta, \bm{o}_- \rangle \geq \beta T, \quad
\langle \bm{w}_\Delta, \bm{o}_j \rangle \leq \gamma T \quad \text{for } j \neq 1,2.
\end{equation}

Therefore, we have
\begin{equation}
\left\langle \bm{y}_l, \bm{o}_+ \right\rangle = \Theta( \sigma\left( \left\langle \bm{w}_\Delta, \bm{o}_+ \right\rangle \right))
= \Theta(\sigma(\alpha T)),  \quad \text{for } l = 1, 2, \ldots, \alpha_r L.
\end{equation}

We now lower bound the objective
\[
\sum_{i \in \mathcal{W}(0)} \sum_{l = 1}^{L} \phi(\bm{W}_{O(i, \cdot)} \bm{y}_l).
\]

Note that
\[
\bm{W}_{O(i, \cdot)} \bm{y}_{l} = \sum_{j=1}^{d} \left\langle \bm{W}_{O(i, \cdot)}^\top, \bm{o}_j \right\rangle \left\langle \bm{y}_{L_1^+}, \bm{o}_j \right\rangle,
\]
and \(\bm{y}_l\) has only \(\bm{o}_+\) component for \(l = 1, 2, \ldots, \alpha_r L\).

Therefore,
\[
\bm{W}_{O(i, \cdot)} \bm{y}_l = \left\langle \bm{W}_{O(i, \cdot)}^\top, \bm{o}_+ \right\rangle \left\langle \bm{y}_l, \bm{o}_+ \right\rangle
\geq aT \cdot  \Theta(\sigma(\alpha T)) > 0, \quad \text{for } l = 1, 2, \ldots, \alpha_r L.
\]

Applying \(\phi(z) = z\) for positive \(z\), we obtain
\[
\phi(\bm{W}_{O(i, \cdot)} \bm{y}_l) \geq aT \cdot  \Theta(\sigma(\alpha T)), \quad \text{for } l = 1, 2, \ldots, \alpha_r L.
\]

Hence, 
\begin{equation}
\begin{split}
\sum_{i \in \mathcal{W}(0)} \sum_{l = 1}^{L} 
\phi(\bm{W}_{O(i, \cdot)} \bm{y}_l)
&\geq \sum_{i \in \mathcal{W}(0)} 
aT \cdot \Theta(\sigma(\alpha T)) \cdot \alpha_r L
\end{split}
\label{eq:pred_lb1_newMV}
\end{equation}

Next, we derive an upper bound for 
\[
\sum_{i \in \mathcal{U}(0)} \sum_{l=1}^L \phi\left( \bm{W}_{O(i, \cdot)} \bm{y}_l^{(n)} \right).
\]

For any \(i \in \mathcal{U}(0)\), we know that
\begin{equation}
0 < \langle \bm{W}_{O(i, \cdot)}^\top, \bm{o}_- \rangle \leq bT.
\end{equation}

We now derive an upper bound for \(\left\langle \bm{y}_l, \bm{o}_- \right\rangle\).
We only need to consider the cases where \(\bm{x}_s = \bm{o}_-\) such that \(1 \leq s \leq l\). 

We have,
\[
\left\langle \bm{w}_\Delta, \bm{o}_+ \right\rangle \leq W T, \qquad \left\langle \bm{w}_\Delta, \bm{o}_- \right\rangle \leq W T,
\]
where
\begin{align}
W &= \frac{\eta}{8L^2} \Theta((\alpha_r L - \alpha_c L)^2) + \tilde{\mathcal{O}}\left( \frac{1}{\mathrm{poly}(d)} \right).
\end{align}

\begin{equation}
\left\langle \bm{y}_l, \bm{o}_- \right\rangle = \Theta( \sigma\left( \left\langle \bm{w}_\Delta, \bm{o}_- \right\rangle \right))
= \Theta(\sigma(W T)),  \quad \text{for } l = 1, 2, \ldots, \alpha_c L.
\end{equation}

\begin{equation}
 \sum_{l = 1}^{L} \bm{W}_{O(i, \cdot)} \bm{y}_l
\leq  bT \cdot \Theta(\sigma(W T)) \cdot \alpha_c L.
\end{equation}

\begin{equation}
\sum_{i \in \mathcal{U}(0)} \sum_{l=1}^L \phi\left( \bm{W}_{O(i, \cdot)} \bm{y}_l^{(n)} \right)
\leq \sum_{i \in \mathcal{U}(0)} bT \cdot \Theta(\sigma(W T)) \cdot \alpha_c L.
\label{eq:pred_ub1_newMV}
\end{equation}

In addition, we have
\begin{equation}
\sum_{i \in \mathcal{K}^- \setminus \mathcal{U}(0)} \sum_{l=1}^L \phi\left( \bm{W}_{O(i, \cdot)} \bm{y}_l^{(n)} \right) \leq \tilde{\mathcal{O}}\left( \frac{1}{\mathrm{poly}(d)} \right).
\end{equation}

By \eqref{eq:pred_decomposition_newMV}, we can write
\begin{equation}
\small
F(\bm{X}^{(n)}) 
\geq \frac{1}{\sqrt{m}L} \left\{
\frac{m}{2} \cdot aT \cdot \Theta(\sigma(\alpha T)) \cdot \alpha_r L
- \frac{m}{2} \cdot bT \cdot \Theta(\sigma(W T)) \cdot \alpha_c L
- \widetilde{\mathcal{O}}\left( \frac{1}{\mathrm{poly}(d)} \right) \right\},    
\end{equation}
with
\begin{align}
a &= \frac{\eta }{2\sqrt{m}L} 
\Theta(\alpha_r L - \alpha_c L) 
- \widetilde{\mathcal{O}}\left( \frac{1}{\mathrm{poly}(d)} \right), \\
\text{and} \quad b &= \frac{\eta }{2\sqrt{m}L} 
\Theta(\alpha_r L - \alpha_c L) 
+ \widetilde{\mathcal{O}}\left( \frac{1}{\mathrm{poly}(d)} \right). 
\end{align}

\begin{align}
\alpha = & \frac{\eta}{8L^2} \Theta((\alpha_r L - \alpha_c L)^2)
- \widetilde{\mathcal{O}}\left( \frac{1}{\mathrm{poly}(d)} \right) \notag \\
=& W - \widetilde{\mathcal{O}}\left( \frac{1}{\mathrm{poly}(d)} \right).
\end{align}

Therefore, we conclude that
\begin{equation}
\begin{split}
F(\bm{X}^{(n)}) 
&\geq \frac{\sqrt{m}}{2}  \cdot aT \cdot \Theta(\sigma(\alpha T)) \cdot (\alpha_r  - \alpha_c )   - \widetilde{\mathcal{O}}\left( \frac{1}{\mathrm{poly}(d)} \right)
\label{eq:finalmodeloutput_result_newMV}
\end{split}
\end{equation}

There, for any positive sample, we can prove that
\begin{align}
F(\bm{X}^{(n)}) 
&\geq C, \text{ where $C$ is some positive constant.}
\label{zero_generalization_error}
\end{align}
Similar to the previous analysis, one can show that the negtive sample $\bm{X}_n$ leads to

\underline{(\textbf{S4.1}) Derivation for the convergence rate.} 
Let's find the number of iterations \(T\) required such that \(F(\bm{X}^{(n)}) \geq 1\), since the label is \(+1\). We require
\begin{equation}
\frac{\sqrt{m}}{2} \cdot a T (\alpha_r  - \alpha_c ) \geq 1 + \epsilon .
\label{eq:T_condition_1_newMV}
\end{equation}

Substituting the value of \(a \approx b = \frac{\eta }{2\sqrt{m}L} 
\Theta(\alpha_r L - \alpha_c L)\), the condition becomes
\begin{align}
\frac{\sqrt{m} a T}{2} (\alpha_r  - \alpha_c )
&= \frac{\sqrt{m}}{2} \cdot \frac{\eta }{2\sqrt{m}L} 
\Theta(\alpha_r L - \alpha_c L) T \cdot (\alpha_r  - \alpha_c ) \notag \\
&= \frac{\eta T}{4} \Theta((\alpha_r - \alpha_c)^2) \geq 1 + \epsilon.
\end{align}

Solving for \(T\), we obtain
\begin{equation}
T \geq \frac{4 (1 + \epsilon)}{\eta \Theta((\alpha_r - \alpha_c)^2)} \geq \frac{4}{\eta \Theta((\alpha_r - \alpha_c)^2)}.
\label{eq:T_lowerbound1_newMV}
\end{equation}

Now, we additionally require that the sigmoid activation \(\sigma(\alpha T)\) be sufficiently large, i.e.,
\begin{equation}
\sigma(\alpha T) \geq 1 - \epsilon.
\label{eq:T_condition_2_newMV}
\end{equation}

When \(z\) is sufficiently large we can approximate
\[
\sigma(z) = \frac{1}{1 + e^{-z}} \approx 1 - e^{-z}.
\]

Substituting \(z = \alpha T\), condition \eqref{eq:T_condition_2_newMV} becomes:
\begin{align}
\sigma(\alpha T) &\approx 1 - e^{-\alpha T} \geq 1 - \epsilon, \notag \\
e^{-\alpha T} &\leq \epsilon, \notag \\
\alpha T &\geq -\ln(\epsilon) \notag \\
T &\geq -\frac{\ln(\epsilon)}{\alpha}  .
\end{align}

Substituting \(\alpha = \frac{\eta}{8L^2} \Theta((\alpha_r L - \alpha_c L)^2)\), we get:
\begin{equation}
T \geq  -\ln(\epsilon)  \cdot \frac{8L^2}{\eta \Theta((\alpha_r L - \alpha_c L)^2)}.
\end{equation}

\begin{equation}
T \geq -\ln(\epsilon) \cdot \frac{8}{\eta \Theta((\alpha_r - \alpha_c)^2)}.
\label{eq:T_lowerbound2_newMV}
\end{equation}

Hence, by combining \eqref{eq:T_lowerbound1_newMV} and \eqref{eq:T_lowerbound2_newMV}, we obtain
\begin{equation}
T \geq \max \left\{\frac{4}{\eta \Theta((\alpha_r - \alpha_c)^2)}, 
-\ln(\epsilon) \cdot \frac{8}{\eta \Theta((\alpha_r - \alpha_c)^2)}
\right\}.    
\end{equation}

By combining \eqref{eq:T_condition_1_newMV} and \eqref{eq:T_condition_2_newMV} with the expression for the model output \(F(\bm X^{(n)})\) in \eqref{eq:finalmodeloutput_result_newMV}, we obtain
\begin{align}
    F(\bm{X}^{(n)}) &\geq (1+\epsilon)\cdot (1-\epsilon) \notag \\
     &\geq 1- \mathcal{O} (\epsilon^2)
\end{align}

Hence, for sufficiently small \(\epsilon>0\), the model output satisfies \(F(\bm X^{(n)})\geq 1\).

Similarly, for a negative sample, one can show by symmetry that the model output satisfies \(F(\bm{X}^{(n)}) \leq 1\).

\underline{(\textbf{S4.2}) Derivation for the sample complexity.} 

Now we derive a sample-complexity bound that guarantees zero generalization error.

Assuming enough samples, we can write for sufficiently small \(\lambda \ll 1\)
\begin{equation}
    \mathcal{O}\left( \sqrt{ \frac{d \log N}{m N} } \right) 
    \leq \lambda \cdot \frac{\eta}{2\sqrt{m}L} \Theta(\alpha_r L - \alpha_c L).
\end{equation}

From this, we can derive a lower bound on the required sample size,
\begin{equation}
    \begin{split}
    N &\geq \Omega \left( \lambda^{-2} \cdot \frac{4L^2d}{\eta^2 \Theta((\alpha_r - \alpha_c)^2)} \right) \\
    &\geq  \Omega \left( \frac{L^2d}{\eta^2 \Theta((\alpha_r - \alpha_c)^2)} \right),
    \end{split}    
\end{equation}
which will be \eqref{Thm1:sample_complexity} in Theorem~\ref{theorem:generalization-majorityvoting}.

\end{proof}

\section{Locality-structured data}\label{Appendix: C}

\subsection{Useful Lemmas}
Lemma \ref{lem:positive_cls_lucky_new} provides bounds on the gradient updates of lucky neurons $i \in \mathcal{W}(t)$ in the directions of both class-relevant features ($\bm{o}_+$, $\bm{o}_-$) and irrelevant features.

\begin{lemma} 
\label{lem:positive_cls_lucky_new}
Suppose \( p_1 \leq \langle \bm{w}_\Delta^{(t)}, \bm{o}_+ \rangle \leq q_1\),
\( p_1 \leq \langle \bm{w}_\Delta^{(t)}, \bm{o}_- \rangle \leq q_1 \),
and \( p_2 \leq \langle \bm{w}_\Delta^{(t)}, \bm{o}_j \rangle  \leq q_2 \) for \( j \neq 1,2 \).
Then, for any lucky neuron \( i \in \mathcal{W}(t) \) at iteration \( t \), the following bounds hold:

\textbf{(L1.1)} A lower bound on the gradient of \(\hat{\mathcal{L}}\) with respect to \(\bm{W}_{O(i, \cdot)}\) at iteration \(t\), in the direction of \(\bm{o}_+\), is given by
\begin{equation}
\small
\begin{split}
\left\langle - \frac{\partial \hat{\mathcal{L}}}{\partial \bm{W}_{O(i, \cdot)}^{(t)}}, \bm{o}_+ \right\rangle 
\geq&  \frac{1}{\sqrt{m} L} \cdot \sigma(p_1)
\cdot \left(1 - \sigma(q_1) \right)^2  \left[
\left(1 - \sigma(q_2)\right)^{\Delta L_{\bm{o}_+}^+ -2} -
\left(1 - \sigma(p_2)\right)^{\Delta L_{\bm{o}_+}^- -2}
\right]\\
&\qquad \qquad- \mathcal{O} \left(  \sqrt{ \frac{d \log N}{m N} } \right).     
\end{split}
\label{lemma1_new-1.1}
\end{equation}

\textbf{(L1.2)} An upper bound on the gradient of \(\hat{\mathcal{L}}\) with respect to \(\bm{W}_{O(i, \cdot)}\) at iteration \(t\), in the direction of \(\bm{o}_+\), is given by
\begin{equation}
\small
\begin{split}
\left\langle - \frac{\partial \hat{\mathcal{L}}}{\partial \bm{W}_{O(i, \cdot)}^{(t)}}, \bm{o}_+ \right\rangle
\leq& \frac{1}{\sqrt{m} L} \cdot \sigma(q_1)
\cdot \left(1 - \sigma(p_1) \right)^2  \left[
\left(1 - \sigma(p_2)\right)^{\Delta L_{\bm{o}_+}^+ -2} -
\left(1 - \sigma(q_2)\right)^{\Delta L_{\bm{o}_+}^- -2}
\right]\\
&\qquad \qquad + \mathcal{O} \left(  \sqrt{ \frac{d \log N}{m N} } \right).    
\end{split}
\label{lemma1_new-1.2}
\end{equation}

\textbf{(L1.3)} A lower bound on the gradient of \(\hat{\mathcal{L}}\) with respect to \(\bm{W}_{O(i, \cdot)}\) at iteration \(t\), in the direction of \(\bm{o}_-\), is given by
\begin{equation}
\small
\begin{split}
    \left\langle - \frac{\partial \hat{\mathcal{L}}}{\partial \bm{W}_{O(i, \cdot)}^{(t)}}, \bm{o}_- \right\rangle 
    \geq& -
\frac{1}{\sqrt{m} L} \cdot \sigma(q_1)
\cdot \left(1 - \sigma(p_1) \right)^2  \left[
\left(1 - \sigma(p_2)\right)^{\Delta L_{\bm{o}_-}^- -2} -
\left(1 - \sigma(q_2)\right)^{\Delta L_{\bm{o}_-}^+ -2}
\right]\\
& \qquad \qquad - \mathcal{O} \left(  \sqrt{ \frac{d \log N}{m N} } \right).
\end{split}
\label{lemma1_new-1.3}
\end{equation}

\textbf{(L1.4)} An upper bound on the gradient of \(\hat{\mathcal{L}}\) with respect to \(\bm{W}_{O(i, \cdot)}\) at iteration \(t\), in the direction of \(\bm{o}_-\), is given by
\begin{equation}
\left\langle - \frac{\partial \hat{\mathcal{L}}}{\partial \bm{W}_{O(i, \cdot)}^{(t)}}, \bm{o}_- \right\rangle \leq \mathcal{O} \left( \sqrt{ \frac{d \log N}{m N} } \right).
\label{lemma1_new-1.4}
\end{equation}

\textbf{(L1.5)} An upper bound on the gradient of \(\hat{\mathcal{L}}\) with respect to \(\bm{W}_{O(i, \cdot)}\) at iteration \(t\), in the direction of \(\bm{o}_j\), is given by
\begin{equation}
\left\langle - \frac{\partial \hat{\mathcal{L}}}{\partial \bm{W}_{O(i, \cdot)}^{(t)}}, \bm{o}_j \right\rangle \leq \mathcal{O} \left( \sqrt{ \frac{d \log N}{m N} } \right), \quad \text{for } j \neq 1,2.
\label{lemma1_new-1.5}
\end{equation}

\end{lemma}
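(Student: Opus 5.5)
We begin by writing out the gradient of the empirical hinge loss with respect to a single hidden row. Since $v_i = 1/\sqrt{m}$ for $i\in\mathcal{W}(t)\subset\mathcal{K}_+$, we have
\begin{equation*}
-\frac{\partial \hat{\mathcal{L}}}{\partial \bm{W}_{O(i,\cdot)}^{(t)}} = \frac{1}{N}\sum_{n=1}^N z^{(n)}\,\mathbb{1}\!\left[z^{(n)}F(\bm{X}^{(n)})<1\right]\frac{1}{\sqrt{m}L}\sum_{l=1}^L \mathbb{1}\!\left[\bm{W}_{O(i,\cdot)}\bm{y}_l^{(n)}>0\right]\bm{y}_l^{(n)} .
\end{equation*}
Projecting onto $\bm{o}_+$ reduces everything to the scalars $\langle \bm{y}_l^{(n)},\bm{o}_+\rangle$. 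With $\bm{W}_B,\bm{W}_C$ fixed as in the experiments, and treating the token noise $\tau$ as a perturbation absorbed into the error term, $\bm{y}_l$ only picks up an $\bm{o}_+$ component from summands with $\bm{x}_s=\bm{x}_l=\bm{o}_+$. This follows because $\bm{x}_s^\top\bm{x}_l\approx\mathbb{1}[\bm{x}_s=\bm{x}_l]$ for orthonormal patterns.

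Under the locality data model, each sequence contains exactly two $\bm{o}_+$ tokens, at positions $s_1<s_2$ with $s_2-s_1=\Delta L$. The only nonvanishing contributions are therefore the two diagonal terms $\beta_{s_1,s_1},\beta_{s_2,s_2}$, each equal to $\sigma(\bm{w}_\Delta^\top\bm{o}_+)$, and the off-diagonal term $\beta_{s_1,s_2}$. This off-diagonal term equals $\sigma(\bm{w}_\Delta^\top\bm{o}_+)\big(1-\sigma(\bm{w}_\Delta^\top\bm{o}_+)\big)\prod_{j}\big(1-\sigma(\bm{w}_\Delta^\top\bm{x}_j)\big)$, where the product runs over the $\Delta L-1$ intermediate tokens.

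The key observation is that the diagonal terms are identical in positive and negative samples, since both contain two $\bm{o}_+$ tokens. Given class balance $\bigl||\mathcal{N}_+|-|\mathcal{N}_-|\bigr|=O(\sqrt N)$, these terms cancel up to the concentration error. What survives is the difference of the off-diagonal terms between the classes. For a positive sample the intermediate tokens are mostly fillers plus possibly $\bm{o}_\pm$ tokens. I would bound the filler factors by $(1-\sigma(q_2))$ from below and $(1-\sigma(p_2))$ from above, and bound the factor $(1-\sigma(\cdot))$ from the relevant/confusion token(s) at the endpoint or inside the window by $(1-\sigma(q_1))^2$. This yields the lower bound $\sigma(p_1)(1-\sigma(q_1))^2(1-\sigma(q_2))^{\Delta L^+_{\bm{o}_+}-2}$ for positives, and the upper bound $\sigma(p_1)(1-\sigma(q_1))^2(1-\sigma(p_2))^{\Delta L^-_{\bm{o}_+}-2}$ for negatives, which enter with sign $z=-1$. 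This produces the bracket in (L1.1). The upper bound (L1.2) follows by swapping the roles of $p$ and $q$.

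The hinge indicator is $1$ while the loss is positive, which holds throughout the analyzed phase, as in the majority-voting proof. The ReLU indicator is active for a lucky neuron on $\bm{o}_+$-carrying outputs because $\langle\bm{W}_{O(i,\cdot)},\bm{o}_+\rangle>0$ is preserved. The replacement of the empirical average by its expectation costs $\mathcal{O}(\sqrt{d\log N/(mN)})$ by Hoeffding's inequality together with a union bound over $d$ directions.

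For (L1.3)–(L1.5) the same decomposition applies. The $\bm{o}_-$ projection comes only from $\bm{o}_-$ pairs. For a lucky neuron of $\mathcal{K}_+$, the neuron is active on these only through small cross terms, so the projection is at most the concentration error (L1.4), while the worst case is the negated symmetric bracket (L1.3). The $\bm{o}_j$ projections for $j\ge3$ average to zero over the data distribution, because filler identities are label-independent, which leaves only the concentration term (L1.5).

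The main obstacle I expect is controlling the products over intermediate tokens. The positions of the $\bm{o}_-$ pair can interleave with the $\bm{o}_+$ pair, and the exponent $\Delta L-2$ (rather than $\Delta L-1$) requires accounting for exactly which non-filler tokens sit between $s_1$ and $s_2$. I would first handle the canonical layout, with the $\bm{o}_-$ tokens outside the $\bm{o}_+$ window, and then argue that an interleaving $\bm{o}_-$ token only contributes an extra factor $(1-\sigma(\cdot))\le 1$ that is dominated by the stated bounds.
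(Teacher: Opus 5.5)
Your skeleton matches the paper's proof. Only the $\bm{o}_+$-carrying outputs $\bm{y}_{L_1^+},\bm{y}_{L_2^+}$ matter, the two diagonal $\sigma(\langle\bm{w}_\Delta,\bm{o}_+\rangle)$ terms cancel between the classes, and the $s=L_1^+$, $l=L_2^+$ term survives. Hoeffding then moves you from the population to the empirical gradient; a union bound over the $d$ fixed directions is a fine, simpler substitute for the paper's $\epsilon$-net. The step that fails is your bound on the surviving difference. You call $\sigma(p_1)(1-\sigma(q_1))^2(1-\sigma(p_2))^{\Delta L^-_{\bm{o}_+}-2}$ an upper bound on the negative-class off-diagonal term. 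But $\sigma(p_1)(1-\sigma(q_1))^2$ is the \emph{lower} bound of its prefactor $\sigma(\langle\bm{w}_\Delta,\bm{o}_+\rangle)(1-\sigma(\langle\bm{w}_\Delta,\bm{o}_+\rangle))(1-\sigma(\langle\bm{w}_\Delta,\bm{o}_-\rangle))$. For example, if $\langle\bm{w}_\Delta,\bm{o}_\pm\rangle=q_1>p_1$, the prefactor is $\sigma(q_1)(1-\sigma(q_1))^2>\sigma(p_1)(1-\sigma(q_1))^2$. Bounding the two classes separately and correctly gives only $\sigma(p_1)(1-\sigma(q_1))^2(1-\sigma(q_2))^{\Delta L^+_{\bm{o}_+}-2}-\sigma(q_1)(1-\sigma(p_1))^2(1-\sigma(p_2))^{\Delta L^-_{\bm{o}_+}-2}$ (dropping $1/(\sqrt{m}L)$). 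This is weaker than (L1.1) and not of its factored form. The missing observation is that the prefactor is the same number for every sample, since it depends only on $\bm{w}_\Delta^{(t)}$ (up to $\mathcal{O}(\tau)$). The paper therefore writes the exact population difference as this prefactor times $\bigl[(1-\sigma(\langle\bm{w}_\Delta,\bm{o}_j\rangle))^{\Delta L^+_{\bm{o}_+}-2}-(1-\sigma(\langle\bm{w}_\Delta,\bm{o}_j\rangle))^{\Delta L^-_{\bm{o}_+}-2}\bigr]$. It bounds the bracket first and only then replaces the prefactor by $\sigma(p_1)(1-\sigma(q_1))^2$. That last step is valid only when the bracket bound $(1-\sigma(q_2))^{\Delta L^+_{\bm{o}_+}-2}-(1-\sigma(p_2))^{\Delta L^-_{\bm{o}_+}-2}$ is nonnegative. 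It is in the paper's uses: there $p_2=q_2$, and $\Delta L^+_{\bm{o}_+}<\Delta L^-_{\bm{o}_+}$. (L1.2) is the mirror image, using the prefactor's upper bound $\sigma(q_1)(1-\sigma(p_1))^2$.

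Factoring also requires the window between the two $\bm{o}_+$ tokens to contain the same non-filler tokens in both classes. The exponent $\Delta L-2$, together with the square on $(1-\sigma(q_1))$, encodes a specific layout: the paper's Case III formula implicitly places exactly one $\bm{o}_-$ token strictly between the two $\bm{o}_+$ tokens in every sample. Starting from the layout with the $\bm{o}_-$ tokens outside the window contradicts the exponent you wrote, since it gives the endpoint factor times $\Delta L-1$ filler factors. Nor does an interleaving $\bm{o}_-$ add an ``extra factor $\le 1$.'' The window length is fixed, so it \emph{replaces} a filler factor by $1-\sigma(\langle\bm{w}_\Delta,\bm{o}_-\rangle)$. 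Whether the product then grows or shrinks depends on how $[p_1,q_1]$ compares with $[p_2,q_2]$, which the hypotheses do not control. For instance, with no $\bm{o}_-$ inside, the positive-class lower bound reaches $(1-\sigma(q_1))^2(1-\sigma(q_2))^{\Delta L^+_{\bm{o}_+}-2}$ only if $q_2\le q_1$. So state the one-$\bm{o}_-$-inside layout as an explicit assumption, as the paper effectively does. On (L1.3)--(L1.4), your opposite-sign bracket is closer to the stated (L1.3) than the paper's own argument. That argument only accounts for the $z=-1$ term and reaches the cruder bound $-\frac{1}{\sqrt{m}L}\sigma(\langle\bm{w}_\Delta,\bm{o}_-\rangle)[2+\cdots]$. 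However, your justification via ``small cross terms'' is wrong. The ReLU gate depends only on the sign of $\langle\bm{W}_{O(i,\cdot)},\bm{o}_-\rangle$, and when that sign is positive the $\bm{o}_-$ positions contribute at full size. The correct reason is the same diagonal cancellation as for $\bm{o}_+$. The surviving difference is nonpositive because the $\bm{o}_-$ pair is close in negative samples and far apart in positive ones.
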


Lemma \ref{lem:positive_cls_unlucky_new} shows that, for unlucky neurons associated with the positive class, the gradients in the directions of both class-relevant and irrelevant features are small.
\begin{lemma}
\label{lem:positive_cls_unlucky_new}
For any unlucky neuron \( i \in \mathcal{K}_{+} \setminus \mathcal{W}(t) \) at iteration \( t \), the following bounds hold:

\textbf{(L2.1)} An upper bound on the gradient of \(\hat{\mathcal{L}}\) with respect to \(\bm{W}_{O(i, \cdot)}\) at iteration \(t\), in the direction of \(\bm{o}_+\), is given by
\begin{equation}
\left\langle - \frac{\partial \hat{\mathcal{L}} }{\partial \bm{W}_{O(i, \cdot)}^{(t)}}, \bm{o}_+ \right\rangle \leq \mathcal{O} \left( \sqrt{ \frac{d \log N}{m N} } \right).
\label{lemma2_new-2.1}
\end{equation}

\textbf{(L2.2)} An upper bound on the gradient of \(\hat{\mathcal{L}}\) with respect to \(\bm{W}_{O(i, \cdot)}\) at iteration \(t\), in the direction of \(\bm{o}_-\), is given by
\begin{equation}
\left\langle - \frac{\partial \hat{\mathcal{L}} }{\partial \bm{W}_{O(i, \cdot)}^{(t)}}, \bm{o}_- \right\rangle \leq \mathcal{O} \left( \sqrt{ \frac{d \log N}{m N} } \right).
\label{lemma2_new-2.2}
\end{equation}

\textbf{(L2.3)} An upper bound on the gradient of \(\hat{\mathcal{L}}\) with respect to \(\bm{W}_{O(i, \cdot)}\) at iteration \(t\), in the direction of \(\bm{o}_j\), is given by
\begin{equation}
\left\langle - \frac{\partial \hat{\mathcal{L}} }{\partial \bm{W}_{O(i, \cdot)}^{(t)}}, \bm{o}_j \right\rangle \leq \mathcal{O} \left( \sqrt{ \frac{d \log N}{m N} } \right), \quad \text{for } j \neq 1,2.
\label{lemma2_new-2.3}
\end{equation}

\end{lemma}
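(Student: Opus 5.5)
We prove Lemma~\ref{lem:positive_cls_unlucky_new} by writing out the gradient of the hinge loss with respect to the row $\bm{W}_{O(i,\cdot)}$ and projecting it onto each feature direction. Since $v_i = 1/\sqrt{m}$ for $i\in\mathcal{K}_+$, we have
\begin{equation}
- \frac{\partial \hat{\mathcal{L}}}{\partial \bm{W}_{O(i, \cdot)}^{(t)}} = \frac{1}{N}\sum_{n=1}^N z^{(n)}\,\mathbb{1}\{z^{(n)}F(\bm{X}^{(n)})<1\}\,\frac{1}{\sqrt{m}L}\sum_{l=1}^L \phi'\big(\bm{W}_{O(i,\cdot)}^{(t)}\bm{y}_l^{(n)}\big)\,\bm{y}_l^{(n)} .
\end{equation}
We take its inner product with $\bm{o}\in\{\bm{o}_+,\bm{o}_-,\bm{o}_j\}$. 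The sum over $n$ splits into $\mathcal{N}_+$, which contributes with a $+$ sign, and $\mathcal{N}_-$, which contributes with a $-$ sign.

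\textbf{Decomposing the Mamba output.} Using \eqref{eq:mamba_output} with $\bm{W}_B=\bm{W}_C$ fixed as in the main proof, each $\bm{y}_l^{(n)}$ is a nonnegative combination of the tokens $\bm{x}_s^{(n)}$, $s\le l$. Up to $\mathcal{O}(\tau)$ noise terms, its projection onto $\bm{o}$ is a sum of gate weights $\beta^{(l)}_{s}\ge 0$ over positions $s\le l$ whose token carries the feature $\bm{o}$. The key observation concerns the unlucky neuron $i\in\mathcal{K}_+\setminus\mathcal{W}(t)$. Its alignment $\langle \bm{W}_{O(i,\cdot)}^{(t)},\bm{o}_+\rangle$ is nonpositive, or at most of initialization scale. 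By Lemmas~\ref{lem:positive_cls_unlucky_new}'s companion bounds at earlier iterations (used inductively), none of its coordinates is driven up by more than the same small error.

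\textbf{The $\bm{o}_+$ direction (L2.1).} Neuron $i$ is activated on $\bm{y}_l$ dominated by $\bm{o}_+$ only through its random initial components. We therefore argue that the indicator $\phi'$ is essentially independent of the sample label, given the symmetric structure of the data. Positive and negative samples both contain exactly two $\bm{o}_+$ and two $\bm{o}_-$ tokens, and they differ only in spacing. The dominant positive-class contribution is then cancelled, up to sign, by the negative-class contribution, or else it is nonpositive because $\langle \bm{W}_{O(i,\cdot)},\bm{o}_+\rangle\le 0$ keeps the neuron inactive on $\bm{o}_+$-heavy outputs.

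\textbf{The $\bm{o}_-$ and $\bm{o}_j$ directions (L2.2, L2.3).} We use the same argument. The $\bm{o}_j$ fillers appear identically distributed in both classes, so their expected contribution is zero. The $\bm{o}_-$ tokens either enter with a negative sign through $\mathcal{N}_-$, or cancel by the same symmetry.

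\textbf{Concentration step.} What remains is the deviation of the empirical average from its expectation. Each summand is bounded by $\mathcal{O}(1/\sqrt{m})$, and $|\mathcal{N}_+|-|\mathcal{N}_-|=O(\sqrt N)$. Hoeffding's inequality combined with a union bound over the $d$ directions, with failure probability $N^{-d}$, then gives the deviation $\mathcal{O}(\sqrt{d\log N/(mN)})$. The noise $\bm{\xi}$ enters linearly, which is absorbed by $\tau<\mathcal{O}(1/d)$.

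\textbf{Main obstacle.} The hardest step is making the cancellation rigorous. The activation pattern $\phi'$ and the hinge indicator both depend on the class through $F$, so the positive and negative contributions need not be exactly symmetric. We would first try to control the hinge indicator by bounding it by $1$. This works only where the signed contribution of every sample is nonpositive: for $\bm{o}_+$ this follows from $\langle \bm{W}_{O(i,\cdot)},\bm{o}_+\rangle\le 0$ making $\phi'=0$ whenever $\bm{o}_+$ dominates $\bm{y}_l$. We would then use the initialization-scale $\xi$ bound for the mixed outputs.
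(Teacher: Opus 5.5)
The overall skeleton is the paper's: compute the class-split population projection, then pass to the empirical gradient by Hoeffding. Your per-direction union bound is a valid substitute for the paper's $\tfrac12$-net. The gap is in the step you call the main obstacle, and it comes from a missing structural fact.

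\textbf{The missing fact.} With $\bm{W}_B=\bm{W}_C=I$, every summand of $\bm{y}_l$ carries the factor $\bm{x}_s^\top\bm{x}_l$. By orthonormality this is $\mathcal{O}(\tau)$ unless $\bm{x}_s$ carries the same feature as $\bm{x}_l$. Hence $\bm{y}_l$ is, up to $\mathcal{O}(\tau)$, a nonnegative multiple of the feature of $\bm{x}_l$. There are no ``mixed outputs.''

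\textbf{Consequence for (L2.1).} The fact makes (L2.1) a one-line argument, which is the paper's. Membership $i\notin\mathcal{W}(t)$ means $\bm{W}^{(t)}_{O(i,\cdot)}\bm{o}_+\le 0$ by definition, so no induction over earlier iterations is needed. Then $\phi'(\bm{W}_{O(i,\cdot)}\bm{y}_l)=0$ at every $\bm{o}_+$ position, while $\langle\bm{y}_l,\bm{o}_+\rangle=0$ at every other position. Each individual sample therefore contributes $0\pm\mathcal{O}(\tau)$. No cancellation between classes is involved, and the hinge indicator plays no role.

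Two parts of your plan would not survive this step. Your fallback of an initialization-scale bound on mixed outputs would not produce $\mathcal{O}(\sqrt{d\log N/(mN)})$. Your hedge that the alignment is ``nonpositive, or at most of initialization scale'' is also harmful. A neuron with even slightly positive $\bm{o}_+$-alignment fires at $\bm{o}_+$ positions and picks up the same $\Theta\big(\tfrac{1}{\sqrt m L}[\cdots]\big)$ signal as a lucky neuron in Lemma~\ref{lem:positive_cls_lucky_new}. So (L2.1) needs exact nonpositivity, which the definition supplies.

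\textbf{The hinge indicator in (L2.2) and (L2.3).} Here the issue you raise is genuinely unresolved in your plan. Bounding the indicator by $1$ fails because positive samples contribute with a plus sign:
\begin{itemize}
\item along $\bm{o}_-$ whenever $\bm{W}_{O(i,\cdot)}\bm{o}_->0$;
\item along $\bm{o}_j$ whenever $\bm{W}_{O(i,\cdot)}\bm{o}_j>0$.
\end{itemize}
The paper sidesteps this by differentiating $1-zF$ directly, i.e.\ treating every hinge as active throughout training. That is consistent with stopping at the first $T$ at which $F$ reaches $1$. You must either adopt that convention or prove that all hinges stay active for $t<T$. Under it, your identical-distribution argument for $\bm{o}_j$ is the paper's.

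\textbf{The mechanism for $\bm{o}_-$.} It is not symmetry.
\begin{itemize}
\item If $\bm{W}_{O(i,\cdot)}\bm{o}_-\le 0$, everything vanishes as in (L2.1).
\item If $\bm{W}_{O(i,\cdot)}\bm{o}_->0$, both classes contribute $\tfrac{1}{\sqrt m L}\sigma(\cdot)\big[2+\cdots\big]$. The cross term decays like $(1-\sigma(\cdot))^{\Delta L^{+}_{\bm{o}_-}-2}$ in positive samples and like $(1-\sigma(\cdot))^{\Delta L^{-}_{\bm{o}_-}-2}$ in negative samples.
\end{itemize}
The net positive-minus-negative projection is therefore at most $\mathcal{O}(\tau)$, precisely because $\Delta L^-_{\bm{o}_-}\ll\Delta L^+_{\bm{o}_-}$. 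The paper's write-up displays only the negative-class term here, but this locality comparison is what actually justifies the upper bound.
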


Lemmas \ref{lem:negative_cls_lucky_new} and \ref{lem:unlucky_negative_new}, by symmetry, state the analogous results for lucky and unlucky neurons associated with the negative class.  
\begin{lemma} 
\label{lem:negative_cls_lucky_new}
Suppose \( p_1 \leq \langle \bm{w}_\Delta^{(t)}, \bm{o}_- \rangle \leq q_1 \),
\( p_1 \leq \langle \bm{w}_\Delta^{(t)}, \bm{o}_+ \rangle \leq q_1 \),
and \( p_2 \leq \langle \bm{w}_\Delta^{(t)}, \bm{o}_j \rangle \leq q_2 \) for \( j \neq 1,2 \).
Then, for any lucky neuron \( i \in \mathcal{U}(t) \) at iteration \( t \), the following bounds hold:

\textbf{(L3.1)} A lower bound on the gradient of \(\hat{\mathcal{L}}\) with respect to \(\bm{W}_{O(i, \cdot)}\) at iteration \(t\), in the direction of \(\bm{o}_-\), is given by
\begin{equation}
\small
\begin{split}
    \left\langle - \frac{\partial \hat{\mathcal{L}} }{\partial \bm{W}_{O(i, \cdot)}^{(t)}}, \bm{o}_- \right\rangle
    \geq&  \frac{1}{\sqrt{m} L} \cdot \sigma(p_1)
\cdot \left(1 - \sigma(q_1) \right)^2  \left[
\left(1 - \sigma(q_2)\right)^{\Delta L_{\bm{o}_-}^- -2} -
\left(1 - \sigma(p_2)\right)^{\Delta L_{\bm{o}_-}^+ -2}
\right]\\
& \qquad \qquad - \mathcal{O} \left(  \sqrt{ \frac{d \log N}{m N} } \right).
\end{split}
\label{lemma3_new-3.1}
\end{equation}

\textbf{(L3.2)} An upper bound on the gradient of \(\hat{\mathcal{L}}\) with respect to \(\bm{W}_{O(i, \cdot)}\) at iteration \(t\), in the direction of \(\bm{o}_-\), is given by
\begin{equation}
\small
\begin{split}
    \left\langle - \frac{\partial \hat{\mathcal{L}} }{\partial \bm{W}_{O(i, \cdot)}^{(t)}}, \bm{o}_- \right\rangle 
    \leq& \frac{1}{\sqrt{m} L} \cdot \sigma(q_1)
\cdot \left(1 - \sigma(p_1) \right)^2  \left[
\left(1 - \sigma(p_2)\right)^{\Delta L_{\bm{o}_-}^- -2} -
\left(1 - \sigma(q_2)\right)^{\Delta L_{\bm{o}_-}^+ -2}
\right]\\
&\qquad \qquad + \mathcal{O} \left( \sqrt{ \frac{d \log N}{m N} } \right).
\end{split}
\label{lemma3_new-3.2}
\end{equation}

\textbf{(L3.3)} A lower bound on the gradient of \(\hat{\mathcal{L}}\) with respect to \(\bm{W}_{O(i, \cdot)}\) at iteration \(t\), in the direction of \(\bm{o}_+\), is given by
\begin{equation}
\small
\begin{split}
    \left\langle - \frac{\partial \hat{\mathcal{L}}}{\partial \bm{W}_{O(i, \cdot)}^{(t)}}, \bm{o}_+ \right\rangle
    \geq& -\frac{1}{\sqrt{m} L} \cdot \sigma(q_1)
\cdot \left(1 - \sigma(p_1) \right)^2  \left[
\left(1 - \sigma(p_2)\right)^{\Delta L_{\bm{o}_+}^+ -2} -
\left(1 - \sigma(q_2)\right)^{\Delta L_{\bm{o}_+}^- -2}
\right]\\
&\qquad \qquad - \mathcal{O} \left(  \sqrt{ \frac{d \log N}{m N} } \right).
\end{split}  
\label{lemma3_new-3.3}
\end{equation}

\textbf{(L3.4)} An upper bound on the gradient of \(\hat{\mathcal{L}}\) with respect to \(\bm{W}_{O(i, \cdot)}\) at iteration \(t\), in the direction of \(\bm{o}_+\), is given by
\begin{equation}
\left\langle - \frac{\partial \hat{\mathcal{L}} }{\partial \bm{W}_{O(i, \cdot)}^{(t)}}, \bm{o}_+ \right\rangle \leq \mathcal{O} \left( \sqrt{ \frac{d \log N}{m N} } \right).
\label{lemma3_new-3.4}
\end{equation}

\textbf{(L3.5)} An upper bound on the gradient of \(\hat{\mathcal{L}}\) with respect to \(\bm{W}_{O(i, \cdot)}\) at iteration \(t\), in the direction of \(\bm{o}_j\), is given by
\begin{equation}
\left\langle - \frac{\partial \hat{\mathcal{L}} }{\partial \bm{W}_{O(i, \cdot)}^{(t)}}, \bm{o}_j \right\rangle \leq \mathcal{O} \left( \sqrt{ \frac{d \log N}{m N} } \right), \quad \text{for } j \neq 1,2.
\label{lemma3_new-3.5}
\end{equation}

\end{lemma}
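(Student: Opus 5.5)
The statement to prove is Lemma~\ref{lem:negative_cls_lucky_new}, the negative-class mirror of Lemma~\ref{lem:positive_cls_lucky_new}. I would derive it from that lemma by symmetry rather than redo the computation. The symmetry swaps $\bm{o}_+\leftrightarrow\bm{o}_-$, $z\leftrightarrow -z$, $\mathcal{K}_+\leftrightarrow\mathcal{K}_-$ and $\mathcal{W}\leftrightarrow\mathcal{U}$. Under it the locality-structured distribution is invariant: the roles of $\Delta L^+_{\bm{o}_+}$ and $\Delta L^-_{\bm{o}_-}$ exchange, as do those of $\Delta L^-_{\bm{o}_+}$ and $\Delta L^+_{\bm{o}_-}$. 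The hypotheses on $\bm{w}_\Delta^{(t)}$ are also symmetric in $\bm{o}_\pm$, since both directions share the same window $[p_1,q_1]$ and the filler window $[p_2,q_2]$ is unchanged.

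To make this rigorous, I would first write the gradient explicitly. For neuron $i$,
\[
-\frac{\partial\hat{\mathcal{L}}}{\partial \bm{W}_{O(i,\cdot)}}=\frac{1}{NL}\sum_{n} z^{(n)} v_i\,\mathbb{1}\{z^{(n)}F(\bm{X}^{(n)})<1\}\sum_{l}\mathbb{1}\{\bm{W}_{O(i,\cdot)}\bm{y}_l^{(n)}>0\}\,\bm{y}_l^{(n)}.
\]
For $i\in\mathcal{U}(t)$ we have $v_i=-1/\sqrt m$, so the factor $z^{(n)}v_i$ is positive exactly on negative samples. That is precisely the sign flip that turns the positive-class computation into the negative-class one.

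Next I would project onto $\bm{o}_-$. Using \eqref{eq:mamba_output} with noise $\tau<\mathcal{O}(1/d)$, the component $\langle\bm{y}_l,\bm{o}_-\rangle$ is dominated by the terms with $s$ and $l$ at the two $\bm{o}_-$ positions. Each such term carries a factor $\sigma(\langle\bm{w}_\Delta,\bm{o}_-\rangle)\in[\sigma(p_1),\sigma(q_1)]$, multiplied by the product of $(1-\sigma(\cdot))$ over the intermediate tokens.
\begin{itemize}
\item The intermediate tokens are the other relevant token, contributing a factor in $[(1-\sigma(q_1))^2,(1-\sigma(p_1))^2]$, and roughly $\Delta L-2$ fillers, contributing a factor in $[(1-\sigma(q_2))^{\Delta L-2},(1-\sigma(p_2))^{\Delta L-2}]$.
\item Negative samples use $\Delta L^-_{\bm{o}_-}$ and enter with a plus sign. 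Positive samples use $\Delta L^+_{\bm{o}_-}$ and enter with a minus sign.
\end{itemize}
Taking the worst-case extremes of each factor gives the brackets in (L3.1) and (L3.2). The $\mathcal{O}(\sqrt{d\log N/(mN)})$ term comes from replacing the empirical average over the balanced set by its expectation, using Hoeffding together with $||\mathcal{N}_+|-|\mathcal{N}_-||=O(\sqrt N)$.

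For (L3.3) and (L3.4), projecting onto $\bm{o}_+$, the sign is reversed. Now positive samples contribute negatively, with magnitude governed by $\Delta L^+_{\bm{o}_+}$, and negative samples contribute through $\Delta L^-_{\bm{o}_+}$. This gives the lower bound (L3.3), while the upper bound (L3.4) is only the concentration error. For (L3.5), each $\bm{o}_j$ with $j\ge3$ occupies a random position and is orthogonal to the relevant features, so its contribution cancels in expectation and leaves only the concentration term.

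The main obstacle is the activation pattern $\mathbb{1}\{\bm{W}_{O(i,\cdot)}\bm{y}_l>0\}$. To handle it, I would use that lucky neurons stay aligned with $\bm{o}_-$, which the inductive argument of the theorem maintains, so the indicator equals one on the dominant $\bm{o}_-$ terms. Since the symmetry argument inherits this handling directly from Lemma~\ref{lem:positive_cls_lucky_new}, the proof reduces to checking that the symmetry preserves all of the hypotheses.
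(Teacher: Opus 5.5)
Your proposal is correct and takes the same route as the paper, which proves this lemma in one line by appealing to symmetry with Lemma~\ref{lem:positive_cls_lucky_new}; your swap $\bm{o}_+\leftrightarrow\bm{o}_-$, $z\leftrightarrow -z$, $\mathcal{K}_+\leftrightarrow\mathcal{K}_-$, $\mathcal{W}\leftrightarrow\mathcal{U}$ sends (L1.1)--(L1.5) exactly to (L3.1)--(L3.5), with $\Delta L^+_{\bm{o}_+}\leftrightarrow\Delta L^-_{\bm{o}_-}$ and $\Delta L^-_{\bm{o}_+}\leftrightarrow\Delta L^+_{\bm{o}_-}$. Your explicit sketch (the class-wise difference of the $\langle\bm{y}_l,\bm{o}_\pm\rangle$ contributions followed by Hoeffding concentration) likewise mirrors the paper's proof of the positive-class lemma in Appendix~\ref{proof_of_lemma1}.
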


\begin{lemma}
\label{lem:unlucky_negative_new}
For any unlucky neuron \( i \in \mathcal{K}_{-} \setminus \mathcal{U}(t) \) at iteration \( t \), the following bounds hold:

\textbf{(L4.1)} An upper bound on the gradient of \(\hat{\mathcal{L}}\) with respect to \(\bm{W}_{O(i, \cdot)}\) at iteration \(t\), in the direction of \(\bm{o}_-\), is given by
\begin{equation}
\left\langle - \frac{\partial \hat{\mathcal{L}} }{\partial \bm{W}_{O(i, \cdot)}^{(t)}}, \bm{o}_- \right\rangle \leq \mathcal{O} \left( \sqrt{ \frac{d \log N}{m N} } \right).
\label{lemma4_new-4.1}
\end{equation}

\textbf{(L4.2)} An upper bound on the gradient of \(\hat{\mathcal{L}}\) with respect to \(\bm{W}_{O(i, \cdot)}\) at iteration \(t\), in the direction of \(\bm{o}_+\), is given by
\begin{equation}
\left\langle - \frac{\partial \hat{\mathcal{L}} }{\partial \bm{W}_{O(i, \cdot)}^{(t)}}, \bm{o}_+ \right\rangle \leq \mathcal{O} \left( \sqrt{ \frac{d \log N}{m N} } \right).
\label{lemma4_new-4.2}
\end{equation}

\textbf{(L4.3)} An upper bound on the gradient of \(\hat{\mathcal{L}}\) with respect to \(\bm{W}_{O(i, \cdot)}\) at iteration \(t\), in the direction of \(\bm{o}_j\), is given by
\begin{equation}
\left\langle - \frac{\partial \hat{\mathcal{L}} }{\partial \bm{W}_{O(i, \cdot)}^{(t)}}, \bm{o}_j \right\rangle \leq \mathcal{O} \left( \sqrt{ \frac{d \log N}{m N} } \right), \quad \text{for } j \neq 1,2.
\label{lemma4_new-4.3}
\end{equation}

\end{lemma}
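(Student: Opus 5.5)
We would prove Lemma~\ref{lem:unlucky_negative_newMV}'s locality-structured analogue, Lemma~\ref{lem:unlucky_negative_new}, by mirroring the argument for the unlucky positive neurons in Lemma~\ref{lem:positive_cls_unlucky_new}, exploiting the symmetry $\bm{o}_+\leftrightarrow\bm{o}_-$, $z\leftrightarrow -z$, $\mathcal{K}_+\leftrightarrow\mathcal{K}_-$.

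The first step is to write the gradient explicitly. For $i\in\mathcal{K}_-$ we have $v_i=-1/\sqrt{m}$, and the hinge loss is active only on samples with $z^{(n)}F(\bm{X}^{(n)})<1$, so
\begin{equation*}
-\frac{\partial \hat{\mathcal{L}}}{\partial \bm{W}_{O(i,\cdot)}}=-\frac{1}{N\sqrt{m}L}\sum_{n}z^{(n)}\mathbb{1}[z^{(n)}F(\bm{X}^{(n)})<1]\sum_{l=1}^L \phi'\big(\bm{W}_{O(i,\cdot)}\bm{y}_l^{(n)}\big)\,\bm{y}_l^{(n)}.
\end{equation*}
Projecting onto $\bm{o}_-$, $\bm{o}_+$, or $\bm{o}_j$, each summand is $\langle \bm{y}_l^{(n)},\bm{o}\rangle$ times an activation indicator.

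Next we use unluckiness. Since $i\notin\mathcal{U}(t)$, we have $\bm{W}_{O(i,\cdot)}\bm{o}_-\le 0$, up to the $\widetilde{\mathcal{O}}(1/\mathrm{poly}(d))$ drift guaranteed inductively by the earlier iterations. Consequently, for negative samples, the tokens $\bm{y}_l^{(n)}$ whose dominant component is $\bm{o}_-$ do not activate the neuron, apart from the noise contribution. This removes the only potentially systematic positive contribution in the $\bm{o}_-$ direction, which would otherwise arise from $z=-1$ times $-v_i$. The remaining contributions fall into two groups. First, components along $\bm{o}_+$ and $\bm{o}_j$ from activated tokens come with both signs across positive and negative samples. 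Because the locality data contain two $\bm{o}_+$ and two $\bm{o}_-$ tokens in each class, and the classes are balanced up to $O(\sqrt{N})$, the expectations of these terms cancel to leading order, or have a sign giving a nonpositive contribution for (L4.2). Second, the Gaussian noise terms are bounded by $\mathcal{O}(\tau)$, which falls within the stated error given $\tau<\mathcal{O}(1/d)$.

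The final step is concentration. The summands are bounded, since $|\langle\bm{y}_l,\bm{o}\rangle|\le \mathcal{O}(1)$ because the gating products are at most $1$. A Hoeffding bound over the $N$ samples, combined with a union bound over the $d$ directions and the $m$ neurons, gives deviation $\mathcal{O}(\sqrt{d\log N/(mN)})$ from the population mean. We absorb the $1/\sqrt{m}$ into the $\sqrt{m}$ in the denominator, using $m\ge d^2\log q$.

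The main obstacle is showing that the population mean is nonpositive rather than merely small. Activation patterns depend on the evolving $\bm{w}_\Delta$ and on the gating factors $(1-\sigma(\cdot))^{\Delta L}$, which differ between the classes. We would handle this by treating the activation indicator as essentially determined by the sign of $\bm{W}_{O(i,\cdot)}\bm{o}_\pm$ at initialization, and by pairing each positive sample with a negative sample obtained through the $\bm{o}_+\leftrightarrow\bm{o}_-$ reflection, so that the differences cancel in expectation.
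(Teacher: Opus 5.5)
Your overall route is the paper's: mirror Lemma~\ref{lem:positive_cls_unlucky_new}, kill the $\bm{o}_-$ direction by unluckiness, then concentrate. Your Hoeffding bound with a union bound over the basis directions is an acceptable substitute for the paper's $\varepsilon$-net, and your argument for (L4.1) is correct. The gap is (L4.2). There the population term does not cancel, and the device you propose for it cannot supply the sign that is actually needed.

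For $i\in\mathcal{K}_-$ with $\bm{W}_{O(i,\cdot)}\bm{o}_+>0$ (if this is $\le 0$, the direction vanishes exactly as in (L4.1)), only $l\in\{L_1^+,L_2^+\}$ contribute. The diagonal terms $2\sigma_+$, with $\sigma_+=\sigma(\langle\bm{w}_\Delta,\bm{o}_+\rangle)$, cancel between the classes, and what remains, up to $\mathcal{O}(\tau)$, is
\[
-\frac{\sigma_+(1-\sigma_+)}{\sqrt{m}L}\Big(\mathbb{E}_{z=+1}[G]-\mathbb{E}_{z=-1}[G]\Big),\qquad G=\prod_{L_1^+<r<L_2^+}\big(1-\sigma(\bm{w}_\Delta^\top\bm{x}_r)\big).
\]
This is exactly the lucky-neuron signal of \eqref{lemma1_new-1.1} with the sign of $v_i$ flipped. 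So it is of order $1/(\sqrt{m}L)$, not small; if it cancelled, no lucky neuron would learn $\bm{o}_+$ either. (L4.2) holds only because this quantity is $\le 0$. That sign comes from the locality hypothesis $\Delta L^+_{\bm{o}_+}\ll\Delta L^-_{\bm{o}_+}$: the $\bm{o}_+$ pair is closer in positive samples, so $G$ is larger there. Combined with $v_i<0$, this gives the sign; it is the mirror of the paper's ``combining both cases'' step for the $\bm{o}_-$ direction in Lemma~\ref{lem:positive_cls_unlucky_new}.

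Your proposal never uses this hypothesis. Counting two $\bm{o}_\pm$ tokens per class only cancels the diagonal terms. Your $\bm{o}_+\leftrightarrow\bm{o}_-$ reflection does not make the remainder cancel either: it turns the bracket into $\mathbb{E}_{z=+1}[G_{\bm{o}_+}-G_{\bm{o}_-}]$, a within-sample comparison of the two spacings, which is again strictly positive precisely by locality. The reflection is, however, a reasonable way to make the paper's terse claim for (L4.3) precise. That use requires $\langle\bm{w}_\Delta,\bm{o}_+\rangle=\langle\bm{w}_\Delta,\bm{o}_-\rangle$ and a reflection-symmetric distribution, which the paper tacitly assumes when it sets $\alpha^*=\beta^*$.

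Two smaller issues remain.

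First, you keep the hinge indicator in the gradient but then argue by class-balanced cancellation. With the indicator, each class is weighted by its fraction of active samples. These fractions need not agree once one class starts to be fitted, and then neither the (L4.3) cancellation nor the (L4.2) sign follows. The paper sidesteps this by differentiating $1-z^{(n)}F(\bm{X}^{(n)})$ for every sample, i.e., working in the regime where all samples are active. You should either adopt that convention explicitly or control the active sets.

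Second, the $\mathcal{O}(\tau)$ noise term is not absorbed into $\mathcal{O}\big(\sqrt{d\log N/(mN)}\big)$ just because $\tau<\mathcal{O}(1/d)$. With $N=\mathrm{poly}(d)$ and $m\ge d^2$, the latter can be far smaller than $1/d$. The paper's own proof of Lemma~\ref{lem:positive_cls_unlucky_new} ends with a $+\mathcal{O}(\tau)$ term in the $\bm{o}_\pm$ directions, as the majority-voting version of this lemma states. You should carry it rather than claim it is absorbed.
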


Lemma \ref{lemma:5_new} establishes bounds for the gradient updates of $\bm{w}_\Delta$ in the class-relevant feature directions.

\begin{lemma}\label{lemma:5_new}
Suppose \( p_1 \leq \langle \bm{w}_\Delta^{(t)}, \bm{o}_+ \rangle \leq q_1\) and
\( r_1^{*} \leq  \langle {\bm{W}_{O(i,\cdot)}^{(t+1)}}^{\top}, \bm{o}_+ \rangle \leq s_1^{*} \).
Let \(\lvert \mathcal{W}(t) \rvert = \rho_t^{+}\) and \(\lvert \mathcal{U}(t) \rvert = \rho_t^{-}\) .
Then, we have:

\begin{equation}
\small
\left\langle - \frac{\partial \hat{\mathcal{L}}}{\partial \bm{w}_\Delta^{(t)}}, \bm{o}_+ \right\rangle 
\geq \frac{ \sigma(p_1) \left( 1 - \sigma(q_1) \right) r_1^* \cdot \rho_t^+ }{ \sqrt{m} } - \frac{  \sigma(q_1) \left( 1 - \sigma(p_1) \right)  s_1^* \cdot \sqrt{m}}{2} - \mathcal{O}\left( \sqrt{ \frac{d \log N}{m N} } \right).
\end{equation}

Suppose \( p_1 \leq \langle \bm{w}_\Delta^{(t)}, \bm{o}_- \rangle \leq q_1\) and
\( r_1^{*} \leq  \langle {\bm{W}_{O(i,\cdot)}^{(t+1)}}^{\top}, \bm{o}_- \rangle \leq s_1^{*} \).
Let \(\lvert \mathcal{W}(t) \rvert = \rho_t^{+}\) and \(\lvert \mathcal{U}(t) \rvert = \rho_t^{-}\).
Then, we have:

\begin{equation}
\small
\left\langle - \frac{\partial \hat{\mathcal{L}}}{\partial \bm{w}_\Delta^{(t)}}, \bm{o}_- \right\rangle 
\geq \frac{ \sigma(p_1) \left( 1 - \sigma(q_1) \right) r_1^* \cdot \rho_t^- }{ \sqrt{m} } - \frac{  \sigma(q_1) \left( 1 - \sigma(p_1) \right)  s_1^* \cdot \sqrt{m}}{2} - \mathcal{O}\left( \sqrt{ \frac{d \log N}{m N} } \right).
\end{equation}
   
\end{lemma}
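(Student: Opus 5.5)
We compute the gradient of the empirical hinge loss with respect to $\bm{w}_\Delta$ explicitly, project it onto $\bm{o}_+$, and split it into contributions from positive samples (where $\bm{o}_+$ is class-relevant) and negative samples (where $\bm{o}_+$ is a confusion feature). The statement for $\bm{o}_-$ then follows by the symmetric argument, exchanging $\mathcal{W}(t)$ with $\mathcal{U}(t)$ and the roles of the two classes.

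\textbf{Gradient formula.} With $\bm{W}_B,\bm{W}_C$ fixed at initialization (effectively identity-like), the Mamba output is
\[
\bm{y}_l=\sum_{s\le l}\Big(\prod_{j=s+1}^{l}(1-\sigma_j)\Big)\sigma_s\,(\bm{x}_s^\top\bm{x}_l)\,\bm{x}_s,
\qquad \sigma_j:=\sigma(\bm{w}_\Delta^\top\bm{x}_j).
\]
Differentiating, and using $\sigma'=\sigma(1-\sigma)$, gives
\[
\frac{\partial \bm{y}_l}{\partial \bm{w}_\Delta}
=\sum_{s\le l}\Big(\prod_{j=s+1}^{l}(1-\sigma_j)\Big)(\bm{x}_s^\top\bm{x}_l)\,\bm{x}_s
\Big[(1-\sigma_s)\bm{x}_s-\sigma_s\sum_{j=s+1}^{l}\sigma_j\bm{x}_j\Big].
\]
Only samples with $z^{(n)}F(\bm{X}^{(n)})<1$ contribute, and each contributes $z^{(n)}\frac1L\sum_{l,i}v_i\,\phi'(\cdot)\,\bm{W}_{O(i,\cdot)}\,\partial\bm{y}_l/\partial\bm{w}_\Delta$.

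\textbf{Main term.} We project onto $\bm{o}_+$ and drop the noise (absorbed into $\mathcal{O}(\tau)$ and the concentration term). In the locality data the tokens are orthonormal features, so $\bm{x}_s^\top\bm{x}_l\neq 0$ essentially only when the two tokens share a feature. The dominant surviving terms are therefore:
\begin{itemize}
\item the diagonal terms $s=l$ with $\bm{x}_l=\bm{o}_+$, which carry the factor $\sigma(\langle\bm{w}_\Delta,\bm{o}_+\rangle)(1-\sigma(\langle\bm{w}_\Delta,\bm{o}_+\rangle))$;
\item the terms pairing the two $\bm{o}_+$ tokens, which carry extra decay products.
\end{itemize}
For a positive sample and a lucky neuron $i\in\mathcal{W}(t)$, we have $v_i=1/\sqrt m$, $\phi'=1$ on the $\bm{o}_+$ tokens, and $\bm{W}_{O(i,\cdot)}\bm{o}_+\ge r_1^*$. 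Lower-bounding $\sigma(1-\sigma)$ by $\sigma(p_1)(1-\sigma(q_1))$, the per-neuron contribution is at least $\sigma(p_1)(1-\sigma(q_1))r_1^*/\sqrt m$. Summing over the $\rho_t^+$ lucky neurons gives the first term. The off-diagonal terms involve the cross-feature product $\sigma_s\sigma_j$; these we bound as lower order or show to be nonnegative, using the locality decay $(1-\sigma(p_2))^{\Delta L}$.

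\textbf{Negative term.} In negative samples, neurons in $\mathcal{K}_+$ activated by $\bm{o}_+$ push the gradient the other way. We upper-bound their total effect using $\bm{W}_{O(i,\cdot)}\bm{o}_+\le s_1^*$ and $\sigma(1-\sigma)\le\sigma(q_1)(1-\sigma(p_1))$. A crude count over at most $m/2$ neurons, each weighted by $1/\sqrt m$, gives at most $\sqrt m\,s_1^*\sigma(q_1)(1-\sigma(p_1))/2$. We sum rather than average over tokens here because there are only two $\bm{o}_+$ tokens, so the $1/L$ factor is absorbed into constants.

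\textbf{Remaining contributions.} Neurons in $\mathcal{K}_-$ and unlucky neurons are controlled by the earlier unlucky lemmas (Lemma~\ref{lem:positive_cls_unlucky_new} and Lemma~\ref{lem:unlucky_negative_new}), since their $\bm{o}_+$ components stay $\widetilde{\mathcal{O}}(1/\mathrm{poly}(d))$. Finally, we replace population averages by empirical ones using Hoeffding's inequality over the $N$ samples with a union bound over neurons. This yields the $\mathcal{O}(\sqrt{d\log N/(mN)})$ error.

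\textbf{Main obstacle.} The hardest step is handling the off-diagonal multiplicative chain terms, which have the form $-\sigma_s\sum_j\sigma_j\bm{x}_j$ and appear when the $\bm{o}_+$ tokens are separated by irrelevant tokens. We must show these terms neither flip the sign nor exceed the stated constants. The plan is to bound each such term by $(1-\sigma(p_2))^{\Delta L-2}\le 1$ times the same $\sigma(1-\sigma)$ factors, and absorb it into the two main terms. If that absorption fails, the fallback is to add a $(1-\sigma(p_2))^{\Delta L}$ correction to the bound.
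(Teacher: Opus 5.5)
You split the gradient the same way the paper does: lucky neurons on positive samples give the first term, $\mathcal{K}_+$ on negative samples with the $m/2$ count gives the second, and $\mathcal{K}_-$ is an error term. But the step you flag as the main obstacle is a genuine gap, and neither of your two options closes it. With the correct chain rule (the first entry of your bracket should be $\sigma_s(1-\sigma_s)\bm{x}_s$, as you use later), take a positive sample and the $\bm{o}_+$-component of $\bm{W}_{O(i,\cdot)}\,\partial\bm{y}_{L_2^{+}}/\partial\bm{w}_\Delta$. Its $s=L_1^{+}$ summand is
\[
\bigl\langle \bm{W}_{O(i,\cdot)}^{\top},\bm{o}_+\bigr\rangle\,\sigma_+\bigl(1-2\sigma_+\bigr)\prod_{r=L_1^{+}+1}^{L_2^{+}}\bigl(1-\sigma_r\bigr),\qquad \sigma_+:=\sigma\bigl(\langle \bm{w}_\Delta^{(t)},\bm{o}_+\rangle\bigr).
\]
\begin{itemize}
\item It is not nonnegative. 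For a lucky neuron it enters the gradient with a negative sign as soon as $\langle \bm{w}_\Delta^{(t)},\bm{o}_+\rangle>0$.
\item It is not lower order. The product runs over the \emph{short} gap $\Delta L_{\bm{o}_+}^+$, which the locality model makes non-negligible; it is precisely the signal factor in Lemma~\ref{lem:positive_cls_lucky_new}. Locality only suppresses the analogous negative-sample term over $\Delta L_{\bm{o}_+}^-$.
\end{itemize}
Bounding the product by $1$ therefore leaves a negative term that can be comparable to the diagonal ones. So the coefficient $\sigma(p_1)(1-\sigma(q_1))r_1^*\rho_t^+/\sqrt{m}$ is not recovered. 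Your fallback proves a different statement, and it uses a $p_2$ that is not among the lemma's hypotheses.

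The same $-\sigma_s\sum_j\sigma_j\bm{x}_j$ part also projects onto $\bm{o}_+$ whenever an $\bm{o}_+$ token lies strictly inside an $\bm{o}_-$ pair. That couples the large $\bm{o}_-$-weights of $\mathcal{U}(t)$ into this direction. Your treatment of $\mathcal{K}_-$ only uses that their $\bm{o}_+$-weights are small, so it does not cover this.

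\textbf{How the paper handles it, and why you cannot copy it.} In Case III the paper asserts $\beta_{s,s}=\beta_{s,j}$, i.e.\ $\langle\bm{I}^{(n)}_{l,s},\bm{o}_+\rangle=0$ exactly. Its terms (iii)--(iv) likewise ignore the $\bm{o}_+$-projection of $\bm{o}_-$ pairs. That cancellation comes from its displayed gradient, whose bracket is $(1-\sigma_s)\bm{x}_s-\sum_j(1-\sigma_j)\bm{x}_j$. The logarithmic derivative of $\prod_r(1-\sigma_r)$ is actually $-\sum_j\sigma_j\bm{x}_j$, as in your formula, and the two agree only where $\sigma_j=\tfrac12$.

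\textbf{What does control the term} is the factor $1-2\sigma_+$. After summing over $\mathcal{W}(t)$ and the balanced batch, the term is at most $\frac{\rho_t^+ s_1^*}{2\sqrt{m}}\sup_{x\in[p_1,q_1]}\sigma(x)(1-\sigma(x))\lvert 1-2\sigma(x)\rvert$ in the lemma's normalization. Since $\lvert 1-2\sigma(x)\rvert\le\lvert x\rvert/2$, this is small exactly when $\max\{\lvert p_1\rvert,\lvert q_1\rvert\}$ is. That is the regime $\langle\bm{w}_\Delta^{(t)},\bm{o}_+\rangle\approx 0$ in which the proof of Theorem~\ref{theorem:generalization-locality-structured data} uses the lemma. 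For general $p_1,q_1$ the term must be carried explicitly.

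\textbf{Two smaller points.}
\begin{itemize}
\item The $1/L$ is not a constant and cannot be absorbed; both main terms genuinely carry it. The paper drops it the same way, which preserves only the comparison between the two terms.
\item Hoeffding on the $\bm{w}_\Delta$-gradient does not give $\sqrt{d\log N/(mN)}$, because each per-sample summand aggregates $\Theta(m)$ neurons with weight $1/\sqrt{m}$. In the paper that error comes instead from bounding $\langle\bm{W}_{O(i,\cdot)}^{\top},\bm{o}_+\rangle\le\delta_1+\mathcal{O}(\sqrt{d\log N/(mN)})$ for $i\in\mathcal{K}_-$, with the balanced batch used directly.
\end{itemize}
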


Lemma \ref{lemma:6_new} establishes bounds for the gradient updates of $\bm{w}_\Delta$ in the directions of irrelevant features. 
\begin{lemma}\label{lemma:6_new}
Suppose \(p_1 \leq \langle \bm{w}_\Delta^{(t)}, \bm{o}_+ \rangle \leq q_1\), \(p_1 \leq \langle \bm{w}_\Delta^{(t)}, \bm{o}_- \rangle \leq q_1\),
\(\langle \bm{w}_\Delta^{(t)}, \bm{o}_j \rangle  \leq q_2\) for \(j \neq 1,2\),
and \(r_1^{*} \leq \langle {\bm{W}_{O(i,\cdot)}^{(t)}}^{\top}, \bm{o}_+ \rangle\).
Let \(\rho_t^{+} = \lvert \mathcal{W}(t) \rvert\) and
\(\rho_t^{-} = \lvert \mathcal{U}(t) \rvert\).
Then we have:
\begin{equation}
\small
\begin{split}
    \left\langle - \frac{\partial \hat{\mathcal{L}}}{\partial \bm{w}_\Delta^{(t)}}, \bm{o}_j \right\rangle 
&\leq -\frac{r_1^{*}}{2\sqrt{m}} \cdot  \sigma(p_1)\left(1 - \sigma(q_1)\right) \left[ \left(1 - \sigma(q_2)\right)^{\Delta L_{\bm{o}_+}^+}\rho_t^{+} +  
 \left(1 - \sigma(q_2)\right)^{\Delta L_{\bm{o}_-}^- }\rho_t^{-} 
\right]\\
 &+\mathcal{O}\left( \left(1 - \sigma(p_2)\right)^{\Delta L_{\bm{o}_+}^-} \right) + \mathcal{O}\left( \left(1 - \sigma(p_2)\right)^{\Delta L_{\bm{o}_-}^+} \right)+
\mathcal{O}\left( \sqrt{ \frac{d \log N}{m N} } \right).
\end{split}
\end{equation}

\end{lemma}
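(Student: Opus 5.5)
The plan is to write out $\partial \hat{\mathcal{L}}/\partial \bm{w}_\Delta$ explicitly and project it onto an irrelevant direction $\bm{o}_j$, $j\neq 1,2$. Since $\bm{W}_B,\bm{W}_C$ stay at their initial values, $\bm{y}_l$ is the gated sum in (\ref{eq:mamba_output}). Differentiating the gate weights $\beta^{(l)}_s := \sigma(\bm{w}_\Delta^\top\bm{x}_s)\prod_{k=s+1}^l(1-\sigma(\bm{w}_\Delta^\top\bm{x}_k))$ with respect to $\bm{w}_\Delta$ gives two kinds of terms. The first is a diagonal term $\sigma'(\cdot)\bm{x}_s\prod(\cdot)$. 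The second is a sum of off-diagonal terms $-\sigma(\bm{w}_\Delta^\top\bm{x}_s)\sigma'(\bm{w}_\Delta^\top\bm{x}_k)\bm{x}_k\prod_{k'\neq k}(\cdot)$, one for each intermediate position $k\in(s,l]$. Projecting onto $\bm{o}_j$ keeps only those terms whose differentiated token is a noisy copy of $\bm{o}_j$, up to $\mathcal{O}(\tau)$ cross terms. The diagonal part then vanishes for sources $\bm{x}_s\in\{\bm{o}_\pm\}$. What remains is the off-diagonal contribution: irrelevant tokens lying between a relevant source $s$ and the query $l$.

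The gradient is multiplied by the hinge indicator, by $z^{(n)}v_i$, by $\mathbb{1}[\bm{W}_{O(i,\cdot)}\bm{y}_l>0]$ and by $\bm{W}_{O(i,\cdot)}\bm{x}_s$. I would therefore split neurons into the lucky sets $\mathcal{W}(t)$, $\mathcal{U}(t)$ and the rest. For $i\in\mathcal{W}(t)$ on a positive sample, take the pair of $\bm{o}_+$ tokens separated by $\Delta L^+_{\bm{o}_+}$ and let $l$ be the later one and $s$ the earlier. The weight $\bm{W}_{O(i,\cdot)}\bm{o}_+\ge r_1^*$ and $\sigma(p_1)(1-\sigma(q_1))$ factors from the relevant gates appear. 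The intervening irrelevant tokens contribute a negative sign times a product bounded below by $(1-\sigma(q_2))^{\Delta L^+_{\bm{o}_+}}$. Together with $v_i=1/\sqrt m$, the $\tfrac12$ from label balance and the $1/L$ normalization, this gives the term $-\frac{r_1^*}{2\sqrt m}\sigma(p_1)(1-\sigma(q_1))(1-\sigma(q_2))^{\Delta L^+_{\bm{o}_+}}\rho_t^+$. The symmetric argument for $\mathcal{U}(t)$ on negative samples with $\Delta L^-_{\bm{o}_-}$ gives the $\rho_t^-$ term.

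The opposing contributions come from lucky neurons evaluated on the wrong class. For example, $i\in\mathcal{W}(t)$ on a negative sample, where the confusion pair $\bm{o}_+$ is separated by $\Delta L^-_{\bm{o}_+}$, gives a positive sign. I would bound each such term crudely by dropping all gate factors except the product over the gap, giving $\mathcal{O}((1-\sigma(p_2))^{\Delta L^-_{\bm{o}_+}})$, and similarly $\mathcal{O}((1-\sigma(p_2))^{\Delta L^+_{\bm{o}_-}})$. Unlucky neurons are handled by Lemmas \ref{lem:positive_cls_unlucky_new} and \ref{lem:unlucky_negative_new}: their activations on $\bm{o}_\pm$ are small, so they are absorbed. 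Finally, the replacement of the empirical average by the population expectation is controlled by a standard concentration bound (Hoeffding plus a union bound over $d$ directions). This yields the $\mathcal{O}(\sqrt{d\log N/(mN)})$ term, in the same way as in Lemma \ref{lemma:6}.

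The main obstacle is making sure the sign of the off-diagonal gating derivative is tracked correctly and that the dominant negative term is not cancelled. Terms with source $s$ far before $l$ decay geometrically and are dominated. However, the ReLU indicator and the hinge-active set depend on $t$. I would first assume the hinge is active for a constant fraction of samples, as in the induction of Theorem \ref{theorem:generalization-majorityvoting}, and bound the indicator using $r_1^*>0$ for lucky neurons.
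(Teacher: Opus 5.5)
\paragraph{The failing step: the lower bound on the dominant term.} Your decomposition is the paper's. It uses the same four (neuron set, label) groups. Lucky neurons on their own class supply the negative terms, through the $\bm{o}_+$ pair at distance $\Delta L_{\bm{o}_+}^+$ and the $\bm{o}_-$ pair at distance $\Delta L_{\bm{o}_-}^-$. Wrong-class pairs supply the $\mathcal{O}\bigl((1-\sigma(p_2))^{\Delta L_{\bm{o}_+}^-}\bigr)$ and $\mathcal{O}\bigl((1-\sigma(p_2))^{\Delta L_{\bm{o}_-}^+}\bigr)$ remainders, and label balance gives the $\tfrac12$.

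What does not go through is the lower bound on the dominant term. Your off-diagonal derivative is the correct chain rule. Since $\sigma'(z)=\sigma(z)(1-\sigma(z))$, it simplifies to
\[
-\sigma(\bm{w}_\Delta^\top\bm{x}_s)\,\sigma(\bm{w}_\Delta^\top\bm{x}_k)\prod_{r=s+1}^{l}\bigl(1-\sigma(\bm{w}_\Delta^\top\bm{x}_r)\bigr)\,\bm{x}_k ,
\]
so the $\bm{o}_j$-coefficient carries the factor $\sigma(\langle\bm{w}_\Delta,\bm{o}_j\rangle)$, which is \emph{increasing} in $\langle\bm{w}_\Delta,\bm{o}_j\rangle$. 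Bounding its magnitude from below therefore needs a lower bound on $\langle\bm{w}_\Delta,\bm{o}_j\rangle$, and the hypotheses supply only $\langle\bm{w}_\Delta,\bm{o}_j\rangle\le q_2$.

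Even granting $p_2\le\langle\bm{w}_\Delta,\bm{o}_j\rangle$, you get $\sigma(p_2)(1-\sigma(q_2))^{\Delta L_{\bm{o}_+}^+-1}$, not $(1-\sigma(q_2))^{\Delta L_{\bm{o}_+}^+}$. Concretely, suppose every irrelevant gate equals $q_2<0$, which is the regime where the lemma is invoked ($q_2=\gamma^*<0$). Then the true factor $\sigma(q_2)(1-\sigma(q_2))^{\Delta L_{\bm{o}_+}^+-1}$ lies strictly below your claimed lower bound.

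The paper reaches the stated form only because its gradient \eqref{eq:gradient_gating_network} writes the intermediate-token factor in $\beta_{s,j}$ as $1-\sigma(\bm{w}_\Delta^\top\bm{x}_j)$. That factor is decreasing, so $q_2$ alone suffices. However, it agrees with the chain rule only where $\bm{w}_\Delta^\top\bm{x}_j=0$, i.e.\ at $t=0$. Carried out with your correct derivative, the argument yields a $\sigma(p_2)$-weighted inequality that weakens as the irrelevant directions are pushed down, not the stated bound.

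\paragraph{Further points, all shared with the paper's proof.} You say the $1/L$ normalization is accounted for, yet your main term has none. Per sample, only the pair $(l,s)=(L_2^+,L_1^+)$ carries it, so an honest count leaves a factor $1/L$. The paper loses this factor by evaluating $\frac1L\sum_{l=1}^L$ as if every $l$ contributed equally. Since the claim has the form ``$\le -X$'', a bound of $-X/L$ does not imply it.

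For a fixed irrelevant direction $\bm{o}_j$, the term is present only if some token in the short gap between the two $\bm{o}_+$ tokens is a copy of $\bm{o}_j$. You, like the paper (``assuming W.L.O.G.\ $\bm{x}_{s+1}^{(n)}=\bm{o}_k$''), take this for granted. In fact it holds only for a fraction of samples, and that fraction would have to appear as a factor in the bound.

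Finally, the paper's proof ends with $\mathcal{O}(\tau)$ and never produces the $\mathcal{O}\bigl(\sqrt{d\log N/(mN)}\bigr)$ term, so your concentration step is new. If you keep it, note that the per-sample $\bm{w}_\Delta$-gradient sums over all $m$ neurons with weights $\pm1/\sqrt{m}$. Its norm therefore scales like $\sqrt{m}\max_i|\bm{W}_{O(i,\cdot)}\bm{x}_s|$ rather than $1/\sqrt{m}$, so the rate from the $\bm{W}_O$ case does not transfer verbatim.
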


\subsection{Proof of Convergence}
\begin{proof}[Proof of Theorem~\ref{theorem:generalization-locality-structured data}]
Similar to the proof of Theorem~\ref{theorem:generalization-majorityvoting}, the proof starts with the base case at $t=0$ and proceeds to analyze the training dynamics in a deductive manner, providing additional details in deriving the corresponding convergence and sample complexity bounds.

\underline{(\textbf{S1}) Warm-up (Base case): 
Training dynamics at the first iteration $t=0$.}

Recall that we set \(\bm{w}_\Delta^{(0)} = \bm{0}\). Then, we have
\[
\langle \bm{w}_\Delta^{(0)}, \bm{o}_+ \rangle = 0, \quad \langle \bm{w}_\Delta^{(0)}, \bm{o}_- \rangle = 0, \quad \text{and} \quad \langle \bm{w}_\Delta^{(0)}, \bm{o}_j \rangle = 0 \quad \forall j.
\]

\underline{(\textbf{S1.1}) Training dynamics of $\bm{W}_{O(i, \cdot)}$ at the first iteration $t=0$.}

From Lemma~\ref{lem:positive_cls_lucky_new}, identify \(p_1 = 0\), \(q_1 = 0\), \(p_2 = 0\) and \(q_2 = 0\). Let \(\Delta L_{\bm{o}_+}^+\) and \(\Delta L_{\bm{o}_-}^+\) be the distance between two \(\bm{o}_+\) and \(\bm{o}_-\) features respectively in the positive sample. Similarly, in a negative sample, let the distance between the two
$\bm{o}_+$ tokens as $\Delta L_{\bm{o}_+}^-$, and the distance between the two
$\bm{o}_-$ tokens as $\Delta L_{\bm{o}_-}^-$.
Then, for any lucky neuron \(i \in \mathcal{W}(0)\), we obtain

\begin{align}
&\frac{c'^2}{2\sqrt{m}L} 
\left[ \left( \frac{1}{2} \right)^{\Delta L_{\bm{o}_+}^+-2} - \left( \frac{1}{2} \right)^{\Delta L_{\bm{o}_+}^--2} \right]
- \widetilde{\mathcal{O}}\left(\frac{1}{\mathrm{poly}(d)}\right) \notag \\
&\quad \leq \left\langle  - \frac{\partial \hat{\mathcal{L}}}{\partial \bm{W}_{O(i, \cdot)}^{(0)}} , \bm{o}_+ \right\rangle \notag \\
&\quad \leq \frac{1}{2\sqrt{m}L} 
\left[ \left( \frac{1}{2} \right)^{\Delta L_{\bm{o}_+}^+} - \left( \frac{1}{2} \right)^{\Delta L_{\bm{o}_+}^-} \right]
+ \widetilde{\mathcal{O}}\left(\frac{1}{\mathrm{poly}(d)}\right) 
\end{align}

\begin{align}
\text{and } \quad
\left\langle  - \frac{\partial \hat{\mathcal{L}}}{\partial \bm{W}_{O(i, \cdot)}^{(0)}} , \bm{o}_j \right\rangle \leq  
 \widetilde{\mathcal{O}}\left(\frac{1}{\mathrm{poly}(d)}\right) \quad \text{for } j \neq 1.
\end{align}

Recall that we set the number of samples in a batch \(N = \mathrm{poly}(d)\).

Suppose the initialization is
\begin{equation}
\bm{W}_{O(i, \cdot)}(0) = \delta_1 \bm{o}_+ + \delta_2 \bm{o}_- + \cdots + \delta_d \bm{o}_d, 
\quad \delta_j \overset{\text{i.i.d.}}{\sim} \mathcal{N}(0, \xi^2) \quad j= 1,2,\cdots,d .
\end{equation}

Then, after one gradient descent step, we have

\begin{align}
&\delta_1 + \frac{\eta c'^2}{2\sqrt{m}L} 
\left[ \left( \frac{1}{2} \right)^{\Delta L_{\bm{o}_+}^+-2} - \left( \frac{1}{2} \right)^{\Delta L_{\bm{o}_+}^--2} \right]
- \widetilde{\mathcal{O}}\left(\frac{1}{\mathrm{poly}(d)}\right) \notag \\
&\quad \leq
\left\langle {\bm{W}_{O(i, \cdot)}^\top}^{(1)}, \bm{o}_+ \right\rangle \notag \\   
&\quad \leq
\delta_1 + \frac{\eta }{2\sqrt{m}L} 
\left[ \left( \frac{1}{2} \right)^{\Delta L_{\bm{o}_+}^+} - \left( \frac{1}{2} \right)^{\Delta L_{\bm{o}_+}^-} \right]
+ \widetilde{\mathcal{O}}\left(\frac{1}{\mathrm{poly}(d)}\right) 
\end{align}

\begin{equation}
\text{and} \quad
\left\langle {\bm{W}_{O(i, \cdot)}^\top}^{(1)}, \bm{o}_j \right\rangle   \leq \widetilde{\mathcal{O}}\left(\frac{1}{\mathrm{poly}(d)}\right) \quad \text{for } j \neq 1 .
\end{equation}

By applying Lemma~\ref{lem:negative_cls_lucky_new}, for any lucky neuron \(i \in \mathcal{U}(0)\), we obtain
\begin{align}
&\delta_2 + \frac{\eta c'^2}{2\sqrt{m}L} 
\left[ \left( \frac{1}{2} \right)^{\Delta L_{\bm{o}_-}^--2} - \left( \frac{1}{2} \right)^{\Delta L_{\bm{o}_-}^+-2} \right]
- \widetilde{\mathcal{O}}\left(\frac{1}{\mathrm{poly}(d)}\right) \notag \\
&\quad \leq
\left\langle {\bm{W}_{O(i, \cdot)}^\top}^{(1)}, \bm{o}_+ \right\rangle \notag \\   
&\quad \leq
\delta_2 + \frac{\eta }{2\sqrt{m}L} 
\left[ \left( \frac{1}{2} \right)^{\Delta L_{\bm{o}_-}^-} - \left( \frac{1}{2} \right)^{\Delta L_{\bm{o}_-}^+} \right]
+ \widetilde{\mathcal{O}}\left(\frac{1}{\mathrm{poly}(d)}\right) 
\end{align}

\begin{equation}
\text{and} \quad
\left\langle {\bm{W}_{O(i, \cdot)}^\top}^{(1)}, \bm{o}_j \right\rangle   \leq \widetilde{\mathcal{O}}\left(\frac{1}{\mathrm{poly}(d)}\right) \quad \text{for } j \neq 2 .
\end{equation}

For any unlucky neuron \(i \in \mathcal{K}_{-} \setminus \mathcal{U}(0)\), Lemma~\ref{lem:unlucky_negative_new} gives
\begin{equation}
\left\langle {\bm{W}_{O(i, \cdot)}^\top}^{(1)}, \bm{o}_j \right\rangle   \leq \widetilde{\mathcal{O}}\left(\frac{1}{\mathrm{poly}(d)}\right) \quad \text{for } \forall j .
\end{equation}
\vspace{1mm}

\underline{(\textbf{S1.2}) Training dynamics of $\bm{w}_{
\Delta}$ at the first iteration $t=0$.} 

Now consider the gradient update for \(\bm{w}_\Delta\). Define:
\[a = \delta_1 + \frac{\eta c'^2}{2\sqrt{m}L} 
\left[ \left( \frac{1}{2} \right)^{\Delta L_{\bm{o}_+}^+-2} - \left( \frac{1}{2} \right)^{\Delta L_{\bm{o}_+}^--2} \right] - \widetilde{\mathcal{O}}\left(\frac{1}{\mathrm{poly}(d)}\right)\]
\[b = \delta_1 + \frac{\eta }{2\sqrt{m}L} 
\left[ \left( \frac{1}{2} \right)^{\Delta L_{\bm{o}_+}^+} - \left( \frac{1}{2} \right)^{\Delta L_{\bm{o}_+}^-} \right] + \widetilde{\mathcal{O}}\left(\frac{1}{\mathrm{poly}(d)}\right) \]

Applying Lemma \ref{lemma:5_new} with \(p_1 = 0\), \(q_1 = 0\), \(r_1^* = a\), \(s_1^* = b\), and \(\rho^+_0 = |\mathcal{W}(0)|\), we get

\begin{equation}
\left\langle - \frac{\partial \hat{\mathcal{L}}}{\partial \bm{w}_\Delta^{(0)}}, \bm{o}_+ \right\rangle 
\geq \frac{a}{4\sqrt{m}}\cdot \rho^+_0  - \frac{b\sqrt{m}}{8} -
\widetilde{\mathcal{O}}\left(\frac{1}{\mathrm{poly}(d)}\right)
\end{equation}

We can relax this lower bound and obtain
\begin{equation}
\left\langle - \frac{\partial \hat{\mathcal{L}}}{\partial \bm{w}_\Delta^{(0)}}, \bm{o}_+ \right\rangle 
\geq \frac{c'}{4} \left[ \frac{2a}{\sqrt{m}} \cdot \rho^+_0 - \sqrt{m} b  \right] -
\widetilde{\mathcal{O}}\left(\frac{1}{\mathrm{poly}(d)}\right)
=: \alpha 
\end{equation}

Recall that \(\delta_1 = \frac{1}{\mathrm{poly}(d)}\). 
Since \(a - b =\widetilde{\mathcal{O}}\left(\frac{1}{\mathrm{poly}(d)}\right)\) that is sufficiently small, 
\begin{align}
\alpha &= \frac{c'}{4} \left[ \frac{2a}{\sqrt{m}} \cdot \frac{m}{2} - \sqrt{m}b \right] 
- \widetilde{\mathcal{O}}\left( \frac{1}{\mathrm{poly}(d)} \right) \notag \\
&= \frac{c'}{4} \left[\sqrt{m}a - \sqrt{m}a \right] 
- \widetilde{\mathcal{O}}\left( \frac{1}{\mathrm{poly}(d)} \right)  
 \notag \\
&= 0 
- \widetilde{\mathcal{O}}\left( \frac{1}{\mathrm{poly}(d)} \right) \notag \\
&= 
- \widetilde{\mathcal{O}}\left( \frac{1}{\mathrm{poly}(d)} \right) \approx 0 
\label{alpha_simplification_new}
\end{align}

From Lemma \ref{lemma:6_new}, we also obtain
\begin{equation}
\left\langle - \frac{\partial\hat{\mathcal{L}}}{\partial \bm{w}_\Delta^{(0)}}, \bm{o}_j \right\rangle 
\leq \frac{-a}{8\sqrt{m}} \left[ \left( \frac{1}{2} \right)^{\Delta L_{\bm{o}_+}^+} \cdot \rho^+_0 + \left( \frac{1}{2} \right)^{\Delta L_{\bm{o}_-}^-} \cdot \rho^-_0  \right] 
\end{equation}

where we apply the lemma with the values
\[
p_1 = 0, \quad q_1 = 0,  \quad q_2 = 0, \quad \text{and} \quad r_1^* = a.
\]

We can relax this upper bound and obtain
\begin{equation}
\left\langle - \frac{\partial \hat{\mathcal{L}}}{\partial \bm{w}_\Delta^{(0)}}, \bm{o}_j \right\rangle 
\leq \frac{-ac'}{4\sqrt{m}} \left[ \left( \frac{1}{2} \right)^{\Delta L_{\bm{o}_+}^+} \cdot \rho^+_0 + \left( \frac{1}{2} \right)^{\Delta L_{\bm{o}_-}^-} \cdot \rho^-_0 \right] =: \gamma
\end{equation}

Taking \(\rho^+_0 = \rho^-_0 = \frac{m}{2} + \widetilde{\mathcal{O}}\left( \frac{1}{\mathrm{poly}(d)} \right) \), we can simplify and write
\begin{equation}
    \begin{split}
        \gamma &= \frac{-ac'}{4\sqrt{m}} \cdot \frac{m}{2} \left[ \left( \frac{1}{2} \right)^{\Delta L_{\bm{o}_+}^+}   + \left( \frac{1}{2} \right)^{\Delta L_{\bm{o}_-}^-}   \right] \\
 &= -\sqrt{m}a \cdot \frac{c'}{8} \left[ \left( \frac{1}{2} \right)^{\Delta L_{\bm{o}_+}^+}   + \left( \frac{1}{2} \right)^{\Delta L_{\bm{o}_-}^-}   \right] \\
&= \frac{-\eta c'^3}{16L} \left[ \left( \frac{1}{2} \right)^{\Delta L_{\bm{o}_+}^+-2} - \left( \frac{1}{2} \right)^{\Delta L_{\bm{o}_+}^--2}  \right] \left[ \left( \frac{1}{2} \right)^{\Delta L_{\bm{o}_+}^+}   + \left( \frac{1}{2} \right)^{\Delta L_{\bm{o}_-}^-}   \right]\\
&\qquad \qquad - \widetilde{\mathcal{O}}\left( \frac{1}{\mathrm{poly}(d)} \right) . 
    \end{split}
\end{equation}

\vspace{1mm}

\underline{(\textbf{S2}) Induction Step: Training dynamics at a general iteration $t$.} 

Let \(\langle \bm{w}_\Delta^{(t)}, \bm{o}_+ \rangle = \alpha^* \geq \alpha \cdot t\), \(\langle \bm{w}_\Delta^{(t)}, \bm{o}_- \rangle = \beta^* \geq \beta \cdot t\), and \(\langle \bm{w}_\Delta^{(t)}, \bm{o}_j \rangle = \gamma^* \leq \gamma \cdot t\), where

\begin{equation}
\beta = \frac{c'}{4} \left[ \frac{2a'}{\sqrt{m}} \cdot \rho^-_0 - \sqrt{m} b'  \right] -
\widetilde{\mathcal{O}}\left(\frac{1}{\mathrm{poly}(d)}\right) > 0    
\end{equation}

\[a' = \delta_2 + \frac{\eta c'^2}{2\sqrt{m}L} 
\left[ \left( \frac{1}{2} \right)^{\Delta L_{\bm{o}_-}^--2} - \left( \frac{1}{2} \right)^{\Delta L_{\bm{o}_-}^+-2} \right] - \widetilde{\mathcal{O}}\left(\frac{1}{\mathrm{poly}(d)}\right)\]
\[b' = \delta_2 + \frac{\eta }{2\sqrt{m}L} 
\left[ \left( \frac{1}{2} \right)^{\Delta L_{\bm{o}_-}^-} - \left( \frac{1}{2} \right)^{\Delta L_{\bm{o}_-}^+} \right] + \widetilde{\mathcal{O}}\left(\frac{1}{\mathrm{poly}(d)}\right) \]

Following the same approach as in \eqref{alpha_simplification_new}, we can simplify and obtain
\begin{equation}
\beta = - \widetilde{\mathcal{O}}\left( \frac{1}{\mathrm{poly}(d)} \right).  
\end{equation}

For any lucky neuron \( i \in \mathcal{W}(t) \) at the $(t+1)$-th iteration, we have
\begin{equation}
    \begin{split}
        &\frac{c'^2}{\sqrt{m} L} \cdot \sigma(\alpha^*) \left[ \left(1 - \sigma(\gamma^*)\right)^{\Delta L_{\bm{o}_+}^+-2} - \left(1 - \sigma(\gamma^*)\right)^{\Delta L_{\bm{o}_+}^--2} \right]
- \widetilde{\mathcal{O}}\left(\frac{1}{\mathrm{poly}(d)}\right) \\
&\quad \leq \left\langle  - \frac{\partial \hat{\mathcal{L}}}{\partial \bm{W}_{O(i, \cdot)}^{(t)}} , \bm{o}_+ \right\rangle \\
&\quad \leq \frac{1}{\sqrt{m} L} \cdot \sigma(\alpha^*) \cdot \left(1 - \sigma(\alpha^*)\right)^{2} \left[ \left(1 - \sigma(\gamma^*)\right)^{\Delta L_{\bm{o}_+}^+-2} - \left(1 - \sigma(\gamma^*)\right)^{\Delta L_{\bm{o}_+}^--2} \right]\\
&\qquad \qquad + \widetilde{\mathcal{O}}\left(\frac{1}{\mathrm{poly}(d)}\right) 
    \end{split}
\end{equation}

\begin{align}
    \left\langle  - \frac{\partial \hat{\mathcal{L}}}{\partial \bm{W}_{O(i, \cdot)}^{(1)}} , \bm{o}_j \right\rangle \leq  
 \widetilde{\mathcal{O}}\left(\frac{1}{\mathrm{poly}(d)}\right) \quad \text{for } j \neq 1
\end{align}

Note that, \(\sigma(\alpha^*) > \frac{1}{2}\) and \(\sigma(\gamma^*) <  \frac{1}{2}\).

Thus, we obtain the following bound after the second gradient descent step.
\begin{equation}\label{eqn: lower_tem}
\small
\begin{split}
        &\left\langle (\bm{W}_{O(i,\cdot)}^{(2)})^{\top},\, \bm{o}_+ \right\rangle\\
        \ge &\delta_1
+ \frac{\eta c'^2}{\sqrt{m} L}
  \Bigg[
    \left( \frac{1}{2} \right)^{\Delta L_{\bm{o}_+}^+-1} - \left( \frac{1}{2} \right)^{\Delta L_{\bm{o}_+}^--1}
    \\
    &+ \sigma(\alpha^*)
      \left(
        \left(1 - \sigma(\gamma^*)\right)^{\Delta L_{\bm{o}_+}^+-2} - \left(1 - \sigma(\gamma^*)\right)^{\Delta L_{\bm{o}_+}^--2}
      \right)
  \Bigg]
-\widetilde{\mathcal{O}}\left(\frac{1}{\mathrm{poly}(d)}\right).
\end{split}
\end{equation}
and 
\begin{equation}\label{eqn: upper_tem}
\small
\begin{split}
        &\left\langle (\bm{W}_{O(i,\cdot)}^{(2)})^{\top},\, \bm{o}_+ \right\rangle\\
        \le &\delta_1
+ \frac{\eta }{2\sqrt{m}L}
  \Bigg[
 \left( \frac{1}{2} \right)^{\Delta L_{\bm{o}_-}^-} - \left( \frac{1}{2} \right)^{\Delta L_{\bm{o}_-}^+} \\
    &+ 2\sigma(\alpha^*) \cdot \left(1 - \sigma(\alpha^*)\right)^{2}
      \left(
\left(1 - \sigma(\gamma^*)\right)^{\Delta L_{\bm{o}_+}^+-2} - \left(1 - \sigma(\gamma^*)\right)^{\Delta L_{\bm{o}_+}^--2}
      \right)
  \Bigg] 
+ \widetilde{\mathcal{O}}\left(\frac{1}{\mathrm{poly}(d)}\right).
\end{split}
\end{equation}
For the convenience of presentation, we use $u$ to denote the lower bound in \eqref{eqn: lower_tem}, and $v$ to denote the upper bound in \eqref{eqn: upper_tem}.

Similarly, applying Lemma~\ref{lem:negative_cls_lucky_new} to any lucky neuron \( i \in \mathcal{U}(1) \) at iteration \(2\), we get

\begin{equation}
\small
    \begin{split}
         \left\langle  - \frac{\partial \hat{\mathcal{L}}}{\partial \bm{W}_{O(i, \cdot)}^{(1)}} , \bm{o}_- \right\rangle 
         \geq& \frac{c'^2}{\sqrt{m} L} \cdot \sigma(\beta^*) \left[ \left(1 - \sigma(\gamma^*)\right)^{\Delta L_{\bm{o}_-}^--2} - \left(1 - \sigma(\gamma^*)\right)^{\Delta L_{\bm{o}_-}^+-2} \right]\\
&\qquad - \widetilde{\mathcal{O}}\left(\frac{1}{\mathrm{poly}(d)}\right) =:u,
    \end{split}
\end{equation}
 
\begin{equation}
\small
    \begin{split}
        \left\langle  - \frac{\partial \hat{\mathcal{L}}}{\partial \bm{W}_{O(i, \cdot)}^{(1)}} , \bm{o}_- \right\rangle
        \leq& \frac{1}{\sqrt{m} L} \cdot \sigma(\beta^*) \cdot \left(1 - \sigma(\beta^*)\right)^{2} \left[ \left(1 - \sigma(\gamma^*)\right)^{\Delta L_{\bm{o}_-}^--2} - \left(1 - \sigma(\gamma^*)\right)^{\Delta L_{\bm{o}_-}^+-2} \right]\\
& \qquad + \widetilde{\mathcal{O}}\left(\frac{1}{\mathrm{poly}(d)}\right)=:v,
    \end{split}
\end{equation}

and
\begin{align}
    \left\langle  - \frac{\partial \hat{\mathcal{L}}}{\partial \bm{W}_{O(i, \cdot)}^{(1)}} , \bm{o}_j \right\rangle \leq  
 \widetilde{\mathcal{O}}\left(\frac{1}{\mathrm{poly}(d)}\right) \quad \text{for } j \neq 1
\end{align}

Applying Lemma~\ref{lemma:5_new} with \(p_1 = \alpha^*\), \(q_1 = \alpha^*\), \(r_1^* = u\), and \(s_1^* = v\), we obtain

\begin{equation}
\left\langle - \frac{\partial \hat{\mathcal{L}}}{\partial \bm{w}_\Delta^{(1)}}, \bm{o}_+ \right\rangle
\geq
\frac{\sigma(\alpha^*)c'}{2}
\left[
  \frac{2u}{\sqrt{m}} \cdot \rho_1^{+}
  - \sqrt{m}v  \right] 
      - \widetilde{\mathcal{O}}\left(\frac{1}{\mathrm{poly}(d)}\right)=:\chi
\end{equation}
Since \(\rho_1^{+} = \frac{m}{2}\) and \(u - v =\widetilde{\mathcal{O}}\left(\frac{1}{\mathrm{poly}(d)}\right)\) that is sufficiently small, we have \(\chi \approx 0\).

By applying Lemma~\ref{lemma:6_new} with
\[
p_1 = \alpha^*(= \beta^*), \quad q_1 = \alpha^*(= \beta^*), \quad q_2 = \gamma^*,\text{and}\quad r_1^* = u, \text{we have}
\]
\begin{equation}
\left\langle - \frac{\partial \hat{\mathcal{L}}}{\partial \bm{w}_\Delta^{(t)}}, \bm{o}_j \right\rangle 
\leq -\frac{c'u}{2\sqrt{m}}\sigma(\alpha^*)
\left[
  \left(1 - \sigma(\gamma^*)\right)^{\Delta L_{\bm{o}_+}^+}\rho_t^{+}
  \;+\;
  \left(1 - \sigma(\gamma^*)\right)^{\Delta L_{\bm{o}_-}^-}\rho_t^{-}
\right]=:\iota
\end{equation}

Note that here we assumed the distribution of \(\Delta L^+\) is identical to \(\Delta L^-\) to have \(\alpha^*= \beta^*\).

\underline{(\textbf{S3}) Induction conclusion: Training dynamics when the algorithm ends.} 

We proceed by induction on $t$: the base case $t=0$ is established in (S1), and the induction step for general $t$ is shown in (S2).
For, any lucky neuron \(i \in \mathcal{W}(T)\), we obtain
\begin{equation}
\left\langle {\bm{W}_{O(i, \cdot)}^\top}^{(T)}, \bm{o}_+ \right\rangle
\geq a T
\end{equation}

\begin{equation}
\left\langle {\bm{W}_{O(i, \cdot)}^\top}^{(T)}, \bm{o}_j \right\rangle   \leq \widetilde{\mathcal{O}}\left(\frac{1}{\mathrm{poly}(d)}\right) \quad \text{for } j \neq 1
\end{equation}

For any lucky neuron \(i \in \mathcal{U}(T)\), we obtain
\begin{equation}
\left\langle {\bm{W}_{O(i, \cdot)}^\top}^{(T)}, \bm{o}_- \right\rangle
\geq a T
\end{equation}

\begin{equation}
\left\langle {\bm{W}_{O(i, \cdot)}^\top}^{(T)}, \bm{o}_j \right\rangle   \leq \widetilde{\mathcal{O}}\left(\frac{1}{\mathrm{poly}(d)}\right) \quad \text{for } j \neq 2
\end{equation}

Also, we obtain

\begin{equation}
\left\langle \bm{w}_\Delta^{(T)}, \bm{o}_+ \right\rangle
\geq \alpha T,
\end{equation}

\begin{equation}
\left\langle \bm{w}_\Delta^{(T)}, \bm{o}_- \right\rangle
\geq \beta T,
\end{equation}

\begin{equation}
\text{and} \quad
\left\langle \bm{w}_\Delta^{(T)}, \bm{o}_j \right\rangle
\leq \gamma T.
\end{equation}

\underline{(\textbf{S4}) Derivation for the generalization bound.} 

We will demonstrate that once the weights have converged at iteration $T$, the model accurately captures the underlying data distribution, which leads to zero generalization error, as shown in \eqref{zero_generalization_error_conc}. 

Consider \( z^{(n)} = +1 \) as an example.  
The sequence \(\bm{X}^{(n)} = \begin{bmatrix}
\bm{x}_1^{(n)} & \bm{x}_2^{(n)} & \cdots & \bm{x}_L^{(n)}
\end{bmatrix}\) contains two \(\mathbf{o}_+\) at \(L_1^{+}\) and \(L_2^{+}\) and two \(\mathbf{o}_-\) at \(L_1^{-}\) and \(L_2^{-}\).

\begin{align}
F(\bm{X}^{(n)}) 
&= \frac{1}{L} \sum_{l=1}^L \sum_{i=1}^m v_i \, \phi\left( \bm{W}_{O(i, \cdot)} \bm{y}_l^{(n)} \right) \notag \\
&= \frac{1}{\sqrt{m}L} \sum_{i \in \mathcal{K}^+} \sum_{l=1}^L \phi\left( \bm{W}_{O(i, \cdot)} \bm{y}_l^{(n)} \right)
- \frac{1}{\sqrt{m}L} \sum_{i \in \mathcal{K}^-} \sum_{l=1}^L \phi\left( \bm{W}_{O(i, \cdot)} \bm{y}_l^{(n)} \right) \notag \\
&\geq \frac{1}{\sqrt{m}L} \sum_{i \in \mathcal{W}(0)} \sum_{l=1}^L \phi\left( \bm{W}_{O(i, \cdot)} \bm{y}_l^{(n)} \right)
- \frac{1}{\sqrt{m}L} \sum_{i \in \mathcal{U}(0)} \sum_{l=1}^L \phi\left( \bm{W}_{O(i, \cdot)} \bm{y}_l^{(n)} \right) \notag \\
&\quad - \frac{1}{\sqrt{m}L} \sum_{i \in \mathcal{K}^- \setminus \mathcal{U}(0)} \sum_{l=1}^L \phi\left( \bm{W}_{O(i, \cdot)} \bm{y}_l^{(n)} \right)
\label{eq:model_output_locality-structured data}
\end{align}

The Mamba output \(\bm{y}_l^{(n)}\) is defined as
\begin{equation}
\bm{y}_l^{(n)} = \sum_{s=1}^l 
\left( \prod_{j = s+1}^{l} \left(1 - \sigma(\bm{w}_\Delta^\top \bm{x}_j^{(n)})\right) \right) 
\cdot \sigma(\bm{w}_\Delta^\top \bm{x}_s^{(n)}) \cdot (\bm{x}_s^{(n)\top} \bm{x}_l^{(n)}) \bm{x}_s^{(n)}  .
\end{equation}

We now derive a lower bound for 
\[
\sum_{i \in \mathcal{W}(0)} \sum_{l = 1}^{L} \phi(\bm{W}_{O(i, \cdot)} \bm{y}_l).
\]

To that end, consider the aggregated projection
\begin{align}
\sum_{i \in \mathcal{W}(0)} \sum_{l = 1}^{L} \bm{W}_{O(i, \cdot)} \bm{y}_l
&= \sum_{i \in \mathcal{W}(0)} \sum_{l = 1}^{L} \sum_{j=1}^{d} \langle \bm{W}_{O(i, \cdot)}^\top, \bm{o}_j \rangle \cdot \langle \bm{y}_l, \bm{o}_j \rangle.
\end{align}

For any \(i \in \mathcal{W}(0)\), we know that
\begin{equation}
\langle \bm{W}_{O(i, \cdot)}^\top, \bm{o}_+ \rangle \geq aT.
\end{equation}

Hence, let's obtain a lower bound for \(\langle \bm{y}_l, \bm{o}_+ \rangle\)

We only need to consider the cases where \(\bm{x}_s = \bm{o}_+\) for some \(s\) in the range \(1 \leq s \leq l\). In particular, we will focus on the following instances:
\[
s = L_1^+ \text{ and } l \in \{L_1^+, L_2^+\}, \qquad s = L_2^+ \text{ and } l = L_2^+.
\]

After \(T\) iterations, we know
\begin{equation}
\langle \bm{w}_\Delta, \bm{o}_+ \rangle \geq \alpha T, \quad
\langle \bm{w}_\Delta, \bm{o}_- \rangle \geq \beta T, \quad
\langle \bm{w}_\Delta, \bm{o}_j \rangle \leq \gamma T \quad \text{for } j \neq 1,2.
\end{equation}

Therefore, 
\begin{equation}
\left\langle \bm{y}_{L_1^+}, \bm{o}_+ \right\rangle = \sigma\left( \left\langle \bm{w}_\Delta, \bm{o}_+ \right\rangle \right) 
\geq \sigma(\alpha T).
\end{equation}

We have,
\[
\left\langle \bm{w}_\Delta, \bm{o}_+ \right\rangle \leq W_1 T, \qquad \left\langle \bm{w}_\Delta, \bm{o}_- \right\rangle \leq W_2 T,
\]
where
\begin{align}
W_1 &=  \tilde{\mathcal{O}}\left( \frac{1}{\mathrm{poly}(d)} \right), \\
W_2 &=  \tilde{\mathcal{O}}\left( \frac{1}{\mathrm{poly}(d)} \right).
\end{align}

Then we obtain the following:
\begin{align}
\left\langle \bm{y}_{L_2^+}, \bm{o}_+ \right\rangle 
&\geq \sigma(\alpha T) 
+ \left(1 - \sigma(W_1 T)\right)\left(1 - \sigma(W_2 T)\right)\left(1 - \sigma(\langle \bm{w}_\Delta, \bm{o}_j \rangle)\right)^{\Delta L_{\bm{o}_+}^+-2} \cdot \sigma(\alpha T) \notag \\
&= \sigma(\alpha T) \left[ 1 
+ \left(1 - \sigma(W_1 T)\right)\left(1 - \sigma(W_2 T)\right)\left(1 - \sigma(\langle \bm{w}_\Delta, \bm{o}_j \rangle)\right)^{\Delta L_{\bm{o}_+}^+-2} \right].
\end{align}

We now lower bound the objective
\[
\sum_{i \in \mathcal{W}(0)} \sum_{l = 1}^{L} \phi(\bm{W}_{O(i, \cdot)} \bm{y}_l).
\]

We begin with
\[
\sum_{i \in \mathcal{W}(0)} \sum_{l = 1}^{L} \phi(\bm{W}_{O(i, \cdot)} \bm{y}_l)
\geq \sum_{i \in \mathcal{W}(0)} \left[ \phi(\bm{W}_{O(i, \cdot)} \bm{y}_{L_1^+}) + \phi(\bm{W}_{O(i, \cdot)} \bm{y}_{L_2^+}) \right].
\]

Note that
\[
\bm{W}_{O(i, \cdot)} \bm{y}_{L_1^+} = \sum_{j=1}^{d} \left\langle \bm{W}_{O(i, \cdot)}^\top, \bm{o}_j \right\rangle \left\langle \bm{y}_{L_1^+}, \bm{o}_j \right\rangle,
\]
and \(\bm{y}_{L_1^+}\) has only \(\bm{o}_+\) component.

Therefore,
\[
\bm{W}_{O(i, \cdot)} \bm{y}_{L_1^+} = \left\langle \bm{W}_{O(i, \cdot)}^\top, \bm{o}_+ \right\rangle \left\langle \bm{y}_{L_1^+}, \bm{o}_+ \right\rangle
\geq aT \cdot \sigma(\alpha T) > 0.
\]
Similarly, we can write
\[
\bm{W}_{O(i, \cdot)} \bm{y}_{L_2^+} \geq aT \cdot \sigma(\alpha T) \left[ 1 + \left(1 - \sigma(W_1 T)\right) \left(1 - \sigma(W_2 T)\right) \left(1 - \sigma(\langle \bm{w}_\Delta, \bm{o}_j \rangle)\right)^{\Delta L_{\bm{o}_+}^+-2} \right] > 0.
\]

Applying \(\phi(z) = z\) for positive \(z\), we obtain
\[
\phi(\bm{W}_{O(i, \cdot)} \bm{y}_{L_1^+}) \geq aT \cdot \sigma(\alpha T),
\]
\[
\phi(\bm{W}_{O(i, \cdot)} \bm{y}_{L_2^+}) \geq aT \cdot \sigma(\alpha T) \left[ 1 + \left(1 - \sigma(W_1 T)\right) \left(1 - \sigma(W_2 T)\right) \left(1 - \sigma(\langle \bm{w}_\Delta, \bm{o}_j \rangle)\right)^{\Delta L_{\bm{o}_+}^+-2} \right].
\]

Hence, 
\begin{equation}
\small
\begin{split}
\sum_{i \in \mathcal{W}(0)} \sum_{l = 1}^{L} \phi(\bm{W}_{O(i, \cdot)} \bm{y}_l) 
&\geq \sum_{i \in \mathcal{W}(0)} aT \cdot \sigma(\alpha T) \cdot \\
&\quad \left[2 + \left(1 - \sigma(W_1 T)\right) \left(1 - \sigma(W_2 T)\right) \left(1 - \sigma(\langle \bm{w}_\Delta, \bm{o}_j \rangle)\right)^{\Delta L_{\bm{o}_+}^+-2} \right].    
\end{split}
\end{equation}

Next, we derive an upper bound for 
\[
\sum_{i \in \mathcal{U}(0)} \sum_{l=1}^L \phi\left( \bm{W}_{O(i, \cdot)} \bm{y}_l^{(n)} \right).
\]

For any \(i \in \mathcal{U}(0)\), we know that
\begin{equation}
0 < \langle \bm{W}_{O(i, \cdot)}^\top, \bm{o}_- \rangle \leq bT.
\end{equation}

We now derive an upper bound for \(\left\langle \bm{y}_l, \bm{o}_- \right\rangle\).
We need to focus on the following instances:
\[
s = L_1^- \text{ and } l \in \{L_1^-, L_2^-\}, \qquad s = L_2^- \text{ and } l = L_2^-.
\]

\begin{equation}
\left\langle \bm{y}_{L_1^-}, \bm{o}_- \right\rangle = \sigma\left( \left\langle \bm{w}_\Delta, \bm{o}_- \right\rangle \right) 
\leq \sigma(W_2 T).
\end{equation}

\begin{align}
\left\langle \bm{y}_{L_2^-}, \bm{o}_- \right\rangle 
&\leq \sigma(W_2 T) 
+ \left(1 - \sigma(\alpha T)\right)\left(1 - \sigma(\beta T)\right)\left(1 - \sigma(\langle \bm{w}_\Delta, \bm{o}_j \rangle)\right)^{\Delta L_{\bm{o}_-}^+-2} \cdot \sigma(W_2 T) \notag \\
&= \sigma(W_2 T) \left[ 1 
+ \left(1 - \sigma(\alpha T)\right)\left(1 - \sigma(\beta T)\right)\left(1 - \sigma(\langle \bm{w}_\Delta, \bm{o}_j \rangle)\right)^{\Delta L_{\bm{o}_-}^+-2} \right].
\end{align}

Hence, 
\begin{equation}
\begin{split}
\sum_{i \in \mathcal{U}(0)} \sum_{l = 1}^{L} \phi(\bm{W}_{O(i, \cdot)} \bm{y}_l)
&\leq \sum_{i \in \mathcal{U}(0)} bT \cdot \sigma(W_2 T) \cdot \notag \\
&\quad \left[2 + \left(1 - \sigma(\alpha T)\right)\left(1 - \sigma(\beta T)\right)\left(1 - \sigma(\langle \bm{w}_\Delta, \bm{o}_j \rangle)\right)^{\Delta L_{\bm{o}_-}^+-2} \right].    
\end{split}
\end{equation}

In addition, we have
\begin{equation}
\sum_{i \in \mathcal{K}^- \setminus \mathcal{U}(0)} \sum_{l=1}^L \phi\left( \bm{W}_{O(i, \cdot)} \bm{y}_l^{(n)} \right) \leq \tilde{\mathcal{O}}\left( \frac{1}{\mathrm{poly}(d)} \right).
\end{equation}

By \eqref{eq:model_output_locality-structured data}, we can write
\begin{equation}
\small
    \begin{split}
        F(\bm{X}^{(n)}) 
&\geq \frac{1}{\sqrt{m}L} \Biggl\{  
\frac{m}{2} \cdot aT \cdot \sigma(\alpha T) \left[ 
2 + (1 - \sigma(W_1 T))(1 - \sigma(W_2 T))(1 - \sigma(\langle \bm{w}_\Delta, \bm{o}_j \rangle))^{\Delta L_{\bm{o}_+}^+-2}
\right] \Biggr. \\
&\qquad \Biggl. 
- \frac{m}{2} \cdot bT \cdot \sigma(W_2 T) \left[2 + \left(1 - \sigma(\alpha T)\right)\left(1 - \sigma(\beta T)\right)\left(1 - \sigma(\langle \bm{w}_\Delta, \bm{o}_j \rangle)\right)^{\Delta L_{\bm{o}_-}^+-2} \right] \\
&\qquad \qquad  - \widetilde{\mathcal{O}}\left( \frac{1}{\mathrm{poly}(d)} \right) \Biggr\} ,
    \end{split}
\end{equation}
with
\begin{align}
a &= \frac{\eta c'^2}{2\sqrt{m}L} 
\left[ \left( \frac{1}{2} \right)^{\Delta L_{\bm{o}_+}^+-2} - \left( \frac{1}{2} \right)^{\Delta L_{\bm{o}_+}^--2} \right] 
- \widetilde{\mathcal{O}}\left( \frac{1}{\mathrm{poly}(d)} \right), \\
\text{and} \quad b &= \frac{\eta }{2\sqrt{m}L} 
\left[ \left( \frac{1}{2} \right)^{\Delta L_{\bm{o}_+}^+} - \left( \frac{1}{2} \right)^{\Delta L_{\bm{o}_+}^-} \right]
+ \widetilde{\mathcal{O}}\left( \frac{1}{\mathrm{poly}(d)} \right). 
\end{align}

\begin{align}
\alpha &= \frac{c'}{4} \left[ \frac{2a}{\sqrt{m}} \cdot \rho^+_0 - \sqrt{m} b  \right] -
\widetilde{\mathcal{O}}\left(\frac{1}{\mathrm{poly}(d)}\right) \notag \\
&= - \widetilde{\mathcal{O}}\left( \frac{1}{\mathrm{poly}(d)} \right)
\end{align}

Therefore, we conclude that
\begin{align}
F(\bm{X}^{(n)}) 
&\geq \frac{1}{\sqrt{m}L} \Biggl\{  
\frac{m}{2} \cdot aT \cdot \sigma(\alpha T) (1 - \sigma(W_1 T))(1 - \sigma(W_2 T)) \notag \\
&\left[ 
 (1 - \sigma(\langle \bm{w}_\Delta, \bm{o}_j \rangle))^{\Delta L_{\bm{o}_+}^+-2} 
 -  (1 - \sigma(\langle \bm{w}_\Delta, \bm{o}_j \rangle))^{\Delta L_{\bm{o}_-}^+-2}
\right]  \Biggr\}
- \widetilde{\mathcal{O}}\left( \frac{1}{\mathrm{poly}(d)} \right)
\label{eq:finalmodeloutput_result_new}
\end{align}

If we can show \(\left[ 
 (1 - \sigma(\langle \bm{w}_\Delta, \bm{o}_j \rangle))^{\Delta L_{\bm{o}_+}^+-2} 
 -  (1 - \sigma(\langle \bm{w}_\Delta, \bm{o}_j \rangle))^{\Delta L_{\bm{o}_-}^+-2}
\right]  > 0\), then we can prove \(F(\bm{X}^{(n)}) \geq C\) for some positive constant \(C\).

First define a random variable \( \psi_1 = \langle \bm{w}_\Delta, \bm{o}_j \rangle\). Then, we have from the definition of our locality-structured  data type
\begin{equation}
    \begin{split}
    \mathbb{E}_n \left[ 
 (1 - \sigma(\psi_1))^{\Delta L_{\bm{o}_+}^+-2} 
 -  (1 - \sigma(\psi_1))^{\Delta L_{\bm{o}_-}^+-2}
\right] = k' >0  \\   
\label{eq:expectation_result}
    \end{split}
\end{equation}
for some positive constant $k'$.

The random variable \(\psi_2 = (1 - \sigma(\psi_1))^{\Delta L_{\bm{o}_+}^+-2} -  (1 - \sigma(\psi_1))^{\Delta L_{\bm{o}_-}^+-2}\) is bounded above by 1.

Applying Hoeffding’s bound, for any \(q > 0\),
\begin{equation}
\mathbb{P} \left( \left| \psi_2 - \mathbb{E} \psi_2 \right|
\gtrsim  \sqrt{ \frac{q \log N}{ N} } \right)
\leq N^{-q}.
\end{equation}

From this we can conclude that,
\begin{equation}
\psi_2 =  \left[ 
 (1 - \sigma(\langle \bm{w}_\Delta, \bm{o}_j \rangle))^{\Delta L_{\bm{o}_+}^+-2} 
 -  (1 - \sigma(\langle \bm{w}_\Delta, \bm{o}_j \rangle))^{\Delta L_{\bm{o}_-}^+-2}
\right] \geq k' - \mathcal{O} \left(  \sqrt{ \frac{q \log N}{ N} } \right),  
\end{equation}
with probability at most \(N^{-q}\). 

Hence, for sufficiently large N, we have from \eqref{eq:expectation_result}
\begin{equation}
    \left[ 
 (1 - \sigma(\langle \bm{w}_\Delta, \bm{o}_j \rangle))^{\Delta L_{\bm{o}_+}^+-2} 
 -  (1 - \sigma(\langle \bm{w}_\Delta, \bm{o}_j \rangle))^{\Delta L_{\bm{o}_-}^+-2}
\right] > 0 
\end{equation}

Therefore,
\begin{align}
F(\bm{X}^{(n)}) 
&\geq C, \text{ where \(C\) is some positive constant.}
\label{zero_generalization_error_conc}
\end{align}
Similarly, for a negative sample, one can show by symmetry that the model output satisfies \(F(\bm{X}^{(n)}) \leq 1\).

\underline{(\textbf{S4.1}) Derivation for the convergence rate.} 
Let's find the number of iterations \(T\) required such that \(F(\bm{X}^{(n)}) \geq 1\), since the label is \(+1\). We require
\begin{equation}
\frac{1}{\sqrt{m} L} \cdot \frac{m}{2} \cdot a T \cdot \sigma(\alpha T)  \geq 1 + \epsilon .
\label{eq:T_new_condition_1}
\end{equation}

Substituting the value of \(a  = \frac{\eta }{2\sqrt{m}L} 
\left[ \left( \frac{1}{2} \right)^{\Delta L_{\bm{o}_+}^+} - \left( \frac{1}{2} \right)^{\Delta L_{\bm{o}_+}^-} \right] \) and \( \sigma(\alpha T) \approx \frac{1}{2} \) since \( \alpha \approx 0 \), the condition becomes
\begin{align}
\frac{\sqrt{m} a T}{4L} 
&= \frac{\sqrt{m}}{4L} \cdot \frac{\eta }{2\sqrt{m}L} 
\left[ \left( \frac{1}{2} \right)^{\Delta L_{\bm{o}_+}^+} - \left( \frac{1}{2} \right)^{\Delta L_{\bm{o}_+}^-} \right] T \notag \\
&= \frac{\eta T}{8L^2} \left[ \left( \frac{1}{2} \right)^{\Delta L_{\bm{o}_+}^+} - \left( \frac{1}{2} \right)^{\Delta L_{\bm{o}_+}^-} \right] \geq 1 + \epsilon.
\end{align}

Solving for \(T\), we obtain
\begin{equation}
T \geq \frac{8L^2 (1 + \epsilon)}{\eta \left[ \left( \frac{1}{2} \right)^{\Delta L_{\bm{o}_+}^+} - \left( \frac{1}{2} \right)^{\Delta L_{\bm{o}_+}^-} \right]}  \geq \frac{8L^2}{\eta \left[ \left( \frac{1}{2} \right)^{\Delta L_{\bm{o}_+}^+} - \left( \frac{1}{2} \right)^{\Delta L_{\bm{o}_+}^-} \right]}.
\label{eq:T_new_lowerbound1}
\end{equation}

By combining \eqref{eq:T_new_condition_1}  with the expression for the model output \(F(\bm X^{(n)})\) in \eqref{eq:finalmodeloutput_result_new}, we obtain
\begin{align}
    F(\bm{X}^{(n)}) &\geq (1+\epsilon)
\end{align}

Hence, the model output satisfies \(F(\bm X^{(n)})\geq 1\). 

\underline{(\textbf{S4.2}) Derivation for the sample complexity.} 
Now we derive a sample-complexity bound that guarantees zero generalization error.

Assuming enough samples, we can write for sufficiently small \(\lambda \ll 1\)
\begin{equation}
    \mathcal{O}\left( \sqrt{ \frac{d \log N}{m N} } \right) 
    \leq \lambda \cdot \frac{\eta }{2\sqrt{m}L} 
\left[ \left( \frac{1}{2} \right)^{\Delta L_{\bm{o}_+}^+} - \left( \frac{1}{2} \right)^{\Delta L_{\bm{o}_+}^-} \right].
\end{equation}

From this we can derive a lower bound on the required sample size,
\begin{equation}
    \begin{split}
    N &\geq \Omega \left( \lambda^{-2} \cdot \frac{4L^2d}{\eta^2 \left[ \left( \frac{1}{2} \right)^{\Delta L_{\bm{o}_+}^+} - \left( \frac{1}{2} \right)^{\Delta L_{\bm{o}_+}^-} \right]^2} \right) \\
    &\geq  \Omega \left( \frac{L^2d}{\eta^2 \left[ \left( \frac{1}{2} \right)^{\Delta L_{\bm{o}_+}^+} - \left( \frac{1}{2} \right)^{\Delta L_{\bm{o}_+}^-} \right]^2} \right),
    \end{split}    
\end{equation}
which will be \eqref{Thm2:sample_complexity} in Theorem~\ref{theorem:generalization-locality-structured data}.

\end{proof}

\section{Proof of Lemmas in Appendix \ref{Appendix: B}}\label{Appendix: D}
Please refer to the supplementary material or a preliminary work at \citep{shandirasegarantheoretical} for this section. We defer all proofs to the supplementary material, as the high-level ideas underlying the lemmas overlap with those presented in Appendix \ref{Appendix: C} for locality data. However, the case of locality-structured data presents additional challenges. Appendix~\ref{Appendix: E} provides the complete proofs for the locality-structured data, which contain the main technical ideas.

\subsection{Proof of Lemma \ref{lem:positive_cls_lucky_newMV}}
\label{proof_of_lemma1_MV}
\begin{proof}
We know that the gradient of the loss function for the \(n^{\text{th}}\) sample  is 
\begin{align}
\frac{\partial \mathcal{\ell}}{\partial \bm{W}_{O(i, \cdot)}} &= \frac{\partial \mathcal{\ell}}{\partial F(\bm{X}^{(n)})} \cdot \frac{\partial F(\bm{X}^{(n)})}{\partial \bm{W}_{O(i, \cdot)}} \notag \\
&= -\frac{z^{(n)}}{L} \sum_{l=1}^L v_i \cdot \phi'\left(\bm{W}_{O(i, \cdot)} \bm{y}_l^{(n)}\right) \cdot \bm{y}_l^{(n)} .
\label{eq:single_grad_WO_Lemma1_MV}
\end{align}

If we consider the gradient for the population loss, 
\begin{align}
\frac{\partial \mathcal{L}}{\partial \bm{W}_{O(i, \cdot)}} &= -\mathbb{E}\left[\frac{z^{(n)}}{L} \sum_{l=1}^L v_i \cdot \phi'\left(\bm{W}_{O(i, \cdot)} \bm{y}_l^{(n)}\right) \cdot \bm{y}_l^{(n)}\right] 
\label{eq:grad_WO_i_MV}\\
&= -\mathbb{E}_{z=+1} \left[ \sum_{l=1}^L \frac{1}{L} v_i \cdot \phi'\left(\bm{W}_{O(i, \cdot)} \bm{y}_l^{(n)}\right) \cdot \bm{y}_l^{(n)} \right] \notag \\
&\quad + \mathbb{E}_{z=-1} \left[ \sum_{l=1}^L \frac{1}{L} v_i \cdot \phi'\left(\bm{W}_{O(i, \cdot)} \bm{y}_l^{(n)}\right) \cdot \bm{y}_l^{(n)} \right].
\label{eq:grad_WO_lemma1proof_MV}
\end{align}

We are given that
\begin{equation}
p_1 \leq \left\langle \bm{w}_\Delta^{(t)}, \bm{o}_+ \right\rangle \leq q_1, \quad \text{and} \quad 
p_2 \leq \left\langle \bm{w}_\Delta^{(t)}, \bm{o}_- \right\rangle \leq q_2.
\label{w_Delta_conditions_newMV}
\end{equation}

The Mamba output can be written as
\begin{align}
\bm{y}_l(t) &= \sum_{s=1}^l 
\left( \prod_{j = s+1}^l \left(1 - \sigma({\bm{w}_\Delta^{(t)}}^\top \bm{x}_j)\right) \right) 
\cdot \sigma({\bm{w}_\Delta^{(t)}}^\top \bm{x}_s) \cdot (\bm{x}_s^\top \bm{x}_l) \bm{x}_s
\end{align}

We have to consider TWO cases.

\textbf{Case I:} for \(l=1,2,\ldots,\alpha_r L\)
\begin{align}
\left\langle \mathbb{E} \bm{y}_{l}, \bm{o}_+ \right\rangle &= \sum_{s=1}^{l} \left(\sigma({\bm{w}_\Delta^{(t)}}^\top \bm{o}_+)\right)^{l-s+1} \pm \mathcal{O}(\tau)
\end{align}

\textbf{Case II:} Others \\
For the other token positions, \(\bm{x}_l \neq \bm{o}_+\). Since we assume orthogonality among the features, \(\bm{y}_l = 0\).

From our initialization, for the lucky neuron \(i \in \mathcal{W}(0)\), \(v_i = +\frac{1}{\sqrt{m}}\). For \( i \in \mathcal{W}(0) \), and \( z^{(n)} = +1 \), we have

\begin{align}
&\left\langle \mathbb{E}_{z=+1} \left[ \sum_{l=1}^L \frac{1}{L} v_i \cdot \phi'\left( \bm{W}_{O(i, \cdot)} \bm{y}_l^{(n)} \right) \cdot \bm{y}_l^{(n)} \right], \bm{o}_+ \right\rangle \notag \\
&\quad = \frac{1}{\sqrt{m} L} \cdot \sigma({\bm{w}_\Delta^{(t)}}^\top \bm{o}_+)
\left[
\sum_{k=1}^{\alpha_r L}  
 \left(1 - \sigma({\bm{w}_\Delta^{(t)}}^\top \bm{o}_+) \right)^{k-1}
 \left( \alpha_r L - (k-1) \right)
\right] \pm \mathcal{O}(\tau).
\label{expectation_positive_newMV}
\end{align}

Similarly for \( z = -1 \), we can have the feature $\bm{o}_+$ for \(l=\alpha_r L+1, \ldots, (\alpha_r+\alpha_c)L\). Therefore,
\begin{align}
\left\langle \mathbb{E} \bm{y}_{l}, \bm{o}_+ \right\rangle &= \sum_{s=\alpha_r L+1}^{l} \left(\sigma({\bm{w}_\Delta^{(t)}}^\top \bm{o}_+)\right)^{l-s+1}\pm \mathcal{O}(\tau)
\end{align}

\begin{align}
&\left \langle \mathbb{E}_{z=-1} \left[ \sum_{l=1}^L \frac{1}{L} v_i \cdot \phi'\left( \bm{W}_{O(i, \cdot)} \bm{y}_l^{(n)} \right) \cdot \bm{y}_l^{(n)} \right], \bm{o}_+ \right \rangle \notag \\
&\quad = \frac{1}{\sqrt{m} L} \cdot \sigma({\bm{w}_\Delta^{(t)}}^\top \bm{o}_+)
\left[
\sum_{k=1}^{\alpha_c L}  
 \left(1 - \sigma({\bm{w}_\Delta^{(t)}}^\top \bm{o}_+) \right)^{k-1}
 \left( \alpha_c L - (k-1) \right)
\right]\pm \mathcal{O}(\tau).
\label{expectation_negative_newMV}
\end{align}

Therfore, combining \eqref{expectation_positive_newMV} and \eqref{expectation_negative_newMV},
\begin{align}
\left\langle  - \frac{\partial \mathcal{L}}{\partial \bm{W}_{O(i, \cdot)}^{(t)}} , \bm{o}_+ \right\rangle 
&= \left\langle \mathbb{E}_{z=+1} \left[ \sum_{l=1}^L \frac{1}{L} v_i \cdot \phi'\left( \bm{W}_{O(i, \cdot)} \bm{y}_l^{(n)} \right) \cdot \bm{y}_l^{(n)} \right], \bm{o}_+ \right\rangle  \\
&\quad - \left\langle \mathbb{E}_{z=-1} \left[ \sum_{l=1}^L \frac{1}{L} v_i \cdot \phi'\left( \bm{W}_{O(i, \cdot)} \bm{y}_l^{(n)} \right) \cdot \bm{y}_l^{(n)} \right], \bm{o}_+ \right\rangle \notag \\
&= \frac{1}{\sqrt{m} L} \cdot \sigma({\bm{w}_\Delta^{(t)}}^\top \bm{o}_+) 
\left[
\sum_{k=1}^{\alpha_r L}  
 \left(1 - \sigma({\bm{w}_\Delta^{(t)}}^\top \bm{o}_+) \right)^{k-1}
 \left( \alpha_r L - (k-1) \right) \right. \notag \\
&\hspace{30pt} \left. - \sum_{k=1}^{\alpha_c L}  
 \left(1 - \sigma({\bm{w}_\Delta^{(t)}}^\top \bm{o}_+) \right)^{k-1}
 \left( \alpha_c L - (k-1) \right) 
\right] \pm \mathcal{O}(\tau).
\label{o+_direction_exact_newMV}
\end{align}

Follow the similar proof of Lemma~\ref{lem:positive_cls_lucky_new}, from \eqref{eq:gamma_def_iteration1} -- \eqref{empiricalgradient_o+}, the deviation between the population and empirical gradients along \(\bm{o}_+\) satisfies

\begin{equation}
\begin{split}
\left\langle - \frac{\partial \mathcal{L}}{\partial \bm{W}_{O(i, \cdot)}^{(t)}} , \bm{o}_+ \right\rangle
- \mathcal{O}\left( \sqrt{ \frac{d \log N}{m N} } \right)
&\leq 
\left\langle - \frac{\partial \hat{\mathcal{L}}}{\partial \bm{W}_{O(i, \cdot)}^{(t)}} , \bm{o}_+ \right\rangle \\
&\leq 
\left\langle - \frac{\partial \mathcal{L}}{\partial \bm{W}_{O(i, \cdot)}^{(t)}} , \bm{o}_+ \right\rangle
+ \mathcal{O}\left( \sqrt{ \frac{d \log N}{m N} } \right).
\label{empiricalgradient_o+_MV}
\end{split}
\end{equation}

By pairing \eqref{o+_direction_exact_newMV} with the given the conditions on \(\bm{w}_{\Delta}\) in \eqref{w_Delta_conditions_newMV}, we can write
\begin{equation}
\begin{split}
&\left\langle  - \frac{\partial \mathcal{L}}{\partial \bm{W}_{O(i, \cdot)}^{(t)}} , \bm{o}_+ \right\rangle\\
     \ge
     & \frac{1}{\sqrt{m} L} \cdot \sigma(p_1) 
\left[
\sum_{k=1}^{\alpha_r L}  
 \left(1 - \sigma(q_1) \right)^{k-1}
 \left( \alpha_r L - (k-1) \right) -
 \sum_{k=1}^{\alpha_c L}  
 \left(1 - \sigma(p_1) \right)^{k-1}
 \left( \alpha_c L - (k-1) \right) 
\right] \\
& \quad - \mathcal{O}(\tau)
\end{split}
\end{equation}
and
\begin{equation}
\begin{split}
&\left\langle  - \frac{\partial \mathcal{L}}{\partial \bm{W}_{O(i, \cdot)}^{(t)}} , \bm{o}_+ \right\rangle \\
\le&\frac{1}{\sqrt{m} L} \cdot \sigma(q_1) 
\left[
\sum_{k=1}^{\alpha_r L}  
 \left(1 - \sigma(p_1) \right)^{k-1}
 \left( \alpha_r L - (k-1) \right) -
 \sum_{k=1}^{\alpha_c L}  
 \left(1 - \sigma(q_1) \right)^{k-1}
 \left( \alpha_c L - (k-1) \right) 
\right] \\
&\quad +  \mathcal{O}(\tau)
\end{split} 
\end{equation}

Therefore, we can obtain the lower bound and the upper bound of \(\left\langle  - \frac{\partial \hat{\mathcal{L}}}{\partial \bm{W}_{O(i, \cdot)}^{(t)}}, \bm{o}_+ \right\rangle\) as

\begin{align}
   &\frac{\sigma(p_1) }{\sqrt{m} L}  
\left[
\sum_{k=1}^{\alpha_r L}  
 \left(1 - \sigma(q_1) \right)^{k-1}
 \left( \alpha_r L - (k-1) \right) -
 \sum_{k=1}^{\alpha_c L}  
 \left(1 - \sigma(p_1) \right)^{k-1}
 \left( \alpha_c L - (k-1) \right) 
\right] - \mathcal{O}(\tau) \notag \\
&\leq \left\langle  - \frac{\partial \mathcal{L}}{\partial \bm{W}_{O(i, \cdot)}^{(t)}} , \bm{o}_+ \right\rangle
\end{align}

\begin{align}
&\text{and }
\left\langle  - \frac{\partial \mathcal{L}}{\partial \bm{W}_{O(i, \cdot)}^{(t)}} , \bm{o}_+ \right\rangle \notag \\ 
&\leq  \frac{\sigma(q_1) }{\sqrt{m} L}  
\left[
\sum_{k=1}^{\alpha_r L}  
 \left(1 - \sigma(p_1) \right)^{k-1}
 \left( \alpha_r L - (k-1) \right) -
 \sum_{k=1}^{\alpha_c L}  
 \left(1 - \sigma(q_1) \right)^{k-1}
 \left( \alpha_c L - (k-1) \right) 
\right] \notag \\
&\quad + \mathcal{O}(\tau).
\end{align}

This concludes the proof of \eqref{lemma1_newMV-1.1} and \eqref{lemma1_newMV-1.2} in Lemma~\ref{lem:positive_cls_lucky_newMV}.

To obtain \(\left\langle  - \frac{\partial \hat{\mathcal{L}}}{\partial \bm{W}_{O(i, \cdot)}^{(t)}} , \bm{o}_- \right\rangle  \), we have to consider \(\mathbb{E}_{z=-1} \left[ \sum_{l=1}^L \frac{1}{L} v_i \cdot \phi'\left(\bm{W}_{O(i, \cdot)} \bm{y}_l^{(n)}\right) \cdot \bm{y}_l^{(n)} \right]\).

 If \({\bm{W}_{O(i, \cdot)}^{(t)}} \bm{o}_- >0\), 
\begin{align}
&\left\langle 
   \mathbb{E}_{z=-1} \left[ 
      \sum_{l=1}^L \frac{1}{L} v_i \cdot 
      \phi'\!\left( \bm{W}_{O(i, \cdot)} \bm{y}_l^{(n)} \right) \cdot 
      \bm{y}_l^{(n)} 
   \right], \bm{o}_- \right\rangle \notag \\
&\quad = \frac{1}{\sqrt{m} L} \cdot \sigma({\bm{w}_\Delta^{(t)}}^\top \bm{o}_-)
\left[
\sum_{k=1}^{\alpha_r L}  
 \left(1 - \sigma({\bm{w}_\Delta^{(t)}}^\top \bm{o}_-) \right)^{k-1}
 \left( \alpha_r L - (k-1) \right)
\right] \pm \mathcal{O}(\tau).
\end{align}

 If \({\bm{W}_{O(i, \cdot)}^{(t)}} \bm{o}_- \leq0\), 
 \begin{equation}
  \left \langle \mathbb{E}_{z=-1} \left[ \sum_{l=1}^L \frac{1}{L} v_i \cdot \phi'\left(\bm{W}_{O(i, \cdot)} \bm{y}_l^{(n)}\right) \cdot \bm{y}_l^{(n)} \right], \bm{o}_- \right\rangle = 0 \pm \mathcal{O}(\tau).    
 \end{equation}

From \eqref{eq:grad_WO_lemma1proof_MV}, We know that
\begin{equation}
\begin{split}
\left\langle  - \frac{\partial \mathcal{L}}{\partial \bm{W}_{O(i, \cdot)}^{(t)}} , \bm{o}_- \right\rangle 
&= \left\langle \mathbb{E}_{z=+1} \left[ \sum_{l=1}^L \frac{1}{L} v_i \cdot \phi'\left( \bm{W}_{O(i, \cdot)} \bm{y}_l^{(n)} \right) \cdot \bm{y}_l^{(n)} \right], \bm{o}_- \right\rangle \\
&\quad - \left\langle \mathbb{E}_{z=-1} \left[ \sum_{l=1}^L \frac{1}{L} v_i \cdot \phi'\left( \bm{W}_{O(i, \cdot)} \bm{y}_l^{(n)} \right) \cdot \bm{y}_l^{(n)} \right], \bm{o}_- \right\rangle .
\end{split}
\end{equation}

Hence, combining both cases, we conclude
\begin{align}
& \frac{-1}{\sqrt{m} L} \cdot \sigma({\bm{w}_\Delta^{(t)}}^\top \bm{o}_-)
\left[
\sum_{k=1}^{\alpha_r L}  
 \left(1 - \sigma({\bm{w}_\Delta^{(t)}}^\top \bm{o}_-) \right)^{k-1}
 \left( \alpha_r L - (k-1) \right)
\right] - \mathcal{O}(\tau) \notag \\
&\quad \leq 
   \left\langle 
      - \frac{\partial \mathcal{L}}{\partial \bm{W}_{O(i, \cdot)}^{(t)}}, 
      \bm{o}_- 
   \right\rangle 
   \leq  \mathcal{O}(\tau).
\end{align}

From \eqref{difference_of_gradients}, similar to \eqref{empiricalgradient_o+_MV}, we can write
\begin{align}
&\left\langle - \frac{\partial \mathcal{L}}{\partial \bm{W}_{O(i, \cdot)}^{(t)}}, \bm{o}_- \right\rangle
- \mathcal{O}\!\left( \sqrt{\frac{d \log N}{m N}} \right) \notag \\
&\quad \leq \left\langle - \frac{\partial \hat{\mathcal{L}} }{\partial \bm{W}_{O(i, \cdot)}^{(t)}}, \bm{o}_- \right\rangle \notag \\
&\quad \leq \left\langle - \frac{\partial \mathcal{L}}{\partial \bm{W}_{O(i, \cdot)}^{(t)}}, \bm{o}_- \right\rangle
+ \mathcal{O}\!\left( \sqrt{\frac{d \log N}{m N}} \right).
\end{align}

Hence,  we have
\begin{align}
  & \frac{-1}{\sqrt{m} L} \cdot \sigma({\bm{w}_\Delta^{(t)}}^\top \bm{o}_-)
\left[
\sum_{k=1}^{\alpha_r L}  
 \left(1 - \sigma({\bm{w}_\Delta^{(t)}}^\top \bm{o}_-) \right)^{k-1}
 \left( \alpha_r L - (k-1) \right)
\right]
- \mathcal{O} \left( \sqrt{ \frac{d \log N}{m N} } \right) - \mathcal{O}(\tau)\notag \\
  &\quad \leq \left\langle - \frac{\partial \hat{\mathcal{L}}}{\partial \bm{W}_{O(i, \cdot)}^{(t)}}, \bm{o}_- \right\rangle 
\leq \mathcal{O} \left(\sqrt{ \frac{d \log N}{m N} } \right) + \mathcal{O}(\tau).
\end{align}

This concludes the proof of \eqref{lemma1_newMV-1.3} and \eqref{lemma1_newMV-1.4} in Lemma~\ref{lem:positive_cls_lucky_newMV}.

Now consider \(\left\langle  - \frac{\partial \mathcal{L} }{\partial \bm{W}_{O(i, \cdot)}^{(t)}} , \bm{o}_j \right\rangle \text{ for } j \neq 1,2\).
\begin{align}
\left\langle  - \frac{\partial \mathcal{L}}{\partial \bm{W}_{O(i, \cdot)}^{(t)}} , \bm{o}_j \right\rangle 
&= \left\langle \mathbb{E}_{z=+1} \left[ \sum_{l=1}^L \frac{1}{L} v_i 
   \cdot \phi'\left( \bm{W}_{O(i, \cdot)} \bm{y}_l^{(n)} \right) 
   \cdot \bm{y}_l^{(n)} \right], \bm{o}_j \right\rangle \notag \\
&\quad - \left\langle \mathbb{E}_{z=-1} \left[ \sum_{l=1}^L \frac{1}{L} v_i 
   \cdot \phi'\left( \bm{W}_{O(i, \cdot)} \bm{y}_l^{(n)} \right) 
   \cdot \bm{y}_l^{(n)} \right], \bm{o}_j \right\rangle \notag \\
&:= \left\langle I_1, \bm{o}_j \right\rangle - \left\langle I_2, \bm{o}_j \right\rangle .
\end{align}

Because  \( \bm{o}_j \text{ for } j \neq 1,2 \) is identical in both  \(I_1\) and \(I_2\), \(\left\langle I_1, \bm{o}_j \right\rangle - \left\langle I_2, \bm{o}_j \right\rangle = 0 \pm \mathcal{O}(\tau)\). Hence, \(\left\langle  - \frac{\partial \mathcal{L}}{\partial \bm{W}_{O(i, \cdot)}^{(t)}} , \bm{o}_j \right\rangle = 0 \pm \mathcal{O}(\tau)\).
From \eqref{difference_of_gradients}, similar to \eqref{empiricalgradient_o+_MV}, we can write
\begin{align}
&\left\langle - \frac{\partial \mathcal{L}}{\partial \bm{W}_{O(i, \cdot)}^{(t)}}, \bm{o}_j \right\rangle
- \mathcal{O}\left( \sqrt{ \frac{d \log N}{m N} } \right) \notag \\
&\quad \leq \left\langle - \frac{\partial \hat{\mathcal{L}}}{\partial \bm{W}_{O(i, \cdot)}^{(t)}}, \bm{o}_j \right\rangle \notag \\
&\quad \leq \left\langle - \frac{\partial \mathcal{L}}{\partial \bm{W}_{O(i, \cdot)}^{(t)}}, \bm{o}_j \right\rangle
+ \mathcal{O}\left( \sqrt{ \frac{d \log N}{m N} } \right).
\end{align}

Therefore,
\begin{equation}
\left\langle  - \frac{\partial \hat{\mathcal{L}}}{\partial \bm{W}_{O(i, \cdot)}^{(t)}} , \bm{o}_j \right\rangle  \leq 
\mathcal{O} \left( \sqrt{ \frac{d \log N}{m N} } \right) +  \mathcal{O}(\tau)\text{ for } j \neq 1,2 .
\end{equation}
This concludes the proof of \eqref{lemma1_newMV-1.5} in Lemma~\ref{lem:positive_cls_lucky_newMV}.

\end{proof}

\subsection{Proof of Lemma \ref{lem:unlucky_positive_newMV}}
\label{proof_of_lemma2_MV}
\begin{proof}
By definition, for any unlucky neuron \( i \in \mathcal{K}_{+} \setminus \mathcal{W}(0) \), we have
\begin{equation}
    \bm{W}_{O(i, \cdot)} \bm{o}_+ \leq 0.
\end{equation}

We first consider the alignment with \(\bm{o}_+\). That is,
\begin{equation}
\left\langle - \frac{\partial \hat{\mathcal{L}}}{\partial \bm{W}_{O(i, \cdot)}^{(t)}}, \bm{o}_+ \right\rangle.
\end{equation}

The gradient is given in \eqref{eq:grad_WO_lemma1proof_MV}.  
We only need to consider the cases where \( \left\langle \bm{y}_l^{(n)}, \bm{o}_+ \right\rangle > 0 \).  
However, since \( \bm{W}_{O(i, \cdot)} \bm{o}_+ \leq 0 \), we have
\begin{equation}
\phi'\left( \bm{W}_{O(i, \cdot)} \bm{y}_l^{(n)} \right) = 0.
\end{equation}

\begin{align}
\left\langle  - \frac{\partial \mathcal{L}}{\partial \bm{W}_{O(i, \cdot)}^{(t)}} , \bm{o}_+ \right\rangle 
&= \left\langle \mathbb{E}_{z=+1} \left[ \sum_{l=1}^L \frac{1}{L} v_i 
   \cdot \phi'\left( \bm{W}_{O(i, \cdot)} \bm{y}_l^{(n)} \right) 
   \cdot \bm{y}_l^{(n)} \right], \bm{o}_+ \right\rangle \notag \\
&\quad - \left\langle \mathbb{E}_{z=-1} \left[ \sum_{l=1}^L \frac{1}{L} v_i 
   \cdot \phi'\left( \bm{W}_{O(i, \cdot)} \bm{y}_l^{(n)} \right) 
   \cdot \bm{y}_l^{(n)} \right], \bm{o}_+ \right\rangle \notag \\
&= 0 \pm \mathcal{O}(\tau).
\end{align}

We know by \eqref{empiricalgradient_o+_MV},
\begin{equation}
\left\langle  - \frac{\partial \hat{\mathcal{L}} }{\partial \bm{W}_{O(i, \cdot)}^{(t)}} , \bm{o}_+ \right\rangle  \leq 
\left\langle - \frac{\partial \mathcal{L}}{\partial \bm{W}_{O(i, \cdot)}^{(t)}}  , \bm{o}_+ \right\rangle
+ \mathcal{O} \left( \sqrt{ \frac{d \log N}{m N} } \right) .
\end{equation}

Hence,
\begin{equation}
\left\langle  - \frac{\partial \hat{\mathcal{L}} }{\partial \bm{W}_{O(i, \cdot)}^{(t)}} , \bm{o}_+ \right\rangle  
\leq \mathcal{O} \left( \sqrt{ \frac{d \log N}{m N} } \right) + \mathcal{O}(\tau).
\end{equation}

We now analyze the alignment with \(\bm{o}_-\). To obtain the bound on 
\(\left\langle - \frac{\partial \hat{\mathcal{L}}}{\partial \bm{W}_{O(i, \cdot)}^{(t)}}, \bm{o}_- \right\rangle\), 
we consider the expectation
\(\mathbb{E}_{z=-1} \left[ \sum_{l=1}^L \frac{1}{L} v_i \cdot \phi'\left(\bm{W}_{O(i, \cdot)} \bm{y}_l^{(n)}\right) \cdot \bm{y}_l^{(n)} \right].\)

If \( \bm{W}_{O(i, \cdot)}^{(t)} \bm{o}_- > 0 \), the inner product satisfies
\begin{align}
&\left\langle \mathbb{E}_{z=-1} \left[ \sum_{l=1}^L \frac{1}{L} v_i 
   \cdot \phi'\left( \bm{W}_{O(i, \cdot)} \bm{y}_l^{(n)} \right) 
   \cdot \bm{y}_l^{(n)} \right], \bm{o}_- \right\rangle \notag \\
&\quad =  \frac{1}{\sqrt{m} L} \cdot \sigma({\bm{w}_\Delta^{(t)}}^\top \bm{o}_-)
\left[
\sum_{k=1}^{\alpha_r L}  
 \left(1 - \sigma({\bm{w}_\Delta^{(t)}}^\top \bm{o}_-) \right)^{k-1}
 \left( \alpha_r L - (k-1) \right)
\right] \pm \mathcal{O}(\tau)
\end{align}

If \( \bm{W}_{O(i, \cdot)}^{(t)} \bm{o}_- \leq 0 \), then
\begin{equation}
\left \langle \mathbb{E}_{z=-1} \left[ \sum_{l=1}^L \frac{1}{L} v_i \cdot \phi'\left(\bm{W}_{O(i, \cdot)} \bm{y}_l^{(n)}\right) \cdot \bm{y}_l^{(n)} \right], \bm{o}_- \right\rangle = 0 \pm \mathcal{O}(\tau).
\end{equation}

From \eqref{eq:grad_WO_lemma1proof}, We know that
\begin{align}
\left\langle  - \frac{\partial \mathcal{L}}{\partial \bm{W}_{O(i, \cdot)}^{(t)}} , \bm{o}_- \right\rangle 
&= \left\langle \mathbb{E}_{z=+1} \left[ \sum_{l=1}^L \frac{1}{L} v_i 
   \cdot \phi'\left( \bm{W}_{O(i, \cdot)} \bm{y}_l^{(n)} \right) 
   \cdot \bm{y}_l^{(n)} \right], \bm{o}_- \right\rangle \notag \\
&\quad - \left\langle \mathbb{E}_{z=-1} \left[ \sum_{l=1}^L \frac{1}{L} v_i 
   \cdot \phi'\left( \bm{W}_{O(i, \cdot)} \bm{y}_l^{(n)} \right) 
   \cdot \bm{y}_l^{(n)} \right], \bm{o}_- \right\rangle .
\end{align}

Hence, combining both cases, we conclude
\begin{align}
&- \frac{1}{\sqrt{m} L} \cdot \sigma({\bm{w}_\Delta^{(t)}}^\top \bm{o}_-)
\left[
\sum_{k=1}^{\alpha_r L}  
 \left(1 - \sigma({\bm{w}_\Delta^{(t)}}^\top \bm{o}_-) \right)^{k-1}
 \left( \alpha_r L - (k-1) \right)
\right] - \mathcal{O}(\tau) \notag \\
&\quad \leq 
   \left\langle - \frac{\partial \mathcal{L}}{\partial \bm{W}_{O(i, \cdot)}^{(t)}}, \bm{o}_- \right\rangle 
   \leq \mathcal{O}(\tau).
\end{align}

From \eqref{difference_of_gradients}, similar to \eqref{empiricalgradient_o+_MV}, we can write
\begin{align}
&\left\langle - \frac{\partial \mathcal{L}}{\partial \bm{W}_{O(i, \cdot)}^{(t)}}, \bm{o}_- \right\rangle
- \mathcal{O}\left( \sqrt{ \frac{d \log N}{m N} } \right) \notag \\
&\quad \leq 
\left\langle - \frac{\partial \hat{\mathcal{L}}}{\partial \bm{W}_{O(i, \cdot)}^{(t)}}, \bm{o}_- \right\rangle \notag \\
&\quad \leq 
\left\langle - \frac{\partial \mathcal{L}}{\partial \bm{W}_{O(i, \cdot)}^{(t)}}, \bm{o}_- \right\rangle
+ \mathcal{O}\left( \sqrt{ \frac{d \log N}{m N} } \right).
\end{align}

Hence, 
\begin{align}
  &- \frac{1}{\sqrt{m} L} \cdot \sigma({\bm{w}_\Delta^{(t)}}^\top \bm{o}_-)
\left[
\sum_{k=1}^{\alpha_r L}  
 \left(1 - \sigma({\bm{w}_\Delta^{(t)}}^\top \bm{o}_-) \right)^{k-1}
 \left( \alpha_r L - (k-1) \right)
\right]
\notag \\
&\quad - \mathcal{O} \left( \sqrt{ \frac{d \log N}{m N} } \right) - \mathcal{O}(\tau)\notag \\
  &\quad \leq \left\langle - \frac{\partial \hat{\mathcal{L}}}{\partial \bm{W}_{O(i, \cdot)}^{(t)}}, \bm{o}_- \right\rangle 
\leq \mathcal{O} \left(\sqrt{ \frac{d \log N}{m N} } \right) + \mathcal{O}(\tau).
\end{align}

Now consider \(\left\langle  - \frac{\partial \mathcal{L} }{\partial \bm{W}_{O(i, \cdot)}^{(t)}} , \bm{o}_j \right\rangle \text{ for } j \neq 1,2\).

\begin{align}
\left\langle  - \frac{\partial \mathcal{L}}{\partial \bm{W}_{O(i, \cdot)}^{(t)}} , \bm{o}_j \right\rangle 
&= \left\langle \mathbb{E}_{z=+1} \left[ \sum_{l=1}^L \frac{1}{L} v_i 
   \cdot \phi'\left( \bm{W}_{O(i, \cdot)} \bm{y}_l^{(n)} \right) 
   \cdot \bm{y}_l^{(n)} \right], \bm{o}_j \right\rangle \notag \\
&\quad - \left\langle \mathbb{E}_{z=-1} \left[ \sum_{l=1}^L \frac{1}{L} v_i 
   \cdot \phi'\left( \bm{W}_{O(i, \cdot)} \bm{y}_l^{(n)} \right) 
   \cdot \bm{y}_l^{(n)} \right], \bm{o}_j \right\rangle \notag \\
&:= \left\langle I_1, \bm{o}_j \right\rangle - \left\langle I_2, \bm{o}_j \right\rangle .
\end{align}

Because  \( \bm{o}_j \text{ for } j \neq 1,2 \) is identical in both  \(I_1\) and \(I_2\), \(\left\langle I_1, \bm{o}_j \right\rangle - \left\langle I_2, \bm{o}_j \right\rangle = 0 \pm \mathcal{O}(\tau) \). Hence, \(\left\langle  - \frac{\partial \mathcal{L}}{\partial \bm{W}_{O(i, \cdot)}^{(t)}} , \bm{o}_j \right\rangle = 0\pm \mathcal{O}(\tau)\).
From \eqref{difference_of_gradients}, similar to \eqref{empiricalgradient_o+_MV}, we can write
\begin{align}
&\left\langle - \frac{\partial \mathcal{L}}{\partial \bm{W}_{O(i, \cdot)}^{(t)}}, \bm{o}_j \right\rangle
- \mathcal{O}\left( \sqrt{ \frac{d \log N}{m N} } \right) \notag \\
&\quad \leq 
\left\langle - \frac{\partial \hat{\mathcal{L}}}{\partial \bm{W}_{O(i, \cdot)}^{(t)}}, \bm{o}_j \right\rangle \notag \\
&\quad \leq 
\left\langle - \frac{\partial \mathcal{L}}{\partial \bm{W}_{O(i, \cdot)}^{(t)}}, \bm{o}_j \right\rangle
+ \mathcal{O}\left( \sqrt{ \frac{d \log N}{m N} } \right).
\end{align}

Therefore,
\begin{equation}
\left\langle  - \frac{\partial \hat{\mathcal{L}}}{\partial \bm{W}_{O(i, \cdot)}^{(t)}} , \bm{o}_j \right\rangle  \leq 
\mathcal{O} \left( \sqrt{ \frac{d \log N}{m N} } \right) + \mathcal{O}(\tau) \text{ for } j \neq 1,2 .
\end{equation}

\end{proof}

\subsection{Proof of Lemma \ref{lem:negative_cls_lucky_newMV}}
By symmetry, the proof is analogous to that of Lemma~\ref{lem:positive_cls_lucky_newMV}; Please see Appendix~\ref{proof_of_lemma1_MV}.

\subsection{Proof of Lemma \ref{lem:unlucky_negative_newMV}}
By symmetry, the proof is analogous to that of Lemma~\ref{lem:unlucky_positive_newMV}; Please see Appendix~\ref{proof_of_lemma2_MV}.

\subsection{Proof of Lemma \ref{lemma:5}}
\begin{proof}
The gradient of the loss with respect to \(\bm{w}_\Delta\) for the \(n^{\text{th}}\) sample is given by
\begin{align}
\frac{\partial \mathcal{\ell}}{\partial \bm{w}_\Delta} 
=& -\frac{z^{(n)}}{L} \cdot \sum_{i=1}^m \sum_{l=1}^L v_i \, \phi'\left( \bm{W}_{O(i, \cdot)} \bm{y}_l^{(n)} \right) \cdot \sum_{s=1}^{l} \left( \bm{W}_B^\top \bm{x}_{s}^{(n)} \right)^\top \left( \bm{W}_C^\top \bm{x}_{l}^{(n)} \right) \left( \bm{W}_{O(i, \cdot)} \bm{x}_{s}^{(n)} \right) \notag \\
& \cdot \sigma\left( \bm{w}_\Delta^\top \bm{x}_{s}^{(n)} \right) \cdot \prod_{r=s+1}^{l} \left( 1 - \sigma\left( \bm{w}_\Delta^\top \bm{x}_{r}^{(n)} \right) \right) \notag \\
& \cdot \left[ \left( 1 - \sigma\left( \bm{w}_\Delta^\top \bm{x}_{s}^{(n)} \right) \right) \bm{x}_{s}^{(n)} - \sum_{j=s+1}^{l} \left( 1 - \sigma\left( \bm{w}_\Delta^\top \bm{x}_{j}^{(n)} \right) \right) \bm{x}_{j}^{(n)} \right] \notag \\
:=& -\frac{z^{(n)}}{L} \cdot \sum_{i=1}^m \sum_{l=1}^L v_i \, \phi'\left( \bm{W}_{O(i, \cdot)} \bm{y}_l^{(n)} \right) \cdot \sum_{s=1}^{l} \bm{I}_{l,s}^{(n)}.
\label{eq:gradient_gating_network_MV}
\end{align}

We define the gradient summand \(\bm{I}_{l,s}^{(n)}\) as
\begin{equation}
\bm{I}_{l,s}^{(n)} = \beta_{s,s} \cdot \bm{x}_{s}^{(n)} - \sum_{j=s+1}^{l} \beta_{s,j} \bm{x}_{j}^{(n)},   
\end{equation}

where the coefficients \(\beta_{s,s}\) and \(\beta_{s,j}\) are given by

\begin{align}
\beta_{s,s} 
  &= (\bm{W}_B^\top \bm{x}_{s}^{(n)})^\top (\bm{W}_C^\top \bm{x}_{l}^{(n)}) (\bm{W}_{O(i, \cdot)} \bm{x}_{s}^{(n)}) 
     \sigma(\bm{w}_\Delta^\top \bm{x}_{s}^{(n)}) \nonumber \\
  &\quad \times \left[\prod_{r=s+1}^{l} \left(1 - \sigma(\bm{w}_\Delta^\top \bm{x}_{r}^{(n)})\right)\right]
     (1 - \sigma(\bm{w}_\Delta^\top \bm{x}_{s}^{(n)})) .
\label{eq:coefficient_beta_{s,s}_MV}
\end{align}

and
\begin{align}
\beta_{s,j} 
  &= (\bm{W}_B^\top \bm{x}_{s}^{(n)})^\top (\bm{W}_C^\top \bm{x}_{l}^{(n)}) (\bm{W}_{O(i, \cdot)} \bm{x}_{s}^{(n)}) 
     \sigma(\bm{w}_\Delta^\top \bm{x}_{s}^{(n)}) \nonumber \\
  &\quad \times \left[\prod_{r=s+1}^{l} \left(1 - \sigma(\bm{w}_\Delta^\top \bm{x}_{r}^{(n)})\right)\right]
     (1 - \sigma(\bm{w}_\Delta^\top \bm{x}_{j}^{(n)})) .
\end{align}

If we consider the gradient of the empirical loss,
\begin{equation}
\frac{\partial \hat{\mathcal{L}}}{\partial \bm{w}_\Delta} = -\frac{1}{N} \sum_{n=1}^N \frac{z^{(n)}}{L} \cdot \sum_{i=1}^m \sum_{l=1}^L v_i \, \phi'\left( \bm{W}_{O(i, \cdot)} \bm{y}_l^{(n)} \right) \cdot \sum_{s=1}^{l} \bm{I}_{l,s}^{(n)}.
\end{equation}

We are given that
\begin{equation}
r_1^{*} \leq  \langle {\bm{W}_{O(i,\cdot)}^{(t+1)}}^{\top}, \bm{o}_+ \rangle \leq s_1^{*}.
\label{newMV_initial_conditions}
\end{equation}

From our initialization, for all \( i \in \mathcal{K}^+ \), we have \( v_i = \frac{1}{\sqrt{m}} \). This gives
\begin{equation}
\left\langle -\frac{\partial \mathcal{\ell}}{\partial \bm{w}_\Delta}, \bm{o}_+ \right\rangle 
= \frac{z^{(n)}}{L} \sum_{i=1}^m \sum_{l=1}^L \frac{1}{\sqrt{m}} \cdot \phi'(\bm{W}_{O(i, \cdot)} \bm{y}_l^{(n)}) \sum_{s=1}^{l} \left\langle \bm{I}_{l,s}^{(n)}, \bm{o}_+ \right\rangle.
\end{equation}

Averaging over the training samples, the inner product of the empirical gradient becomes
\begin{align}
\left\langle -\frac{\partial \hat{\mathcal{L}}}{\partial \bm{w}_\Delta}, \bm{o}_+ \right\rangle 
&= \frac{1}{N} \sum_{n=1}^N \frac{z^{(n)}}{L} \cdot \sum_{i=1}^m \sum_{l=1}^L v_i \phi'(\bm{W}_{O(i, \cdot)} \bm{y}_l^{(n)}) \cdot \sum_{s=1}^{l} \left\langle \bm{I}_{l,s}^{(n)}, \bm{o}_+ \right\rangle \notag \\
&= \frac{1}{N} \sum_{n: z^{(n)} = +1} \frac{1}{L} \left[ 
\sum_{i \in \mathcal{K}_+} \sum_{l=1}^L \frac{1}{\sqrt{m}} \phi'(\bm{W}_{O(i, \cdot)} \bm{y}_l^{(n)}) \sum_{s=1}^l \left\langle \bm{I}_{l,s}^{(n)}, \bm{o}_+ \right\rangle \right. \notag \\
&\quad \left. + \sum_{i \in \mathcal{K}_-} \sum_{l=1}^L \left( -\frac{1}{\sqrt{m}} \right) \phi'(\bm{W}_{O(i, \cdot)} \bm{y}_l^{(n)}) \sum_{s=1}^l \left\langle \bm{I}_{l,s}^{(n)}, \bm{o}_+ \right\rangle \right] \notag \\
&\quad + \frac{1}{N} \sum_{n: z^{(n)} = -1} \frac{-1}{L} \left[
\sum_{i \in \mathcal{K}_+} \sum_{l=1}^L \frac{1}{\sqrt{m}} \phi'(\bm{W}_{O(i, \cdot)} \bm{y}_l^{(n)}) \sum_{s=1}^l \left\langle \bm{I}_{l,s}^{(n)}, \bm{o}_+ \right\rangle \right. \notag \\
&\quad \left. + \sum_{i \in \mathcal{K}_-} \sum_{l=1}^L \left( -\frac{1}{\sqrt{m}} \right) \phi'(\bm{W}_{O(i, \cdot)} \bm{y}_l^{(n)}) \sum_{s=1}^l \left\langle \bm{I}_{l,s}^{(n)}, \bm{o}_+ \right\rangle \right].
\label{eq:newMV_inner_prod_decomposition}
\end{align}

First, we focus on the contribution from samples with \(z^{(n)} = +1\) and neurons 
\(i \in \mathcal{K}^+\), for which we aim to establish a lower bound. To analyze the inner terms, we distinguish two cases:  
(i) the tokens contain the feature \(\bm{o}_+\);  
(ii) they do not, in which case the inner product vanishes due to orthogonality. 

\textbf{Case I:} \(l = 1, 2, \ldots, \alpha_r L\).  

For fixed \(l\) and \(s \in \{1, \ldots, l\}\), we write
\begin{equation}
    \left\langle \bm{I}_{l,s}^{(n)}, \bm{o}_+ \right\rangle =
\begin{cases}
\beta^{(l)}_{s,s}, & s = l,\\[4pt]
0, & s = l - 1,\\[4pt]
-\beta^{(l)}_{s,s+1}, & 1 \le s \le l - 2,
\end{cases}
\label{I_l,s pattern}
\end{equation}
where the coefficients depend on \(l\) (hence the superscript).  

In particular, the diagonal term can be written as
\begin{equation}
\beta^{(l)}_{s,s}
= \left\langle \bm{W}_{O(i,\cdot)}^{\top}, \bm{o}_+ \right\rangle\,
  \sigma\left(\langle \bm{w}_\Delta, \bm{o}_+ \rangle\right)\,
  \left(1 - \sigma\left(\langle \bm{w}_\Delta, \bm{o}_+ \rangle\right)\right),
\end{equation}
which is independent of token position within the block (since it occurs at the last position \(s = l\)).

The off-diagonal terms \(\beta^{(l)}_{s,s+1}\) exhibit additional multiplicative decay due to the recursive gating structure. Specifically,
\begin{equation}
\beta^{(l)}_{s,s+1}
= \left\langle \bm{W}_{O(i,\cdot)}^{\top}, \bm{o}_+ \right\rangle \cdot 
  \sigma\left(\langle \bm{w}_\Delta, \bm{o}_+ \rangle\right) \cdot 
  \left(1 - \sigma\left(\langle \bm{w}_\Delta, \bm{o}_+ \rangle\right)\right)^t,
\end{equation}
where \(t = l - s + 1\) denotes the effective gating depth from position \(s\) to \(l\).

Given the conditions in \eqref{newMV_initial_conditions}, we can write
\begin{equation}
\begin{split}
\beta^{(l)}_{s,s}
&\geq \sigma(p_1) \cdot \left( 1 - \sigma(q_1) \right)  r_1^{*} - \mathcal{O}(\tau)), \\
\text{and} \quad
\beta^{(l)}_{s,s+1}
&\geq \sigma(p_1) \cdot \left( 1 - \sigma(q_1) \right)^{(l-s+1)} \cdot r_1^{*} - \mathcal{O}(\tau)).
\end{split}
\label{beta_init}
\end{equation}

Here, we assumed $p_1 \leq \langle \bm{w}_\Delta^{(t)}, \bm{o}_+ \rangle \leq q_1$ in addition to the conditions in \eqref{newMV_initial_conditions} and also included the effect of the noise as well.

More specifically,
\begin{equation}
\begin{array}{l}
l = 1:\quad \langle \bm{I}_{1,1}^{(n)}, \bm{o}_+ \rangle = \beta^{(1)}_{1,1}.\\[4pt]
l = 2:\quad \langle \bm{I}_{2,2}^{(n)}, \bm{o}_+ \rangle = \beta^{(2)}_{2,2},\;
         \langle \bm{I}_{2,1}^{(n)}, \bm{o}_+ \rangle = 0.\\[4pt]
\vdots\\[4pt]
l = \alpha_r L:\quad \langle \bm{I}_{l,l}^{(n)}, \bm{o}_+ \rangle = \beta^{(l)}_{l,l},\;
                 \langle \bm{I}_{l,l-1}^{(n)}, \bm{o}_+ \rangle = 0,\;
                 \langle \bm{I}_{l,s}^{(n)}, \bm{o}_+ \rangle = -\beta^{(l)}_{s,s+1}
                 \quad (1 \le s \le l - 2).
\end{array}    
\end{equation}

\textbf{Case II:} The tokens do not contain the feature \(\bm{o}_+\)

In such cases, \(\left\langle \bm{I}_{l, s}^{(n)}, \bm{o}_+ \right\rangle = 0\) due to orthogonality among the features.
Therefore, the total contribution over all tokens is bounded as
\begin{align}
&\frac{1}{L} \sum_{l=1}^{L} \frac{1}{\sqrt{m}} \phi'\left(\bm{W}_{O(i, \cdot)} \bm{y}_l\right) 
\sum_{s=1}^{l} \left\langle \bm{I}_{l,s}^{(n)}, \bm{o}_+ \right\rangle \notag \\
&\geq \frac{1}{\sqrt{m} \cdot L} \cdot 
\left[ \alpha_r L \cdot \sigma(p_1) \cdot \left( 1 - \sigma(q_1) \right)  
- \sum_{k=3}^{\alpha_r L} \sum_{t=4}^{k+1} \left( 1 - \sigma(p_1) \right) ^t \right] 
r_1^{*} - \mathcal{O}(\tau))
\label{term1_alltokens}
\end{align}

Let \(\rho_t^+ = |\mathcal{W}(t)|\) be the number of contributing neurons. Then the total contribution from the active neurons is lower bounded as
\begin{align}
&\frac{1}{L} \sum_{i \in \mathcal{K}_+} \sum_{l=1}^{L} v_i \, \phi'\left(\bm{W}_{O(i, \cdot)} \bm{y}_l\right) 
\sum_{s=1}^{l} \left\langle \bm{I}_{l,s}^{(n)}, \bm{o}_+ \right\rangle \notag \\
&\geq 
\frac{1}{\sqrt{m}L} \cdot \rho_t^+ \cdot 
\left[ \alpha_r L \cdot \sigma(p_1) \cdot \left( 1 - \sigma(q_1) \right)  
- \sum_{k=3}^{\alpha_r L} \sum_{t=4}^{k+1} \left( 1 - \sigma(p_1) \right) ^t \right] 
r_1^{*} - \mathcal{O}(\tau)).
\label{newMV_term1}
\end{align}

Next, we consider the case \( z^{(n)} = -1 \) for \( i \in \mathcal{K}_+ \), where we aim to establish an upper bound.  
For negative samples, the tokens contain \( \alpha_c L \) instances of the feature \( \bm{o}_+ \), specifically for  
\( l = \alpha_r L + 1, \alpha_r L + 2, \ldots, (\alpha_r + \alpha_c) L \).  
As in the above case, we distinguish two scenarios:
 
\begin{itemize}
    \item[\textbf{Case I:}] The token contains the feature\( \bm{o}_+ \), with the same contribution structure as above (see Eqs.~\eqref{I_l,s pattern}--\eqref{beta_init}).
    \item[\textbf{Case II:}] The token does not contain \( \bm{o}_+ \), so \( \left\langle \bm{I}_{l, s}^{(n)}, \bm{o}_+ \right\rangle = 0 \) by orthogonality.
\end{itemize}

The total contribution from the negative sample is thus upper bounded by
\begin{align}
&\frac{1}{L} \sum_{l=1}^{L} \frac{1}{\sqrt{m}}\, \phi'\left(\bm{W}_{O(i, \cdot)} \bm{y}_l\right)
\sum_{s=1}^{l} \left\langle \bm{I}_{l,s}^{(n)}, \bm{o}_+ \right\rangle \notag \\
&\leq \frac{1}{\sqrt{m} L} \cdot 
\left[ \alpha_c L \cdot \sigma(q_1) \cdot \left( 1 - \sigma(p_1) \right)  
- \sum_{k=3}^{\alpha_r L} \sum_{t=4}^{k+1} \left( 1 - \sigma(q_1) \right) ^t \right] 
s_1^{*} + \mathcal{O}(\tau)).
\end{align}

The maximum number of such contributing neurons is \(\frac{m}{2}\). Therefore, the total contribution is bounded above by
\begin{align}
&\frac{1}{L} \sum_{i \in \mathcal{K}_+} \sum_{l=1}^{L} v_i \, \phi'\left(\bm{W}_{O(i, \cdot)} \bm{y}_l\right)
\sum_{s=1}^{l} \left\langle \bm{I}_{l,s}^{(n)}, \bm{o}_+ \right\rangle \notag \\
&\leq 
\frac{\sqrt{m}}{2L} \cdot 
\left[ \alpha_c L \cdot \sigma(q_1) \cdot \left( 1 - \sigma(p_1) \right)  
- \sum_{k=3}^{\alpha_r L} \sum_{t=4}^{k+1} \left( 1 - \sigma(q_1) \right) ^t \right] 
s_1^{*} + \mathcal{O}(\tau)).
\label{newMV_generic_term2}
\end{align}

Rearranging terms, we obtain
\begin{align}
&\frac{1}{L} \sum_{i \in \mathcal{K}_+} \sum_{l=1}^{L} v_i \, \phi'\left(\bm{W}_{O(i, \cdot)} \bm{y}_l\right)
\sum_{s=1}^{l} \left\langle \bm{I}_{l,s}^{(n)}, \bm{o}_+ \right\rangle \notag \\
&\leq 
\frac{\sqrt{m} s_1^{*}}{2L}  \cdot 
\left[ \alpha_c L \cdot \sigma(q_1) \cdot \left( 1 - \sigma(p_1) \right)  
- \sum_{k=3}^{\alpha_r L} \sum_{t=4}^{k+1} \left( 1 - \sigma(q_1) \right) ^t \right] + \mathcal{O}(\tau)).
\label{newMV_term2}
\end{align}

Thirdly, let us consider the contribution for \( z^{(n)} = +1 \) from \(i \in \mathcal{K}_-\).
From our initialization, for \( i \in \mathcal{K}^- \), \( v_i = -\frac{1}{\sqrt{m}} \). For \( z^{(n)} = +1 \), we seek an upper bound on the contribution from such neurons.

Let \( z^{(n)} = +1 \). To maximize the term \(\bm{W}_{O(i, \cdot)} \bm{x}_{s}^{(n)}\) in \eqref{eq:coefficient_beta_{s,s}_MV}, we consider token positions that contain \(\bm{o}_-\) features, since \(\bm{W}_{O(i, \cdot)}\) has a large component in the \(\bm{o}_-\) direction. Then \( \bm{x}_l = \bm{o}_- \Rightarrow \bm{y}_l \) contains the \(\bm{o}_-\) feature.

However, in this case, \(\bm{x}_s = \bm{o}_- = \bm{x}_l\), and due to orthogonality,
\begin{equation}
\left\langle \bm{I}_{l, s}^{(n)}, \bm{o}_+ \right\rangle = 0.
\end{equation}

Therefore, we only need to consider the time steps \(l = 1, 2, \ldots, \alpha_r L\), where the \(\bm{o}_+\) features appear.  
Recall the gradient expression:
\begin{equation}
\left\langle -\frac{\partial \hat{\mathcal{L}}}{\partial \bm{w}_\Delta}, \bm{o}_+ \right\rangle 
= \frac{1}{L} \sum_{i=1}^m \sum_{l=1}^L -\frac{1}{\sqrt{m}} \cdot \phi'(\bm{W}_{O(i, \cdot)} \bm{y}_l) \sum_{s=1}^{l} \left\langle \bm{I}_{l,s}^{(n)}, \bm{o}_+ \right\rangle.
\end{equation}

We have,
\begin{equation}
\bm{W}_{O(i, \cdot)} \bm{o}_+ \leq \delta_1 + \mathcal{O} \left(  \sqrt{ \frac{d \log N}{m N} } \right) =: c,
\end{equation}
and combining with \eqref{newMV_initial_conditions}, similar to \eqref{term1_alltokens}, we obtain
\begin{align}
&\frac{1}{L} \sum_{l=1}^{L} \frac{1}{\sqrt{m}} \phi'\left(\bm{W}_{O(i, \cdot)} \bm{y}_l\right) 
\sum_{s=1}^{l} \left\langle \bm{I}_{l,s}^{(n)}, \bm{o}_+ \right\rangle \notag \\
&\leq \frac{c}{\sqrt{m} L} \cdot 
\left[ \alpha_r L \cdot \sigma(p_1) \cdot \left( 1 - \sigma(q_1) \right)  
- \sum_{k=3}^{\alpha_r L} \sum_{t=4}^{k+1} \left( 1 - \sigma(p_1) \right) ^t \right] + \mathcal{O}(\tau)).
\label{term3_alltokens}
\end{align}

Since there are at most \(\frac{m}{2}\) such contributing neurons in \(\mathcal{K}_-\), the full contribution is bounded as:
\begin{align}
&\frac{1}{L} \sum_{i \in \mathcal{K}_-} \sum_{l=1}^{L} \frac{1}{\sqrt{m}} \, \phi'\left(\bm{W}_{O(i, \cdot)} \bm{y}_l\right) 
\sum_{s=1}^{l} \left\langle \bm{I}_{l,s}^{(n)}, \bm{o}_+ \right\rangle \notag \\
&\leq 
\frac{c}{\sqrt{m} L} \cdot \frac{m}{2} \cdot 
\left[ \alpha_r L \cdot \sigma(p_1) \cdot \left( 1 - \sigma(q_1) \right)  
- \sum_{k=3}^{\alpha_r L} \sum_{t=4}^{k+1} \left( 1 - \sigma(p_1) \right) ^t \right] + \mathcal{O}(\tau)).
\end{align}

Therefore, the overall contribution from these neurons satisfies:
\begin{align}
&- \frac{1}{\sqrt{m} L} \sum_{i \in \mathcal{K}_-} \sum_{l=1}^L \phi'(\bm{W}_{O(i, \cdot)} \bm{y}_l) \sum_{s=1}^l \left\langle \bm{I}_{l, s}^{(n)}, \bm{o}_+ \right\rangle \notag \\
&\geq -\frac{\sqrt{m} c}{2L} \cdot \left[ \alpha_r L \cdot \sigma(p_1) \cdot \left( 1 - \sigma(q_1) \right)  
- \sum_{k=3}^{\alpha_r L} \sum_{t=4}^{k+1} \left( 1 - \sigma(p_1) \right) ^t \right] - \mathcal{O}(\tau)).
\label{newMV_term3}
\end{align}

Finally, we consider \( z^{(n)} = -1 \) for \(i \in \mathcal{K}_-\).
For \( z^{(n)} = -1 \), we want a lower bound since \( v_i = -\frac{1}{\sqrt{m}} \).

We could consider \( l = L^+ \Rightarrow \bm{x}_l = \bm{o}_+ \), and write
\begin{equation}
\left\langle \bm{W}_{O(i, \cdot)}, \bm{o}_+ \right\rangle \geq \delta_1 - \mathcal{O} \left( \sqrt{ \frac{d \log N}{m N} } \right).
\end{equation}
However, the minimum number of such contributing neurons is not tractable. Thus, if we consider the worst case where \(\bm{W}_{O(i, \cdot)}\) for \( i \in \mathcal{K}^- \) does not learn the \(\bm{o}_+\) feature, the obvious lower bound is zero:
\begin{equation}
\frac{1}{L} \sum_{i \in \mathcal{K}_-} \sum_{l=1}^L \frac{1}{\sqrt{m}} \phi'\left( \bm{W}_{O(i, \cdot)} \bm{y}_l \right) \sum_{s=1}^l \left\langle \bm{I}_{l, s}^{(n)}, \bm{o}_+ \right\rangle 
\geq 0.
\label{newMV_term4}
\end{equation}

We now combine the bounds for the four terms identified in \eqref{eq:newMV_inner_prod_decomposition}, corresponding to the contributions from: 
(i) \(\mathcal{K}_+\) with \(z^{(n)} = +1\) as shown in \eqref{newMV_term1}, 
(ii) \(\mathcal{K}_+\) with \(z^{(n)} = -1\) as shown in  \eqref{newMV_term2}, 
(iii) \(\mathcal{K}_-\) with \(z^{(n)} = +1\) as shown in \eqref{newMV_term3}, and 
(iv) \(\mathcal{K}_-\) with \(z^{(n)} = -1\) as shown in \eqref{newMV_term4}. We assume the batch is balanced, so the number of positive and negative samples is equal, with each class contributing \( \frac{N}{2} \) samples. Then we have
\begin{align}
\left\langle -\frac{\partial \hat{\mathcal{L}}}{\partial \bm{w}_\Delta}, \bm{o}_+ \right\rangle 
&\geq \frac{1}{2} \Biggl[ 
\frac{r_1^{*}}{\sqrt{m}L} \cdot \rho_t^+ \cdot 
\left(  \alpha_r L \cdot \sigma(p_1) \cdot \left( 1 - \sigma(q_1) \right)  
- \sum_{k=3}^{\alpha_r L} \sum_{t=4}^{k+1} \left( 1 - \sigma(p_1) \right) ^t  \right) \notag \\
&\quad 
- \frac{\sqrt{m} c}{2L} \cdot 
\left(  \alpha_r L \cdot \sigma(p_1) \cdot \left( 1 - \sigma(q_1) \right)  
- \sum_{k=3}^{\alpha_r L} \sum_{t=4}^{k+1} \left( 1 - \sigma(p_1) \right) ^t  \right) \notag \\
&\quad 
- \frac{\sqrt{m} s_1^{*}}{2L} \cdot 
\left( \alpha_c L \cdot \sigma(q_1) \cdot \left( 1 - \sigma(p_1) \right)  
- \sum_{k=3}^{\alpha_r L} \sum_{t=4}^{k+1} \left( 1 - \sigma(q_1) \right) ^t \right)
+ 0 \Biggr] \notag  \\
&\quad - \mathcal{O}(\tau)) \\
&= 
\frac{r_1^{*}}{2\sqrt{m}L} \cdot \rho_t^+ \cdot 
\left(  \alpha_r L \cdot \sigma(p_1) \cdot \left( 1 - \sigma(q_1) \right)  
- \sum_{k=3}^{\alpha_r L} \sum_{t=4}^{k+1} \left( 1 - \sigma(p_1) \right) ^t  \right) \notag \\
&\quad 
- \frac{\sqrt{m} s_1^{*}}{4L} \cdot 
\left( \alpha_c L \cdot \sigma(q_1) \cdot \left( 1 - \sigma(p_1) \right)  
- \sum_{k=3}^{\alpha_r L} \sum_{t=4}^{k+1} \left( 1 - \sigma(q_1) \right) ^t \right) \notag \\
&\quad - \mathcal{O}\left( \sqrt{ \frac{d \log N}{m N} } \right)  - \mathcal{O}(\tau)),
\end{align}

where we have used the fact \( -\frac{\sqrt{m} c}{2L} \cdot \left(  \alpha_r L \cdot \sigma(p_1) \cdot \left( 1 - \sigma(q_1) \right)  
- \sum_{k=3}^{\alpha_r L} \sum_{t=4}^{k+1} \left( 1 - \sigma(p_1) \right) ^t  \right)  = \mathcal{O}\left( \sqrt{ \frac{d \log N}{m N} } \right) \) since \(c = \mathcal{O}\left( \sqrt{ \frac{d \log N}{m N} } \right)\).

Taking \( \left(  \alpha_r L \cdot \sigma(p_1) \cdot \left( 1 - \sigma(q_1) \right)  
- \sum_{k=3}^{\alpha_r L} \sum_{t=4}^{k+1} \left( 1 - \sigma(p_1) \right) ^t  \right) = \Theta(\alpha_r L) \) and \( \left( \alpha_c L \cdot \sigma(q_1) \cdot \left( 1 - \sigma(p_1) \right)  
- \sum_{k=3}^{\alpha_r L} \sum_{t=4}^{k+1} \left( 1 - \sigma(q_1) \right) ^t \right) = \Theta(\alpha_c L) \) , we can write more compactly as
\begin{equation}
\left\langle -\frac{\partial \hat{\mathcal{L}}}{\partial \bm{w}_\Delta}, \bm{o}_+ \right\rangle \geq 
\frac{r_1^{*}}{2\sqrt{m}L} \cdot \rho_t^+ \cdot 
\Theta(\alpha_r L)
- 
\frac{\sqrt{m} s_1^{*}}{4L} \cdot 
\Theta(\alpha_c L)  
- \mathcal{O}\left( \sqrt{ \frac{d \log N}{m N} } \right) - \mathcal{O}(\tau))
\end{equation}

\end{proof}

\subsection{Proof of Lemma \ref{lemma:6}}
\begin{proof}

The gradient is given in \eqref{eq:gradient_gating_network_MV}.

Let's consider the alignment with \(\bm{o}_k \text{ for } k \neq 1,2\). 

\begin{equation}
\left\langle -\frac{\partial \mathcal{\ell}}{\partial \bm{w}_\Delta}, \bm{o}_k \right\rangle 
= \frac{z^{(n)}}{L} \sum_{i=1}^m \sum_{l=1}^L v_i \cdot \phi'(\bm{W}_{O(i, \cdot)} \bm{y}_l^{(n)}) \sum_{s=1}^{l} \left\langle \bm{I}_{l,s}^{(n)}, \bm{o}_k \right\rangle   
\end{equation}

From our initialization, for all \( i \in \mathcal{K}^+ \), we have \( v_i = \frac{1}{\sqrt{m}} \).

We first consider the case \( z^{(n)} = +1 \) for \( i \in \mathcal{K}^+ \). 
Since \( \bm{W}_{O(i, \cdot)}\), for  \( i \in \mathcal{K}^+ \) has a large \(\bm{o}_+\) component, we have to consider the token features with \(\bm{o}_+\).  
For \(z^{(n)} = +1\), only when \( l = 1, 2, \ldots, \alpha_r L\) we have  \(\bm{x}_l = \bm{x}_s = \bm{o}_+\).
Therefore, \( \bm{W}_{O(i, \cdot)} \bm{x}_{s} \) is significant. Hence, we have for \( l \neq s\)
\begin{align}
\left\langle \bm{I}_{l, s}^{(n)}, \bm{o}_k \right\rangle &= - \sum_{j=s+1}^{l} \beta_{s,j} \langle \bm{x}_{j}^{(n)}, \bm{o}_k \rangle \notag \\
&=
0\pm \mathcal{O}(\tau)).
\end{align}

Hence, we obtain
\begin{equation}
\frac{1}{L} \sum_{l=1}^{L} \frac{1}{\sqrt{m}}  \cdot \phi'(\bm{W}_{O(i, \cdot)} \bm{y}_l^{(n)}) \sum_{s=1}^{l} \left\langle \bm{I}_{l,s}^{(n)}, \bm{o}_k \right\rangle  
\leq \mathcal{O}(\tau)). 
\label{lemma6_newMV:term1}
\end{equation}

Similarly for \( i \in \mathcal{K}^- \), for \(z^{(n)} = -1\) also we have
\begin{equation}
\frac{1}{L} \sum_{i \in \mathcal{K}_-} \sum_{l=1}^{L} v_i \cdot \phi'\left(\bm{W}_{O(i, \cdot)} \bm{y}_l^{(n)} \right) \sum_{s=1}^{l} \left\langle \bm{I}_{l,s}^{(n)}, \bm{o}_k \right\rangle 
\leq  \mathcal{O}(\tau)).
\label{lemma6_newMV:term2}
\end{equation}

Putting it together, We know
\begin{align}
\left\langle -\frac{\partial \hat{\mathcal{L}}}{\partial \bm{w}_\Delta}, \bm{o}_k \right\rangle 
&= \frac{1}{N} \sum_{n=1}^N \frac{z^{(n)}}{L} \cdot \sum_{i=1}^m \sum_{l=1}^L v_i \phi'(\bm{W}_{O(i, \cdot)} \bm{y}_l^{(n)}) \cdot \sum_{s=1}^{l} \left\langle \bm{I}_{l,s}^{(n)}, \bm{o}_k \right\rangle \notag \\
&= \frac{1}{N} \sum_{n: z^{(n)} = +1} \frac{1}{L} \left[ 
\sum_{i \in \mathcal{K}_+} \sum_{l=1}^L \frac{1}{\sqrt{m}} \phi'(\bm{W}_{O(i, \cdot)} \bm{y}_l^{(n)}) \sum_{s=1}^l \left\langle \bm{I}_{l,s}^{(n)}, \bm{o}_k \right\rangle \right. \notag \\
&\quad \left. + \sum_{i \in \mathcal{K}_-} \sum_{l=1}^L \left( -\frac{1}{\sqrt{m}} \right) \phi'(\bm{W}_{O(i, \cdot)} \bm{y}_l^{(n)}) \sum_{s=1}^l \left\langle \bm{I}_{l,s}^{(n)}, \bm{o}_k \right\rangle \right] \notag \\
&\quad + \frac{1}{N} \sum_{n: z^{(n)} = -1} \frac{-1}{L} \left[
\sum_{i \in \mathcal{K}_+} \sum_{l=1}^L \frac{1}{\sqrt{m}} \phi'(\bm{W}_{O(i, \cdot)} \bm{y}_l^{(n)}) \sum_{s=1}^l \left\langle \bm{I}_{l,s}^{(n)}, \bm{o}_k \right\rangle \right. \notag \\
&\quad \left. + \sum_{i \in \mathcal{K}_-} \sum_{l=1}^L \left( -\frac{1}{\sqrt{m}} \right) \phi'(\bm{W}_{O(i, \cdot)} \bm{y}_l^{(n)}) \sum_{s=1}^l \left\langle \bm{I}_{l,s}^{(n)}, \bm{o}_k \right\rangle \right]. 
\label{eq:lemma6_newMV_decomposition}
\end{align}

Therefore, we have
\begin{equation}
\left\langle - \frac{\partial \hat{\mathcal{L}}}{\partial \bm{w}_\Delta}, \bm{o}_k \right\rangle 
\leq \mathcal{O}(\tau)) + \mathcal{O}\left( \sqrt{ \frac{d \log N}{m N} } \right).
\end{equation}

Note that the second and third terms in \eqref{eq:lemma6_newMV_decomposition} yield the minority contribution of \(\mathcal{O}\left( \sqrt{ \frac{d \log N}{m N} } \right)\).

\end{proof}

\section{Proof of Lemmas in Appendix \ref{Appendix: C}}\label{Appendix: E}

\subsection{Proof of Lemma \ref{lem:positive_cls_lucky_new}}
\label{proof_of_lemma1}
\begin{proof}
We know that the gradient of the loss function for the \(n^{\text{th}}\) sample  is 
\begin{align}
\frac{\partial \mathcal{\ell}}{\partial \bm{W}_{O(i, \cdot)}} &= \frac{\partial \mathcal{\ell}}{\partial F(\bm{X}^{(n)})} \cdot \frac{\partial F(\bm{X}^{(n)})}{\partial \bm{W}_{O(i, \cdot)}} \notag \\
&= -\frac{z^{(n)}}{L} \sum_{l=1}^L v_i \cdot \phi'\left(\bm{W}_{O(i, \cdot)} \bm{y}_l^{(n)}\right) \cdot \bm{y}_l^{(n)} .
\label{eq:single_grad_WO_Lemma1}
\end{align}

If we consider the gradient for the population loss, 
\begin{align}
\frac{\partial \mathcal{L}}{\partial \bm{W}_{O(i, \cdot)}} &= -\mathbb{E}\left[\frac{z^{(n)}}{L} \sum_{l=1}^L v_i \cdot \phi'\left(\bm{W}_{O(i, \cdot)} \bm{y}_l^{(n)}\right) \cdot \bm{y}_l^{(n)}\right] 
\label{eq:grad_WO_i}\\
&= -\mathbb{E}_{z=+1} \left[ \sum_{l=1}^L \frac{1}{L} v_i \cdot \phi'\left(\bm{W}_{O(i, \cdot)} \bm{y}_l^{(n)}\right) \cdot \bm{y}_l^{(n)} \right] \notag \\
&\quad + \mathbb{E}_{z=-1} \left[ \sum_{l=1}^L \frac{1}{L} v_i \cdot \phi'\left(\bm{W}_{O(i, \cdot)} \bm{y}_l^{(n)}\right) \cdot \bm{y}_l^{(n)} \right].
\label{eq:grad_WO_lemma1proof}
\end{align}

We are given that
\begin{equation}
p_1 \leq \left\langle \bm{w}_\Delta^{(t)}, \bm{o}_+ \right\rangle \leq q_1, \quad
p_2 \leq \left\langle \bm{w}_\Delta^{(t)}, \bm{o}_- \right\rangle \leq q_2, \quad \text{and} \quad
p_3 \leq \left\langle \bm{w}_\Delta^{(t)}, \bm{o}_j \right\rangle \leq q_3 \quad \text{for } j \neq 1,2.
\label{w_Delta_initial_conditions}
\end{equation}

The Mamba output can be written as
\begin{align}
\bm{y}_l(t) &= \sum_{s=1}^l 
\left( \prod_{j = s+1}^l \left(1 - \sigma({\bm{w}_\Delta^{(t)}}^\top \bm{x}_j)\right) \right) 
\cdot \sigma({\bm{w}_\Delta^{(t)}}^\top \bm{x}_s) \cdot (\bm{x}_s^\top \bm{x}_l) \bm{x}_s
\end{align}

We have to consider FOUR cases.

\textbf{Case I:} \( l = s = L_1^+ \)
\begin{align}
\bm{x}_s &= \bm{x}_l = \bm{o}_+ \\
\left\langle \mathbb{E} \bm{y}_{L_1^+}, \bm{o}_+ \right\rangle &= \sigma({\bm{w}_\Delta^{(t)}}^\top \bm{o}_+) \geq  \sigma(p_1)\pm \mathcal{O}(\tau) = \frac{1}{1 + e^{-p_1}}\pm \mathcal{O}(\tau).
\end{align}

\textbf{Case II:} \( l = s = L_2^+ \)
\begin{align}
\left\langle \mathbb{E} \bm{y}_{L_2^+, L_2^+}, \bm{o}_+ \right\rangle 
&= \sigma({\bm{w}_\Delta^{(t)}}^\top \bm{o}_+)\pm \mathcal{O}(\tau).
\label{eq:case2_1}
\end{align}

\textbf{Case III:} \( l = L_2^+ \), \( s = L_1^+ \)
\begin{align}
\left\langle \mathbb{E} \bm{y}_{L_2^+, L_1^+}, \bm{o}_+ \right\rangle 
&= \left(1 - \sigma({\bm{w}_\Delta^{(t)}}^\top \bm{o}_+) \right) 
   \left(1 - \sigma({\bm{w}_\Delta^{(t)}}^\top \bm{o}_-) \right) \notag \\
&\quad \cdot \left(1 - \sigma({\bm{w}_\Delta^{(t)}}^\top \bm{o}_j)\right)^{\Delta L_{\bm{o}_+}^+ -2} 
   \cdot \sigma({\bm{w}_\Delta^{(t)}}^\top \bm{o}_+)\pm \mathcal{O}(\tau).
\label{eq:case2_2}
\end{align}

Combining \eqref{eq:case2_1} and \eqref{eq:case2_2}, we obtain
\begin{equation}
\begin{split}
\left\langle \mathbb{E} \bm{y}_{L_2^+}, \bm{o}_+ \right\rangle 
&= \sigma({\bm{w}_\Delta^{(t)}}^\top \bm{o}_+) \\
&\quad + \left(1 - \sigma({\bm{w}_\Delta^{(t)}}^\top \bm{o}_+)\right)
        \left(1 - \sigma({\bm{w}_\Delta^{(t)}}^\top \bm{o}_-)\right) \\
&\quad \cdot \left(1 - \sigma({\bm{w}_\Delta^{(t)}}^\top \bm{o}_j)\right)^{\Delta L_{\bm{o}_+}^+ -2}
        \cdot \sigma({\bm{w}_\Delta^{(t)}}^\top \bm{o}_+) \pm \mathcal{O}(\tau).
\end{split}
\end{equation}

\textbf{Case IV:} Others \\
For the other token positions, \(\bm{x}_l \neq \bm{o}_+\). Since we assume orthogonality among the features, \(\bm{y}_l = 0\).

From our initialization, for the lucky neuron \(i \in \mathcal{W}(0)\), \(v_i = +\frac{1}{\sqrt{m}}\). For \( i \in \mathcal{W}(0) \), and \( z^{(n)} = +1 \), we have

\begin{align}
&\left\langle 
  \mathbb{E}_{z=+1} \left[ 
     \sum_{l=1}^L \frac{1}{L} v_i \cdot 
     \phi'\!\left( \bm{W}_{O(i, \cdot)} \bm{y}_l^{(n)} \right) 
     \cdot \bm{y}_l^{(n)} 
  \right], \bm{o}_+ \right\rangle \notag \\
&\quad = \frac{1}{\sqrt{m} L} \cdot 
   \sigma({\bm{w}_\Delta^{(t)}}^\top \bm{o}_+) 
   \left[
      2 + \left(1 - \sigma({\bm{w}_\Delta^{(t)}}^\top \bm{o}_+)\right)
          \left(1 - \sigma({\bm{w}_\Delta^{(t)}}^\top \bm{o}_-)\right) \right. \notag \\
&\qquad \left.
          \cdot \left(1 - \sigma({\bm{w}_\Delta^{(t)}}^\top \bm{o}_j)\right)^{\Delta L_{\bm{o}_+}^+ -2}
   \right]\pm \mathcal{O}(\tau).
\label{expectation_positive}
\end{align}

Similarly for \( z = -1 \), we can obtain

\begin{align}
&\left\langle 
  \mathbb{E}_{z=-1} \left[ 
     \sum_{l=1}^L \frac{1}{L} v_i \cdot 
     \phi'\!\left( \bm{W}_{O(i, \cdot)} \bm{y}_l^{(n)} \right) 
     \cdot \bm{y}_l^{(n)} 
  \right], \bm{o}_+ \right\rangle \notag \\
&\quad = \frac{1}{\sqrt{m} L} \cdot 
   \sigma({\bm{w}_\Delta^{(t)}}^\top \bm{o}_+) 
   \left[
      2 + \left(1 - \sigma({\bm{w}_\Delta^{(t)}}^\top \bm{o}_+)\right)
          \left(1 - \sigma({\bm{w}_\Delta^{(t)}}^\top \bm{o}_-)\right) \right. \notag \\
&\qquad \left.
          \cdot \left(1 - \sigma({\bm{w}_\Delta^{(t)}}^\top \bm{o}_j)\right)^{\Delta L_{\bm{o}_+}^- -2}
   \right]\pm \mathcal{O}(\tau).
\label{expectation_negative}
\end{align}

Therfore, combining \eqref{expectation_positive} and \eqref{expectation_negative},
\begin{equation}
\begin{split}
\left\langle  - \frac{\partial \mathcal{L}}{\partial \bm{W}_{O(i, \cdot)}^{(t)}} , \bm{o}_+ \right\rangle 
&= \left\langle \mathbb{E}_{z=+1} \left[ \sum_{l=1}^L \frac{1}{L} v_i \cdot 
    \phi'\!\left( \bm{W}_{O(i, \cdot)} \bm{y}_l^{(n)} \right) \cdot 
    \bm{y}_l^{(n)} \right], \bm{o}_+ \right\rangle \\
&\quad - \left\langle \mathbb{E}_{z=-1} \left[ \sum_{l=1}^L \frac{1}{L} v_i \cdot 
    \phi'\!\left( \bm{W}_{O(i, \cdot)} \bm{y}_l^{(n)} \right) \cdot 
    \bm{y}_l^{(n)} \right], \bm{o}_+ \right\rangle \\
&= \frac{1}{\sqrt{m} L} \cdot \sigma({\bm{w}_\Delta^{(t)}}^\top \bm{o}_+)
\cdot \left(1 - \sigma({\bm{w}_\Delta^{(t)}}^\top \bm{o}_+) \right) \cdot  \left(1 - \sigma({\bm{w}_\Delta^{(t)}}^\top \bm{o}_-)\right) \cdot  \\
&\quad \left[
\left(1 - \sigma({\bm{w}_\Delta^{(t)}}^\top \bm{o}_j)\right)^{\Delta L_{\bm{o}_+}^+ -2} -
\left(1 - \sigma({\bm{w}_\Delta^{(t)}}^\top \bm{o}_j)\right)^{\Delta L_{\bm{o}_+}^- -2}
\right]\pm \mathcal{O}(\tau).
\label{o+_direction_exact}
\end{split}
\end{equation}

We aim to bound the deviation between the gradient of the population loss and that of the empirical loss. Specifically, \( \left\| \frac{\partial \mathcal{L}}{\partial \bm{W}_{O(i, \cdot)}^{(t)}} - \frac{\partial \hat{\mathcal{L}}}{\partial \bm{W}_{O(i, \cdot)}^{(t)}} \right\|_2  = \left\| \frac{1}{N} \sum_{n=1}^N \boldsymbol{\gamma}_n - \mathbb{E} \boldsymbol{\gamma}_n \right\|_2 \), where

\begin{equation}
\boldsymbol{\gamma}_n = \frac{z^{(n)}}{L} \sum_{l=1}^L v_i \, \phi'\left( \bm{W}_{O(i, \cdot)} \bm{y}_l^{(n)} \right) \bm{y}_l^{(n)}.
\label{eq:gamma_def_iteration1}
\end{equation}

Consider a fixed vector \(\bm{\alpha}\) with \(\|\bm{\alpha}\|_{2} = 1\).  
We will show that \(\bm{\alpha}^\top \boldsymbol{\gamma}_n\) is a sub-Gaussian random variable.

\begin{equation}
\left| \bm{\alpha}^\top \boldsymbol{\gamma}_n \right| 
\leq \|\bm{\alpha}\|_{2} \cdot \|\boldsymbol{\gamma}_n\|_{2} 
= \|\boldsymbol{\gamma}_n\|_{2}.
\label{eq:alpha_proj_leq_gamma_norm}
\end{equation}

By the problem setup, we know that
\begin{equation}
|v_i| = \frac{1}{\sqrt{m}}, \quad |z^{(n)}| = 1, \quad \left| \phi'\left( \bm{W}_{O(i, \cdot)} \bm{y}_l^{(n)} \right) \right| \leq 1.
\label{eq:gamma_assumptions}
\end{equation}

Recall the Mamba output,
\begin{equation}
\bm{y}_l^{(n)}(t) = \sum_{s=1}^l 
\left( \prod_{j = s+1}^l \left(1 - \sigma({\bm{w}_\Delta^{(t)}}^\top \bm{x}_j)\right) \right) 
\cdot \sigma({\bm{w}_\Delta^{(t)}}^\top \bm{x}_s) \cdot (\bm{x}_s^\top \bm{x}_l) \bm{x}_s.
\end{equation}

Since \(\|\bm{x}_{s}\|_{2} = 1\), we get
\begin{align}
\left\| \bm{y}_l^{(n)} \right\|_{2} 
&\leq \sum_{s=1}^{l} \left| a^{l-s+1} \cdot \left( \bm{x}_{s}^\top \bm{x}_l \right) \right| \cdot \|\bm{x}_{s}\|_{2} \notag \\
&\leq \sum_{s=1}^{l} \frac{a}{1-a} \cdot 1 \cdot 1 = a' \quad \text{(where $a'$ denotes a constant).}
\end{align}

Therefore, the norm of \(\boldsymbol{\gamma}_n\) satisfies
\begin{align}
\left\| \boldsymbol{\gamma}_n \right\|_{2} 
&\leq \frac{1}{L} \sum_{l=1}^L |v_i| \cdot \left| \phi'\left( \bm{W}_{O(i, \cdot)} \bm{y}_l^{(n)} \right) \right| \cdot \left\| \bm{y}_l^{(n)} \right\|_{2} \notag \\
&\leq \frac{1}{L} \cdot \frac{1}{\sqrt{m}} \sum_{l=1}^L \left\| \bm{y}_l^{(n)} \right\|_{2} \notag \\
&\leq \frac{1}{L} \cdot \frac{1}{\sqrt{m}} \cdot \sum_{l=1}^L a' 
= \frac{a'}{\sqrt{m}}.
\end{align}

Hence,
\begin{equation}
\left| \bm{\alpha}^\top \boldsymbol{\gamma}_n \right| \leq \frac{a'
}{\sqrt{m}} \quad \text{(bounded).}
\end{equation}

This implies that \(\bm{\alpha}^\top \boldsymbol{\gamma}_n\) is sub-Gaussian with variance proxy
\begin{equation}
\sigma^2 = \mathcal{O}\left( \frac{1}{m} \right).
\end{equation}

Now consider the independent sub-Gaussian variables \(\bm{\alpha}^\top \boldsymbol{\gamma}_1, \dots, \bm{\alpha}^\top \boldsymbol{\gamma}_N\), each bounded as
\begin{equation}
- \frac{1}{\sqrt{m}} \leq \bm{\alpha}^\top \boldsymbol{\gamma}_n \leq \frac{1}{\sqrt{m}}.
\end{equation}

Applying Hoeffding’s inequality, for any \(q > 0\),
\begin{equation}
\mathbb{P} \left( \left| \frac{1}{N} \sum_{n=1}^N \bm{\alpha}^\top \boldsymbol{\gamma}_n - \mathbb{E} \bm{\alpha}^\top \boldsymbol{\gamma}_n \right|
\gtrsim  \sqrt{ \frac{q \log N}{m N} } \right)
\leq N^{-q}.
\label{eq:hoeffding_final_lemma1}
\end{equation}

Observe that this can be written as
\begin{align}
\frac{1}{N} \sum_{n=1}^N \bm{\alpha}^\top \boldsymbol{\gamma}_n - \mathbb{E} \bm{\alpha}^\top \boldsymbol{\gamma}_n 
= \bm{\alpha}^\top \left( \frac{1}{N} \sum_{n=1}^N \boldsymbol{\gamma}_n - \mathbb{E} \boldsymbol{\gamma}_n \right)
:= \bm{\alpha}^\top \bm{\zeta}.
\label{eq:scalar_projection_zeta}
\end{align}

Therefore, by Hoeffding’s inequality (cf. \eqref{eq:hoeffding_final_lemma1}),
\begin{equation}
\mathbb{P} \left( \left| \bm{\alpha}^\top \bm{\zeta} \right| \gtrsim  \sqrt{ \frac{q \log N}{m N} } \right) 
\leq N^{-q}.
\end{equation}

To bound \(\left\| \bm{\zeta} \right\|_2\), we use the dual norm identity
\begin{equation}
\left\| \bm{\zeta} \right\|_{2}
= \sup_{\|\bm{\alpha}\|_{2} = 1} \bm{\alpha}^\top \bm{\zeta}.
\end{equation}

We apply an \(\varepsilon\)-cover argument to obtain
\begin{align}
\sup_{\|\bm{\alpha}\|_{2} = 1} \bm{\alpha}^\top \bm{\zeta}
&\leq \frac{1}{1 - \varepsilon} \max_{\bm{\alpha} \in \mathcal{C}_\varepsilon} \bm{\alpha}^\top \bm{\zeta} \notag \\
&\leq 2 \max_{\bm{\alpha} \in \mathcal{C}_{1/2}} \bm{\alpha}^\top \bm{\zeta}.
\end{align}

We have shown that for any fixed \(\bm{\alpha}\),
\begin{equation}
\mathbb{P}\left( \left| \bm{\alpha}^\top \bm{\zeta} \right| \gtrsim  \sqrt{ \frac{q \log N}{m N} } \right) \leq N^{-q}.
\end{equation}

Therefore, for all fixed \(\bm{\alpha} \in \mathcal{C}_{1/2}\),
\begin{equation}
\left| \bm{\alpha}^\top \bm{\zeta} \right| 
\gtrsim  \sqrt{ \frac{q \log N}{m N} }
\quad \text{with probability at most } N^{-q}.
\end{equation}

Then,
\begin{equation}
\max_{\bm{\alpha} \in \mathcal{C}_{1/2}} \left| \bm{\alpha}^\top \bm{\zeta} \right| 
\gtrsim  \sqrt{ \frac{q \log N}{m N} }
\quad \text{with probability at most } |\mathcal{C}_{1/2}| N^{-q}.
\end{equation}

Recall that the covering number satisfies
\begin{equation}
|\mathcal{C}_\varepsilon| \leq \left( \frac{3B}{\varepsilon} \right)^d.
\end{equation}

For \(B = 1\) and \(\varepsilon = \frac{1}{2}\), we have
\begin{equation}
|\mathcal{C}_{1/2}| \leq 6^d.
\end{equation}

We can therefore write
\begin{equation}
\mathbb{P} \left( \left\| \bm{\zeta} \right\|_2 \gtrsim  \sqrt{ \frac{q \log N}{m N} } \right) 
\leq 6^d \cdot N^{-q}.
\end{equation}

We want this probability to be sufficiently small. Set \(q = d\), so that
\begin{equation}
\mathbb{P} \left( \left\| \bm{\zeta} \right\|_2 \gtrsim 2 \sqrt{ \frac{d \log N}{m N} } \right)
\leq \left( \frac{N}{6} \right)^{-d}.
\end{equation}

Hence, the deviation is bounded with high probability:
\begin{equation}
\left\| \bm{\zeta} \right\|_2 
> \mathcal{O} \left(  \sqrt{ \frac{d \log N}{m N} } \right)
\quad \text{with probability at most } \mathcal{O}(N^{-d}).
\end{equation}

Or equivalently, with probability at most \(\mathcal{O}(N^{-d})\),
\begin{equation}
\left\| \frac{1}{N} \sum_{n=1}^N \boldsymbol{\gamma}_n - \mathbb{E} \boldsymbol{\gamma}_n \right\|_2 
> \mathcal{O} \left(  \sqrt{ \frac{d \log N}{m N} } \right).
\label{final_2norm_nd_bound}
\end{equation}

That is, with high probability \(1 - \mathcal{O}(N^{-d})\), we have
\begin{equation}
\left\| \frac{1}{N} \sum_{n=1}^N \boldsymbol{\gamma}_n - \mathbb{E} \boldsymbol{\gamma}_n \right\|_2 
\leq \mathcal{O} \left(  \sqrt{ \frac{d \log N}{m N} } \right).
\end{equation}

Using the identities
\begin{equation}
- \frac{\partial \hat{\mathcal{L}}}{\partial \bm{W}_{O(i, \cdot)}^{(t)}} = \frac{1}{N} \sum_{n=1}^N \boldsymbol{\gamma}_n, \quad
- \frac{\partial \mathcal{L}}{\partial \bm{W}_{O(i, \cdot)}^{(t)}} = \mathbb{E} \boldsymbol{\gamma}_n,
\end{equation}
we conclude that, with high probability,
\begin{equation}
\left\| 
\left( - \frac{\partial \hat{\mathcal{L}}}{\partial \bm{W}_{O(i, \cdot)}^{(t)}} \right)
- \left( - \frac{\partial \mathcal{L}}{\partial \bm{W}_{O(i, \cdot)}^{(t)}} \right)
\right\|_2
= \left\| 
\frac{\partial \mathcal{L}}{\partial \bm{W}_{O(i, \cdot)}^{(t)}}
- \frac{\partial \hat{\mathcal{L}}}{\partial \bm{W}_{O(i, \cdot)}^{(t)}}
\right\|_2
\leq \mathcal{O} \left(\sqrt{ \frac{d \log N}{m N} } \right).
\label{difference_of_gradients}
\end{equation}

Using the Cauchy–Schwarz inequality, we have
\begin{align}
\left| \left\langle 
\frac{\partial \mathcal{L}}{\partial \bm{W}_{O(i, \cdot)}^{(t)}} 
- \frac{\partial \hat{\mathcal{L}}}{\partial \bm{W}_{O(i, \cdot)}^{(t)}}, 
\bm{o}_+ \right\rangle \right| 
&\leq \left\| 
\frac{\partial \mathcal{L}}{\partial \bm{W}_{O(i, \cdot)}^{(t)}} 
- \frac{\partial \hat{\mathcal{L}}}{\partial \bm{W}_{O(i, \cdot)}^{(t)}} 
\right\|_2 \cdot \left\| \bm{o}_+ \right\|_2 \notag \\
&= \left\| 
\frac{\partial \mathcal{L}}{\partial \bm{W}_{O(i, \cdot)}^{(t)}} 
- \frac{\partial \hat{\mathcal{L}}}{\partial \bm{W}_{O(i, \cdot)}^{(t)}} 
\right\|_2 \quad \text{(since } \| \bm{o}_+ \|_2 = 1\text{)} \notag \\
&\leq \mathcal{O} \left( \sqrt{ \frac{d \log N}{m N} } \right).
\end{align}

Therefore, we obtain
\begin{equation}
\begin{split}
\left\langle - \frac{\partial \mathcal{L}}{\partial \bm{W}_{O(i, \cdot)}^{(t)}} , \bm{o}_+ \right\rangle
- \mathcal{O}\left( \sqrt{ \frac{d \log N}{m N} } \right)
&\leq 
\left\langle - \frac{\partial \hat{\mathcal{L}}}{\partial \bm{W}_{O(i, \cdot)}^{(t)}} , \bm{o}_+ \right\rangle \\
&\leq 
\left\langle - \frac{\partial \mathcal{L}}{\partial \bm{W}_{O(i, \cdot)}^{(t)}} , \bm{o}_+ \right\rangle
+ \mathcal{O}\left( \sqrt{ \frac{d \log N}{m N} } \right).
\label{empiricalgradient_o+}
\end{split}
\end{equation}

By pairing \eqref{o+_direction_exact} with the given the conditions on \(\bm{w}_{\Delta}\) in \eqref{w_Delta_initial_conditions}, we can write
\begin{equation}
\begin{split}
&\left\langle  - \frac{\partial \mathcal{L}}{\partial \bm{W}_{O(i, \cdot)}^{(t)}} , \bm{o}_+ \right\rangle\\
     \ge&\frac{1}{\sqrt{m} L} \cdot \sigma(p_1)
\cdot \left(1 - \sigma(q_1) \right) \cdot  \left(1 - \sigma(q_2)\right)  \left[
\left(1 - \sigma(q_3)\right)^{\Delta L_{\bm{o}_+}^+ -2} -
\left(1 - \sigma(p_3)\right)^{\Delta L_{\bm{o}_+}^- -2}
\right] -  \mathcal{O}(\tau)
\end{split}
\end{equation}
and
\begin{equation}
\begin{split}
&\left\langle  - \frac{\partial \mathcal{L}}{\partial \bm{W}_{O(i, \cdot)}^{(t)}} , \bm{o}_+ \right\rangle \\
\le&\frac{1}{\sqrt{m} L} \cdot \sigma(q_1)
\cdot \left(1 - \sigma(p_1) \right) \cdot  \left(1 - \sigma(p_2)\right)  \left[
\left(1 - \sigma(p_3)\right)^{\Delta L_{\bm{o}_+}^+ -2} -
\left(1 - \sigma(q_3)\right)^{\Delta L_{\bm{o}_+}^- -2}
\right] +  \mathcal{O}(\tau)
\end{split} 
\end{equation}

Therefore, we can obtain the lower bound and the upper bound of \(\left\langle  - \frac{\partial \hat{\mathcal{L}}}{\partial \bm{W}_{O(i, \cdot)}^{(t)}} , \bm{o}_+ \right\rangle\) as

\begin{equation}
\begin{split}
\frac{1}{\sqrt{m} L} \cdot \sigma(p_1)
\cdot \left(1 - \sigma(q_1) \right) \cdot  \left(1 - \sigma(q_2)\right)  \Big[
&\left(1 - \sigma(q_3)\right)^{\Delta L_{\bm{o}_+}^+ -2}\\
&-
\left(1 - \sigma(p_3)\right)^{\Delta L_{\bm{o}_+}^- -2}
\Big]
- \mathcal{O} \left(  \sqrt{ \frac{d \log N}{m N} } \right) - \mathcal{O}(\tau)
\\
\leq&   \left\langle  - \frac{\partial \hat{\mathcal{L}}}{\partial \bm{W}_{O(i, \cdot)}^{(t)}} , \bm{o}_+ \right\rangle
\end{split}
\end{equation}

and
\begin{equation}
\begin{split}
&\left\langle  - \frac{\partial \hat{\mathcal{L}}}{\partial \bm{W}_{O(i, \cdot)}^{(t)}} , \bm{o}_+ \right\rangle\\
\leq& \frac{1}{\sqrt{m} L} \cdot \sigma(q_1)
\cdot \left(1 - \sigma(p_1) \right) \cdot  \left(1 - \sigma(p_2)\right)  \left[
\left(1 - \sigma(p_3)\right)^{\Delta L_{\bm{o}_+}^+ -2} -
\left(1 - \sigma(q_3)\right)^{\Delta L_{\bm{o}_+}^- -2}
\right] \\
&\quad + \mathcal{O} \left(  \sqrt{ \frac{d \log N}{m N} } \right) + \mathcal{O}(\tau).
\end{split}
\end{equation}

This concludes the proof of \eqref{lemma1_new-1.1} and \eqref{lemma1_new-1.2} in Lemma~\ref{lem:positive_cls_lucky_new}.

To obtain \(\left\langle  - \frac{\partial \hat{\mathcal{L}}}{\partial \bm{W}_{O(i, \cdot)}^{(t)}} , \bm{o}_- \right\rangle  \), we have to consider \(\mathbb{E}_{z=-1} \left[ \sum_{l=1}^L \frac{1}{L} v_i \cdot \phi'\left(\bm{W}_{O(i, \cdot)} \bm{y}_l^{(n)}\right) \cdot \bm{y}_l^{(n)} \right]\).

 If \({\bm{W}_{O(i, \cdot)}^{(t)}} \bm{o}_- >0\), 
\begin{align}
&\left\langle 
   \mathbb{E}_{z=-1} \left[ 
      \sum_{l=1}^L \frac{1}{L} v_i \cdot 
      \phi'\!\left( \bm{W}_{O(i, \cdot)} \bm{y}_l^{(n)} \right) \cdot 
      \bm{y}_l^{(n)} 
   \right], \bm{o}_- \right\rangle \notag \\
&\quad = \frac{1}{\sqrt{m} L} \cdot 
   \sigma({\bm{w}_\Delta^{(t)}}^\top \bm{o}_-) 
   \left[
      2 + \left(1 - \sigma({\bm{w}_\Delta^{(t)}}^\top \bm{o}_-)\right)
          \left(1 - \sigma({\bm{w}_\Delta^{(t)}}^\top \bm{o}_+)\right) \right. \notag \\
&\qquad \left.
          \cdot \left(1 - \sigma({\bm{w}_\Delta^{(t)}}^\top \bm{o}_j)\right)^{\Delta L_{\bm{o}_-}^- -2}
   \right] \pm \mathcal{O}(\tau).
\end{align}

 If \({\bm{W}_{O(i, \cdot)}^{(t)}} \bm{o}_- \leq0\), 
 \begin{equation}
  \left \langle \mathbb{E}_{z=-1} \left[ \sum_{l=1}^L \frac{1}{L} v_i \cdot \phi'\left(\bm{W}_{O(i, \cdot)} \bm{y}_l^{(n)}\right) \cdot \bm{y}_l^{(n)} \right], \bm{o}_- \right\rangle = 0 \pm \mathcal{O}(\tau).    
 \end{equation}

From \eqref{eq:grad_WO_lemma1proof}, We know that
\begin{equation}
\begin{split}
\left\langle  - \frac{\partial \mathcal{L}}{\partial \bm{W}_{O(i, \cdot)}^{(t)}} , \bm{o}_- \right\rangle 
&= \left\langle \mathbb{E}_{z=+1} \left[ \sum_{l=1}^L \frac{1}{L} v_i \cdot \phi'\left( \bm{W}_{O(i, \cdot)} \bm{y}_l^{(n)} \right) \cdot \bm{y}_l^{(n)} \right], \bm{o}_- \right\rangle \\
&\quad - \left\langle \mathbb{E}_{z=-1} \left[ \sum_{l=1}^L \frac{1}{L} v_i \cdot \phi'\left( \bm{W}_{O(i, \cdot)} \bm{y}_l^{(n)} \right) \cdot \bm{y}_l^{(n)} \right], \bm{o}_- \right\rangle .
\end{split}
\end{equation}

Hence, combining both cases, we conclude
\begin{align}
&- \frac{1}{\sqrt{m} L} \cdot 
   \sigma({\bm{w}_\Delta^{(t)}}^\top \bm{o}_-) 
   \left[
      2 + \left(1 - \sigma({\bm{w}_\Delta^{(t)}}^\top \bm{o}_-)\right)
          \left(1 - \sigma({\bm{w}_\Delta^{(t)}}^\top \bm{o}_+)\right) \right. \notag \\
&\qquad \left.
          \cdot \left(1 - \sigma({\bm{w}_\Delta^{(t)}}^\top \bm{o}_j)\right)^{\Delta L_{\bm{o}_-}^- -2}
   \right] - \mathcal{O}(\tau) \notag \\
&\quad \leq 
   \left\langle 
      - \frac{\partial \mathcal{L}}{\partial \bm{W}_{O(i, \cdot)}^{(t)}}, 
      \bm{o}_- 
   \right\rangle 
   \leq  \mathcal{O}(\tau).
\end{align}

From \eqref{difference_of_gradients}, similar to \eqref{empiricalgradient_o+}, we can write
\begin{align}
&\left\langle - \frac{\partial \mathcal{L}}{\partial \bm{W}_{O(i, \cdot)}^{(t)}}, \bm{o}_- \right\rangle
- \mathcal{O}\!\left( \sqrt{\frac{d \log N}{m N}} \right) \notag \\
&\quad \leq \left\langle - \frac{\partial \hat{\mathcal{L}} }{\partial \bm{W}_{O(i, \cdot)}^{(t)}}, \bm{o}_- \right\rangle \notag \\
&\quad \leq \left\langle - \frac{\partial \mathcal{L}}{\partial \bm{W}_{O(i, \cdot)}^{(t)}}, \bm{o}_- \right\rangle
+ \mathcal{O}\!\left( \sqrt{\frac{d \log N}{m N}} \right).
\label{empiricalgradient_o-_Lemma1}
\end{align}

Hence,  we have
\begin{align}
  &- \frac{1}{\sqrt{m} L} \cdot 
   \sigma({\bm{w}_\Delta^{(t)}}^\top \bm{o}_-) 
   \left[
      2 + \left(1 - \sigma({\bm{w}_\Delta^{(t)}}^\top \bm{o}_-)\right)
          \left(1 - \sigma({\bm{w}_\Delta^{(t)}}^\top \bm{o}_+)\right) \right. \notag \\
&\qquad \left.
          \cdot \left(1 - \sigma({\bm{w}_\Delta^{(t)}}^\top \bm{o}_j)\right)^{\Delta L_{\bm{o}_-}^- -2}
   \right] - \mathcal{O} \left( \sqrt{ \frac{d \log N}{m N} } \right)-  \mathcal{O}(\tau) \notag \\
  &\quad \leq \left\langle - \frac{\partial \hat{\mathcal{L}}}{\partial \bm{W}_{O(i, \cdot)}^{(t)}}, \bm{o}_- \right\rangle 
\leq \mathcal{O} \left(\sqrt{ \frac{d \log N}{m N} } \right) + \mathcal{O}(\tau).
\end{align}

This concludes the proof of \eqref{lemma1_new-1.3} and \eqref{lemma1_new-1.4} in Lemma~\ref{lem:positive_cls_lucky_new}.

Now consider \(\left\langle  - \frac{\partial \mathcal{L} }{\partial \bm{W}_{O(i, \cdot)}^{(t)}} , \bm{o}_j \right\rangle \text{ for } j \neq 1,2\).
\begin{align}
\left\langle  - \frac{\partial \mathcal{L}}{\partial \bm{W}_{O(i, \cdot)}^{(t)}} , \bm{o}_j \right\rangle 
&= \left\langle \mathbb{E}_{z=+1} \left[ \sum_{l=1}^L \frac{1}{L} v_i 
   \cdot \phi'\left( \bm{W}_{O(i, \cdot)} \bm{y}_l^{(n)} \right) 
   \cdot \bm{y}_l^{(n)} \right], \bm{o}_j \right\rangle \notag \\
&\quad - \left\langle \mathbb{E}_{z=-1} \left[ \sum_{l=1}^L \frac{1}{L} v_i 
   \cdot \phi'\left( \bm{W}_{O(i, \cdot)} \bm{y}_l^{(n)} \right) 
   \cdot \bm{y}_l^{(n)} \right], \bm{o}_j \right\rangle \notag \\
&:= \left\langle I_1, \bm{o}_j \right\rangle - \left\langle I_2, \bm{o}_j \right\rangle .
\end{align}

Because  \( \bm{o}_j \text{ for } j \neq 1,2 \) is identical in both  \(I_1\) and \(I_2\), \(\left\langle I_1, \bm{o}_j \right\rangle - \left\langle I_2, \bm{o}_j \right\rangle = 0\pm \mathcal{O}(\tau)\). Hence, \(\left\langle  - \frac{\partial \mathcal{L}}{\partial \bm{W}_{O(i, \cdot)}^{(t)}} , \bm{o}_j \right\rangle = 0\pm \mathcal{O}(\tau)\).
From \eqref{difference_of_gradients}, similar to \eqref{empiricalgradient_o+}, we can write
\begin{align}
&\left\langle - \frac{\partial \mathcal{L}}{\partial \bm{W}_{O(i, \cdot)}^{(t)}}, \bm{o}_j \right\rangle
- \mathcal{O}\left( \sqrt{ \frac{d \log N}{m N} } \right) \notag \\
&\quad \leq \left\langle - \frac{\partial \hat{\mathcal{L}}}{\partial \bm{W}_{O(i, \cdot)}^{(t)}}, \bm{o}_j \right\rangle \notag \\
&\quad \leq \left\langle - \frac{\partial \mathcal{L}}{\partial \bm{W}_{O(i, \cdot)}^{(t)}}, \bm{o}_j \right\rangle
+ \mathcal{O}\left( \sqrt{ \frac{d \log N}{m N} } \right) + \mathcal{O}(\tau).
\label{empiricalgradient_oj}
\end{align}

Therefore,
\begin{equation}
\left\langle  - \frac{\partial \hat{\mathcal{L}}}{\partial \bm{W}_{O(i, \cdot)}^{(t)}} , \bm{o}_j \right\rangle  \leq 
\mathcal{O} \left( \sqrt{ \frac{d \log N}{m N} } \right) \text{ for } j \neq 1,2 .
\end{equation}
This concludes the proof of \eqref{lemma1_new-1.5} in Lemma~\ref{lem:positive_cls_lucky_new}.

\end{proof}

\subsection{Proof of Lemma \ref{lem:positive_cls_unlucky_new}}
\label{proof_of_lemma2}
\begin{proof}
By definition, for any unlucky neuron \( i \in \mathcal{K}_{+} \setminus \mathcal{W}(0) \), we have
\begin{equation}
    \bm{W}_{O(i, \cdot)} \bm{o}_+ \leq 0.
\end{equation}

We first consider the alignment with \(\bm{o}_+\). That is,
\begin{equation}
\left\langle - \frac{\partial \hat{\mathcal{L}}}{\partial \bm{W}_{O(i, \cdot)}^{(t)}}, \bm{o}_+ \right\rangle.
\end{equation}

The gradient is given in \eqref{eq:grad_WO_i}.  
We only need to consider the cases where \( \left\langle \bm{y}_l^{(n)}, \bm{o}_+ \right\rangle > 0 \).  
However, since \( \bm{W}_{O(i, \cdot)} \bm{o}_+ \leq 0 \), we have
\begin{equation}
\phi'\left( \bm{W}_{O(i, \cdot)} \bm{y}_l^{(n)} \right) = 0.
\end{equation}

\begin{align}
\left\langle  - \frac{\partial \mathcal{L}}{\partial \bm{W}_{O(i, \cdot)}^{(t)}} , \bm{o}_+ \right\rangle 
&= \left\langle \mathbb{E}_{z=+1} \left[ \sum_{l=1}^L \frac{1}{L} v_i 
   \cdot \phi'\left( \bm{W}_{O(i, \cdot)} \bm{y}_l^{(n)} \right) 
   \cdot \bm{y}_l^{(n)} \right], \bm{o}_+ \right\rangle \notag \\
&\quad - \left\langle \mathbb{E}_{z=-1} \left[ \sum_{l=1}^L \frac{1}{L} v_i 
   \cdot \phi'\left( \bm{W}_{O(i, \cdot)} \bm{y}_l^{(n)} \right) 
   \cdot \bm{y}_l^{(n)} \right], \bm{o}_+ \right\rangle \notag \\
&= 0\pm \mathcal{O}(\tau).
\end{align}

We know by \eqref{empiricalgradient_o+},
\begin{equation}
\left\langle  - \frac{\partial \hat{\mathcal{L}} }{\partial \bm{W}_{O(i, \cdot)}^{(t)}} , \bm{o}_+ \right\rangle  \leq 
\left\langle - \frac{\partial \mathcal{L}}{\partial \bm{W}_{O(i, \cdot)}^{(t)}}  , \bm{o}_+ \right\rangle
+ \mathcal{O} \left( \sqrt{ \frac{d \log N}{m N} } \right) .
\end{equation}

Hence,
\begin{equation}
\left\langle  - \frac{\partial \hat{\mathcal{L}} }{\partial \bm{W}_{O(i, \cdot)}^{(t)}} , \bm{o}_+ \right\rangle  
\leq \mathcal{O} \left( \sqrt{ \frac{d \log N}{m N} } \right) +  \mathcal{O}(\tau) .
\end{equation}

We now analyze the alignment with \(\bm{o}_-\). To obtain the bound on 
\(\left\langle - \frac{\partial \hat{\mathcal{L}}}{\partial \bm{W}_{O(i, \cdot)}^{(t)}}, \bm{o}_- \right\rangle\), 
we consider the expectation
\(\mathbb{E}_{z=-1} \left[ \sum_{l=1}^L \frac{1}{L} v_i \cdot \phi'\left(\bm{W}_{O(i, \cdot)} \bm{y}_l^{(n)}\right) \cdot \bm{y}_l^{(n)} \right].\)

If \( \bm{W}_{O(i, \cdot)}^{(t)} \bm{o}_- > 0 \), the inner product satisfies
\begin{align}
&\left\langle \mathbb{E}_{z=-1} \left[ \sum_{l=1}^L \frac{1}{L} v_i 
   \cdot \phi'\left( \bm{W}_{O(i, \cdot)} \bm{y}_l^{(n)} \right) 
   \cdot \bm{y}_l^{(n)} \right], \bm{o}_- \right\rangle \notag \\
&\quad = \frac{1}{\sqrt{m} L} \cdot 
   \sigma({\bm{w}_\Delta^{(t)}}^\top \bm{o}_-) 
   \left[
      2 + \left(1 - \sigma({\bm{w}_\Delta^{(t)}}^\top \bm{o}_-)\right)
          \left(1 - \sigma({\bm{w}_\Delta^{(t)}}^\top \bm{o}_+)\right) \right. \notag \\
&\qquad \left.
          \cdot \left(1 - \sigma({\bm{w}_\Delta^{(t)}}^\top \bm{o}_j)\right)^{\Delta L_{\bm{o}_-}^- -2}
   \right] \pm \mathcal{O}(\tau).
\end{align}

If \( \bm{W}_{O(i, \cdot)}^{(t)} \bm{o}_- \leq 0 \), then
\begin{equation}
\left \langle \mathbb{E}_{z=-1} \left[ \sum_{l=1}^L \frac{1}{L} v_i \cdot \phi'\left(\bm{W}_{O(i, \cdot)} \bm{y}_l^{(n)}\right) \cdot \bm{y}_l^{(n)} \right], \bm{o}_- \right\rangle = 0\pm \mathcal{O}(\tau).
\end{equation}

From \eqref{eq:grad_WO_lemma1proof}, We know that
\begin{align}
\left\langle  - \frac{\partial \mathcal{L}}{\partial \bm{W}_{O(i, \cdot)}^{(t)}} , \bm{o}_- \right\rangle 
&= \left\langle \mathbb{E}_{z=+1} \left[ \sum_{l=1}^L \frac{1}{L} v_i 
   \cdot \phi'\left( \bm{W}_{O(i, \cdot)} \bm{y}_l^{(n)} \right) 
   \cdot \bm{y}_l^{(n)} \right], \bm{o}_- \right\rangle \notag \\
&\quad - \left\langle \mathbb{E}_{z=-1} \left[ \sum_{l=1}^L \frac{1}{L} v_i 
   \cdot \phi'\left( \bm{W}_{O(i, \cdot)} \bm{y}_l^{(n)} \right) 
   \cdot \bm{y}_l^{(n)} \right], \bm{o}_- \right\rangle .
\end{align}

Hence, combining both cases, we conclude
\begin{align}
&- \frac{1}{\sqrt{m} L} \cdot 
   \sigma({\bm{w}_\Delta^{(t)}}^\top \bm{o}_-) 
   \left[
      2 + \left(1 - \sigma({\bm{w}_\Delta^{(t)}}^\top \bm{o}_-)\right)
          \left(1 - \sigma({\bm{w}_\Delta^{(t)}}^\top \bm{o}_+)\right) \right. \notag \\
&\qquad \left.
          \cdot \left(1 - \sigma({\bm{w}_\Delta^{(t)}}^\top \bm{o}_j)\right)^{\Delta L_{\bm{o}_-}^- -2}
   \right] - \mathcal{O}(\tau). \notag \\
&\quad \leq 
   \left\langle - \frac{\partial \mathcal{L}}{\partial \bm{W}_{O(i, \cdot)}^{(t)}}, \bm{o}_- \right\rangle 
   \leq \mathcal{O}(\tau).
\end{align}

From \eqref{difference_of_gradients}, similar to \eqref{empiricalgradient_o+}, we can write
\begin{align}
&\left\langle - \frac{\partial \mathcal{L}}{\partial \bm{W}_{O(i, \cdot)}^{(t)}}, \bm{o}_- \right\rangle
- \mathcal{O}\left( \sqrt{ \frac{d \log N}{m N} } \right) \notag \\
&\quad \leq 
\left\langle - \frac{\partial \hat{\mathcal{L}}}{\partial \bm{W}_{O(i, \cdot)}^{(t)}}, \bm{o}_- \right\rangle \notag \\
&\quad \leq 
\left\langle - \frac{\partial \mathcal{L}}{\partial \bm{W}_{O(i, \cdot)}^{(t)}}, \bm{o}_- \right\rangle
+ \mathcal{O}\left( \sqrt{ \frac{d \log N}{m N} } \right).
\label{empiricalgradient_o-_Lemma2}
\end{align}

Hence, 
\begin{align}
  &- \frac{1}{\sqrt{m} L} \cdot 
   \sigma({\bm{w}_\Delta^{(t)}}^\top \bm{o}_-) 
   \left[
      2 + \left(1 - \sigma({\bm{w}_\Delta^{(t)}}^\top \bm{o}_-)\right)
          \left(1 - \sigma({\bm{w}_\Delta^{(t)}}^\top \bm{o}_+)\right) \right. \notag \\
&\qquad \left.
          \cdot \left(1 - \sigma({\bm{w}_\Delta^{(t)}}^\top \bm{o}_j)\right)^{\Delta L_{\bm{o}_-}^- -2}
   \right] - \mathcal{O} \left( \sqrt{ \frac{d \log N}{m N} } \right) -  \mathcal{O}(\tau) \notag \\
  &\quad \leq \left\langle - \frac{\partial \hat{\mathcal{L}}}{\partial \bm{W}_{O(i, \cdot)}^{(t)}}, \bm{o}_- \right\rangle 
\leq \mathcal{O} \left(\sqrt{ \frac{d \log N}{m N} } \right) +  \mathcal{O}(\tau).
\end{align}

Now consider \(\left\langle  - \frac{\partial \mathcal{L} }{\partial \bm{W}_{O(i, \cdot)}^{(t)}} , \bm{o}_j \right\rangle \text{ for } j \neq 1,2\).

\begin{align}
\left\langle  - \frac{\partial \mathcal{L}}{\partial \bm{W}_{O(i, \cdot)}^{(t)}} , \bm{o}_j \right\rangle 
&= \left\langle \mathbb{E}_{z=+1} \left[ \sum_{l=1}^L \frac{1}{L} v_i 
   \cdot \phi'\left( \bm{W}_{O(i, \cdot)} \bm{y}_l^{(n)} \right) 
   \cdot \bm{y}_l^{(n)} \right], \bm{o}_j \right\rangle \notag \\
&\quad - \left\langle \mathbb{E}_{z=-1} \left[ \sum_{l=1}^L \frac{1}{L} v_i 
   \cdot \phi'\left( \bm{W}_{O(i, \cdot)} \bm{y}_l^{(n)} \right) 
   \cdot \bm{y}_l^{(n)} \right], \bm{o}_j \right\rangle \notag \\
&:= \left\langle I_1, \bm{o}_j \right\rangle - \left\langle I_2, \bm{o}_j \right\rangle .
\end{align}

Because  \( \bm{o}_j \text{ for } j \neq 1,2 \) is identical in both  \(I_1\) and \(I_2\), \(\left\langle I_1, \bm{o}_j \right\rangle - \left\langle I_2, \bm{o}_j \right\rangle = 0\pm \mathcal{O}(\tau)\). Hence, \(\left\langle  - \frac{\partial \mathcal{L}}{\partial \bm{W}_{O(i, \cdot)}^{(t)}} , \bm{o}_j \right\rangle = 0\pm \mathcal{O}(\tau)\).
From \eqref{difference_of_gradients}, similar to \eqref{empiricalgradient_o+}, we can write
\begin{align}
&\left\langle - \frac{\partial \mathcal{L}}{\partial \bm{W}_{O(i, \cdot)}^{(t)}}, \bm{o}_j \right\rangle
- \mathcal{O}\left( \sqrt{ \frac{d \log N}{m N} } \right) \notag \\
&\quad \leq 
\left\langle - \frac{\partial \hat{\mathcal{L}}}{\partial \bm{W}_{O(i, \cdot)}^{(t)}}, \bm{o}_j \right\rangle \notag \\
&\quad \leq 
\left\langle - \frac{\partial \mathcal{L}}{\partial \bm{W}_{O(i, \cdot)}^{(t)}}, \bm{o}_j \right\rangle
+ \mathcal{O}\left( \sqrt{ \frac{d \log N}{m N} } \right) +  \mathcal{O}(\tau).
\end{align}

Therefore,
\begin{equation}
\left\langle  - \frac{\partial \hat{\mathcal{L}}}{\partial \bm{W}_{O(i, \cdot)}^{(t)}} , \bm{o}_j \right\rangle  \leq 
\mathcal{O} \left( \sqrt{ \frac{d \log N}{m N} } \right) \text{ for } j \neq 1,2 .
\end{equation}

\end{proof}

\subsection{Proof of Lemma \ref{lem:negative_cls_lucky_new}}
By symmetry, the proof is analogous to that of Lemma~\ref{lem:positive_cls_lucky_new}; Please see Appendix~\ref{proof_of_lemma1}.

\subsection{Proof of Lemma \ref{lem:unlucky_negative_new}}
By symmetry, the proof is analogous to that of Lemma~\ref{lem:positive_cls_unlucky_new}; Please see Appendix~\ref{proof_of_lemma2}.

\subsection{Proof of Lemma \ref{lemma:5_new}}
\begin{proof}
The gradient of the loss with respect to \(\bm{w}_\Delta\) for the \(n^{\text{th}}\) sample is given by
\begin{align}
\frac{\partial \mathcal{\ell}}{\partial \bm{w}_\Delta} 
=& -\frac{z^{(n)}}{L} \cdot \sum_{i=1}^m \sum_{l=1}^L v_i \, \phi'\left( \bm{W}_{O(i, \cdot)} \bm{y}_l^{(n)} \right) \cdot \sum_{s=1}^{l} \left( \bm{W}_B^\top \bm{x}_{s}^{(n)} \right)^\top \left( \bm{W}_C^\top \bm{x}_{l}^{(n)} \right) \left( \bm{W}_{O(i, \cdot)} \bm{x}_{s}^{(n)} \right) \notag \\
& \cdot \sigma\left( \bm{w}_\Delta^\top \bm{x}_{s}^{(n)} \right) \cdot \prod_{r=s+1}^{l} \left( 1 - \sigma\left( \bm{w}_\Delta^\top \bm{x}_{r}^{(n)} \right) \right) \notag \\
& \cdot \left[ \left( 1 - \sigma\left( \bm{w}_\Delta^\top \bm{x}_{s}^{(n)} \right) \right) \bm{x}_{s}^{(n)} - \sum_{j=s+1}^{l} \left( 1 - \sigma\left( \bm{w}_\Delta^\top \bm{x}_{j}^{(n)} \right) \right) \bm{x}_{j}^{(n)} \right] \notag \\
:=& -\frac{z^{(n)}}{L} \cdot \sum_{i=1}^m \sum_{l=1}^L v_i \, \phi'\left( \bm{W}_{O(i, \cdot)} \bm{y}_l^{(n)} \right) \cdot \sum_{s=1}^{l} \bm{I}_{l,s}^{(n)}.
\label{eq:gradient_gating_network}
\end{align}

We define the gradient summand \(\bm{I}_{l,s}^{(n)}\) as
\begin{equation}
\bm{I}_{l,s}^{(n)} = \beta_{s,s} \cdot \bm{x}_{s}^{(n)} - \sum_{j=s+1}^{l} \beta_{s,j} \bm{x}_{j}^{(n)},   
\label{eq:beta}
\end{equation}

where the coefficients \(\beta_{s,s}\) and \(\beta_{s,j}\) are given by

\begin{align}
\beta_{s,s} 
  &= (\bm{W}_B^\top \bm{x}_{s}^{(n)})^\top (\bm{W}_C^\top \bm{x}_{l}^{(n)}) (\bm{W}_{O(i, \cdot)} \bm{x}_{s}^{(n)}) 
     \sigma(\bm{w}_\Delta^\top \bm{x}_{s}^{(n)}) \nonumber \\
  &\quad \times \left[\prod_{r=s+1}^{l} \left(1 - \sigma(\bm{w}_\Delta^\top \bm{x}_{r}^{(n)})\right)\right]
     (1 - \sigma(\bm{w}_\Delta^\top \bm{x}_{s}^{(n)})) .
\label{eq:coefficient_beta_{s,s}}
\end{align}

and
\begin{align}
\beta_{s,j} 
  &= (\bm{W}_B^\top \bm{x}_{s}^{(n)})^\top (\bm{W}_C^\top \bm{x}_{l}^{(n)}) (\bm{W}_{O(i, \cdot)} \bm{x}_{s}^{(n)}) 
     \sigma(\bm{w}_\Delta^\top \bm{x}_{s}^{(n)}) \nonumber \\
  &\quad \times \left[\prod_{r=s+1}^{l} \left(1 - \sigma(\bm{w}_\Delta^\top \bm{x}_{r}^{(n)})\right)\right]
     (1 - \sigma(\bm{w}_\Delta^\top \bm{x}_{j}^{(n)})) .
\label{eq:coefficient_beta_{s,j}}
\end{align}

If we consider the gradient of the empirical loss,
\begin{equation}
\frac{\partial \hat{\mathcal{L}}}{\partial \bm{w}_\Delta} = -\frac{1}{N} \sum_{n=1}^N \frac{z^{(n)}}{L} \cdot \sum_{i=1}^m \sum_{l=1}^L v_i \, \phi'\left( \bm{W}_{O(i, \cdot)} \bm{y}_l^{(n)} \right) \cdot \sum_{s=1}^{l} \bm{I}_{l,s}^{(n)}.
\end{equation}

We are given that
\begin{equation}
p_1 \leq \langle \bm{w}_\Delta^{(t)}, \bm{o}_+ \rangle \leq q_1, \quad
\text{and} \quad
r_1^{*} \leq  \langle {\bm{W}_{O(i,\cdot)}^{(t+1)}}^{\top}, \bm{o}_+ \rangle \leq s_1^{*}.
\label{Lemma5_initial_conditions}
\end{equation}

From our initialization, for all \( i \in \mathcal{K}^+ \), we have \( v_i = \frac{1}{\sqrt{m}} \). This gives
\begin{equation}
\left\langle -\frac{\partial \mathcal{\ell}}{\partial \bm{w}_\Delta}, \bm{o}_+ \right\rangle 
= \frac{z^{(n)}}{L} \sum_{i=1}^m \sum_{l=1}^L \frac{1}{\sqrt{m}} \cdot \phi'(\bm{W}_{O(i, \cdot)} \bm{y}_l^{(n)}) \sum_{s=1}^{l} \left\langle \bm{I}_{l,s}^{(n)}, \bm{o}_+ \right\rangle.
\end{equation}

Averaging over the training samples, the inner product of the empirical gradient becomes
\begin{align}
\left\langle -\frac{\partial \hat{\mathcal{L}}}{\partial \bm{w}_\Delta}, \bm{o}_+ \right\rangle 
&= \frac{1}{N} \sum_{n=1}^N \frac{z^{(n)}}{L} \cdot \sum_{i=1}^m \sum_{l=1}^L v_i \phi'(\bm{W}_{O(i, \cdot)} \bm{y}_l^{(n)}) \cdot \sum_{s=1}^{l} \left\langle \bm{I}_{l,s}^{(n)}, \bm{o}_+ \right\rangle \notag \\
&= \frac{1}{N} \sum_{n: z^{(n)} = +1} \frac{1}{L} \left[ 
\sum_{i \in \mathcal{K}_+} \sum_{l=1}^L \frac{1}{\sqrt{m}} \phi'(\bm{W}_{O(i, \cdot)} \bm{y}_l^{(n)}) \sum_{s=1}^l \left\langle \bm{I}_{l,s}^{(n)}, \bm{o}_+ \right\rangle \right. \notag \\
&\quad \left. + \sum_{i \in \mathcal{K}_-} \sum_{l=1}^L \left( -\frac{1}{\sqrt{m}} \right) \phi'(\bm{W}_{O(i, \cdot)} \bm{y}_l^{(n)}) \sum_{s=1}^l \left\langle \bm{I}_{l,s}^{(n)}, \bm{o}_+ \right\rangle \right] \notag \\
&\quad + \frac{1}{N} \sum_{n: z^{(n)} = -1} \frac{-1}{L} \left[
\sum_{i \in \mathcal{K}_+} \sum_{l=1}^L \frac{1}{\sqrt{m}} \phi'(\bm{W}_{O(i, \cdot)} \bm{y}_l^{(n)}) \sum_{s=1}^l \left\langle \bm{I}_{l,s}^{(n)}, \bm{o}_+ \right\rangle \right. \notag \\
&\quad \left. + \sum_{i \in \mathcal{K}_-} \sum_{l=1}^L \left( -\frac{1}{\sqrt{m}} \right) \phi'(\bm{W}_{O(i, \cdot)} \bm{y}_l^{(n)}) \sum_{s=1}^l \left\langle \bm{I}_{l,s}^{(n)}, \bm{o}_+ \right\rangle \right].
\label{eq:inner_prod_decomposition_lemma5}
\end{align}

First, we focus on the contribution from the samples where \( z^{(n)} = +1 \), for which we seek a lower bound. We analyze the inner terms by considering four cases.

\textbf{Case I:} \( l = L_1^+, \; s = L_1^+ \)

Since \(l = s\) and \(\bm{x}_{s} = \bm{o}_+\), it follows from \eqref{eq:beta} that
\begin{equation}
\left\langle \bm{I}_{l, s}^{(n)}, \bm{o}_+ \right\rangle = \beta_{s,s} .
\end{equation}
Using \eqref{eq:coefficient_beta_{s,s}}, with \(\bm{W}_B = \bm{W}_C = I\) and \(\bm{x}_{l} = \bm{x}_{s} = \bm{o}_+\), we obtain 
\begin{equation}
     \left\langle \bm{I}_{l, s}^{(n)}, \bm{o}_+ \right\rangle = \beta_{s,s} = \langle {\bm{W}_{O(i,\cdot)}^{(t+1)}}^{\top}, \bm{o}_+ \rangle  \cdot \sigma(\langle \bm{w}_\Delta^{(t)}, \bm{o}_+ \rangle) \cdot \left(1 - \sigma(\langle \bm{w}_\Delta^{(t)}, \bm{o}_+ \rangle)\right).
\end{equation}
Given the conditions in \eqref{Lemma5_initial_conditions}, we can write
\begin{equation}
\left\langle \bm{I}_{l, s}^{(n)}, \bm{o}_+ \right\rangle \geq \left(r_1^{*}- \mathcal{O}(\tau) \right) \cdot \sigma(p_1 - \mathcal{O}(\tau)) \cdot \left( 1 - \sigma(q_1 + \mathcal{O}(\tau)) \right).
\end{equation}

We can approximate \(\sigma(p_1 - \mathcal{O}(\tau)) \approx \sigma(p_1) - \mathcal{O}(\tau) \) and \( 1 - \sigma(q_1 + \mathcal{O}(\tau)) \approx  1 - \sigma(q_1) - \mathcal{O}(\tau) \), since \( \mathcal{O}(\tau) < \mathcal{O}(\frac{1}{d}) \).

Therefore, we obtain
\begin{equation}
\begin{split}
\left\langle \bm{I}_{l, s}^{(n)}, \bm{o}_+ \right\rangle &\geq \left(r_1^{*}- \mathcal{O}(\tau) \right) \cdot \left( \sigma(p_1) - \mathcal{O}(\tau) \right) 
\cdot \left( 1 - \sigma(q_1) - \mathcal{O}(\tau) \right) \\
&\geq r_1^{*} \cdot \sigma(p_1) \cdot \left( 1 - \sigma(q_1) \right) - \mathcal{O}(\tau).   
\end{split}
\end{equation}

\textbf{Case II:} \( l = L_2^+, \; s = L_2^+ \)

This configuration yields the same result as in Case I. We again obtain
\begin{equation}
\left\langle \bm{I}_{l, s}^{(n)}, \bm{o}_+ \right\rangle \geq r_1^{*} \cdot \sigma(p_1) \cdot \left( 1 - \sigma(q_1) \right) -\mathcal{O}(\tau).
\end{equation}

\textbf{Case III:} \( l = L_2^+, \; s = L_1^+ \)
Comparing \eqref{eq:coefficient_beta_{s,s}} with \eqref{eq:coefficient_beta_{s,j}}, we see that the two expressions differ only in their last term.  
In this setting, \(\bm{x}_{j}\) equals \(\bm{o}_+\) only when \(j = L_{2}^{+}\).  
Consequently, \(\bm{x}_{s} = \bm{x}_{j} = \bm{o}_+\), which implies \(\beta_{s,s} = \beta_{s,j}\).  
Hence,
\begin{equation}
\left\langle \bm{I}_{l, s}^{(n)}, \bm{o}_+ \right\rangle = \beta_{s,s} - \beta_{s,j} = 0 \pm \mathcal{O}(\tau).
\end{equation}

\textbf{Case IV:} Others 

For the other token positions, \(\left\langle \bm{I}_{l, s}^{(n)}, \bm{o}_+ \right\rangle = 0\) due to orthogonality among the features.

Combining the above, the total contribution becomes
\begin{equation}
\sum_{s=1}^{l} \left\langle \bm{I}_{l,s}^{(n)}, \bm{o}_+ \right\rangle 
\geq 2 r_1^{*} \cdot \sigma(p_1) \cdot \left( 1 - \sigma(q_1) \right) - \mathcal{O}(\tau).
\end{equation}

We now bound the entire sum over all tokens:
\begin{equation}
\frac{1}{L} \sum_{l=1}^{L} \frac{1}{\sqrt{m}} \phi'\left(\bm{W}_{O(i, \cdot)} \bm{y}_l\right) \sum_{s=1}^{l} \left\langle \bm{I}_{l,s}^{(n)}, \bm{o}_+ \right\rangle 
\geq \frac{1}{L} \sum_{l=1}^{L} \frac{1}{\sqrt{m}} \cdot 1 \cdot 2 r_1^{*} \cdot \sigma(p_1) \cdot \left( 1 - \sigma(q_1) \right) - \mathcal{O}(\tau).
\end{equation}

Let \(\rho_t^+ = |\mathcal{W}(t)|\) be the number of contributing neurons. Then the total contribution from the active neurons is lower bounded as
\begin{equation}
\frac{1}{L} \sum_{i \in \mathcal{K}_+} \sum_{l=1}^{L} v_i \phi'\left(\bm{W}_{O(i, \cdot)} \bm{y}_l\right) \sum_{s=1}^{l} \left\langle \bm{I}_{l,s}^{(n)}, \bm{o}_+ \right\rangle 
\geq \frac{2 r_1^{*} \cdot \sigma(p_1) \cdot \left( 1 - \sigma(q_1) \right)}{\sqrt{m}} \cdot \rho_t^+ - \mathcal{O}(\tau).
\label{proof5:term1}
\end{equation}

Next, we consider \( z^{(n)} = -1 \) for \(i \in \mathcal{K}_+\).  For \( z^{(n)} = -1 \), the negative sample also contains two \(\bm{o}_+\) features.

Similar to the above, we have to consider 4 cases.

\textbf{Case I:} \( l = L_1^+, \; s = L_1^+ \)

Since \(l = s\), it follows from \eqref{eq:beta} that
\begin{equation}
\bm{I}_{l, s}^{(n)} = \beta_{s,s} \cdot \bm{x}_l.
\end{equation}
Since \(\bm{x}_{l} = \bm{o}_+\), we have
\begin{equation}
\left\langle \bm{I}_{l, s}^{(n)}, \bm{o}_+ \right\rangle = \beta_{s,s} .
\end{equation}
We now seek an upper bound for this contribution. From the initial conditions in \eqref{Lemma5_initial_conditions}, we know
\begin{equation}
\langle {\bm{W}_{O(i,\cdot)}^{(t+1)}}^{\top}, \bm{o}_+ \rangle \leq s_1^* + \mathcal{O}(\tau).
\end{equation}

Hence, we obtain
\begin{equation}
\left\langle \bm{I}_{l, s}^{(n)}, \bm{o}_+ \right\rangle \leq \left( s_1^{*} + \mathcal{O}(\tau) \right) \cdot \sigma(q_1 + \mathcal{O}(\tau)) \cdot \left( 1 - \sigma(p_1 -\mathcal{O}(\tau)) \right).
\end{equation}

We can approximate \(\sigma(q_1 + \mathcal{O}(\tau)) \approx \sigma(q_1) + \mathcal{O}(\tau) \) and \( 1 - \sigma(p_1 - \mathcal{O}(\tau)) \approx  1 - \sigma(p_1) + \mathcal{O}(\tau) \), since \( \mathcal{O}(\tau) < \mathcal{O}(\frac{1}{d}) \).

Therefore, we obtain
\begin{equation}
\begin{split}
\left\langle \bm{I}_{l, s}^{(n)}, \bm{o}_+ \right\rangle &\leq \left(s_1^{*} + \mathcal{O}(\tau) \right) \cdot \left( \sigma(q_1) + \mathcal{O}(\tau) \right) 
\cdot \left( 1 - \sigma(p_1) + \mathcal{O}(\tau) \right) \\
&\leq s_1^{*} \cdot \sigma(q_1) \cdot \left( 1 - \sigma(p_1) \right) + \mathcal{O}(\tau).  
\end{split}
\end{equation}

\textbf{Case II:} \( l = L_2^+, \; s = L_2^+ \)

This configuration yields the same result as in Case I. We again obtain
\begin{equation}
\left\langle \bm{I}_{l, s}^{(n)}, \bm{o}_+ \right\rangle \leq s_1^* \cdot \sigma(q_1) \cdot \left( 1 - \sigma(p_1) \right) + \mathcal{O}(\tau).
\end{equation}

\textbf{Case III:} \( l = L_2^+ \), \( s = L_1^+ \)

In this case, the contribution vanishes:
\begin{equation}
\left\langle \bm{I}_{l, s}^{(n)}, \bm{o}_+ \right\rangle = 0 \pm  \mathcal{O}(\tau).
\end{equation}

\textbf{Case IV:} Others 

For the other token positions, \(\left\langle \bm{I}_{l, s}^{(n)}, \bm{o}_+ \right\rangle = 0\) due to orthogonality among the features.

The maximum number of such contributing neurons is \(\frac{m}{2}\). Therefore, the total contribution is bounded above by
\begin{equation}
\begin{split}
    \frac{1}{L} \sum_{i \in \mathcal{K}_+} \sum_{l=1}^L v_i \, \phi'\left(\bm{W}_{O(i, \cdot)} \bm{y}_l \right) \sum_{s=1}^l \left\langle \bm{I}_{l,s}^{(n)}, \bm{o}_+ \right\rangle 
\leq& \frac{2s_1^* \cdot \sigma(q_1) \cdot \left( 1 - \sigma(p_1) \right)}{\sqrt{m}} \cdot \frac{m}{2} + \mathcal{O}(\tau) \\
=& \sqrt{m} \cdot s_1^* \cdot \sigma(q_1) \cdot \left( 1 - \sigma(p_1) \right) + \mathcal{O}(\tau).
\end{split}
\label{proof5:term2}
\end{equation}

Thirdly, let us consider the contribution for \( z^{(n)} = +1 \) from \(i \in \mathcal{K}_-\).
From our initialization, for \( i \in \mathcal{K}^- \), \( v_i = -\frac{1}{\sqrt{m}} \). For \( z^{(n)} = +1 \), we seek an upper bound on the contribution from such neurons.

Let \( z^{(n)} = +1 \). To maximize the term \(\bm{W}_{O(i, \cdot)} \bm{x}_{s}^{(n)}\) in \eqref{eq:coefficient_beta_{s,s}}, we can consider the token locations which contain \(\bm{o}_-\) features since \(\bm{W}_{O(i, \cdot)}\) has a large component in the \(\bm{o}_-\) direction. Then \( \bm{x}_l = \bm{o}_- \Rightarrow \bm{y}_l \) contains the \(\bm{o}_-\) feature.

However, in this case, \(\bm{x}_s = \bm{o}_- = \bm{x}_l\), and due to orthogonality,
\begin{equation}
\left\langle \bm{I}_{l, s}^{(n)}, \bm{o}_+ \right\rangle = 0.
\end{equation}

Hence, we only need to consider time steps \( l = L_1^+, L_2^+ \), where \(\bm{o}_+\) features appear.

Recall that
\begin{equation}
\left\langle -\frac{\partial \mathcal{\ell}}{\partial \bm{w}_\Delta}, \bm{o}_+ \right\rangle 
= \frac{1}{L} \sum_{i=1}^m \sum_{l=1}^L -\frac{1}{\sqrt{m}} \cdot \phi'(\bm{W}_{O(i, \cdot)} \bm{y}_l) \sum_{s=1}^{l} \left\langle \bm{I}_{l,s}^{(n)}, \bm{o}_+ \right\rangle.
\end{equation}

We analyze the inner contributions case by case.

\textbf{Case I:} \( l = L_1^+ \), \( s = L_1^+ \)

Given that
\begin{equation}
\bm{W}_{O(i, \cdot)} \bm{o}_+ \leq \delta_1 + \mathcal{O} \left(  \sqrt{ \frac{d \log N}{m N} } \right) =: c,
\end{equation}
we obtain
\begin{equation}
\left\langle \bm{I}_{l, s}^{(n)}, \bm{o}_+ \right\rangle \leq c \cdot \sigma(q_1) \cdot \left( 1 - \sigma(p_1) \right) + \mathcal{O}(\tau).
\end{equation}

\textbf{Case II:} \( l = L_2^+ \), \( s = L_2^+ \)

This configuration yields the same bound:
\begin{equation}
\left\langle \bm{I}_{l, s}^{(n)}, \bm{o}_+ \right\rangle \leq c \cdot \sigma(q_1) \cdot \left( 1 - \sigma(p_1) \right) + \mathcal{O}(\tau).
\end{equation}

\textbf{Case III:} \( l = L_2^+ \), \( s = L_1^+ \)

In this case, the contribution vanishes:
\begin{equation}
\left\langle \bm{I}_{l, s}^{(n)}, \bm{o}_+ \right\rangle = 0 \pm \mathcal{O}(\tau).
\end{equation}

\textbf{Case IV:} Others 

For the other token positions, \(\left\langle \bm{I}_{l, s}^{(n)}, \bm{o}_+ \right\rangle = 0\) due to orthogonality among the features.

Thus, the total contribution from each \(i \in \mathcal{K}^-\) satisfies
\begin{equation}
\sum_{s=1}^l \left\langle \bm{I}_{l, s}^{(n)}, \bm{o}_+ \right\rangle \leq 2c \cdot \sigma(q_1) \cdot \left( 1 - \sigma(p_1) \right) + \mathcal{O}(\tau).
\end{equation}

The maximum number of such contributing neurons is \(\frac{m}{2}\), so the full contribution is bounded by
\begin{equation}
\begin{split}
\frac{1}{\sqrt{m} L} \sum_{i \in \mathcal{K}_-} \sum_{l=1}^L 
  \phi'\left(\bm{W}_{O(i, \cdot)} \bm{y}_l\right) 
  \sum_{s=1}^l \left\langle \bm{I}_{l, s}^{(n)}, \bm{o}_+ \right\rangle 
&\leq \frac{2c \cdot \sigma(q_1) \cdot \left( 1 - \sigma(p_1) \right)}{\sqrt{m}} \cdot \frac{m}{2} + \mathcal{O}(\tau) \\
&= \sqrt{m} c \cdot \sigma(q_1) \cdot \left( 1 - \sigma(p_1) \right) + \mathcal{O}(\tau).
\end{split}
\end{equation}

Therefore, the overall contribution is
\begin{equation}
- \frac{1}{\sqrt{m} L} \sum_{i \in \mathcal{K}_-} \sum_{l=1}^L \phi'(\bm{W}_{O(i, \cdot)} \bm{y}_l) \sum_{s=1}^l \left\langle \bm{I}_{l, s}^{(n)}, \bm{o}_+ \right\rangle 
\geq -\sqrt{m} c \cdot \sigma(q_1) \cdot \left( 1 - \sigma(p_1) \right) - \mathcal{O}(\tau).
\label{proof5:term3}
\end{equation}

Finally, we consider \( z^{(n)} = -1 \) for \(i \in \mathcal{K}_-\).
For \( z^{(n)} = -1 \), we want a lower bound since \( v_i = -\frac{1}{\sqrt{m}} \).

We could consider \( l = L^+ \Rightarrow \bm{x}_l = \bm{o}_+ \), and write
\begin{equation}
\left\langle \bm{W}_{O(i, \cdot)}, \bm{o}_+ \right\rangle \geq \delta_1 - \mathcal{O} \left( \sqrt{ \frac{d \log N}{m N} } \right).
\end{equation}
However, the minimum number of such contributing neurons is not tractable. Thus, if we consider the worst case where \(\bm{W}_{O(i, \cdot)}\) for \( i \in \mathcal{K}^- \) does not learn the \(\bm{o}_+\) feature, the obvious lower bound is zero:
\begin{equation}
\frac{1}{L} \sum_{i \in \mathcal{K}_-} \sum_{l=1}^L \frac{1}{\sqrt{m}} \phi'\left( \bm{W}_{O(i, \cdot)} \bm{y}_l \right) \sum_{s=1}^l \left\langle \bm{I}_{l, s}^{(n)}, \bm{o}_+ \right\rangle 
\geq 0.
\label{proof5:term4}
\end{equation}

We now combine the bounds for the four terms identified in \eqref{eq:inner_prod_decomposition_lemma5}, corresponding to the contributions from: 
(i) \(\mathcal{K}_+\) with \(z^{(n)} = +1\) as shown in \eqref{proof5:term1}, 
(ii) \(\mathcal{K}_+\) with \(z^{(n)} = -1\) as shown in  \eqref{proof5:term2}, 
(iii) \(\mathcal{K}_-\) with \(z^{(n)} = +1\) as shown in \eqref{proof5:term3}, and 
(iv) \(\mathcal{K}_-\) with \(z^{(n)} = -1\) as shown in \eqref{proof5:term4}. We assume the batch is balanced, so the number of positive and negative samples is equal, with each class contributing \( \frac{N}{2} \) samples. Then we have
\begin{align}
\left\langle -\frac{\partial \hat{\mathcal{L}}}{\partial \bm{w}_\Delta}, \bm{o}_+ \right\rangle 
&\geq \frac{1}{2} \left[
   \frac{2 r_1^{*} \cdot \sigma(p_1) \left( 1 - \sigma(q_1) \right)}{\sqrt{m}} \cdot \rho_t^+
   - \sqrt{m} \cdot c \cdot \sigma(q_1) \left( 1 - \sigma(p_1) \right) \right. \notag \\
&\qquad\quad
   \left. - \sqrt{m} \cdot s_1^* \cdot \sigma(q_1) \cdot \left( 1 - \sigma(p_1) \right) + 0
   \right] -  \mathcal{O}(\tau) \notag \\
&= \frac{ \sigma(p_1) \left( 1 - \sigma(q_1) \right) r_1^{*} \cdot \rho_t^+ }{ \sqrt{m} }
   - \frac{ \sigma(q_1) \left( 1 - \sigma(p_1) \right) s_1^{*} \cdot \sqrt{m}}{2} \\
   &\quad - \mathcal{O}\left( \sqrt{ \frac{d \log N}{m N} } \right) - \mathcal{O}(\tau).
\end{align}

where we have used the fact \( \frac{\sqrt{m}}{2}  \cdot \sigma(q_1)  \left( 1 - \sigma(p_1) \right) \cdot c = \mathcal{O}\left( \sqrt{ \frac{d \log N}{m N} } \right) \) since \(c = \mathcal{O}\left( \sqrt{ \frac{d \log N}{m N} } \right)\).

\end{proof}

\subsection{Proof of Lemma \ref{lemma:6_new}}
\begin{proof}

The gradient is given in \eqref{eq:gradient_gating_network}.

Let's consider the alignment with \(\bm{o}_k \text{ for } k \neq 1,2\). 

\begin{equation}
\left\langle -\frac{\partial \mathcal{\ell}}{\partial \bm{w}_\Delta}, \bm{o}_k \right\rangle 
= \frac{z^{(n)}}{L} \sum_{i=1}^m \sum_{l=1}^L v_i \cdot \phi'(\bm{W}_{O(i, \cdot)} \bm{y}_l^{(n)}) \sum_{s=1}^{l} \left\langle \bm{I}_{l,s}^{(n)}, \bm{o}_k \right\rangle   
\end{equation}

From our initialization, for all \( i \in \mathcal{K}^+ \), we have \( v_i = \frac{1}{\sqrt{m}} \).

We first consider the case \( z^{(n)} = +1 \) for \( i \in \mathcal{K}^+ \). 
Since \( \bm{W}_{O(i, \cdot)}\), for  \( i \in \mathcal{K}^+ \) has a large \(\bm{o}_+\) component, we have to consider the token features with \(\bm{o}_+\).  
For \(z^{(n)} = +1\), only when \( l = L_2^+, \; s = L_1^+\) we have  \(\bm{x}_l = \bm{x}_s = \bm{o}_+\).
Therefore, \( \bm{W}_{O(i, \cdot)} \bm{x}_{s} \) is significant. Hence, we have
\begin{align}
&\left\langle \bm{I}_{l, s}^{(n)}, \bm{o}_k \right\rangle \notag\\
=& - \sum_{j=s+1}^{l} \beta_{s,j} \langle \bm{x}_{j}^{(n)}, \bm{o}_k \rangle \notag \\
\leq& - \beta_{s,s+1}  \qquad  \qquad (\text{Assuming W.L.O.G. } \bm{x}_{s+1}^{(n)}= \bm{o}_k)  \notag\\
\leq&
-\langle  {\bm{W}_{O(i,\cdot)}^{(t+1)}}^{\top}, \bm{o}_+ \rangle  \cdot \sigma(\langle \bm{w}_\Delta^{(t)}, \bm{o}_+ \rangle) \cdot \left(1 - \sigma(\langle \bm{w}_\Delta^{(t)}, \bm{o}_+ \rangle)\right) \cdot \left(1 - \sigma(\langle \bm{w}_\Delta^{(t)}, \bm{o}_k \rangle)\right)^{\Delta L_{\bm{o}_+}^+}.
\label{major_contribution1}
\end{align}

Using the the conditions in \eqref{Lemma5_initial_conditions}, we can write
\begin{equation}
\left\langle \bm{I}_{l, s}^{(n)}, \bm{o}_k \right\rangle \leq 
\left(-r_1^{*}+\mathcal{O}(\tau)\right)  \cdot \sigma(p_1+\mathcal{O}(\tau)) \cdot \left( 1 - \sigma(q_1-\mathcal{O}(\tau)) \right) \cdot \left(1 - \sigma(q_2 -\mathcal{O}(\tau))\right)^{\Delta L_{\bm{o}_+}^+}.  
\label{major_contribution_term1}
\end{equation}

We can approximate \(\sigma(p_1 + \mathcal{O}(\tau)) \approx \sigma(p_1) + \mathcal{O}(\tau) \), \( 1 - \sigma(q_1 - \mathcal{O}(\tau)) \approx  1 - \sigma(q_1) + \mathcal{O}(\tau) \) and \( 1 - \sigma(q_2 - \mathcal{O}(\tau)) \approx  1 - \sigma(q_2) + \mathcal{O}(\tau) \), since \( \mathcal{O}(\tau) < \mathcal{O}(\frac{1}{d}) \).

Hence, we obtain
\begin{equation}
\begin{split}
&\frac{1}{L} \sum_{l=1}^{L} \frac{1}{\sqrt{m}} 
  \cdot \phi'\!\left(\bm{W}_{O(i, \cdot)} \bm{y}_l^{(n)}\right) 
  \sum_{s=1}^{l} \left\langle \bm{I}_{l,s}^{(n)}, \bm{o}_k \right\rangle \\ 
\leq& \frac{1}{L}  \sum_{l=1}^{L} \frac{1}{\sqrt{m}} \cdot 1 
\cdot \left[-r_1^{*} \cdot \sigma(p_1) \cdot \left( 1 - \sigma(q_1) \right) 
   \cdot \left(1 - \sigma(q_2)\right)^{\Delta L_{\bm{o}_+}^+} \right] + \mathcal{O}(\tau)).
\end{split}
\end{equation}

Let \(\rho_t^+ = |\mathcal{W}(t)|\) be the number of contributing neurons. Then the total contribution from \(\mathcal{K}_+\) neurons is bounded as
\begin{equation}
\label{lemma6:term1}
\begin{split}
\frac{1}{L} \sum_{i \in \mathcal{K}_+} \sum_{l=1}^{L} 
  v_i \cdot \phi'\!\left(\bm{W}_{O(i, \cdot)} \bm{y}_l^{(n)} \right) 
  \sum_{s=1}^{l} \left\langle \bm{I}_{l,s}^{(n)}, \bm{o}_k \right\rangle 
&\leq -\frac{r_1^{*}}{\sqrt{m}} \cdot \sigma(p_1) \left(1 - \sigma(q_1)\right) \\
&\quad \cdot \left(1 - \sigma(q_2)\right)^{\Delta L_{\bm{o}_+}^+} \cdot \rho_t^+ + \mathcal{O}(\tau)).
\end{split}
\end{equation}

Next, we consider \( z^{(n)} = -1 \) for \(i \in \mathcal{K}_+\). Since \(\left\langle \bm{I}_{l, s}^{(n)}, \bm{o}_k \right\rangle < 0\), we require a lower bound for this. 
\begin{align}
&\left\langle \bm{I}_{l, s}^{(n)}, \bm{o}_k \right\rangle \notag\\
&= - \sum_{j=s+1}^{l} \beta_{s,j} \langle \bm{x}_{j}^{(n)}, \bm{o}_k \rangle \notag \\
\gtrsim&
-\langle  {\bm{W}_{O(i,\cdot)}^{(t+1)}}^{\top}, \bm{o}_+ \rangle  \cdot \sigma(\langle \bm{w}_\Delta^{(t)}, \bm{o}_+ \rangle) \cdot \left(1 - \sigma(\langle \bm{w}_\Delta^{(t)}, \bm{o}_+ \rangle)\right) \cdot \left(1 - \sigma(\langle \bm{w}_\Delta^{(t)}, \bm{o}_k \rangle)\right)^{\Delta L_{\bm{o}_+}^-} .
\label{minor_contribution1}
\end{align}

Using the the conditions in \eqref{Lemma5_initial_conditions}, we can write
\begin{equation}
\left\langle \bm{I}_{l, s}^{(n)}, \bm{o}_k \right\rangle \gtrsim  
-s_1^* \cdot \sigma(q_1) \cdot \left( 1 - \sigma(p_1) \right) \cdot \left(1 - \sigma(p_2)\right)^{\Delta L_{\bm{o}_+}^-}.
\end{equation}

\begin{equation}
\label{lemma6_new:term3_negligible}
\begin{split}
    &\frac{1}{L} \sum_{i \in \mathcal{K}_+} \sum_{l=1}^{L} v_i \cdot \phi'\left(\bm{W}_{O(i, \cdot)} \bm{y}_l^{(n)} \right) \sum_{s=1}^{l} \left\langle \bm{I}_{l,s}^{(n)}, \bm{o}_k \right\rangle\\
\gtrsim& -\frac{s_1^*}{\sqrt{m}} \cdot \sigma(q_1) \left(1 - \sigma(p_1)\right) \left(1 - \sigma(p_2)\right)^{\Delta L_{\bm{o}_+}^-} \cdot \rho_t^+.
\end{split}
\end{equation}

Since \(\Delta L_{\bm{o}_+}^- \gg \Delta L_{\bm{o}_+}^+\), this term is negligible which leads to
\begin{equation}
\frac{1}{L} \sum_{i \in \mathcal{K}_+} \sum_{l=1}^{L} v_i \cdot \phi'\left(\bm{W}_{O(i, \cdot)} \bm{y}_l^{(n)} \right) \sum_{s=1}^{l} \left\langle \bm{I}_{l,s}^{(n)}, \bm{o}_k \right\rangle 
\geq -\mathcal{O}\left( \left(1 - \sigma(p_2)\right)^{\Delta L_{\bm{o}_+}^-} \right) \approx 0.
\label{minor_contribution_term1}
\end{equation}

Thirdly, we consider the case \( i \in \mathcal{K}^- \), for \(z^{(n)} = -1\). Similar to \eqref{major_contribution1} and\eqref{major_contribution_term1},  when \( l = L_2^-, \; s = L_1^-\) the contribution is significant.

\begin{equation}
\left\langle \bm{I}_{l, s}^{(n)}, \bm{o}_k \right\rangle \leq  
-r_1^* \cdot \sigma(p_1) \cdot \left( 1 - \sigma(q_1) \right) \cdot \left(1 - \sigma(q_2)\right)^{\Delta L_{\bm{o}_-}^-} + \mathcal{O}(\tau)).   
\end{equation}

Hence, we obtain
\begin{equation}
\begin{split}
    &\frac{1}{L} \sum_{l=1}^{L} \frac{1}{\sqrt{m}}  \cdot \phi'(\bm{W}_{O(i, \cdot)} \bm{y}_l^{(n)}) \sum_{s=1}^{l} \left\langle \bm{I}_{l,s}^{(n)}, \bm{o}_k \right\rangle \\ 
\leq& \frac{1}{L}  \sum_{l=1}^{L} \frac{1}{\sqrt{m}} \cdot 1 \cdot \left[-r_1^* \cdot \sigma(p_1) \cdot \left( 1 - \sigma(q_1) \right) \cdot \left(1 - \sigma(q_2)\right)^{\Delta L_{\bm{o}_-}^-} \right] + \mathcal{O}(\tau)) 
\end{split} 
\end{equation}

Let \(\rho_t^- = |\mathcal{U}(t)|\) be the number of contributing neurons. Then the total contribution from \(\mathcal{K}_-\) neurons is bounded as

\begin{equation}
\label{lemma6_new:term4}
\begin{split}
\frac{1}{L} \sum_{i \in \mathcal{K}_-} \sum_{l=1}^{L} v_i \,
\phi'\!\left(\bm{W}_{O(i,\cdot)} \bm{y}_l^{(n)}\right)
\sum_{s=1}^{l} \big\langle \bm{I}_{l,s}^{(n)}, \bm{o}_k \big\rangle
&\le -\frac{r_1^*}{\sqrt{m}} \,\sigma(p_1)\,\bigl(1-\sigma(q_1)\bigr)\,
      \bigl(1-\sigma(q_2)\bigr)^{\Delta L_{\bm{o}_-}^-}\,\rho_t^- \\
      &\quad + \mathcal{O}(\tau)).
\end{split}
\end{equation}

Finally, we consider \(i \in \mathcal{K}_-\) for \( z^{(n)} = +1 \). Following the same approach as in \eqref{minor_contribution1} to \eqref{lemma6_new:term3_negligible}, we can write
\begin{equation}
\label{lemma6_new:term2_negligible}
\begin{split}
    &\frac{1}{L} \sum_{i \in \mathcal{K}_-} \sum_{l=1}^{L} v_i \cdot \phi'\left(\bm{W}_{O(i, \cdot)} \bm{y}_l^{(n)} \right) \sum_{s=1}^{l} \left\langle \bm{I}_{l,s}^{(n)}, \bm{o}_k \right\rangle \\
\gtrsim& -\frac{s_1^*}{\sqrt{m}} \cdot \sigma(q_1) \left(1 - \sigma(p_1)\right) \left(1 - \sigma(p_2)\right)^{\Delta L_{\bm{o}_-}^+} \cdot \rho_t^-.
\end{split}
\end{equation}

Since \(\Delta L_{\bm{o}_-}^+ \gg \Delta L_{\bm{o}_-}^-\), this term is negligible which leads to
\begin{equation}
\frac{1}{L} \sum_{i \in \mathcal{K}_-} \sum_{l=1}^{L} v_i \cdot \phi'\left(\bm{W}_{O(i, \cdot)} \bm{y}_l^{(n)} \right) \sum_{s=1}^{l} \left\langle \bm{I}_{l,s}^{(n)}, \bm{o}_k \right\rangle 
\geq -\mathcal{O}\left( \left(1 - \sigma(p_2)\right)^{\Delta L_{\bm{o}_-}^+} \right) \approx 0.
\label{minor_contribution_term2}
\end{equation}

Putting it together, We know
\begin{align}
\left\langle -\frac{\partial \hat{\mathcal{L}}}{\partial \bm{w}_\Delta}, \bm{o}_k \right\rangle 
&= \frac{1}{N} \sum_{n=1}^N \frac{z^{(n)}}{L} \cdot \sum_{i=1}^m \sum_{l=1}^L v_i \phi'(\bm{W}_{O(i, \cdot)} \bm{y}_l^{(n)}) \cdot \sum_{s=1}^{l} \left\langle \bm{I}_{l,s}^{(n)}, \bm{o}_k \right\rangle \notag \\
&= \frac{1}{N} \sum_{n: z^{(n)} = +1} \frac{1}{L} \left[ 
\sum_{i \in \mathcal{K}_+} \sum_{l=1}^L \frac{1}{\sqrt{m}} \phi'(\bm{W}_{O(i, \cdot)} \bm{y}_l^{(n)}) \sum_{s=1}^l \left\langle \bm{I}_{l,s}^{(n)}, \bm{o}_k \right\rangle \right. \notag \\
&\quad \left. + \sum_{i \in \mathcal{K}_-} \sum_{l=1}^L \left( -\frac{1}{\sqrt{m}} \right) \phi'(\bm{W}_{O(i, \cdot)} \bm{y}_l^{(n)}) \sum_{s=1}^l \left\langle \bm{I}_{l,s}^{(n)}, \bm{o}_k \right\rangle \right] \notag \\
&\quad + \frac{1}{N} \sum_{n: z^{(n)} = -1} \frac{-1}{L} \left[
\sum_{i \in \mathcal{K}_+} \sum_{l=1}^L \frac{1}{\sqrt{m}} \phi'(\bm{W}_{O(i, \cdot)} \bm{y}_l^{(n)}) \sum_{s=1}^l \left\langle \bm{I}_{l,s}^{(n)}, \bm{o}_k \right\rangle \right. \notag \\
&\quad \left. + \sum_{i \in \mathcal{K}_-} \sum_{l=1}^L \left( -\frac{1}{\sqrt{m}} \right) \phi'(\bm{W}_{O(i, \cdot)} \bm{y}_l^{(n)}) \sum_{s=1}^l \left\langle \bm{I}_{l,s}^{(n)}, \bm{o}_k \right\rangle \right]. 
\label{eq:lemma6_new_decomposition}
\end{align}

We now combine the bounds for the two terms identified in equation~\eqref{eq:lemma6_new_decomposition}, corresponding to the contributions from: 
(i) \(\mathcal{K}_+\) with \(z^{(n)} = +1\) \eqref{lemma6:term1}, 
(ii) \(\mathcal{K}_+\) with \(z^{(n)} = -1\) \eqref{lemma6_new:term2_negligible}, 
(iii) \(\mathcal{K}_-\) with \(z^{(n)} = +1\) \eqref{lemma6_new:term3_negligible}, and 
(iv) \(\mathcal{K}_-\) with \(z^{(n)} = -1\) \eqref{lemma6_new:term4}. We assume the batch is balanced, so the number of positive and negative samples is equal, with each class contributing \( \frac{N}{2} \) samples. Then we have
\begin{align}
&\left\langle - \frac{\partial \hat{\mathcal{L}}}{\partial \bm{w}_\Delta^{(t)}}, \bm{o}_k \right\rangle \notag\\ 
\leq& -\frac{r_1^{*}}{2\sqrt{m}} \sigma(p_1)\left(1 - \sigma(q_1)\right)
\left[ \left(1 - \sigma(q_2)\right)^{\Delta L_{\bm{o}_+}^+ }\rho_t^{+} +  
 \left(1 - \sigma(q_2)\right)^{\Delta L_{\bm{o}_-}^- }\rho_t^{-} \right]  \notag \\
&+ \frac{s_1^*}{\sqrt{m}} \cdot \sigma(q_1) \left(1 - \sigma(p_1)\right) \left[ \mathcal{O}\left( \left(1 - \sigma(p_2)\right)^{\Delta L_{\bm{o}_-}^+} \right) \cdot \rho_t^- 
+ \mathcal{O}\left( \left(1 - \sigma(p_2)\right)^{\Delta L_{\bm{o}_+}^-} \right) \cdot \rho_t^+
\right] \\
&+ \mathcal{O}(\tau))
\end{align}

\begin{align}
&\left\langle - \frac{\partial \hat{\mathcal{L}}}{\partial \bm{w}_\Delta^{(t)}}, \bm{o}_k \right\rangle \notag\\
\leq& -\frac{r_1^{*}}{2\sqrt{m}} \sigma(p_1)\left(1 - \sigma(q_1)\right)
\left[ \left(1 - \sigma(q_2)\right)^{\Delta L_{\bm{o}_+}^+ }\rho_t^{+} +  
 \left(1 - \sigma(q_2)\right)^{\Delta L_{\bm{o}_-}^- }\rho_t^{-} \right]  \notag \\
&+ \mathcal{O}\left( \left(1 - \sigma(p_2)\right)^{\Delta L_{\bm{o}_+}^-} \right) + \mathcal{O}\left( \left(1 - \sigma(p_2)\right)^{\Delta L_{\bm{o}_-}^+} \right) + \mathcal{O}(\tau))
\end{align}

From \eqref{minor_contribution_term1} and \eqref{minor_contribution_term2}, we can conclude
\begin{align}
\left\langle - \frac{\partial \hat{\mathcal{L}}}{\partial \bm{w}_\Delta^{(t)}}, \bm{o}_k \right\rangle 
&\leq -\frac{r_1^{*}}{2\sqrt{m}} \sigma(p_1)\left(1 - \sigma(q_1)\right)
\left[ \left(1 - \sigma(q_2)\right)^{\Delta L_{\bm{o}_+}^+ }\rho_t^{+} +  
 \left(1 - \sigma(q_2)\right)^{\Delta L_{\bm{o}_-}^- }\rho_t^{-} \right] \notag\\
 &\quad + \mathcal{O}(\tau)).
\end{align}

\end{proof}

\section{Extension to Multi-Class Classification}\label{Appendix:F}

Consider the classification problem with four classes, where each example is 
assigned a label 
\(
\bm{z} = (z_1, z_2) \in \{+1,-1\}^2
\)
representing four distinct classes.  
Similarly to the binary setting, there exist four orthogonal discriminative 
patterns. In the output layer, the scalar coefficient \(v_i\) associated with 
hidden neuron \(i\) is replaced by a two-dimensional vector 
\(\bm{v}_i \in \mathbb{R}^2\).  

Hence, we define the model output as 
\begin{equation}
\bm{F}(\bm{X})
=
\frac{1}{L}
\sum_{l=1}^{L}
\sum_{i=1}^{m}
\bm{v}_i \,
\phi\!\left( 
\bm{W}_{O(i,\cdot)} \bm{y}_l(\bm{X})
\right).
\end{equation}

\begin{equation}
F_1(\bm{X}^{(n)})
=
\frac{1}{L}
\sum_{l=1}^{L}
\sum_{i=1}^{m}
(\bm{v}_i)_1\,
\phi\!\bigl(
\bm{W}_{O(i,\cdot)} \bm{y}_{l}(\bm{X}^{(n)})
\bigr),
\end{equation}
\begin{equation}
F_2(\bm{X}^{(n)})
=
\frac{1}{L}
\sum_{l=1}^{L}
\sum_{i=1}^{m}
(\bm{v}_i)_2\,
\phi\!\bigl(
\bm{W}_{O(i,\cdot)} \bm{y}_{l}(\bm{X}^{(n)})
\bigr).
\end{equation}

The dataset can be divided into four groups as
\begin{equation}
\begin{aligned}
\mathcal{D}_1 &= \{(\bm{X}^{(n)},\bm{z}^{(n)}) \mid \bm{z}^{(n)}=(1,1)\},\\
\mathcal{D}_2 &= \{(\bm{X}^{(n)},\bm{z}^{(n)}) \mid \bm{z}^{(n)}=(1,-1)\},\\
\mathcal{D}_3 &= \{(\bm{X}^{(n)},\bm{z}^{(n)}) \mid \bm{z}^{(n)}=(-1,1)\},\\
\mathcal{D}_4 &= \{(\bm{X}^{(n)},\bm{z}^{(n)}) \mid \bm{z}^{(n)}=(-1,-1)\}.
\end{aligned}
\end{equation}

The loss function for data \((\bm{X}^{(n)},\bm{z}^{(n)})\) is
\begin{equation}
\mathrm{Loss}(\bm{X}^{(n)},\bm{z}^{(n)})
=
\max\left\{
1 - \bm{z}^{(n)\top}\bm{F}(\bm{X}^{(n)}),\, 0
\right\}.
\end{equation}

Since \(\bm{v}_i \in \{\pm \tfrac{1}{\sqrt{m}}\}^2\), we divide neurons into
four groups:
\begin{equation}
\begin{aligned}
\mathcal{W}_1 &= \bigl\{ i : \bm{v}_i = \tfrac{1}{\sqrt{m}}(1,1) \bigr\},\\
\mathcal{W}_2 &= \bigl\{ i : \bm{v}_i = \tfrac{1}{\sqrt{m}}(1,-1) \bigr\},\\
\mathcal{W}_3 &= \bigl\{ i : \bm{v}_i = \tfrac{1}{\sqrt{m}}(-1,1) \bigr\},\\
\mathcal{W}_4 &= \bigl\{ i : \bm{v}_i = \tfrac{1}{\sqrt{m}}(-1,-1) \bigr\}.
\end{aligned}
\end{equation}

For neuron \(i\), the gradient decomposes as
\begin{equation}
\frac{\partial \mathrm{Loss}}{\partial \bm{W}_{O(i,\cdot)}}
=
-z_1^{(n)}\,\frac{\partial F_1(\bm{X}^{(n)})}{\partial \bm{W}_{O(i,\cdot)}}
-
z_2^{(n)}\,\frac{\partial F_2(\bm{X}^{(n)})}{\partial \bm{W}_{O(i,\cdot)}}.
\end{equation}

Let \(\bm{o}_1,\bm{o}_2,\bm{o}_3,\bm{o}_4\) denote the four discriminative
directions.  
Consider \(i \in \mathcal{W}_2\), i.e.
\(
\bm{v}_i = \tfrac{1}{\sqrt{m}}(1,-1)
\).
Projecting the gradient onto \(\bm{o}_2\), for any
\((\bm{X}^{(n)},\bm{z}^{(n)}) \in \mathcal{D}_2\) we obtain
\begin{equation}
-\Bigl\langle
\frac{\partial \mathrm{Loss}}{\partial \bm{W}_{O(i,\cdot)}},
\bm{o}_2
\Bigr\rangle
\approx
\frac{2}{\sqrt{m}}\|\bm{o}_2\|^2 \;>\;0,
\end{equation}
showing GD moves \(\bm{W}_{O(i,\cdot)}\) toward \(\bm{o}_2\).

For samples from the other classes:
\begin{equation}
\begin{aligned}
(\bm{X}^{(n)},\bm{z}^{(n)})\in \mathcal{D}_1:
&\quad
-
\left\langle
\frac{\partial \mathrm{Loss}}{\partial \bm{W}_{O(i,\cdot)}},
\bm{o}_1
\right\rangle
\approx 0,
\\[4pt]
(\bm{X}^{(n)},\bm{z}^{(n)})\in \mathcal{D}_3:
&\quad
-
\left\langle
\frac{\partial \mathrm{Loss}}{\partial \bm{W}_{O(i,\cdot)}},
\bm{o}_3
\right\rangle
\approx -\frac{2}{\sqrt{m}}\|\bm{o}_3\|^2,
\\[4pt]
(\bm{X}^{(n)},\bm{z}^{(n)})\in \mathcal{D}_4:
&\quad
-
\left\langle
\frac{\partial \mathrm{Loss}}{\partial \bm{W}_{O(i,\cdot)}},
\bm{o}_4
\right\rangle
\approx 0.
\end{aligned}
\end{equation}

Thus, for \(i\in\mathcal{W}_2\), the update direction aligns with \(\bm{o}_2\), 
and similarly neurons in \(\mathcal{W}_1,\mathcal{W}_3,\mathcal{W}_4\) align with 
\(\bm{o}_1,\bm{o}_3,\bm{o}_4\) respectively.  
Similarly, we can analyze the gradient dynamics of the gating vector \(\bm{w}_\Delta\).

\end{document}